\documentclass[moor,nonblindrev]{informs4}
\usepackage{eqndefns-left}

\RequirePackage{tgtermes}
\RequirePackage{newtxtext}
\RequirePackage{newtxmath}
\RequirePackage{bm}
\RequirePackage{endnotes}

\SingleSpacedXI

\usepackage[hypertexnames=false]{hyperref}
\usepackage{algorithm}
\usepackage[noend]{algpseudocode}
\usepackage{tikz}
\usepackage{bbm}
\usepackage{outlines}
\usepackage{cleveref}
\usepackage{xcolor}
\usepackage{enumitem}
\usepackage{subcaption}
\usepackage{microtype}
\usepackage{thmtools}
\usepackage{thm-restate}
\usepackage{soul}
\usepackage{dashrule}
\usepackage{caption}
\usepackage[labelfont=sf]{subcaption}
\captionsetup{subrefformat=parens,font=footnotesize, position=bottom}
\usepackage{mathtools,nccmath}
\allowdisplaybreaks

\newif\ifcomment
\commenttrue 

\newif\ifrevise
\revisetrue



\newcommand{\norm}[1]{\left\lVert#1\right\rVert}
\newcommand{\normplain}[1]{\lVert#1\rVert}
\newcommand{\normbig}[1]{\big\lVert#1\big\rVert}
\newcommand{\normBig}[1]{\Big\lVert#1\Big\rVert}

\newcommand{\E}[1]{\mathbb{E}\left[#1\right]}
\newcommand{\Eplain}[1]{\mathbb{E}[#1]}
\newcommand{\Ebig}[1]{\mathbb{E}\big[#1\big]}
\newcommand{\EBig}[1]{\mathbb{E}\Big[#1\Big]}
\newcommand{\Ebigg}[1]{\mathbb{E}\bigg[#1\bigg]}
\newcommand{\givenplain}{\,|\,}
\newcommand{\givenbig}{\,\big|\,}
\newcommand{\givenBig}{\,\Big|\,}
\newcommand{\givenbigg}{\,\bigg|\,}

\newcommand{\Varbig}[1]{\text{Var}{\big[#1\big]}}

\newcommand{\R}{\mathbb{R}}

\newcommand{\Prob}[1]{\mathbb{P}\left(#1\right)}

\newcommand{\ProbBig}[1]{\mathbb{P}\Big(#1\Big)}

\newcommand{\indibrac}[1]{\vmathbb{1}\!\left\{#1\right\}}
\newcommand{\abs}[1]{\left\lvert#1\right\rvert}
\newcommand{\absplain}[1]{\lvert#1\rvert}
\newcommand{\absbig}[1]{\big\lvert#1\big\rvert}
\newcommand{\absBig}[1]{\Big\lvert#1\Big\rvert}

\newcommand{\floorBig}[1]{\Big\lfloor#1\Big\rfloor}
\newcommand{\ve}[1]{\bm{#1}}

\newcommand{\vone}{\ve{1}}

\newcommand{\veS}{\ve{S}}

\newcommand{\veA}{\ve{A}}
\newcommand{\sspa}{\mathbb{S}}
\newcommand{\aspa}{\mathbb{A}}
\newcommand{\sumN}{\sum_{i\in[N]}}
\newcommand{\sumsa}{\sum_{s\in\sspa, a\in\aspa}}
\newcommand{\sums}{\sum_{s\in\sspa}}
\newcommand{\sumt}{\sum_{t=0}^{T-1}}

\newcommand{\rel}{\textup{rel}} 
\newcommand{\rmax}{r_{\max}}

\newcommand{\ravg}{R}
\newcommand{\rsysn}{\ravg(\pi, \veS_0)}
\newcommand{\rliminf}{\ravg^{-}(\pi, \veS_0)}
\newcommand{\rlimsup}{\ravg^{+}(\pi, \veS_0)}
\newcommand{\ropt}{\ravg^*(N, \veS_0)}
\newcommand{\rrel}{\ravg^\rel}

\newcommand{\pibar}{{\sysbar{\pi}}}
\newcommand{\pibs}{{\pibar^*}}
\newcommand{\syshat}[1]{\widehat{#1}}
\newcommand{\sysbar}[1]{\bar{#1}}

\newcommand{\constid}{C_{\textup{ID}}}
\newcommand{\constse}{C_{\textup{SE}}}
\newcommand{\constso}{C_{\textup{SO}}}
\newcommand{\constfs}{C_{\textup{FS}}}

\newcommand{\rhoBudget}{\beta}

\newcommand{\rhoContr}{\rho_2}

\newcommand{\gridset}{[0,1]_N}

\newcommand{\setbelow}[2]{[N#2]}
\newcommand{\setnm}[1]{[N#1]} 
\newcommand{\costvec}{c_\pibs}

\newcommand{\Ngood}{N^\pibs_t}
\newcommand{\md}{m_d}
\newcommand{\Md}[1]{N\md(x)}

\newcommand{\statdist}{\mu^*} 
\newcommand{\zmNoise}{\xi} 

\newcommand{\rhobar}{\bar{\rho}}
\newcommand{\wmat}{W} 
\newcommand{\dynset}{D}
\newcommand{\dynsetr}{D}
\newcommand{\ratiocw}{K_{c/h}} 
\newcommand{\constHnoise}{K_{\textup{drift}}}
\newcommand{\constVnorm}{K_{\textup{dist}}}
\newcommand{\liph}{L_h}

\newcommand{\lipw}{L_{\wmat}}
\newcommand{\lipwtilde}{L_{\wmat}}
\newcommand{\constAction}{K_{\textup{conf}}}
\newcommand{\constMono}{K_{\textup{mono}}}
\newcommand{\lipexpand}{L_\textup{cov}}
\newcommand{\constEpnoise}{K_{\textup{cov}}}
\newcommand{\rhoFinal}{\rho_1}

\newcommand{\hw}{h_\wmat}
\newcommand{\hid}{h_{\textnormal{ID}}}
\newcommand{\SE}{\textnormal{SE}}
\newcommand{\lamw}{\lambda_\wmat}
\newcommand{\Vhat}{V}
\newcommand{\slk}{\delta}

\newcommand{\sneu}{\tilde{s}}

\newcommand{\leftsub}{\text{L}}
\newcommand{\rightsub}{\text{R}}

\makeatletter
\newcommand{\algrule}[1][.2pt]{\hdashrule{\linewidth}{1pt}{1pt}}
\makeatother

\usepackage[sort&compress]{natbib}
 \bibpunct[, ]{[}{]}{,}{n}{}{,}%

\usepackage{multibib}
\newcites{app}{Appendix References}

\EquationsNumberedThrough  

\TheoremsNumberedThrough    
\ECRepeatTheorems 

\theoremstyle{plain}

\newenvironment{customcond}[1]
  {\innercustomcond}
  {\endinnercustomcond}

\MANUSCRIPTNO{MOR-2024-0678.R1}

\begin{document}

\RUNAUTHOR{Hong, Xie, Chen, and Wang}

\RUNTITLE{Unichain and Aperiodicity are Sufficient for
Asymptotic Optimality of Average-Reward RBs}

\TITLE{Unichain and Aperiodicity are Sufficient for Asymptotic Optimality of Average-Reward Restless Bandits}

\ARTICLEAUTHORS{
\AUTHOR{Yige Hong}
\AFF{
Computer Science Department, 
Carnegie Mellon University, 
Pittsburgh, PA 15213,
\EMAIL{yigeh@andrew.cmu.edu}
}

\AUTHOR{Qiaomin Xie}
\AFF{
Department of Industrial and Systems Engineering, 
University of Wisconsin--Madison,
Madison, WI 53706,
\EMAIL{qiaomin.xie@wisc.edu}
}

\AUTHOR{Yudong Chen}
\AFF{
Department of Computer Sciences, 
University of Wisconsin--Madison, 
Madison, WI 53706,
\EMAIL{yudongchen@cs.wisc.edu}
}

\AUTHOR{Weina Wang}
\AFF{
Computer Science Department, 
Carnegie Mellon University,
Pittsburgh, PA 15213, 
\EMAIL{weinaw@cs.cmu.edu}
}
}

\ABSTRACT{We consider the infinite-horizon, average-reward restless bandit problem in discrete time. We propose a new class of policies that are designed to drive a progressively larger subset of arms toward the optimal distribution. We show that our policies are asymptotically optimal with an $O(1/\sqrt{N})$ optimality gap for an $N$-armed problem, assuming only a unichain and aperiodicity assumption. 
Our approach departs from most existing work that focuses on index or priority policies, which rely on the Global Attractor Property (GAP) to guarantee convergence to the optimum, or a recently developed simulation-based policy, which requires a Synchronization Assumption (SA).
}

\FUNDING{Yige Hong and Weina Wang are supported in part by U.S.\ National Science Foundation (NSF) grants ECCS-2145713, CCF-2403194, CCF-2428569, and ECCS-2432545.
Yudong Chen is supported in part by NSF grant CCF-2233152. Qiaomin Xie is supported in part by NSF grants CNS-1955997, ECCS-2339794, and ECCS-2432546.}

\KEYWORDS{Restless bandits; Long-run average reward; Asymptotic optimality}
\MSCCLASS{Primary: 90C40; Secondary: 90B15, 60J10, 93E20}
\SUBJECTCLASS{Dynamic programming/optimal control: Markov, finite state. Probability: Markov processes. Programming: linear.}

\maketitle

\section{Introduction}
\label{sec:intro}

A restless bandit (RB) problem \cite{Whi_88_rb} is a stochastic sequential decision-making problem that consists of multiple components.
Each component is associated with a Markov decision process (MDP) with two actions: activating/pulling the arm, or idling the arm. 
The MDPs of different arms share the same parameters. 
At each time step, the decision maker, who has knowledge of the MDP parameters, observes the states of all arms and decides which arms to activate.
This decision is subject to a \emph{budget constraint}, which requires that a fixed number of arms is activated at every time step.
The objective is to maximize the reward from all arms, where the reward from each arm is a function of its state and action. We illustrate the problem in \Cref{fig:rb-model}.
The RB problem has a rich history and wide-reaching applications. We refer the reader to the recent survey paper \cite{Nin_23} for a comprehensive overview of the literature.

\begin{figure}[t]
    \FIGURE
    {\includegraphics[width=0.8\linewidth]{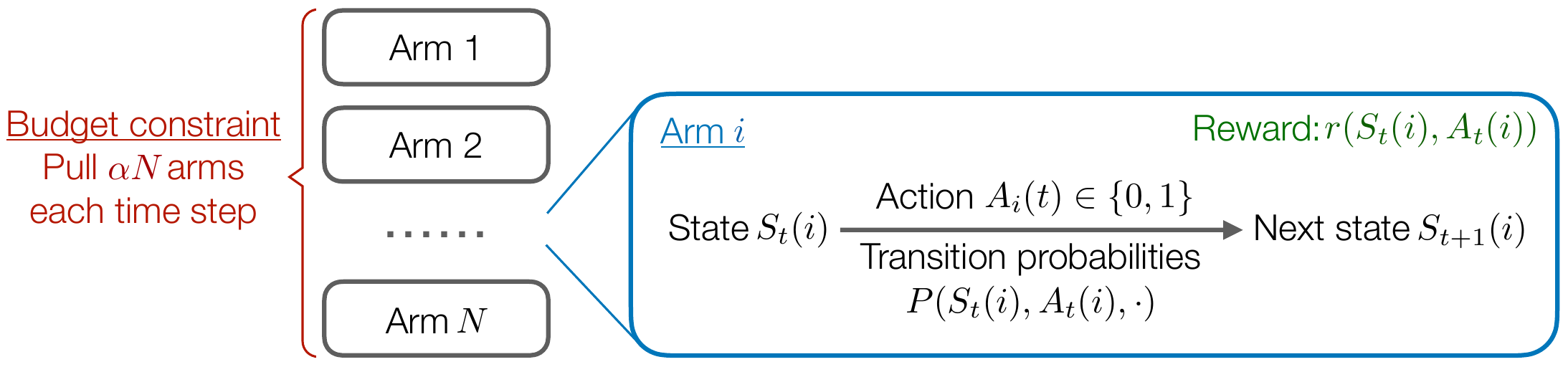}}
    {The restless bandit problem with $N$ arms.   \label{fig:rb-model}}
    {}
\end{figure}

Solving for an optimal policy for the RB problem is known to be PSPACE-hard \cite{PapTsi_99_pspace}.
However, it is possible to find \emph{asymptotically optimal} policies in a computationally efficient manner in the regime where the number of arms, $N$, grows large.
A policy is said to be asymptotically optimal if its \emph{optimality gap} is \emph{diminishing} as $N\to\infty$, where the optimality gap is the difference between the average reward per arm achieved by an optimal policy and that achieved by this policy.
This large $N$ regime, introduced in the seminal papers on the renowned Whittle index policy \cite{Whi_88_rb,WebWei_90}, has recently regained significant attention.
There has been a growing body of work that proposes new policies and provides refined analysis of their optimality gaps, not only in the infinite-horizon average-reward setting but also in the finite-horizon total-reward setting \cite{HuFra_17_rb_asymptotic,ZayJasWan_19_rb,BroSmi_19_rb,ZhaFra_21,BroZha_22,DaeGhoGri_23,GhoNagJaiTam_23_finite_discount,BroZha_23_ftva_and_reopt,GasGauYan_23_exponential,GasGauYan_24_reopt} and the infinite-horizon discounted-reward setting
\cite{ZhaFra_22_discounted_rb,GhoNagJaiTam_23_finite_discount,BroZha_23_ftva_and_reopt}. 
We discuss related work in more detail in \Cref{sec:additional-related-work}.

In this paper, we focus on the $N$-armed RB problem in the \emph{infinite-horizon, average-reward} setting. 
Existing policies for this setting, i.e., the celebrated Whittle index policy \cite{Whi_88_rb} and the more general LP-Priority policies \cite{Ver_16_verloop}, rely on a global attractor property (GAP) \cite{WebWei_90,Ver_16_verloop} or an even stronger property called the Uniform Global Attractor Property (UGAP) \cite{GasGauYan_23_whittles,GasGauYan_23_exponential} to achieve asymptotic optimality, in addition to the standard unichain and aperiodicity type of conditions. 
Roughly speaking, GAP requires the global convergence of the mean-field dynamics over time, where the mean-field dynamics characterizes the limit of the RB system as $N \to \infty$. 
GAP is a technical condition known to be difficult to verify for a given RB instance and policy, due to the non-linearity of the mean-field dynamics. 
Moreover, there are documented RB instances where the Whittle index and LP-Priority policies fail to satisfy GAP and are strictly asymptotically suboptimal \cite{GasGauYan_20_whittles,HonXieCheWan_23}. 
In \Cref{sec:experiments:non-ugap-is-common}, we further examine several natural classes of randomly generated RB instances; we observe that when the transition kernels and the reward function are generated from some sparse distributions, the percentage of non-GAP instances could be as high as $20\%$.

The recent work \cite{HonXieCheWan_23} takes a first step towards relaxing the long-standing GAP assumption. This paper proposes a policy named Follow-The-Virtual-Advice (FTVA) for the discrete-time RB problem and a variant for the continuous-time setting.
We focus on the discrete-time setting here.
FTVA achieves asymptotic optimality without GAP, but rather under an alternative condition named Synchronization Assumption (SA).
As argued in \cite{HonXieCheWan_23},  SA is more intuitive and easier-to-verify than GAP. 
However, the reliance on SA is still unsatisfactory; in particular, there exist RB instances where SA is not satisfied and FTVA performs suboptimally. In \Cref{sec:experiments:compare-non-sa} and Appendix~\ref{app:sa-counterexample}, we provide two counterexamples to SA, and discuss ways to construct more such examples.

The need for additional assumptions like GAP and SA reveals a fundamental gap in our understanding of the restless bandit problem. 
As such, the literature on RBs leaves open the following question: \emph{Is it possible to efficiently find a policy that achieves asymptotic optimality in infinite-horizon, average-reward RBs under only unichain and aperiodicity type of conditions, without imposing any additional conditions?}

\subsection{Our contributions}
\paragraph{Answer to the question.}
In this paper, we focus on the discrete-time RB problem and give a definitive, affirmative answer to this long-standing question. 
We propose a novel class of policies named \emph{focus-set policies}, and construct two concrete instances of focus-set policies that are asymptotically optimal with an $O(1/\sqrt{N})$ optimality gap under a weaker-than-standard aperiodic-unichain assumption (\Cref{assump:aperiodic-unichain}).

\paragraph{Policy design.}
Our proposed policies depart from the prevalent \emph{priority-based} design of existing policies. 
A priority-based policy specifies a fixed priority order over all the \emph{states} of a single arm.
At each time step, the policy pulls arms from states of higher priorities to those of lower priorities, until the budget constraint is met. 
In contrast, each of our proposed policies selects a subset of arms based on the \emph{empirical distribution} of their states and lets the selected arms take their \emph{ideal actions} as much as possible.
These ideal actions are computed using the solution of a single-armed, budget-relaxed problem.
The subset selection is constructed in a way such that most arms in the subset can take their ideal actions and the subset expands over time. 
Note that the FTVA policy in \cite{HonXieCheWan_23} also leverages the same single-armed problem to guide policy construction; however, it requires simulating additional virtual states, whereas our policies do not.

\paragraph{Proof techniques.}
We establish a meta-theorem that provides sufficient conditions for the asymptotic optimality of a focus-set policy. 
The proof of the meta-theorem highlights a class of bivariate Lyapunov functions we term \emph{subset Lyapunov functions}, along with a global Lyapunov function constructed dynamically from one of the subset Lyapunov functions.
Using these Lyapunov functions, we show that, under the stipulated sufficient conditions, the state-action distribution of arms in the selected subset converges to the optimal distribution, and the subset eventually expands to cover most arms. 
This meta-theorem allows us to prove the asymptotic optimality of the two proposed instances of the focus-set policies by verifying the stipulated sufficient conditions.

\subsection{Paper organization}
The remainder of the paper is organized as follows. 
In \Cref{sec:additional-related-work}, we give a more detailed review of the literature. 
In \Cref{sec:problem-statement}, we set up the problem of average-reward restless bandits and introduce the single-armed problem. 
In \Cref{sec:results}, we present our main results, where we propose the focus-set policies, present two concrete instances of focus-set policies, namely, the set-expansion policy and the ID policy, and establish their $O(1/\sqrt{N})$ optimality. 
In \Cref{sec:formalizing}, we present a meta-theorem that provides sufficient conditions for focus-set policies to achieve $O(1/\sqrt{N})$ optimality gaps. 
In \Cref{sec:proof-set-expansion-policy} and \Cref{sec:proof-id-policy}, we use the meta-theorem to prove the asymptotic optimality of the set-expansion policy and the ID policy. 
In \Cref{sec:experiments}, we conduct experiments comparing the performances of our policies with existing policies, and investigate the GAP and SA assumptions. 
We conclude the paper in \Cref{sec:discussion}.

\section{Related work}
\label{sec:additional-related-work}
\paragraph{Related work on conditions for asymptotic optimality.} 
As briefly discussed in the introduction, a line of work on the infinite-horizon average-reward RB problems has been progressively weakening the conditions for achieving asymptotic optimality when the number of arms $N\to\infty$. 
\citet{WebWei_90} establish asymptotic optimality of the Whittle index policy proposed by \citet{Whi_88_rb}, under three assumptions --- indexability, unichain, and the global attractor property (GAP). 
Later, \citet{Ver_16_verloop} proposes a more general class of priority policies derived from an LP relaxation, referred to as ``LP-Priority policies'' in  \cite{GasGauYan_23_exponential}, which removes the reliance on indexability and only requires the unichain and GAP assumptions to achieve asymptotic optimality. Notably, the Whittle index policy is a special case of LP-Priority policies. 
Both \cite{WebWei_90} and \cite{Ver_16_verloop} focus on the continuous-time RB problems. 
Recently, \citet{HonXieCheWan_23} propose a policy named Follow-the-Virtual-Advice (FTVA) for discrete-time RBs, and the continuous-time variant of FTVA named FTVA-CT. 
FTVA and FTVA-CT do not assume GAP for achieving asymptotic optimality. 
In particular, FTVA requires the unichain condition and a new assumption called the Synchronization Assumption (SA) to be asymptotically optimal, whereas FTVA-CT only requires the unichain condition. 
In addition to proving asymptotic optimality, \cite{HonXieCheWan_23} also gives non-asymptotic bounds for the optimality gaps of FTVA and FTVA-CT, which are of the order $O(1/\sqrt{N})$. 

We make two additional comments on the assumptions of the above-mentioned papers. First, the unichain conditions assumed in these papers are different. 
In particular, the unichain condition assumed in \citep{WebWei_90,Ver_16_verloop} requires the $N$-armed system to be unichain under every policy, which is stronger than the single-armed single-policy unichain condition considered in our paper (\Cref{assump:aperiodic-unichain}). 
See Appendix~\ref{app:assump-discuss-unichain} for a detailed discussion on different unichain conditions and their roles in these papers.  
Second, although none of the papers reviewed above explicitly assume any form of aperiodicity, this is either because they focus on the continuous-time setting \citep{WebWei_90, Ver_16_verloop}, where, loosely speaking, all policies are aperiodic, or because there is another assumption that plays a similar role as aperiodicity (i.e., Synchronization Assumption (SA) in \citep{HonXieCheWan_23}). 
Our paper considers the discrete-time setting without the SA assumption, where aperiodicity becomes crucial, as demonstrated by an example in Appendix~\ref{app:aperiodicity-counterexample}.

In addition to the prior work reviewed above, two recent papers \citep{Yan_24_multichain,GolAvr_24_wcmdp_multichain} appeared a few months after the arXiv version of our paper. These papers propose new policies that are asymptotically optimal under weaker conditions than ours. 
In particular, \citep{Yan_24_multichain} considers the discrete-time average-reward RB problem and \citep{GolAvr_24_wcmdp_multichain} considers the so-called weakly-coupled MDP problem, which is a multi-action, multi-constraint generalization of the RB problem. 
The assumptions in these two papers are implied by the single-armed MDP being weakly communicating and aperiodic, whereas ours are not. 
However, \citep{Yan_24_multichain,GolAvr_24_wcmdp_multichain} only provide asymptotic results but do not characterize the order of the optimality gaps.

\paragraph{Related work on better optimality gap orders.}
Apart from weakening the condition for asymptotic optimality, there is also prior work aiming for achieving better optimality gap than $O(1/\sqrt{N})$: 
\citet{GasGauYan_23_whittles,GasGauYan_23_exponential} prove $O(\exp(-CN))$ optimality gap bounds for the Whittle index policy and LP-Priority policies for some $C>0$, in both discrete-time or continuous-time settings. 
The exponential optimality gap mainly relies on an additional assumption named non-singularity or non-degeneracy, inspired by a recent paper \cite{ZhaFra_21} on finite-horizon RB problems to be discussed later in this section. 
Apart from non-singularity or non-degeneracy, other assumptions in \cite{GasGauYan_23_whittles,GasGauYan_23_exponential} are also slightly stronger than those in \cite{WebWei_90,Ver_16_verloop} in several aspects: 
\cite{GasGauYan_23_whittles,GasGauYan_23_exponential} require the RB problem to be irreducible under every policy, instead of just being unichain; for discrete-time RBs, they also require the RB problem to be aperiodic under every policy; in addition, they need a stronger version of global attractor property than GAP, referred to as Uniform Global Attractor Property (UGAP), which is discussed in detail in Section 6.2.1 of \cite{GasGauYan_23_exponential}.

\paragraph{Related work on finite-horizon or discounted-reward settings.}
Apart from the infinite-horizon average-reward setting, there is also a large body of work on other reward settings.

In the finite-horizon total-reward setting, there have been papers \cite{HuFra_17_rb_asymptotic,ZayJasWan_19_rb,BroSmi_19_rb,ZhaFra_21,BroZha_22,DaeGhoGri_23,GhoNagJaiTam_23_finite_discount,BroZha_23_ftva_and_reopt,GasGauYan_23_exponential,GasGauYan_24_reopt} that readily achieve asymptotic optimality in the $N \to \infty$ limit without assumptions, with the main focus being on improving the orders of the optimality gaps. 
Specifically, the optimality gap has been improved from $o(1)$ in \cite{HuFra_17_rb_asymptotic} to $O(\log N/\sqrt{N})$ in \cite{ZayJasWan_19_rb,DaeGhoGri_23,BroZha_22} and $O(1/\sqrt{N})$ in \cite{BroSmi_19_rb,GhoNagJaiTam_23_finite_discount,BroZha_23_ftva_and_reopt}, without assumptions. 
Later, \citet{ZhaFra_21} propose a policy that achieves an $O(1/N)$ optimality gap under a mild assumption called non-degeneracy; \citet{GasGauYan_23_exponential,GasGauYan_24_reopt} further improve the optimality gap to $O(\exp(-CN))$ under the same non-degeneracy assumption. 
Although the above-mentioned papers on the finite-horizon total-reward setting are able to prove strong bounds of optimality gap in terms of the scaling of $N$ under minimal conditions, most of them either do not consider the scaling of the time horizon $T$ \cite{HuFra_17_rb_asymptotic,BroSmi_19_rb}, or have a super-linear dependency on the $T$ (quadratic in \cite{ZayJasWan_19_rb,DaeGhoGri_23,GhoNagJaiTam_23_finite_discount}, and $O(T\log T)$ in \cite{BroZha_22}). 
Consequently, most of the bounds in the finite horizon setting are incomparable to the bounds in the infinite-horizon average-reward setting --- the latter is analogous to having a linear dependency on $T$. 
There are, however, two exceptions: \citet{BroZha_23_ftva_and_reopt} and \citet{GasGauYan_24_reopt} obtain bounds with linear dependencies on $T$, under additional assumptions similar to the Synchronization Assumption in \cite{HonXieCheWan_23}. 
Nevertheless, there is no direct way for applying the policies in \cite{BroZha_23_ftva_and_reopt,GasGauYan_24_reopt} to the infinite-horizon setting, since they both involve solving subproblems whose complexities depend on the time horizon. 

The asymptotic optimality in the infinite-horizon discounted-reward setting has also been considered in the prior work \cite{BroSmi_19_rb,ZhaFra_22_discounted_rb,GhoNagJaiTam_23_finite_discount,BroZha_23_ftva_and_reopt}. 
Similar to the finite-horizon setting, asymptotic optimality can be achieved without assumptions. In particular, an $O(N^{\log_2(\sqrt{\gamma})})$ optimality gap is obtained in \cite{BroSmi_19_rb} for a discount factor $\gamma \in (1/2,1)$, and $O(1/\sqrt{N})$ optimality gaps are achieved in \cite{ZhaFra_22_discounted_rb,GhoNagJaiTam_23_finite_discount,BroZha_23_ftva_and_reopt}. 
These results are incomparable to the asymptotic optimality results in the infinite-horizon average-reward setting due to their dependencies on $\gamma$. 
The $O(1/\sqrt{N})$ bounds in \cite{ZhaFra_22_discounted_rb,GhoNagJaiTam_23_finite_discount,BroZha_23_ftva_and_reopt} scale at least quadratically with the effective horizon $1/(1-\gamma)$ rather than linearly. 
Meanwhile, the $O(N^{\log_2(\sqrt{\gamma})})$ bound in \cite{BroSmi_19_rb} has a coefficient that scales linearly with $1/(1-\gamma)$, but since $N^{\log_2(\sqrt{\gamma})}$ becomes constant in $N$ after taking the limit of $\gamma\to 1$, this bound is not comparable to the asymptotic optimality results in the average-reward setting, where the bounds must be diminishing in $N$.

\paragraph{Generalizations of restless bandits.}
Some generalizations of the restless bandit problem have also been extensively studied in the literature. Those generalizations include having multiple actions, multiple constraints, state-dependent costs, heterogeneous arms, time-inhomogeneous rewards and transitions, etc. 
A lot of the papers mentioned above contain results for one or more such generalizations. While we believe it is possible to also generalize our results to some of these settings, we do not pursue this direction in this paper. 
We refer readers to recent papers such as \cite{BroZha_23_ftva_and_reopt} and \cite{GasGauYan_24_reopt} for a more detailed review of more general settings. 

\paragraph{Follow-up work of this paper.}
Since the initial version of this paper was released on arXiv, two follow-up papers \citep{HonXieCheWan_24_exp,ZhaHonWan_25_het} have directly built upon the techniques in this paper.
In particular, the work \citep{HonXieCheWan_24_exp}, with the same authors as the current paper also considers the long-run average-reward restless bandit problem. 
It shows that a generalization of the set-expansion policy, named ``two-set policy'', achieves $O(\exp(-CN))$ optimality gap. This result holds under three assumptions: aperiodic unichain condition, non-degeneracy, and a local stability condition that relaxes the UGAP assumption required by \citep{GasGauYan_23_whittles,GasGauYan_23_exponential}. 
The second paper \citep{ZhaHonWan_25_het} considers fully heterogeneous weakly-coupled MDPs, where each arm could have multiple actions and distinct transition kernels and reward functions, and there are multiple budget constraints corresponding to different types of cost. 
The authors show that a generalization of the ID policy achieves an $O(1/\sqrt{N})$ optimality gap, under a bounded mixing-time assumption that generalizes the aperiodic unichain assumption in this paper.

\section{Problem setup}\label{sec:problem-statement}

In this section, we set up the average-reward restless bandits problem and its single-armed relaxation, and introduce the assumptions and notations used throughout the paper.

\subsection{Restless bandit problem}

We consider the discrete-time, infinite-horizon restless bandit problem with the average-reward criterion.
The RB problem consists of $N$ homogeneous arms and is henceforth referred to as the $N$-armed problem. Each arm is associated with an MDP called the single-armed MDP, which is defined by the tuple $(\sspa, \aspa, P, r)$. 
Here $\sspa$ is the state space, which is a finite set; 
$\aspa = \{0, 1\}$ is the action space, where the action $1$ is interpreted as activating or pulling the arm; 
$P:\sspa\times\aspa\times\sspa \to [0,1]$ is the transition kernel, where $P(s,a,s')$ is the probability of transitioning to state $s'$ in the next time step conditioned on taking action $a$ at state $s$ in the current step; 
$r: \sspa \times \aspa \to \mathbb{R}$ is the reward function, where $r(s,a)$ is the expected reward for taking action $a$ in state $s$.
Let $\rmax = \max_{s\in\sspa,a\in\aspa} \abs{r(s,a)}$.
The RB problem has a \emph{budget constraint}, which requires that exactly $\alpha N$ arms must be pulled at every time step for some given constant $\alpha\in(0,1)$. Here $\alpha N$ is assumed to be an integer for simplicity. 
We note that although the equality budget constraint is standard in the literature and is the primary focus of this paper, our algorithms and results also extend to a natural alternative formulation with an inequality constraint, which allows \emph{at most} $\alpha N$ arms to be pulled at every time step. 
We will describe the necessary modifications at appropriate points in the paper. 
Finally, note that we focus on the setting in which all the model parameters, $\sspa, \aspa, P, r, \alpha$, are known. 

We index the arms in an $N$-armed bandit by $[N]$, where $[n] \triangleq \{1,2,\dots, n\}$ for integer $n\geq 1$ and $[0]\triangleq\emptyset$. 
We refer to the index~$i$ of Arm~$i$ as its \emph{ID}, to avoid confusion with the Whittle index or other index notion.

A policy $\pi$ for the $N$-armed problem chooses in each time step the action for each of the $N$ arms.
We allow the policy to be randomized and choose actions based on the whole history.

Under a policy $\pi$, we use the \emph{state vector} $\ve{S}_t^\pi \triangleq (S_t^\pi(i))_{i\in[N]} \in \sspa^N$ to represent the states of all arms, where $S_t^\pi(i) \in \sspa$ denotes the state of the $i$-th arm at time $t$. 
Similarly, the \emph{action vector} is defined as $\ve{A}_t^\pi \triangleq (A_t^\pi(i))_{i\in[N]} \in \aspa^N$, where $A_t^\pi(i) \in \aspa$ denotes the action applied to the $i$-th arm at time $t$.

Let the \emph{limsup average reward} be $\rlimsup \triangleq \limsup_{T\to\infty} (1/(NT)) \sum_{t=0}^{T-1} \sumN \E{r(S_t^\pi(i), A_t^\pi(i))}$ and let the \emph{liminf average reward} be $\rliminf \triangleq \liminf_{T\to\infty} (1/(NT)) \sum_{t=0}^{T-1} \sumN \E{r(S_t^\pi(i), A_t^\pi(i))}$. 
When the limsup and liminf average rewards coincide, the infinite-horizon average reward (also known as the long-run average reward) exists and is given by
\[
    \rsysn \triangleq \lim_{T\to\infty} \frac{1}{T} \sum_{t=0}^{T-1} \frac{1}{N} \sumN \E{r(S_t^\pi(i), A_t^\pi(i))}.
\]

Our goal is to solve the following optimization problem.
\begin{subequations}
\begin{align}
    \label{eq:N-arm-formulation} \tag{RB}
    \underset{\text{policy } \pi}{\text{maximize}} & \quad \rliminf  \\
    \text{subject to}  
    &\quad  \sumN A_t^\pi(i) = \alpha N,\quad \forall t=0,1,2,\dots \label{eq:hard-budget-constraint}
\end{align} 
\end{subequations}
Let $\ropt$ be the optimal value of the problem, referred to as the \emph{optimal reward}. 
Note that $\ropt = \sup_{\pi'} R^-(\pi', \veS_0) = \sup_{\pi'} R^+(\pi', \veS_0)$ because \eqref{eq:N-arm-formulation} is an MDP with finite state and action spaces \citep[][Theorem~9.1.6]{Put_05}. 
For any policy $\pi$, we define its optimality gap as $\ropt - \rliminf$; we say the policy is \emph{asymptotically optimal} if its optimality gap vanishes as $N\to\infty$, i.e., $\ropt - \rliminf = o(1)$. 
This notion of asymptotic optimality is consistent with the literature; see, e.g., \cite[][Definition 4.11]{Ver_16_verloop}. 

In later parts of the paper, we will focus on policies under which the long-run average reward $\rsysn$ exists. These policies include any stationary Markovian policies, under which $\veS_t$ is a finite-state Markov chain \citep[][Proposition 8.1.1]{Put_05}. More generally, with a similar argument, it is easy to show that $\rsysn$ is well-defined if $\pi$ makes decisions based on augmented system states with a finite state space. 
Importantly, restricting to such policies is sufficient, because there always exists a stationary Markovian policy whose long-run average reward achieves the optimal reward, by standard results for the MDPs with finite state and action spaces \citep[][Theorem~9.1.8]{Put_05}. 
For simplicity, we will refer to $\rsysn$ as the objective function of \eqref{eq:N-arm-formulation} and write the optimality gap as $\ropt - \rsysn$.

\subsection{Scaled state-count vector}

We introduce an alternative way, used extensively in the paper, for representing the information contained in the state vector $\ve{S}_t^\pi$. 
For each subset $D\subseteq[N]$, we define the \emph{scaled state-count vector on $D$} as $X_t^\pi(D) = (X_t^\pi(D, s))_{s\in\sspa}$, where 
\[
    X_t^\pi(D, s) = \frac{1}{N} \sum_{i\in D} \indibrac{S_t^\pi(i) = s}.
\]
Note that each entry of the vector $X_t^\pi(D)$ is the number of arms in $D$ in a certain state, scaled by $1/N$. 
When $D = [N]$ is the set of all arms, we simply call $X_t^\pi([N])$ the \emph{scaled state-count vector}.

Sometimes we view $X_t^\pi(D)$ as a vector-valued function of $D \subseteq [N]$. We refer to this function $X_t$ as the \emph{system state} at time $t$. The system state $X_t^\pi$ contains the same information as the state vector $\veS_t^\pi$ does; in particular, from $X_t^\pi$ one can deduce the state of each arm.

\subsection{LP relaxation}
In this section, we discuss a linear programming (LP) relaxation of the $N$-armed problem \eqref{eq:N-arm-formulation} which is crucial for the design and analysis of RB policies. 
This LP is defined as follows.
\begin{subequations}
\begin{align}
    \label{eq:lp-single} \tag{LP} \underset{\{y(s, a)\}_{s\in\sspa,a\in\aspa}}{\text{maximize}} \mspace{12mu}&\sum_{s\in\sspa,a\in\aspa} r(s, a) y(s, a) \\
    \text{subject to}\mspace{21mu}
    &\mspace{15mu}\sum_{s\in\sspa} \quad y(s, 1) = \alpha, \label{eq:expect-budget-constraint}\\
    & \sum_{s'\in\sspa, a\in\aspa} y(s', a) P(s', a, s) = \sum_{a\in\aspa} y(s,a), \quad \forall s\in\sspa, \label{eq:flow-balance-equation}\\
    &\mspace{3mu}\sum_{s'\in\sspa, a'\in\aspa} y(s',a') = 1,  
    \quad
     y(s,a) \geq 0, \;\; \forall s\in\sspa, a\in\aspa.  \label{eq:non-negative-constraint}
\end{align}
\end{subequations}
To see why \eqref{eq:lp-single} is a relaxation of \eqref{eq:N-arm-formulation}, for any stationary Markovian policy $\pi$, consider 
\[
    y^\pi(s,a) = \lim_{T\to\infty} \frac{1}{T} \sum_{t=0}^{T-1} \EBig{\frac{1}{N} \sumN \indibrac{S_t^\pi(i) = s, A_t^\pi(i) = a}} \quad \forall s\in\sspa, a\in\aspa.
\]
It is not hard to see that $\rsysn = \sumsa r(s,a)y^\pi(s,a)$, and that $(y^\pi(s,a))_{s\in\sspa, a\in\aspa}$ satisfies the constraints \eqref{eq:expect-budget-constraint}--\eqref{eq:non-negative-constraint}. 
Therefore, letting $\rrel$ be the optimal value of \eqref{eq:lp-single}, it can be shown that $\rrel \geq \ropt$ (See Appendix~\ref{app:upper-bound} for the detailed proof). 
This relation allows us to bound the optimality gap of any policy $\pi$ using the inequality $\ropt - \rliminf \leq \rrel - \rliminf$. Bounding the optimality gap via this LP relaxation is the standard approach in the literature \cite{WebWei_90,Ver_16_verloop,GasGauYan_23_whittles,GasGauYan_23_exponential,HonXieCheWan_23}.

\subsection{Optimal single-armed policy}
To understand how to achieve the average-reward upper bound $\rrel$ given by the LP relaxation \eqref{eq:lp-single}, it is helpful to view \eqref{eq:lp-single} as solving for a certain stationary state-action probability, $y(s,a)$, in the single-armed MDP, $(\sspa, \aspa, P, r)$.  
Specifically, the objective of \eqref{eq:lp-single} equals the expected reward under the stationary probability. 
The constraint in \eqref{eq:expect-budget-constraint} can be interpreted as a budget constraint, which requires the arm to be activated with $\alpha$ probability in the steady state. 
The constraint \eqref{eq:flow-balance-equation} is the stationary equation. 
The constraint \eqref{eq:non-negative-constraint} ensures that $(y(s,a))_{s\in\sspa, a\in\aspa}$ is a valid probability distribution.

From each stationary state-action probability $(y(s,a))_{s\in\sspa, a\in\aspa}$, one can construct a policy for the single-armed MDP, which we call a \emph{single-armed policy}, that achieves this state-action probability. 
In particular, let $\{y^*(s, a)\}_{s\in\sspa,a\in\aspa}$ be an optimal solution to \eqref{eq:lp-single}. We consider the following single-armed policy $\pibs$: 
\begin{equation}\label{eq:single-arm-opt-def}
    \pibs(a | s) =
    \begin{cases}
        y^*(s, a) / (y^*(s, 0) + y^*(s,1)), & \text{if } y^*(s, 0) + y^*(s,1) > 0,  \\
        1/2, &  \text{if } y^*(s, 0) + y^*(s,1) = 0.
    \end{cases}
    \quad \text{for $s\in\sspa$, $a\in\aspa$.}
\end{equation}
We call $\pibs$ the \emph{optimal single-armed policy}. Let $P_\pibs$ be the transition matrix induced by $\pibs$ in the single-armed MDP. 
We make the following assumption throughout the paper:

\begin{assumption}[Unichain and aperiodicity]
\label{assump:aperiodic-unichain}
    There exists an optimal solution $\{y^*(s, a)\}_{s\in\sspa,a\in\aspa}$ to \eqref{eq:lp-single}, such that the optimal single-armed policy $\pibs$ defined in \eqref{eq:single-arm-opt-def} induces an aperiodic unichain (i.e., a Markov chain with a single recurrent class and a possibly empty set of transient states) with state space $\sspa$ and transition matrix $P_\pibs$.
\end{assumption}

With \Cref{assump:aperiodic-unichain}, the Markov chain induced by $\pibs$ converges to a unique stationary distribution, which we denote as $\statdist = (\statdist(s))_{s\in\sspa}$. 
From the definition of $\pibs$ in \eqref{eq:single-arm-opt-def}, it is easy to verify that $\statdist(s) = y^*(s,0) + y^*(s,1)$; thus the steady-state state-action probability under $\pibs$ is $(y^*(s,a))_{s\in\sspa, a\in\aspa}$. 
Consequently, the long-run average reward of $\pibs$ equals the optimal value of \eqref{eq:lp-single}, $\rrel$; the long-run average budget usage of $\pibs$ equals $\alpha$. 

In Appendix~\ref{app:assump-discuss}, we discuss the generality of \Cref{assump:aperiodic-unichain}. In particular, we compare \Cref{assump:aperiodic-unichain} with the assumptions in the literature; we also give an example to show that $\rrel-\ropt$ can be non-diminishing as $N\to\infty$ when the single-armed MDP is periodic.

\subsection{Additional notation}

For a subset $D\subseteq[N]$, we let $m(D) = |D| / N$ denote the fraction of arms contained in $D$. 
We introduce a convenient shorthand $[0,1]_N = \{0,1/N, 2/N,\dots, 1\}$. Then $m(D) \in [0,1]_N$ for any $D\subseteq [N]$. 
Let $\Delta(\sspa)$ denote the set of probability distributions on the state space $\sspa$.
We treat each distribution $v\in\Delta(\sspa)$ as a row vector.
Recall that $\pi$ denotes a policy for the $N$-armed problem. In later sections, when the context is clear, we drop the superscript $\pi$ from the vectors $\veS_t^\pi$, $\veA_t^\pi$, and $X_t^\pi$.
We use $a^+\triangleq \max\{a,0\}$ to denote the positive part of $a\in\mathbb{R}$.

\section{Main results: policies and optimality guarantees}\label{sec:results}
In this section, we propose policies for the average-reward RB problems and bound their optimality gaps. 
In \Cref{sec:focus-set-policy}, we present an algorithmic idea based on the convergence of state distribution to $\statdist$ under the optimal single-armed policy $\pibs$, and propose a novel class of policies named \emph{focus-set policies}. 
Then in \Cref{sec:results_set-expansion-policy} and \Cref{sec:results_ID-policy}, we present two instances of focus-set policies, named the \emph{set-expansion policy} and the \emph{ID policy}, and state their optimality gap bounds.  
In \Cref{sec:connection-previous-policies}, we discuss the relationships between our policies and the policies in the literature to explain why they rely on different assumptions. 
Finally, in \Cref{sec:inequality}, we discuss how our results can be adapted for the alternative setting with the inequality budget constraint.

\subsection{Algorithmic idea and focus-set policies}
\label{sec:focus-set-policy}

Consider the single-armed system and the optimal single-armed policy $\pibs$. Because $P_\pibs$ is an aperiodic unichain by \Cref{assump:aperiodic-unichain}, it follows that starting from any initial distribution in $\Delta(\sspa)$, the state distribution of the Markov chain $P_\pibs$ converges to the steady-state distribution $\statdist$.

We observe the following simple fact based on the single-armed convergence under $\pibs$:
an RB system would achieve the reward upper bound if all arms could follow the optimal single-armed policy $\pibs$.
However, exactly achieving this goal is not possible due to the budget constraint. A natural way is to approach it gradually: let a subset of arms persistently follow $\pibs$ and wait for them to approach $\mu^*$; at this point, more arms can be included into the subset, as we explain later.

We capture this idea of letting a subset of arms follow $\pibs$ and then gradually expanding the subset in a class of policies named \emph{focus-set policies}, the template of which is given in \Cref{alg:focus-set}.
In particular,
a focus-set policy first samples an \emph{ideal action} $\syshat{A}_t(i)$ using $\pibs$ for each arm $i\in [N]$ based on its state $S_t(i)$ at time $t$ (Line \ref{alg:focus-set-action-sampling}). 
The policy then selects a subset of arms $D_t$, referred to as the \emph{focus-set}, and gives them precedence to set the actual action equal to the ideal action, i.e., $A_t(i) = \syshat{A}_t(i)$ (Line \ref{alg:focus-set-action-rect-1}).

\begin{algorithm}[t]
\caption{Focus-set policy}
\begin{flushleft}
\hspace*{\algorithmicindent} \textbf{Input}: number of arms $N$, budget $\alpha N$, the optimal single-armed policy $\pibs$, \\
\hspace*{\algorithmicindent} \hspace{0.45in} initial system state $X_0$, initial state vector $\veS_0$, initial focus set $D_{-1}$
\end{flushleft}
\label{alg:focus-set}
\begin{algorithmic}[1]
    \For{$t=0,1,2,\dots$}
        \State Choose a \emph{focus set} $\dynsetr_t\subseteq [N]$ based on $X_t$ and $\dynset_{t-1}$ \Comment{\ul{\emph{Set update}}}
        \State Independently sample $\syshat{A}_t(i)\sim \pibs(\cdot \givenplain S_t(i))$ for $i\in[N]$  \Comment{\ul{\emph{Action sampling}}}  \label{alg:focus-set-action-sampling}
        \State Pick $A_t(i)$ for $i\in D_t$ based on $X_t$ and $D_t$ to achieve
        \Comment{\ul{\emph{Action rectification}}}
        \begin{equation*}
            \underset{\{A_t(i)\colon i\in D_t\}}{\text{maximize}} \mspace{18mu} \left|\left\{i\in D_t\colon A_t(i) = \syshat{A}_t(i)\right\}\right|\mspace{81mu}
        \end{equation*}
        \begin{equation}
            \mspace{3mu}\text{subject to}\mspace{24mu}\alpha N - (N - |D_t|) \leq \sum_{i\in \dynsetr_t} A_t(i) \leq \alpha N \label{eq:policy-1-actions-rounding-constraints} 
        \end{equation} 
        \label{alg:focus-set-action-rect-1}
        \State Pick $A_t(i)$ for $i\in \dynsetr_t^c$ based on $X_t$ and $D_t$  such that 
        \begin{equation}\label{eq:budget-constraint-restate}
            \sum_{i\in[N]} A_t(i) = \alpha N
        \end{equation}
        \label{alg:focus-set-action-rect-2}
        \State  Apply $A_t(i)$ and observe $S_{t+1}(i)$ for each arm $i\in[N]$   \label{alg:focus-set-apply-action}
    \EndFor
\end{algorithmic}
\end{algorithm}

The performance benefit of a focus-set policy will not be realized until one specifies a proper rule to update the focus set $D_t$.
In subsequent subsections, we propose two instances of focus-set policies with updating rules that lead to asymptotically optimal performance.
In \Cref{sec:formalizing}, we provide more general sufficient conditions for a focus-set policy to achieve asymptotic optimality.

\subsection{Set-expansion policy}
\label{sec:results_set-expansion-policy}

In this section, we introduce an instance of the focus-set policies called the \emph{set-expansion policy}. 
The set-expansion policy updates $D_t$ based on a quantity referred to as the \emph{slack},
which is defined for a subset $D \subseteq [N]$ based on the system state $x$ as follows: 
\begin{equation}\label{eq:slack-def}
    \slk(x, \dynset) = \rhoBudget (1-m(\dynset)) -  \frac{1}{2}\norm{x(D) - m(D) \statdist}_1,
\end{equation}
where $\beta = \min(\alpha, 1-\alpha)$ and recall that $m(D) = |D| / N$. 
The policy chooses $D_t$ with the maximal cardinality such that $\slk(X_t, \dynsetr_t) \geq 0$, under the constraint that it either expands upon $D_{t-1}$ (i.e., $D_t\supseteq D_{t-1}$) if $\slk(X_t, D_{t-1}) > 0$ or shrinks (i.e., $D_t\subseteq D_{t-1}$) otherwise. See Algorithm~\ref{alg:set-expansion} for the full definition of the set-expansion policy.

\begin{algorithm}[t] 
\caption{Set-expansion policy}
\label{alg:set-expansion}
\begin{flushleft}
\hspace*{\algorithmicindent} \textbf{Input}: number of arms $N$, budget $\alpha N$, the optimal single-armed policy $\pibs$, \\
\hspace*{\algorithmicindent} \hspace{0.45in} initial system state $X_0$, initial state vector $\veS_0$, initial focus set $D_{-1}=\emptyset$ 
\end{flushleft}
\begin{algorithmic}[1]
    \For{$t=0,1,2,\dots$} 
        \If{$\slk(X_t, \dynsetr_{t-1}) > 0$}
        \Comment{\ul{\emph{Set update}}}
        \label{alg:set-expansion:set-update-step-begins}
            \State Let $D_t$ be any set with the largest $m(D_t)$ such that $D_t \supseteq D_{t-1}$ and  $\slk(X_t, D_t) \geq 0$
        \Else
            \State Let $D_t$ be any set with the largest $m(D_t)$ such that $D_t \subseteq D_{t-1}$ and $\slk(X_t, D_t) \geq 0$
        \EndIf
        \label{alg:set-expansion:set-update-step-ends}
        
        \noindent\algrule
        \Comment{\emph{Lines below implement Lines~\ref{alg:focus-set-action-sampling}--\ref{alg:focus-set-apply-action} of \Cref{alg:focus-set}, with random tie-breaking for Lines~\ref{alg:focus-set-action-rect-1}--\ref{alg:focus-set-action-rect-2}}}
        \State Independently sample $\syshat{A}_t(i) \sim \pibs(\cdot|S_t(i))$ for $i\in[N]$ 
        \Comment{\ul{\emph{Action sampling}}}\label{line:action-sampling}
        \label{alg:set-expansion-sampling}
        \If{$\sum_{i\in \dynsetr_t} \syshat{A}_t(i) \geq \alpha N$} \Comment{\ul{\emph{Action rectification}}}
        \label{alg:set-expansion-action-rect-starts}
            \State Select $\alpha N$ arms in $\dynsetr_t$ with $\syshat{A}_t(i)=1$ uniformly at random, and set $A_t(i) = 1$
            \State For the rest of $i\in [N]$, set $A_t(i)=0$
        \ElsIf{$\sum_{i\in \dynsetr_t} \syshat{A}_t(i) \leq \alpha N - (N - |D_t|)$} \label{alg:set-expansion-action-rect-second-case-starts}
            \State Select $(1-\alpha)N$ arms in $\dynsetr_t$ with $\syshat{A}_t(i)=0$ uniformly at random, and set $A_t(i)=0$
            \State For the rest of $i\in [N]$, set $A_t(i) = 1$
        \Else
            \label{alg:set-expansion-action-rect-third-case-starts}
            \State Set $A_t(i) = \syshat{A}_t(i)$ for $i\in D_t$
            \State Set $A_t(i) = \syshat{A}_t(i)$ for as many $i\notin D_t$ as possible; break ties uniformly at random
            \label{alg:set-expansion-action-rect-ends}
        \EndIf
        \State Apply $A_t(i)$ and observe $S_{t+1}(i)$ for each arm $i\in[N]$ \label{alg:set-expansion-apply-action}
    \EndFor
\end{algorithmic}
\end{algorithm}

\begin{figure}[t]
    \FIGURE{\includegraphics[height=2.3cm]{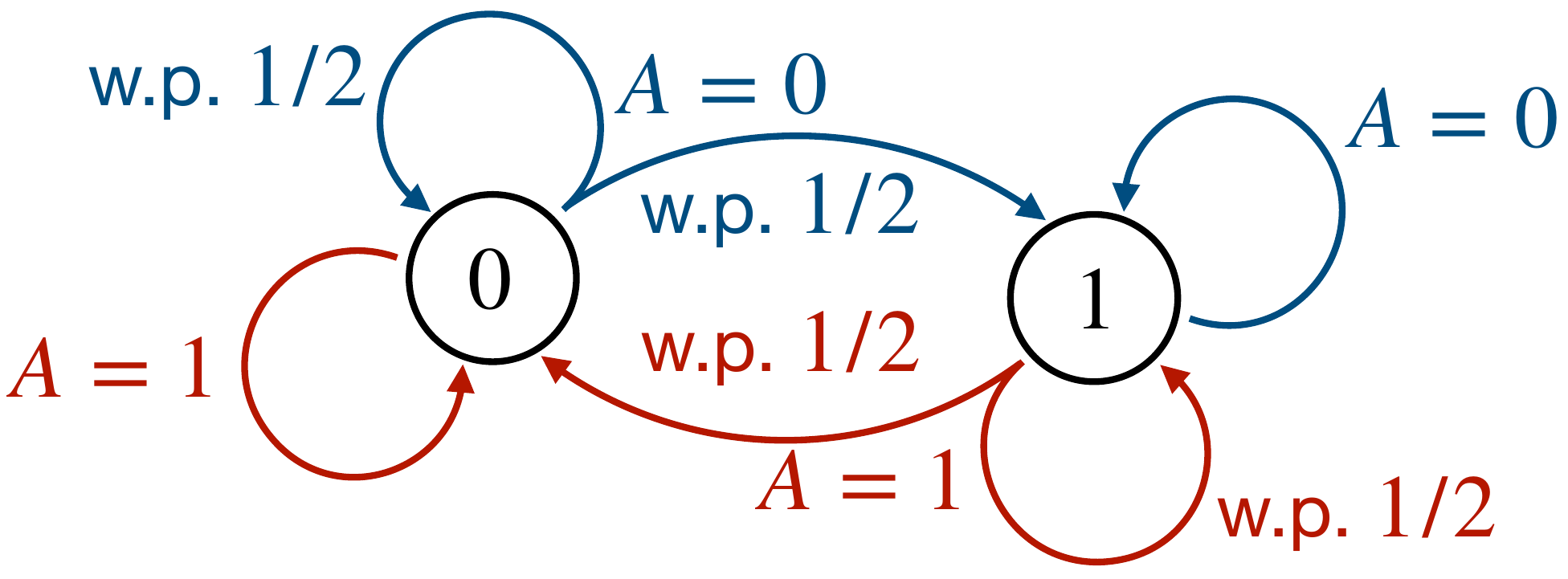}}
    {Single-armed MDP of a simple RB example for illustrating the policies. \label{fig:union-state-space:two-state-mdp}}
    {The single-armed MDP of the RB problem has two states, $\{0, 1\}$, whose transition structure is illustrated above. One unit of reward is generated if and only if the state changes. The budget parameter $\alpha$ is $0.5$. One can easily see that the optimal single-armed policy $\pibs$ activates the arm if and only if the arm is in state $1$. The optimal stationary distribution is $\statdist=(0.5,0.5)$.}
\end{figure}

Here we briefly explain the design of the set-expansion policy and why it works. 
The slack $\slk(x, D)$ is carefully constructed such that $\slk(X_t, D_t)\geq 0$ ensures that most arms in $D_t$ can follow $\pibs$. Moreover, $P_\pibs$ is non-expansive under the $L_1$ norm, so $\slk(X_{t}, D_t) \geq 0$ often implies that $\slk(X_{t+1}, D_t) \geq 0$, preventing $D_{t+1}$ from shrinking significantly. 
Consequently, each arm in the focus set is likely to remain in the focus set for a long time, where they persistently follow $\pibs$ and converge to the optimal stationary distribution $\statdist$. 
As soon as the scaled state-count vector on $D_t$, $X_t(D_t)$, is sufficiently close to $m(D_t)\statdist$, the focus set expands. 
Therefore, in the long run, the arms in the focus set converges to $\statdist$, allowing the focus set to cover most of the arms.

\begin{figure}
    \FIGURE{
    \subcaptionbox{Intuitions of the set-expansion policy. \label{fig:union-state-space:subset-dynamics}}{\includegraphics[height=7.2cm]{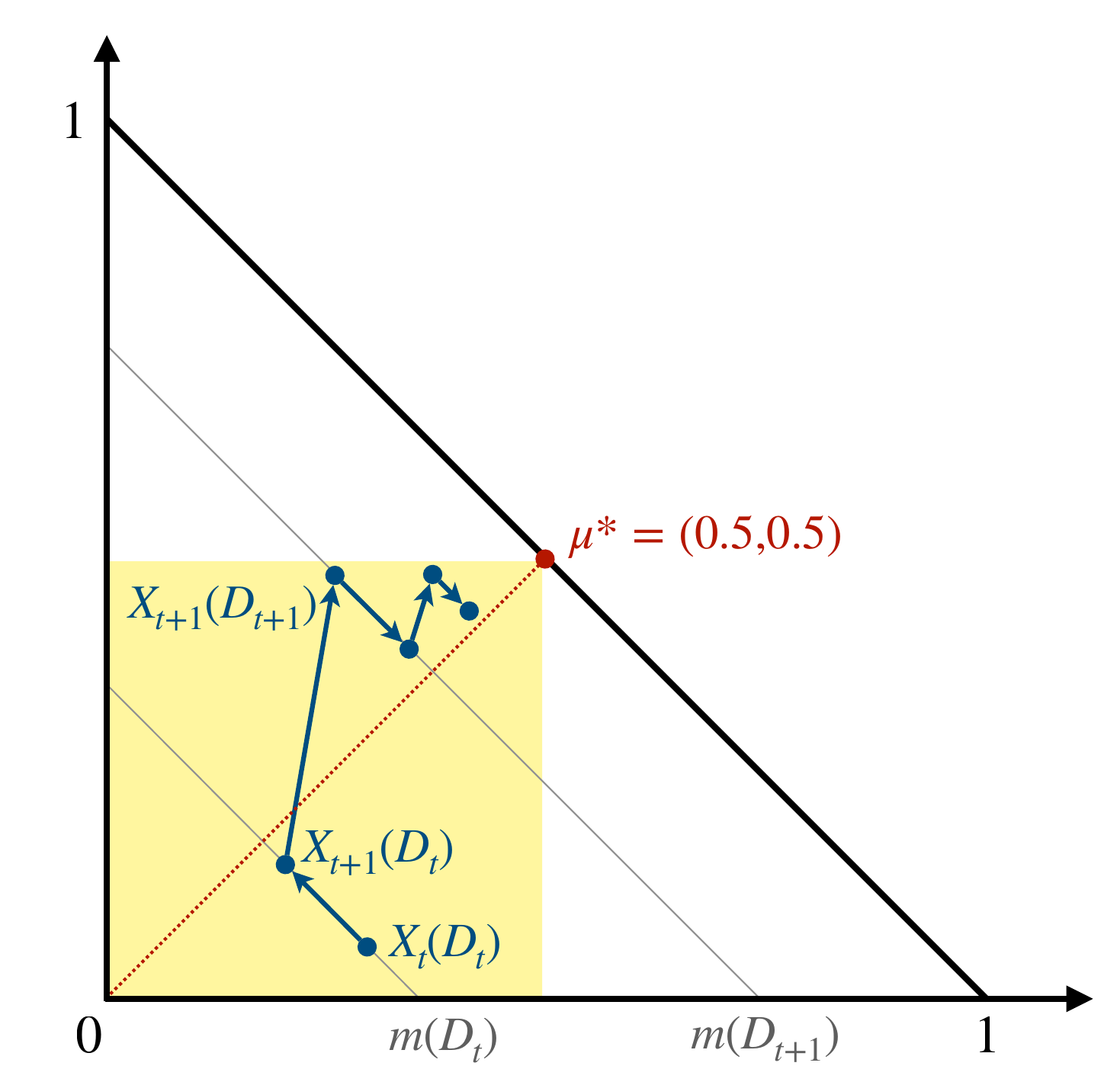}}
    \hfill
    \subcaptionbox{Intuitions of the ID policy. \label{fig:id-union-state-space}}
    {\includegraphics[height=7.2cm]{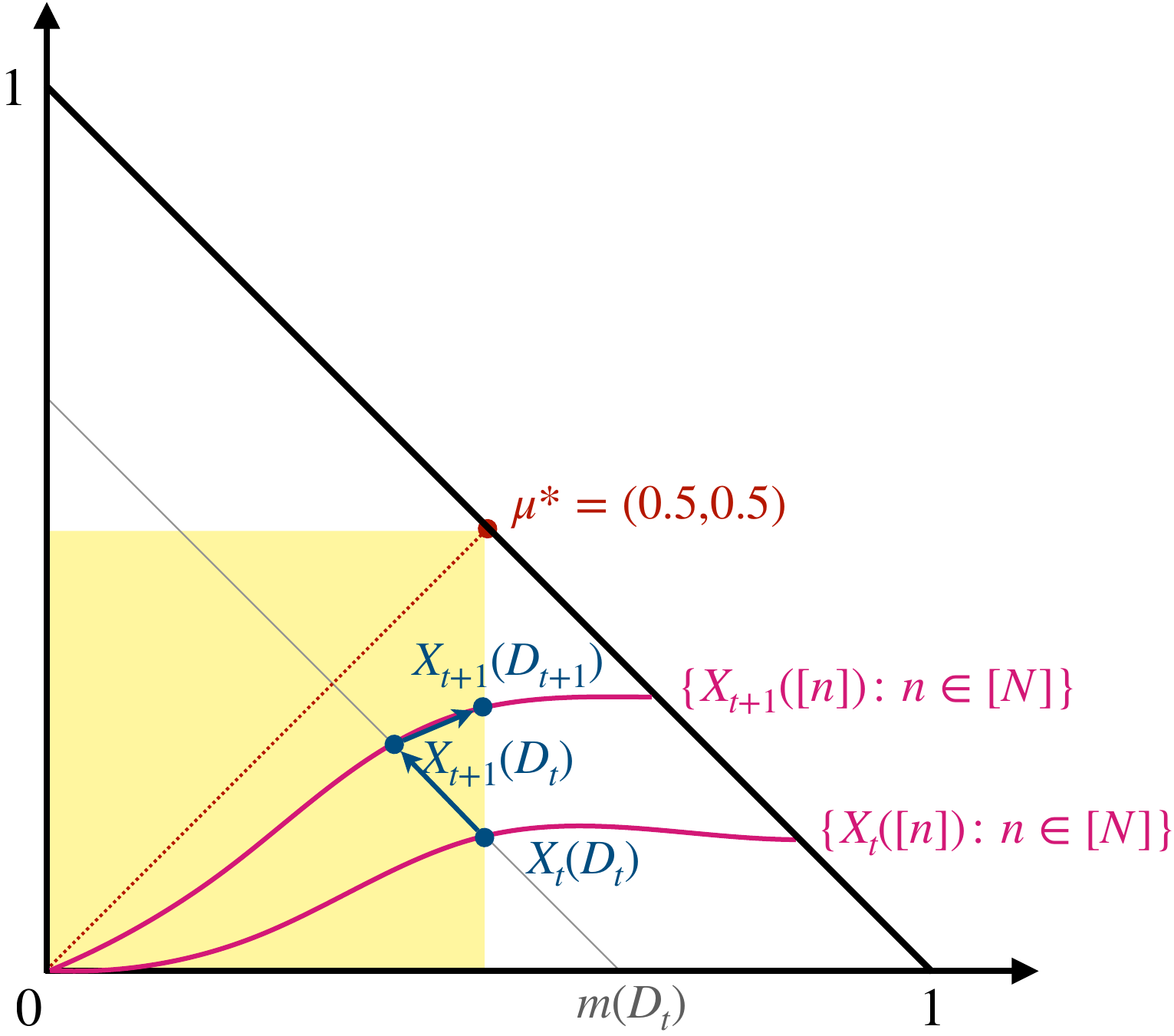}}
    }
    {Intuitions of the set-expansion policy and the ID policy through the example in \Cref{fig:union-state-space:two-state-mdp}. \label{fig:union-state-space}}
    {\textbf{(a)} Under the set-expansion policy, the solid dots and arrows illustrate a typical trajectory of the alternating updates of the system state $X_t$ and the focus set $D_t$. Specifically, when $X_t$ transitions to $X_{t+1}$, the scaled state-count vector $X_t(D_t)$ updates to $X_{t+1}(D_t)$, moving towards the central dotted line $\{m\statdist\colon m\in [0,1]_N\}$ since most arms in $D_t$ follow the optimal single-armed policy $\pibs$. 
    Subsequently, when the set $D_t$ expands to $D_{t+1}$, $X_{t+1}(D_t)$ updates to $X_{t+1}(D_{t+1})$,
    moving up and to the right as much as possible while remaining in the shaded region. 
    \textbf{(b)} Under the ID policy, the system state at time $t$ is visualized by the solid curve  $\{X_t([n]) \colon n\in [N]\}$. The segment of this curve in the shaded region corresponds to the arms following $\pibs$, so this segment drifts towards the central dotted line $\{m\statdist\colon m\in [0,1]_N\}$ in the next time step. 
    To enable a similar analysis to the set-expansion policy, 
    we intuitively define the focus set $D_t$ such that $X_t(D_t)$ is the point where the solid curve intersects the boundary of the shaded region.}
\end{figure}

Next, we use an example to provide some more concrete intuition. For illustration purposes, let us temporarily suppose that the $L_1$ norm between $\statdist$ and any distribution on $\sspa$ strictly decreases after right-multiplying $P_\pibs$. 

Consider the RB problem defined by the two-state single-armed MDP in \Cref{fig:union-state-space:two-state-mdp}. For any system state $x$ and any subset of arms $D$, the scaled state-count vector on $D$, $x(D) = (X(D,0), X(D,1))$, can be represented by a point in the triangle $\{(a,b)\colon a\geq 0, b\geq 0, a+b\leq 1\}$, depicted in \Cref{fig:union-state-space:subset-dynamics}. 
The shaded region where all arms in $D$ can follow $\pibs$ is $[0,0.5]\times [0,0.5]$, marked in yellow. 
One can verify that the constraint $\slk(X_t, D_t) \geq 0$ exactly keeps $X_t(D_t)$ in the shaded region. In fact, $\slk(X_t, D_t)$ is proportional to the $L_1$ distance between $X_t(D_t)$ and the boundary of the shaded region, and is non-negative if $X_t(D_t)$ stays within this region. 
Consequently, under the set-expansion policy, in each time step, most of the arms in $D_t$ follow $\pibs$, causing $X_{t+1}(D_t)$ to move closer to $m(D_t)\statdist$ (on the central dotted line) than $X_t(D_t)$; 
when the set-expansion policy updates $D_t$ to $D_{t+1}$, it maximizes $m(D_{t+1})$ under constraints $\slk(X_{t+1}, D_{t+1})\geq 0$ and $D_{t+1}\supseteq D_t$, so $X_{t+1}(D_{t+1})$ moves to the upper right of $X_{t+1}(D_t)$ but still stays in the shaded region. 
As these steps repeat, the sequence $X_1(D_1), X_{2}(D_1), X_{2}(D_{2}), X_{3}(D_{2}), X_3(D_3) \dots$ converges in a zigzag fashion to a small neighborhood of $\statdist$, as illustrated by the solid arrows in \Cref{fig:union-state-space:subset-dynamics}. 

Based on the above intuition, we can formally prove that the set-expansion policy is asymptotically optimal, as stated in \Cref{thm:set-expansion-policy} below. The proof of \Cref{thm:set-expansion-policy} is provided in \Cref{sec:proof-set-expansion-policy}. 

\begin{theorem}[Optimality gap of set-expansion policy]\label{thm:set-expansion-policy}
    Consider an $N$-armed restless bandit problem with the single-armed MDP $(\sspa, \aspa, P, r)$ and budget $\alpha N$ for $0< \alpha < 1$. 
    Assume that the optimal single-armed policy induces an aperiodic unichain (\Cref{assump:aperiodic-unichain}). Let $\pi$ be the set-expansion policy (\Cref{alg:set-expansion}). Then for all $N$ and initial state vector $\veS_0$, the optimality gap of $\pi$ is bounded as
    \begin{equation}
    \label{eq:set-expansion-policy-bound}
        \ropt - \rsysn \leq \frac{\constse}{\sqrt{N}},
    \end{equation}
    where $\constse$ is a constant depending on $\rmax$, $|\sspa|$, $\rhoBudget \triangleq \min\{\alpha, 1-\alpha\}$, and $P_{\pibs}$; the explicit expression of $\constse$ is given in the proof. 
\end{theorem}

Theorem~\ref{thm:set-expansion-policy} shows that under \Cref{assump:aperiodic-unichain}, the set-expansion policy is asymptotically optimal with an $O(1/\sqrt{N})$ optimality gap. 
The bound \eqref{eq:set-expansion-policy-bound} holds for any finite $N$, and only depends on intuitive quantities, including the problem primitives and a quantity reflecting the mixing time of the transition matrix $P_\pibs$.

Finally, note that Lines~\ref{alg:set-expansion-action-rect-starts}--\ref{alg:set-expansion-action-rect-ends} of \Cref{alg:set-expansion} is just one way to implement the action rectification step of the focus-set policy by breaking ties uniformly at random. 
The analysis of the set-expansion policy still goes through as long as the constraints in Lines~\ref{alg:focus-set-action-rect-1}--\ref{alg:focus-set-action-rect-2} of \Cref{alg:focus-set} are satisfied. 
In \Cref{sec:experiments}, we will also consider an alternative way for action rectification for the set-expansion policy, where we let arms outside the focus-sets follow the LP index policy \cite{GasGauYan_23_exponential}, which sometimes leads to a better empirical performance.

\subsection{ID policy}
\label{sec:results_ID-policy}

Next, we introduce another instance of the focus-set policies, named the ID policy, whose pseudocode is given in \Cref{alg:id}. 
The ID policy first samples an ideal action $\syshat{A}_t(i)$ for each arm $i\in[N]$ using $\pibs$. It then proceeds sequentially through the arms by their IDs ($i=1,2,\dots, N$), assigning $A_t(i) = \syshat{A}_t(i)$ for as many arms as the budget constraint allows. 
All subsequent arms with higher IDs are then forced to take the same action (either all $0$ or all $1$).

\begin{algorithm}[t] 
\caption{ID policy}
\label{alg:id}
\begin{flushleft}
\hspace*{\algorithmicindent} \textbf{Input}: number of arms $N$, budget $\alpha N$, the optimal single-armed policy $\pibs$,\\
\hspace*{\algorithmicindent} \hspace{0.37in}
initial system state $X_0$, initial state vector $\veS_0$
\end{flushleft}
\begin{algorithmic}[1]
    \For{$t=0,1,2,\dots$}
        \State Independently sample $\syshat{A}_t(i)\sim \pibs(\cdot | S_t(i))$ for $i\in[N]$
        \Comment{\ul{\emph{Action sampling}}}
        \If{$\sumN \syshat{A}_t(i) \geq \alpha N$}
        \Comment{\ul{\emph{Action rectification}}}
        \label{alg:id:act-rect-start}
            \State $\Ngood \gets \max \big\{n \leq N \colon \sum_{i=1}^{n} \syshat{A}_t(i) \leq \alpha N \big\}$
            \State $A_t(i) \gets \syshat{A}_t(i)$ for $i \leq \Ngood$, $A_t(i) \gets 0$ for $i > \Ngood$
        \Else \label{alg:id:act-rect-second-case-starts}
            \State $\Ngood \gets \max \big\{n\leq N \colon \sum_{i=1}^n (1-\syshat{A}_t(i)) \leq (1-\alpha) N \big\}$
            \State $A_t(i) \gets \syshat{A}_t(i)$ for $i \leq \Ngood$, $A_t(i) \gets 1$ for $i > \Ngood$
        \EndIf \label{alg:id:act-rect-ends}
        \State  Apply $A_t(i)$ and observe $S_{t+1}(i)$ for each arm $i\in[N]$
    \EndFor
\end{algorithmic}
\end{algorithm}

Although the focus set of the ID policy could in principle be any subset of the form $[n]$, for analysis purposes, we will construct a suitable one to use the same framework as the set-expansion policy. The formal definition of this focus set is deferred to \Cref{sec:proof-id-policy:md}. 
We will show that this constructed focus set has two properties: (1) most arms within it follow $\pibs$, and (2) the set expands almost monotonically over time to eventually contain most of the arms in the system. 
In \Cref{fig:id-union-state-space}, we illustrate the dynamics of $X_t$ under the ID policy and the choice of the focus set.

The ID policy again has an $O(1/\sqrt{N})$ optimality gap, as stated in \Cref{thm:id-policy} below; the proof of \Cref{thm:id-policy} is provided in Section~\ref{sec:proof-id-policy}.

\begin{theorem}[Optimality gap of ID policy]
\label{thm:id-policy}
    Consider an $N$-armed restless bandit problem with the single-armed MDP $(\sspa, \aspa, P, r)$ and budget $\alpha N$ for $0< \alpha < 1$. 
    Assume that the optimal single-armed policy induces an aperiodic unichain (\Cref{assump:aperiodic-unichain}). Let $\pi$ be the ID policy (\Cref{alg:id}). Then for all $N$ and initial state vector $\veS_0$, the optimality gap of $\pi$ is bounded as
    \begin{equation}\label{eq:id-policy-bound}
        \ropt - \rsysn \leq \frac{\constid}{\sqrt{N}}, 
    \end{equation}
    where $\constid$ is a constant depending on $\rmax$, $|\sspa|$, $\rhoBudget \triangleq \min\{\alpha, 1-\alpha\}$, and $P_{\pibs}$; the explicit expression of $\constid$ is given in the proof. 
\end{theorem}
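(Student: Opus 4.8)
The plan is to derive \Cref{thm:id-policy} from the meta-theorem of \Cref{sec:formalizing} by (i) presenting the ID policy as an instance of the focus-set template in \Cref{alg:focus-set} equipped with a concrete focus-set rule, and (ii) verifying the sufficient conditions the meta-theorem imposes on that rule. Step (i) is immediate for action sampling, since the action-sampling step of \Cref{alg:id} is identical to Line~\ref{alg:focus-set-action-sampling} of \Cref{alg:focus-set}; the content is to reinterpret the greedy, ID-ordered rectification of \Cref{alg:id} as the focus-set rectification of Lines~\ref{alg:focus-set-action-rect-1}--\ref{alg:focus-set-action-rect-2} for a well-chosen prefix focus set.

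For that focus set I would take $\dynset_t = [N\md(X_t)]$, where $\md(x)$ is (a suitably rounded version of) the largest $m\in\gridset$ such that the prefix $[Nm]$ is feasible in the sense $\slk(x,[Nm])\ge 0$, with $\slk$ as in \eqref{eq:slack-def} --- this is the ``yellow region'' picture of \Cref{fig:id-union-state-space}. This choice serves two purposes. First, on the event $\slk(X_t,\dynset_t)\ge 0$ one shows that $\sum_{i\in\dynset_t}\syshat A_t(i)$ lies in the window $[\alpha N-(N-\abs{\dynset_t}),\,\alpha N]$ with high probability: the slack bounds $\frac{1}{2}\norm{X_t(\dynset_t)-m(\dynset_t)\statdist}_1$, the identity $\sum_s\statdist(s)\pibs(1|s)=\alpha$ (from \eqref{eq:expect-budget-constraint}) turns closeness to $\statdist$ into closeness of the ideal-action count to $\alpha\abs{\dynset_t}$, and a conditional concentration bound absorbs the $O(\sqrt N)$ sampling fluctuation; hence the ID policy does assign $A_t(i)=\syshat A_t(i)$ for every $i\in\dynset_t$, so \eqref{eq:policy-1-actions-rounding-constraints}--\eqref{eq:budget-constraint-restate} hold. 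Second, this focus set is near-monotone: because $P_{\pibs}$ is non-expansive in $\norm{\cdot}_1$ and the arms of $\dynset_t$ follow $\pibs$, $\slk(X_{t+1},\dynset_t)$ remains nonnegative up to an $O(1/\sqrt N)$ slip, so $\md(X_{t+1})$ cannot fall much below $\md(X_t)$.

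With the focus-set rule fixed, step (ii) feeds the meta-theorem the subset Lyapunov function $\distance(x,\dynset)=\frac{1}{2}\norm{x(\dynset)-m(\dynset)\statdist}_1$ and checks its hypotheses: a one-step drift $\E{\distance(X_{t+1},\dynset_t)\mid X_t}\le(1-c)\distance(X_t,\dynset_t)+O(1/\sqrt N)$ on the feasibility event (non-expansiveness plus a quantitative contraction from aperiodicity of the unichain $P_{\pibs}$, and independence of the per-arm transitions given $X_t$ for the noise term); a set-expansion property asserting that once $\distance(X_t,\dynset_t)$ is small the largest feasible prefix grows, driving $\md(X_t)\to 1$; and the reward-accounting link, whereby the $\le N(1-\md(X_t))$ arms outside $\dynset_t$ cost at most $O(\rmax(1-\md(X_t)))$ of the optimality gap while those inside cost $O(\rmax\,\distance(X_t,\dynset_t))$, so that $\rrel-\rsysn\le\rmax\cdot O\!\left(\E{1-\md(X_\infty)}+\E{\distance(X_\infty,\dynset_\infty)}\right)$ in steady state. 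The meta-theorem combines $\distance$ with its dynamically-built global Lyapunov function (coupling the contraction of $\distance$ with the growth of $\md$) to yield $\E{1-\md(X_\infty)}=O(1/\sqrt N)$ and $\E{\distance(X_\infty,\dynset_\infty)}=O(1/\sqrt N)$, giving \eqref{eq:id-policy-bound} with $\constid$ built from the contraction rate $c$ (the mixing behavior of $P_{\pibs}$), the fluctuation constant, $\rmax$, $\abs{\sspa}$, and $\rhoBudget$.

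The main obstacle is the rigidity of the ID policy's focus set: unlike the set-expansion policy, which may pick any cardinality-maximal feasible subset, the ID policy can use only prefixes $[n]$, so the analysis must control the whole curve $n\mapsto X_t([n])$ simultaneously and argue uniformly over $n$ that a large feasible prefix always exists and that the largest feasible prefix cannot lose more than $O(\sqrt N)$ arms per step. This needs a fluctuation bound uniform in $n$ --- bounding $\max_{n\le N}\bigl|\sum_{i\le n}(\syshat A_t(i)-\E{\syshat A_t(i)\mid X_t})\bigr|$ and the analogous prefix sums of state-indicator increments --- together with the structural fact that the greedy ID-ordered rectification only breaks ideal-action compliance at the tail of the prefix. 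Reconciling this prefix-indexed, uniform control with the meta-theorem's hypotheses (stated for a generic, cleaner near-monotone focus-set rule) is where the real work lies; the remaining contraction-plus-fluctuation bookkeeping parallels the proof of \Cref{thm:set-expansion-policy}.
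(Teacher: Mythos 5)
Your overall architecture is the right one and matches the paper's: cast the ID policy as a focus-set policy with a prefix focus set $D_t=[N\md(X_t)]$, verify Conditions~\ref{def:focus-set:pibs-consistency}--\ref{def:focus-set:large-enough}, control the budget requirement uniformly over prefixes via a Doob-type maximal inequality, and invoke \Cref{thm:focus-set-policy}. However, there are two concrete gaps in how you instantiate the Lyapunov function and the focus set, and both would break the verification.

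First, you take $\distance(x,D)=\tfrac12\norm{x(D)-m(D)\statdist}_1$ as the subset Lyapunov function and assert a one-step drift $\E{\distance(X_{t+1},D_t)\mid X_t}\le(1-c)\distance(X_t,D_t)+O(1/\sqrt N)$ from "non-expansiveness plus a quantitative contraction from aperiodicity." Aperiodicity of the unichain $P_\pibs$ does \emph{not} give one-step contraction in $L_1$: e.g.\ a three-state cycle $1\to2\to3$ with a small self-loop at state $3$ is aperiodic and irreducible, yet $\norm{(v-\statdist)P_\pibs}_1=\norm{v-\statdist}_1$ for $v-\statdist\propto e_1-e_2$. Only non-expansiveness holds (\Cref{lem:ell-1-non-expansive}), which is why the paper constructs the $\wmat$-weighted $L_2$ norm of \Cref{def:w-and-w-norm}, for which \Cref{lem:pibar-one-step-contraction-W} gives a genuine one-step factor $1-\tfrac{1}{2\lamw}$, and builds $\hw$ and its envelope $\hid$ on it. Without replacing the $L_1$ distance by such a norm (or reworking the meta-theorem for multi-step drift), the drift condition \eqref{eq:feature-lyapunov:drift} fails.

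Second, you define $\md(x)$ as the largest $m$ with $\slk(x,[Nm])\ge0$, i.e.\ a condition imposed only at the single prefix $[Nm]$. Because $m\mapsto\norm{x([Nm])-m\statdist}_1$ is not monotone, the feasible set $\{m:\slk(x,[Nm])\ge0\}$ need not be an interval: an intermediate prefix $[Nm']$ with $m'<\md(x)$ can have $\norm{x([Nm'])-m'\statdist}_1$ large, hence an expected budget requirement far from $\alpha m'$, and the greedy ID-ordered assignment then runs out of budget (or of idling budget) \emph{before} reaching arm $N\md(x)$. Your uniform maximal-inequality bound controls only the stochastic fluctuation of the prefix sums, not this deterministic deviation at intermediate prefixes, so majority conformity does not follow. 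The paper's fix is exactly the non-decreasing envelope $\hid(x,m)=\max_{m'\le m}\hw(x,m')$ in \eqref{eq:id-policy-h-def}, together with the bound $\max_{m'\le m}\absplain{C_\pibs(x,[Nm'])-\alpha m'}\le\ratiocw\,\hid(x,m)$, which simultaneously controls all prefixes below $N\md(x)$; the same monotonicity is what makes the almost-non-shrinking argument (\Cref{lem:id-monotonic}) go through. Defining $\md$ instead as the largest $m$ such that the constraint holds for \emph{all} $m'\le m$ would repair this, but as written your construction does not.
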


\begin{remark}[Comparison between the ID policy and the set-expansion policy]
    The ID policy is simpler to implement and does not require an explicit calculation of the focus set. 
    Moreover, in simulations, the ID policy often performs slightly better than the set-expansion policy. 
    We also notice in simulations that a larger set of arms are able to persistently follow $\pibs$ under the ID policy than under the set-expansion policy (see Appendix~\ref{app:experiments:se-vs-id} for a closer investigation of this phenomenon). 
    On the other hand, under the ID policy, the arms with higher IDs have less chance to follow $\pibs$, whereas the set-expansion policy is ID-oblivious. 
\end{remark}

\begin{remark}
    Due to the homogeneity of the arms, it is actually sufficient to focus on the scaled state-count vector $X_t([N])$ as the state representation of the RB problem and design policies based on $X_t([N])$. In fact, most prior work on homogeneous RBs \cite{WebWei_90,Ver_16_verloop,GasGauYan_23_whittles,GasGauYan_23_exponential} focuses on $X_t([N])$ rather than $X_t$ as the state representation. 
    However, in our case, neither the set-expansion policy nor the ID policy is Markovian with respect to $X_t([N])$, which naturally leads to the question: Does there exist a focus-set policy that makes $X_t([N])$ a Markov chain and achieves the $O(1/\sqrt{N})$ optimality gap? 
    In Appendix~\ref{app:set-optimization-policy}, we construct another focus-set policy termed \emph{set-optimization policy} that indeed satisfies these two requirements. 
    The basic idea of the set-optimization policy is to update the focus set $D_t$ by minimizing a Lyapunov function of $X_t(D_t)$. 
\end{remark}

\subsection{Connection with existing policies}
\label{sec:connection-previous-policies}

As we intuitively explained in previous subsections, a focus-set policy achieves asymptotic optimality if most arms independently take actions according to the optimal single-armed policy $\pibs$ in the steady state. 
In this subsection, we consider the policies in the prior work, the LP-Priority policies \cite{Ver_16_verloop, GasGauYan_23_exponential} and the FTVA policy \cite{HonXieCheWan_23}, and investigate \emph{the number of arms that follow $\pibs$} under these policies.
This unified view through the number of arms that follow $\pibs$ can help us develop a better understanding of the connection among these policies and the roles of these assumptions in proving asymptotic optimality.

We first consider LP-Priority policies, which achieve asymptotic optimality under the GAP assumption \cite{Ver_16_verloop, GasGauYan_23_exponential}. 
Recall that an LP-Priority policy assigns a priority order to each state, and prioritizes activating the arms in the high-priority states. 
The priority orders must be compatible with the optimal single-armed policy $\pibs$ in the following way. Assuming that $y^*(s,1)+y^*(s,0)>0$ for all $s\in\sspa$, the state space can be partitioned into three subsets: 
$S^+ \triangleq \{s\in\sspa \colon \pibs(1|s) =1 \}$, $S^0 \triangleq \{s\in\sspa \colon 0 < \pibs(1|s) < 1\}$, and $S^- \triangleq \{s\in\sspa \colon \pibs(1|s) = 0\}$,
where one can always find a $\pibs$ such that $|S^0| \leq 1$ and we let $S^0=\{\sneu\}$ if $|S^0| = 1$. 
The definition of an LP-Priority policy requires the priorities of the states in $S^+$ to be higher than those in $S^0$, and further higher than those in $S^-$ (Definition 4.4 of \cite{Ver_16_verloop}).

Here is a heuristic way to see why GAP implies the asymptotic optimality of an LP-Priority policy: 
When GAP holds, the state-count vector of the system concentrates around $\statdist$ in the steady state (see, e.g., Lemma~12 in \cite{GasGauYan_23_whittles}). 
The relation $\sum_{s \in S^+} \statdist(s) + \sum_{s \in S^0} \statdist(s) \pibs(1|s) = \sum_{s \in \sspa} y^*(s,1) = \alpha$, along with the concentration of the state-count vector, suggests that, in an approximate sense, all arms in $S^+$ are activated, a fraction $\pibs(1|\sneu)$ of arms in $S^0$ are activated if $|S^0|=1$, and all arms in $S^-$ remain passive. 
Thus, in the steady state, for each state $s\in\sspa$, the fraction of active arms under an LP-Priority policy approximately coincides with the fraction of active arms if the actions were sampled by $\pibs$. Consequently, nearly all arms can be considered as following $\pibs$. 
On the other hand, when GAP fails, the scaled state-count vector may significantly deviate from $\statdist$, making it infeasible to activate a $\pibs(1|s)$ fraction of arms for each state $s\in\sspa$. In this case, only a limited subset of arms can be considered as following $\pibs$.

Next, we discuss the FTVA policy from \cite{HonXieCheWan_23}, an asymptotically optimal policy that relies on the SA condition. FTVA is simulation-based: it first generates ideal actions by simulating a virtual, unconstrained system where each arm independently follows $\pibs$. It then lets as many arms follow these virtual actions as the budget constraint allows.
The asymptotic optimality of FTVA hinges on the SA condition, which guarantees that the real states of most arms remain aligned with their virtual counterparts, thereby ensuring that their real actions are effectively drawn from $\pibs$ conditioned on the real states. 

FTVA is similar to our policies in the sense that it also maintains a set of arms that follow $\pibs$ persistently for a long time (referred to as the ``good arms'' in \citep{HonXieCheWan_23}). However, for the rest of the arms whose virtual and real states are misaligned, FTVA \emph{passively} waits for them to re-align on their own, which is guaranteed to happen only when SA holds. 
In contrast, our policies \emph{actively} expand the focus set whenever the empirical state distribution of the focus set is sufficiently close to $\statdist$, which is guaranteed to happen if the optimal single-armed policy $\pibs$ induces an aperiodic unichain.

\subsection{Alternative formulation with inequality budget constraint}
\label{sec:inequality}
The set-expansion policy and the ID policy can be extended to the alternative formulation of the RB problem with the inequality budget constraint: 
\begin{subequations}
\label{eq:N-arm-formulation-inequality}
\begin{align}
    \label{eq:objective-inequality}
    \underset{\text{policy } \pi}{\text{maximize}} & \quad \rliminf  \\
    \label{eq:hard-budget-constraint-inequality}
    \text{subject to}  
    &\quad  \sumN A_t^\pi(i) \leq \alpha N,\quad \forall t= 0,1,2,\dots 
\end{align} 
\end{subequations}
The modified pseudo-code of the two policies is given in Algorithms~\ref{alg:set-expansion-inequality} and \ref{alg:id-inequality} of Appendix~\ref{app:inequality-constraint}. The changes in the policies are summarized below. 
\begin{itemize}
    \item The equality budget constraint of the LP relaxation \eqref{eq:expect-budget-constraint} changes to an inequality. 
    \item In the action rectification step of the set-expansion policy (Lines~\ref{alg:set-expansion-action-rect-second-case-starts}--\ref{alg:set-expansion-action-rect-ends} of \Cref{alg:set-expansion}), we should instead take $A_t(i) = \syshat{A}_t(i)$ for all $i\in D_t$ whenever $\sum_{i\in D_t} \syshat{A}_t(i) \leq \alpha N$. 
    \item In the action rectification step of the ID policy (Lines~\ref{alg:id:act-rect-second-case-starts}--\ref{alg:id:act-rect-ends} of \Cref{alg:id}), we should instead take $\Ngood = N$ and $A_t(i) = \syshat{A}_t(i)$ for all $i\in[N]$ whenever $\sumN \syshat{A}_t(i) \leq \alpha N$. 
\end{itemize}

We show that the modified set-expansion policy and the modified ID policy also achieve $O(1/\sqrt{N})$ optimality gaps for the inequality-constraint setting, as stated in the following theorems. 

\begin{restatable}{theorem}{seineq}
    \label{thm:set-expansion-inequality}
    Consider the $N$-armed restless bandit problem with the inequality constraint \eqref{eq:N-arm-formulation-inequality} and assume that the optimal single-armed policy $\pibs$ induces an aperiodic unichain. Let $\pi$ be the set-expansion policy defined in \Cref{alg:set-expansion-inequality}. For any $N$ and initial state vector $\veS_0$, the optimality gap of $\pi$ is bounded as
    \begin{equation}
        \ropt - \rsysn \leq \frac{\constse'}{\sqrt{N}}, 
    \end{equation}
    for some constant $\constse'$ depending on $\rmax$, $|\sspa|$, $\rhoBudget \triangleq \min\{\alpha, 1-\alpha\}$, and $P_{\pibs}$. 
\end{restatable}

\begin{restatable}{theorem}{idineq}
    \label{thm:id-inequality}
    Consider the $N$-armed restless bandit problem with the inequality constraint \eqref{eq:N-arm-formulation-inequality} and assume that the optimal single-armed policy $\pibs$ induces an aperiodic unichain. Let $\pi$ be the ID policy defined in \Cref{alg:id-inequality}. For any $N$ and initial state vector $\veS_0$, the optimality gap of $\pi$ is bounded as
    \begin{equation}
        \ropt - \rsysn \leq \frac{\constid'}{\sqrt{N}}, 
    \end{equation}
    for some constant $\constid'$ depending on $\rmax$, $|\sspa|$, $\rhoBudget \triangleq \min\{\alpha, 1-\alpha\}$, and $P_{\pibs}$. 
\end{restatable}

The proofs of Theorems~\ref{thm:set-expansion-inequality} and \ref{thm:id-inequality} are almost identical to the proofs of Theorems~\ref{thm:set-expansion-policy} and \ref{thm:id-policy} with the only differences lying in the proofs of Lemmas~\ref{lem:set-expansion-pibs-consistency} and \ref{lem:id-pibs-consistency}. We refer the reader to Appendix~\ref{app:inequality-constraint} for the details.

\section{A meta-theorem for focus-set policies and the proof}
\label{sec:formalizing}

In this section, we establish a meta-theorem, \Cref{thm:focus-set-policy}, which provides sufficient conditions for a focus-set policy to have an $O(1/\sqrt{N})$ optimality gap. The meta-theorem and its conditions are stated in \Cref{sec:meta-theorem}, followed by its proof in \Cref{sec:proof-meta-theorem}. 

The meta-theorem contains the main technical novelty of our analysis. In the subsequent sections, we will simply verify that the set-expansion policy and the ID policy satisfy these conditions under the aperiodic unichain assumption, thereby proving the optimality gap bounds in \Cref{thm:set-expansion-policy} and \Cref{thm:id-policy}.

\subsection{Meta-theorem on $O(1/\sqrt{N})$ optimality gaps of focus-set policies}
\label{sec:meta-theorem}

We now state a set of conditions which, once satisfied by a focus-set policy, guarantees an $O(1/\sqrt{N})$ optimality gap.

To begin with, we define a class of functions called the \emph{subset Lyapunov functions}, which are indexed by a collection of subsets $D\subseteq  [N]$. 
The subset Lyapunov function indexed by $D$ upper bounds the distance between $x(D)$ and $m(D) \statdist$, and decreases geometrically if the arms in $D$ follow the optimal single-armed policy $\pibs$ indefinitely. 
The formal definition is given below.

\begin{definition}[Subset Lyapunov functions]\label{def:feature-lyapunov}
    Let $\mathcal{D}$ be a collection of subsets of $[N]$. 
    Consider a class of functions $\{h(\cdot,\dynset) \colon \dynset \in \mathcal{D} \}$, where each $h(\cdot, \dynset)$ maps a system state $x$ to a real value that depends only on the states of the arms in $D$. 
    This class of functions is referred to as the \emph{subset Lyapunov functions} for the policy $\pibs$ if they satisfy the following conditions: 
    \begin{enumerate}
        \item (Drift condition for a fixed $D$). There exist constants $\rhoContr \in (0,1)$ and  $\constHnoise > 0$ such that for any $D\in\mathcal{D}$ and any system state $x$,
        \begin{equation}\label{eq:feature-lyapunov:drift}
            \mathbb{E}\big[h(X_1, D) \givenbig X_0 = x, A_0(i)\sim \pibs(\cdot |S_0(i)) \, \forall i\in D \big] \leq \rhoContr h(x, D) + \frac{\constHnoise}{\sqrt{N}}.
        \end{equation}
        \item (Distance domination). There exists a constant $\constVnorm > 0$ such that for any $D\in\mathcal{D}$ and any system state $x$,
        \begin{equation}\label{eq:feature-lyapunov:strength}
            h(x, D) \geq \constVnorm \norm{x(D) - m(D) \statdist}_1.
        \end{equation} 
        \item (Lipschitz continuity in $D$). There exists a constant $\liph > 0$ such that for any $D,D'\in\mathcal{D}$ with $D\subseteq D'$ and any system state $x$,
        \begin{equation}\label{eq:feature-lyapunov:lipschitz}
            \abs{h(x, D') - h(x, D)} \leq \liph \big(m(D') - m(D)\big). 
        \end{equation}
    \end{enumerate}
\end{definition}

As an example, in \Cref{sec:proof-set-expansion-policy}, we will define a weighted $L_2$ norm, $\norm{v}_{\wmat} \triangleq \sqrt{v\wmat v^\top}$ for some weight matrix $\wmat$. We will show that the class of functions $\{\hw(\cdot, D)\}_{D\subseteq[N]}$ with $\hw(x, D) = \norm{x(D) - m(D)\statdist}_\wmat$ 
satisfies the definition of subset Lyapunov functions.

While the subset Lyapunov function $h(\cdot,D)$ is constructed to witness the convergence of $X_t(D)$ to $m(D) \statdist$ for a \emph{fixed} set $D$, in a focus-set policy, the set $D_t$ is not fixed but rather is chosen dynamically.
Below we introduce three conditions on $D_t$, which would allow us to use the subset Lyapunov functions to establish the asymptotic optimality of a focus set policy. 

Condition~\ref{def:focus-set:pibs-consistency} requires that most arms in the focus set $D_t$ conform to the actions sampled from $\pibs$.

\begin{customcond}{1}[Majority conformity]\label{def:focus-set:pibs-consistency}
    Let $\constAction>0$ be a constant.
    For any time step $t\geq0$, there exists $D_t'\subseteq D_t$ such that for any $i\in D_t'$, the policy chooses $A_t(i) = \syshat{A}_t(i)$, and 
    \begin{equation}
        \Ebig{m(D_t \backslash D_t') \givenbig X_t, D_t} \leq \frac{\constAction}{\sqrt{N}} \quad a.s.
    \end{equation}
\end{customcond}

Condition~\ref{def:focus-set:monotonic} requires that $D_t$ changes in a set-inclusive manner and does not shrink much in expectation. 

\begin{customcond}{2}[Almost non-shrinking]\label{def:focus-set:monotonic} 
    For any time step $t\geq0$, either $D_{t+1} \supseteq D_t$ or $D_{t+1} \subseteq D_t$. 
    Moreover, there exists a constant $\constMono > 0$ such that for any $t\geq0$, 
    \begin{equation}\label{eq:focus-set:monotonic}
        \Ebig{\big(m(D_t) - m(D_{t+1})\big)^+ \givenbig X_t, D_t} \leq  \frac{\constMono}{\sqrt{N}} \quad a.s.
    \end{equation}
\end{customcond}

Condition~\ref{def:focus-set:large-enough} requires that $m(D_t)$, the fraction of arms covered by $D_t$, is sufficiently large with respect to a subset Lyapunov function on $D_t$.

\begin{customcond}{3}[Sufficient coverage]\label{def:focus-set:large-enough} 
There exist a class of subset Lyapunov functions  $\{h(\cdot,\dynset) \colon \dynset \in \mathcal{D} \}$ 
and constants $\lipexpand > 0, \constEpnoise>0$ such that for any time step $t\geq0$, 
        \begin{equation}
            1 - m(D_t) \leq \lipexpand h(X_t, D_t) + \frac{\constEpnoise}{\sqrt{N}} \quad a.s.
        \end{equation}
\end{customcond}

We remark that Conditions~\ref{def:focus-set:pibs-consistency} and \ref{def:focus-set:monotonic} are generally easier to satisfy when the focus set $D_t$ is small, while Condition~\ref{def:focus-set:large-enough} requires $D_t$ to be large.

We are now ready to state the meta-theorem, which establishes an $O(1/\sqrt{N})$ bound on the optimality gap of a focus-set policy that satisfies the above conditions. 

\begin{theorem}[Meta-theorem on optimality gap of focus-set policies]\label{thm:focus-set-policy}
    Consider an $N$-armed restless bandit problem with the single-armed MDP $(\sspa, \aspa, P, r)$ and budget $\alpha N$ for $0< \alpha < 1$. 
    Assume that the optimal single-armed policy induces an aperiodic unichain (\Cref{assump:aperiodic-unichain}). 
    Let $\pi$ be a focus-set policy given in \Cref{alg:focus-set} satisfying Conditions~\ref{def:focus-set:pibs-consistency}, \ref{def:focus-set:monotonic}, and \ref{def:focus-set:large-enough} for a class of subset Lyapunov functions $\{h(\cdot, D)\}_{D\in\mathcal{D}}$, then 
    \begin{equation}
        \ropt - \rsysn \leq \rmax \left(\Big(\frac{1}{\constVnorm} + \frac{2}{\liph}\Big) \frac{K_1}{1-\rhoFinal} + 2\constAction\right) \frac{1}{\sqrt{N}}  \quad \forall N, \veS_0,
    \end{equation}     
    where $\rhoFinal = 1 - (1-\rhoContr)/(1+ \liph \lipexpand)$ and $K_1 = \constHnoise + 2\liph\constAction + 2\liph\constMono + (1-\rhoContr)L_h\constEpnoise/(1+ \liph \lipexpand)$. 
\end{theorem}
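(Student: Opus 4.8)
The strategy is to bound the gap by the LP slack, $\ropt - \rsysn \le \rrel - \rsysn$ (established via the LP relaxation), and then to exhibit a bivariate Lyapunov function that stays within $O(1/\sqrt N)$ of zero in a Cesàro-averaged sense, which in turn forces the average reward of $\pi$ to be within $O(1/\sqrt N)$ of $\rrel$. Throughout, let $\mathcal{F}_t$ denote the $\sigma$-algebra generated by the history through the choice of $D_t$ at step $t$, so that $X_t$ and $D_t$ are $\mathcal{F}_t$-measurable while the fresh samples $\syshat{A}_t(i)$ are not, and write $\bar r(s)=\sum_{a}\pibs(a\mid s)\,r(s,a)$, so that $\sums\statdist(s)\bar r(s)=\rrel$ and $\abs{\bar r(s)}\le\rmax$.

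The key object is $\Phi_t \triangleq h(X_t,D_t)+\liph\bigl(1-m(D_t)\bigr)$, and the choice of the weight $\liph$ on the uncovered mass $1-m(D_t)$ is what makes the argument work: it makes \emph{enlarging the focus set cost nothing}. Indeed, if $D_{t+1}\supseteq D_t$, the Lipschitz-in-$D$ property \eqref{eq:feature-lyapunov:lipschitz} gives $h(X_{t+1},D_{t+1})\le h(X_{t+1},D_t)+\liph\bigl(m(D_{t+1})-m(D_t)\bigr)$, and adding $\liph(1-m(D_{t+1}))$ telescopes to $\Phi_{t+1}\le h(X_{t+1},D_t)+\liph(1-m(D_t))$; if instead $D_{t+1}\subseteq D_t$, the same property only adds the extra term $2\liph\bigl(m(D_t)-m(D_{t+1})\bigr)^+$, whose conditional expectation is $O(1/\sqrt N)$ by Condition~\ref{def:focus-set:monotonic}. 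To control $\E{h(X_{t+1},D_t)\mid\mathcal{F}_t}$ I would couple the true transition with the hypothetical one in which \emph{every} arm of $D_t$ follows $\pibs$: by Condition~\ref{def:focus-set:pibs-consistency} the two coupled system states agree except on the non-conforming arms $D_t\setminus D_t'$, whose mass has conditional expectation at most $\constAction/\sqrt N$, so the Lipschitz-in-$D$ property (applied to $D_t'\subseteq D_t$) bounds the resulting discrepancy in $h$ by $O(\liph\, m(D_t\setminus D_t'))$, and the fixed-$D$ drift \eqref{eq:feature-lyapunov:drift} handles the hypothetical system. Putting these together,
\[
    \E{\Phi_{t+1}\mid\mathcal{F}_t}\;\le\;\rhoContr\,h(X_t,D_t)+\liph\bigl(1-m(D_t)\bigr)+\frac{\constHnoise+O\bigl(\liph(\constAction+\constMono)\bigr)}{\sqrt N}.
\]

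The term $\liph(1-m(D_t))$ does not contract by itself, so I would convert a fraction $\theta\in(0,1)$ of it into $h$-mass using the sufficient-coverage Condition~\ref{def:focus-set:large-enough}, $\theta\liph(1-m(D_t))\le\theta\liph\bigl(\lipexpand h(X_t,D_t)+\constEpnoise/\sqrt N\bigr)$: the coefficient of $h(X_t,D_t)$ becomes $\rhoContr+\theta\liph\lipexpand$ and that of $\liph(1-m(D_t))$ becomes $1-\theta$, and the choice $\theta=(1-\rhoContr)/(1+\liph\lipexpand)$ equalizes them at the common value $\rhoFinal=1-(1-\rhoContr)/(1+\liph\lipexpand)\in(0,1)$. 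This yields the contraction $\E{\Phi_{t+1}\mid\mathcal{F}_t}\le\rhoFinal\Phi_t+K_1/\sqrt N$ with $K_1$ as in the statement; taking expectations, iterating the recursion, and using that $\Phi_0\le h(X_0,D_0)+\liph$ is bounded gives $\limsup_{T\to\infty}\tfrac1T\sumt\E{\Phi_t}\le K_1/\bigl((1-\rhoFinal)\sqrt N\bigr)$.

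It remains to compare rewards. At a fixed $t$, split the arms into $D_t'$ and its complement, route the $D_t'$-sum through $D_t$ (which, unlike $D_t'$, is $\mathcal{F}_t$-measurable, so $\E{\tfrac1N\sum_{i\in D_t}r(S_t(i),\syshat{A}_t(i))\mid\mathcal{F}_t}=\sums X_t(D_t,s)\bar r(s)$), and bound every arm of $(D_t')^c$ crudely by $\rmax$; after taking expectations and invoking Condition~\ref{def:focus-set:pibs-consistency}, $\tfrac1N\sumN\E{r(S_t(i),A_t(i))}\ge\E{\sums X_t(D_t,s)\bar r(s)}-\rmax\E{1-m(D_t)}-2\rmax\constAction/\sqrt N$. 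Since $\sums X_t(D_t,s)\bar r(s)\ge m(D_t)\rrel-\rmax\norm{X_t(D_t)-m(D_t)\statdist}_1\ge\rrel-\rmax(1-m(D_t))-\tfrac{\rmax}{\constVnorm}h(X_t,D_t)$ by $\abs{\rrel}\le\rmax$ and the distance-domination property \eqref{eq:feature-lyapunov:strength}, and since $h(X_t,D_t)\le\Phi_t$ and $1-m(D_t)\le\Phi_t/\liph$, we obtain $\rrel-\tfrac1N\sumN\E{r(S_t(i),A_t(i))}\le\rmax\bigl(\tfrac1{\constVnorm}+\tfrac2{\liph}\bigr)\E{\Phi_t}+2\rmax\constAction/\sqrt N$. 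Averaging over $t=0,\dots,T-1$, sending $T\to\infty$, and substituting the Cesàro bound on $\E{\Phi_t}$ yields exactly the claimed inequality. The main obstacle is the drift step: because the focus set is chosen online and can expand by an $\Omega(1)$ amount in a single step, a direct estimate blows up; the two devices that rescue it — weighting $1-m(D_t)$ by $\liph$ so that expansion is free, and trading only the $\theta$-fraction of uncovered mass permitted by Condition~\ref{def:focus-set:large-enough} to restore a contraction factor strictly below $1$ — together with the coupling that transfers the fixed-$D$ drift past the non-conforming arms, form the technical heart of the proof.
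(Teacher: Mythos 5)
Your proposal is correct and follows essentially the same route as the paper's proof: the same bivariate Lyapunov function $V(x,D)=h(x,D)+\liph(1-m(D))$, the same decomposition of its drift into a transition term (handled by coupling with the all-conforming system via Condition~\ref{def:focus-set:pibs-consistency} and the Lipschitz property) and a set-update term (Condition~\ref{def:focus-set:monotonic}), and the same use of Condition~\ref{def:focus-set:large-enough} to trade uncovered mass for $h$-mass and obtain the contraction factor $\rhoFinal$. Your Cesàro-averaged recursion matches the paper's general-case argument, and your reward decomposition through $X_t(D_t)$ rather than $X_t([N])$ is a cosmetic variation that yields the identical constant.
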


\subsection{Proof of \Cref{thm:focus-set-policy} assuming convergence to stationary distribution.}
\label{sec:proof-meta-theorem}

In this section, we prove \Cref{thm:focus-set-policy} under the assumption that the focus set policy induces a Markov chain that converges to a unique stationary distribution. 
This simplified setting allows us to present the key proof ideas more clearly. 
The proof for the general case follows essentially the same line of argument and is given in Appendix~\ref{app:proof-meta-thm-general}. The general proof also bounds the finite-time expected reward as a by-product.

Under the above assumption, we use $\veS_\infty$, $\syshat{\veA}_{\infty},$ $\veA_\infty$, $X_\infty$, $D_\infty$ to denote the random variables following the stationary distributions of $\veS_t$, $\syshat{\veA}_{t},$ $\veA_t$, $X_t$, $D_t$, respectively. With this notation, the long-run average reward of the policy $\pi$ is equal to $\rsysn =  \frac{1}{N} \sumN \Ebig{r(S_\infty(i), A_\infty(i))}$.

\begin{proof}{\textit{Proof of \Cref{thm:focus-set-policy} assuming convergence to stationary distribution}.}
Our proof is structured into two steps: understanding the optimality gap, and bounding the Lyapunov function.

\paragraph{\textbf{Understanding the optimality gap.}}
    Recall that the optimality gap can be upper bounded as $\ropt - \rsysn \leq \rrel - \rsysn$, where $\rrel$ is the expected reward associated with the optimal steady-state state-action distribution $y^* = (y^*(s,a))_{s\in\sspa,a\in\aspa}$.  Then
    \begin{align}
        & \ropt - \rsysn \nonumber\\
        &\qquad \leq \rrel - \rsysn \nonumber\\
        &\qquad = \sumsa r(s,a) y^*(s,a)  - \frac{1}{N} \sumN \EBig{r(S_\infty(i), A_\infty(i))} \nonumber\\
        &\qquad \leq \sumsa r(s,a) y^*(s,a)  - \frac{1}{N} \sumN \EBig{r(S_\infty(i), \syshat{A}_\infty(i))} + \frac{2\rmax}{N} \sumN \ProbBig{\syshat{A}_\infty(i)\neq A_\infty(i)} \nonumber\\
        &\qquad \leq \sumsa r(s,a)\Big( y^*(s,a) - \pibs(a|s) \Ebig{X_\infty([N], s)}\Big) + 2\rmax  \Ebig{1 - m(D_\infty')} \nonumber\\
        &\qquad = \sumsa r(s,a)\pibs(a|s) \Big(\statdist(s)-\Ebig{X_\infty([N], s)}\Big)  + 2\rmax  \Ebig{1 - m(D_\infty')} \nonumber \\
        &\qquad \leq \rmax \Ebig{\normbig{\statdist - X_\infty([N])}_1} + 2\rmax  \Ebig{1 - m(D_\infty)} +  \frac{2\rmax\constAction}{\sqrt{N}}, \label{eq:opt-gap-bound-1}
    \end{align}
    where $D_\infty'$ is the subset of $D_\infty$ assumed in \Cref{def:focus-set:pibs-consistency}, which satisfies $m(D_\infty') \geq m(D_\infty) - \constAction /\sqrt{N}$. 
    Therefore, to bound the optimality gap, it suffices to bound $\Ebig{\normbig{\statdist - X_\infty([N])}_1}$, which is the distributional distance, and $\Ebig{1 - m(D_\infty)}$, which is the size of the complement of the focus set.

    In this proof, we construct a Lyapunov function that can be viewed as an upper bound on a weighted sum of the two terms in \eqref{eq:opt-gap-bound-1}.  In particular, consider the following Lyapunov function
    \begin{equation}
        V(x, D) = h(x, D) + \liph (1-m(D)).
    \end{equation}
    
    Let us first see how the terms in \eqref{eq:opt-gap-bound-1} are upper bounded by $\E{V(X_\infty, D_\infty)}$.
    For the first term, it is easy to see that $\constVnorm \norm{\statdist - X_\infty([N])}_1 \leq h(X_\infty, [N])$ by the distance domination property of $h$ (cf.\ Equation \eqref{eq:feature-lyapunov:strength}).  Then by the Lipschitz continuity of $h$, we have $h(X_\infty, [N]) \le h(X_\infty, D_{\infty})+\liph (1-m(D_\infty))=V(X_\infty, D_{\infty})$.  Thus, $\Ebig{\normbig{\statdist - X_\infty([N])}_1}\le \Ebig{V(X_\infty, D_{\infty})}/\constVnorm$.
    For the second term, clearly $\Ebig{1 - m(D_\infty)}\le \Ebig{V(X_\infty, D_{\infty})}/\liph$.
    Therefore, the upper bound in \eqref{eq:opt-gap-bound-1} can be further bounded as
    \begin{equation}\label{eq:opt-gap-bdd-by-v}
        \ropt - \rsysn \le \rmax \left(\frac{1}{\constVnorm} + \frac{2}{\liph}\right)\Ebig{V(X_\infty, D_{\infty})} + \frac{2\rmax\constAction}{\sqrt{N}}, 
    \end{equation}
    which makes it sufficient to bound $\Ebig{V(X_\infty, D_{\infty})}$.

    \paragraph{\textbf{Bounding the Lyapunov function.}}
    We establish an upper bound on $\Ebig{V(X_\infty, D_{\infty})}$ by proving the following drift condition: 
    for any $t\ge 0$, 
    \begin{equation}\label{eq:thm4-proof:drift-condition}
        \Ebig{V(X_{t+1}, \dynset_{t+1}) \givenbig X_t, D_t} \leq  \rhoFinal V(X_t, \dynset_t) + \frac{K_1}{\sqrt{N}},
    \end{equation}
    for some constants $\rhoFinal\in(0,1)$ and $K_1>0$. To prove \eqref{eq:thm4-proof:drift-condition}, observe that for any time step $t\geq0$, 
    \begin{align}
        &V(X_{t+1}, \dynset_{t+1}) 
        = h(X_{t+1}, \dynset_{t+1})  + \liph(1-m(\dynset_{t+1})) \nonumber \\
        &\qquad\leq \Bigl(h(X_{t+1}, \dynset_t) + \liph\big\lvert m(\dynset_{t+1}) - m(\dynset_t)\big\rvert \Bigr) + \Bigl(\liph(1-m(\dynset_t)) + \liph(m(\dynset_t) - m(\dynset_{t+1})) \Bigr)  \nonumber \\
        &\qquad= h(X_{t+1}, \dynset_t)  + \liph(1-m(\dynset_t)) + 2\liph \big(m(\dynset_t) -  m(\dynset_{t+1})\big)^+, \label{eq:thm4-proof:drift-intermediate-0}
    \end{align}
    where we have used the facts that $D_{t+1} \supseteq D_t$ or $D_{t+1}\subseteq D_t$ (\Cref{def:focus-set:monotonic}) and the Lipschitz continuity of $h(x, D)$ in $D$. 
    Subtracting $V(X_t, D_t)$ and taking the expectation, we obtain the  \emph{key decomposition} below: 
    \begin{align}
        \Ebig{V(X_{t+1}, \dynset_{t+1})\givenbig X_t, D_t} - V(X_t, \dynset_t) 
        &\leq \Ebig{h(X_{t+1}, \dynset_t) \givenbig X_t, D_t} - h(X_t, \dynset_t) \label{eq:thm4-proof:drift-intermediate-1}\\
        &\mspace{14mu} + 2 \liph \Ebig{\big(m(\dynset_t) - m(\dynset_{t+1})\big)^+\givenbig X_t, D_t}.\label{eq:thm4-proof:drift-intermediate-2}
    \end{align}
    where the term in  \eqref{eq:thm4-proof:drift-intermediate-1} represents the contribution of state transitions to the drift of $V(X_t, D_t)$, and the term in \eqref{eq:thm4-proof:drift-intermediate-2} represents the contribution of set updates. 
    
    We first upper bound the term $\E{h(X_{t+1}, \dynset_t) \givenplain X_t, D_t} - h(X_t, \dynset_t)$ in \eqref{eq:thm4-proof:drift-intermediate-1}.  Note that this bound would immediately follow from the drift condition of subset Lyapunov functions if all the arms in $D_t$ were to follow the ideal actions. 
    By the majority conformity property of the focus set $D_t$ (\Cref{def:focus-set:pibs-consistency}), there exists $D_t'\subseteq D_t$ such that for any $i\in D_t'$, the policy chooses $A_t(i) = \syshat{A}_t(i)$, and $\E{m(D_t\backslash D_t') \givenplain X_t, D_t} = O(1/\sqrt{N})$. Let $X_{t+1}'$ be a random element denoting the system state at time $t+1$ if $A_t(i) = \syshat{A}_t(i)$ for all $i\in [N]$. We couple $X_{t+1}$ with $X_{t+1}'$ such that they have the same states on the set $D_t'$, and thus $h(X_{t+1}, D_t') = h(X_{t+1}', D_t')$. Then
    \begin{align*}
        & \Ebig{h(X_{t+1}, D_t) \givenbig X_t, D_t} \nonumber\\
        &\qquad =  \Ebig{h(X_{t+1}', D_t) + \big(h(X_{t+1}, D_t) - h(X_{t+1}', D_t)\big) \givenbig X_t, D_t}\nonumber\\
        &\qquad= \Ebig{h(X_{t+1}', D_t) + \big(h(X_{t+1}, D_t) - h(X_{t+1}, D_t')\big) + \big(h(X_{t+1}', D_t') - h(X_{t+1}', D_t)\big)  \givenbig X_t, D_t}\\
        &\qquad\leq \rhoContr h(X_t, D_t) + \frac{\constHnoise}{\sqrt{N}}  + 2\liph \Ebig{ m(D_t\backslash D_t') \givenbig X_t, D_t}\\
        &\qquad\leq \rhoContr h(X_t, D_t) + \frac{\constHnoise + 2\liph \constAction}{\sqrt{N}},
    \end{align*}
    where we have used the drift condition and the Lipschitz continuity of $h$. 
    It follows that
    \begin{equation}
        \Ebig{h(X_{t+1}, D_t) \givenbig X_t, D_t} - h(X_t, D_t) \leq -(1-\rhoContr) h(X_t, D_t) + \frac{\constHnoise + 2\liph \constAction}{\sqrt{N}}.
    \end{equation}
    Next, to bound the term in \eqref{eq:thm4-proof:drift-intermediate-2}, we simply apply \Cref{def:focus-set:monotonic}:
    \begin{equation}
        2 \liph \Ebig{\big(m(\dynset_t) - m(\dynset_{t+1})\big)^+\givenbig X_t, D_t} \le \frac{2 \liph\constMono}{\sqrt{N}}.
    \end{equation}
    Combining the above bounds for \eqref{eq:thm4-proof:drift-intermediate-1} and \eqref{eq:thm4-proof:drift-intermediate-2}, we get
    \begin{align}\label{eq:thm4-proof:drift-intermediate-3}
       \Ebig{V(X_{t+1}, \dynset_{t+1})\givenbig X_t, D_t} - V(X_t, \dynset_t) 
        \leq  -(1-\rhoContr) h(X_t, D_t)  + \frac{\constHnoise + 2\liph\constAction + 2\liph\constMono}{\sqrt{N}}. 
    \end{align}

    To get \eqref{eq:thm4-proof:drift-condition}, it remains to upper bound the $-(1-\rhoContr) h(X_t, D_t)$ term. 
    By the sufficient coverage condition (\Cref{def:focus-set:large-enough}), $1 - m(D_t) \leq \lipexpand h(X_t, D_t) + \constEpnoise / \sqrt{N}$,  so 
    \[
        V(X_t, D_t) = h(X_t, D_t) + \liph (1-m(D_t))
        \le (1+ \liph \lipexpand) h(X_t, D_t)  + \frac{\liph \constEpnoise}{\sqrt{N}}.
    \]
    Upper bounding the $-(1-\rhoContr) h(X_t, D_t)$ term in \eqref{eq:thm4-proof:drift-intermediate-3} using the above inequality, we get 
    \begin{equation*}
        \Ebig{V(X_{t+1}, D_{t+1}) \givenbig X_t, D_t}  \leq \rhoFinal V(X_t, D_t) + \frac{K_1}{\sqrt{N}},
    \end{equation*}
    where $\rhoFinal = 1 - (1-\rhoContr)/(1+ \liph \lipexpand)$ and $K_1 = \constHnoise + 2\liph\constAction + 2\liph\constMono + (1-\rhoContr)\liph\constEpnoise/(1+ \liph \lipexpand)$. 
    This is the bound in \eqref{eq:thm4-proof:drift-condition} that we set out to prove.

    Taking the expectation on both sides of \eqref{eq:thm4-proof:drift-condition} letting $t\to\infty$, we have
    \[
        \E{V(X_\infty, D_\infty)} \leq \rhoFinal \E{V(X_\infty, D_\infty)} + \frac{K_1}{\sqrt{N}},
    \]
    which implies that 
    \begin{equation}
    \label{eq:proof_meta_1}
        \E{V(X_\infty, D_\infty)} \leq \frac{K_1}{(1-\rhoFinal)\sqrt{N}}.
    \end{equation}
    This completes the proof of \Cref{thm:focus-set-policy}. \Halmos
\end{proof}

\begin{remark}
    We conclude this section by a remark on our use of the \emph{bivariate Lyapunov functions} $h(x,D)$ and  $V(x, D) = h(x, D) + \liph(1-m(D))$. By definition, the subset Lyapunov function $h(x,D)$ depends on the system state $x$ only through $x(D).$ 
    This means that for fixed $D$, the drifts of $h(x,D)$ and $V(x, D)$ only depend on the state transitions of the arms in $D$. When $D$ is chosen appropriately, most arms in $D$ can follow $\pibs$ under the budget constraint, thus inheriting the convergence and concentration properties of the aperiodic unichain induced by $\pibs$.  
    Therefore, the auxiliary variable $D$ provides the flexibility of focusing on a subset of arms so that the drift is easy to bound and expanding the subset gradually to the entire system. 
    
    For the ID policy, $D_t$ is determined by the system state $X_t$, and hence $h(X_t,D_t)$ can be written as a function of $X_t$ alone. Even in this case, using a bivariate $h$ is beneficial, as it allows us to decouple the two variables---in particular,  quantities like $h(X_{t+1},D_t)$ play a prominent role in our proof of \Cref{thm:focus-set-policy}.
    
    Our use of bivariate Lyapunov functions departs from most prior work on the RB problem \cite{Whi_88_rb,WebWei_90,Ver_16_verloop,GasGauYan_23_whittles,GasGauYan_23_exponential}, whose analysis is in terms of the full system state $X_t([N])$, under which the dynamics of arms in a subset is less visible. 
    We expect that our approach is useful for a broader class of problems where the system state consists of multiple components, a subset of which have a more tractable dynamic at a given time. In this case, one may construct a Lyapunov function that can zoom into this more tractable subset and seek to gradually expand the subset based on the system state.  
\end{remark}

\section{Proof of Theorem~\ref{thm:set-expansion-policy} (Optimality gap of set-expansion policy)}
\label{sec:proof-set-expansion-policy}
In this section, we prove Theorem~\ref{thm:set-expansion-policy} using the framework established in Section~\ref{sec:formalizing}.
This section is organized as follows. 
In \Cref{sec:pf-set-exp:subset-lyapunov}, we define the subset Lyapunov functions for the set-expansion policy. 
In \Cref{sec:pf-set-exp:lemmas}, we present three lemmas verifying that the set-expansion policy satisfies Conditions~\ref{def:focus-set:pibs-consistency}, \ref{def:focus-set:monotonic} and \ref{def:focus-set:large-enough}, and prove Theorem~\ref{thm:set-expansion-policy} by citing the meta-theorem \Cref{thm:focus-set-policy}. 
These three lemmas are subsequently proved in Sections~\ref{sec:pf-set-exp:pf-majority-conformity}, \ref{sec:pf-set-exp:pf-non-shrinking} and \ref{sec:pf-set-exp:pf-sufficient-cov}.

\subsection{Subset Lyapunov functions}\label{sec:pf-set-exp:subset-lyapunov}
To construct the subset Lyapunov functions, we consider the $L_2$ norm weighted by a carefully constructed matrix $\wmat$ defined below. 

\begin{restatable}{definition}{wdef}\label{def:w-and-w-norm}
     Let $\wmat$ be an $|\sspa|$-by-$|\sspa|$ matrix given by  
    \begin{equation}\label{eq:w-def}
        \wmat = \sum_{k=0}^\infty (P_\pibs - \Xi)^k (P_\pibs^{\top} - \Xi^{\top})^k,
    \end{equation}
    where $\Xi$ is an $|\sspa|$-by-$|\sspa|$ matrix with each row being $\statdist$. 
    Let $\lamw$ denote maximal eigenvalue of $\wmat$. 
\end{restatable}

Intuitively, the infinite series in \eqref{eq:w-def} is convergent because $P_\pibs$ is aperiodic unichain, $\Xi$ is the limit of $P_\pibs^k$ as $k\to\infty$, and $(P_\pibs - \Xi)^k = P_\pibs^k - \Xi$. 
In Appendix~\ref{app:proof-norm-lemmas}, we formally show that the matrix $\wmat$ is well-defined and positive definite, with eigenvalues in the range $[1, \lamw]$. 
Our next lemma shows that $P_\pibs$ is a pseudo-contraction under the $\wmat$-weighted $L_2$ norm.

\begin{restatable}[Pseudo-contraction under $\wmat$-weighted $L_2$ norm]{lemma}{wnorm}\label{lem:pibar-one-step-contraction-W}
    Suppose $P_\pibs$ is an aperiodic unichain on $\sspa$. Then
    for any distribution $v\in \Delta(\sspa)$, 
    \begin{equation}\label{eq:pibar-contraction}
        \norm{v P_\pibs - \statdist}_\wmat \leq \Big(1 - \frac{1}{2\lamw}\Big) \norm{v - \statdist}_\wmat,
    \end{equation}
    where $\lamw$ is the maximal eigenvalue of the matrix $\wmat$ given in \Cref{def:w-and-w-norm}, and $\norm{\,\cdot\,}_\wmat$ is the $\wmat$-weighted $L_2$ norm, that is, $\norm{u}_\wmat = \sqrt{u \wmat u^\top}$ for any row vector $u \in \R^{|\sspa|}$. 
\end{restatable}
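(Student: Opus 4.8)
The plan is to reduce the inequality to a quadratic-form estimate for the matrix $Q \triangleq P_\pibs - \Xi$ together with a telescoping identity enjoyed by $\wmat$. I will freely use the facts, established in \cref{app:proof-norm-lemmas}, that $\wmat$ is well-defined, symmetric, and positive definite with all eigenvalues lying in $[1, \lamw]$; in particular $\wmat \succeq I$ and $\wmat \preceq \lamw I$. Note also that $Q^k = P_\pibs^k - \Xi$ for $k \ge 1$ (since $P_\pibs \Xi = \Xi P_\pibs = \Xi^2 = \Xi$), so the aperiodic-unichain assumption, which gives $P_\pibs^k \to \Xi$, guarantees $Q^k \to 0$ geometrically and hence absolute convergence of the defining series $\wmat = \sum_{k\ge 0} Q^k (Q^k)^\top$.

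\emph{Step 1: center the vector.} Fix $v \in \Delta(\sspa)$ and put $u \triangleq v - \statdist$. Since $\statdist P_\pibs = \statdist$, we have $v P_\pibs - \statdist = u P_\pibs$. Because every row of $\Xi$ equals $\statdist$ and both $v$ and $\statdist$ have entries summing to $1$, one computes $v\Xi = \statdist\Xi = \statdist$, so $u\Xi = 0$ and therefore $u P_\pibs = u(P_\pibs - \Xi) = uQ$. It thus suffices to show $\norm{uQ}_\wmat \le (1 - \tfrac{1}{2\lamw}) \norm{u}_\wmat$.

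\emph{Step 2: telescoping identity and conclusion.} Inserting the series for $\wmat$ and using absolute convergence to move $Q$ and $Q^\top$ inside the sum gives $Q \wmat Q^\top = \sum_{k\ge 0} Q^{k+1}(Q^{k+1})^\top = \wmat - I$. Hence $\norm{uQ}_\wmat^2 = u Q \wmat Q^\top u^\top = u(\wmat - I) u^\top = \norm{u}_\wmat^2 - \norm{u}_2^2$. Since $\wmat \preceq \lamw I$ yields $\norm{u}_\wmat^2 \le \lamw \norm{u}_2^2$, i.e. $\norm{u}_2^2 \ge \tfrac{1}{\lamw}\norm{u}_\wmat^2$, we obtain $\norm{uQ}_\wmat^2 \le (1 - \tfrac{1}{\lamw}) \norm{u}_\wmat^2$; taking square roots and applying $\sqrt{1-x} \le 1 - x/2$ for $x \in [0,1]$ finishes the argument. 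The only point that needs any care is the interchange of $Q$ and $Q^\top$ with the infinite sum in the telescoping identity, which is legitimate precisely because $Q^k$ decays geometrically under \Cref{assump:aperiodic-unichain}; everything else is elementary linear algebra, and I do not anticipate a genuine obstacle.
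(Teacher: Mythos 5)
Your proof is correct and follows essentially the same route as the paper's: both hinge on the telescoping identity $(P_\pibs-\Xi)\wmat(P_\pibs^\top-\Xi^\top)=\wmat-I$ together with the eigenvalue bound $\wmat\preceq\lamw I$, and both convert the resulting squared-norm contraction into the stated factor $1-\tfrac{1}{2\lamw}$ via a first-order bound on the square root (your $\sqrt{1-x}\le 1-x/2$ versus the paper's concavity estimate). The only cosmetic difference is that you work entirely with squared norms before taking roots, which neatly sidesteps the division by $\norm{v-\statdist}_\wmat$ in the paper's version.
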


Now we are ready to define the subset Lyapunov functions. 
For any system state $x$ and $D\subseteq [N]$, let
\begin{equation}
    \label{eq:hw-def}
    \hw(x, D) = \norm{x(D) - m(D)\statdist}_W,
\end{equation}
which measures the distance between $x(D)$, the scaled state-count vector on $D$, and $m(D)\statdist$, the correspondingly scaled optimal stationary distribution. 
Note that $\hw(x, D)$ depends only on the states of the arms in $D$, as required by the definition of subset Lyapunov functions. 
The next lemma, \Cref{lem:hw-feature-lyaupnov}, shows that the class of functions $\{\hw(x, D)\}_{D\subseteq[N]}$ satisfies the definition of subset Lyapunov functions (\Cref{def:feature-lyapunov}). 
The proof of \Cref{lem:hw-feature-lyaupnov} is provided in Appendix~\ref{app:proof-feature-lyapunov-lemmas}. 

\begin{restatable}{lemma}{hwfeature}\label{lem:hw-feature-lyaupnov}
    The class of functions $\{\hw(\cdot, D)\}_{D\subseteq[N]}$ defined in \eqref{eq:hw-def} satisfies that for any system state $x$ and any pair of subsets $D,D'\subseteq [N]$ with $D\subseteq D'$, 
    \begin{align}
        \label{eq:hw-feature-lyapunov:drift}
        \Eplain{\hw(X_1, D) \givenplain X_0 = x, A_0(i)\sim \pibs(\cdot| S_0(i)) \, \forall i\in D } &\leq  \big(1-\frac{1}{2\lamw}\big) \hw(x, D) + \frac{2\lamw^{1/2}}{\sqrt{N}} \\
        \label{eq:hw-feature-lyapunov:strength}
        \hw(x, D) &\geq \frac{1}{|\sspa|^{1/2}} \norm{x(D) - m(D)\statdist}_1 \\
        \label{eq:hw-feature-lyapunov:lipschitz}
        \abs{\hw(x, D) - \hw(x, D')} &\leq \lipw (m(D') - m(D)), 
    \end{align}
    where the Lipschitz constant $\lipw = 2\lamw^{1/2}$. 
    These inequalities imply the drift condition, distance dominance property, and Lipschitz continuity in \Cref{def:feature-lyapunov}, respectively. Consequently, $\{\hw(x, D)\}_{D\subseteq[N]}$ is a class of subset Lyapunov functions for the optimal single-armed policy $\pibs$.     
\end{restatable}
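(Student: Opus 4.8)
\Cref{lem:hw-feature-lyaupnov} bundles three claims about $\hw(x,D)=\norm{x(D)-m(D)\statdist}_\wmat$: the drift inequality \eqref{eq:hw-feature-lyapunov:drift}, the $L_1$-domination \eqref{eq:hw-feature-lyapunov:strength}, and the Lipschitz bound \eqref{eq:hw-feature-lyapunov:lipschitz} in the set argument. I would dispatch the last two first, since they are pure norm-comparison facts that use only that $\wmat$ is positive definite with spectrum in $[1,\lamw]$ (established in \Cref{app:proof-norm-lemmas}); the drift is the substantive part, and the plan there is to combine the pseudo-contraction \Cref{lem:pibar-one-step-contraction-W} with a second-moment bound on the one-step fluctuation.

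For \eqref{eq:hw-feature-lyapunov:strength}: since $\wmat\succeq I$, every row vector $u$ obeys $\norm{u}_\wmat=\sqrt{u\wmat u^\top}\ge\sqrt{uu^\top}=\norm{u}_2\ge|\sspa|^{-1/2}\norm{u}_1$, the last step by Cauchy--Schwarz; take $u=x(D)-m(D)\statdist$. For \eqref{eq:hw-feature-lyapunov:lipschitz}: when $D\subseteq D'$, write $x(D')=x(D)+x(D'\backslash D)$ and $m(D')=m(D)+m(D'\backslash D)$, so that $(x(D')-m(D')\statdist)-(x(D)-m(D)\statdist)=x(D'\backslash D)-m(D'\backslash D)\statdist$, and the reverse triangle inequality gives $\abs{\hw(x,D')-\hw(x,D)}\le\norm{x(D'\backslash D)-m(D'\backslash D)\statdist}_\wmat$. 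Both $x(D'\backslash D)$ and $m(D'\backslash D)\statdist$ are nonnegative vectors of $L_1$-mass exactly $m(D'\backslash D)=m(D')-m(D)$, so their difference has $L_1$-norm at most $2(m(D')-m(D))$; combined with $\norm{\cdot}_\wmat\le\lamw^{1/2}\norm{\cdot}_2\le\lamw^{1/2}\norm{\cdot}_1$ this gives the bound with $\lipw=2\lamw^{1/2}$.

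For the drift \eqref{eq:hw-feature-lyapunov:drift}, I would condition on $X_0=x$ and on the event that every arm $i\in D$ draws $A_0(i)\sim\pibs(\cdot\givenplain S_0(i))$ (the actions of arms outside $D$ are irrelevant, since $\hw(\cdot,D)$ depends only on the states of arms in $D$). Conditionally, the arms in $D$ move independently, each arm in state $s$ going to a state distributed as the $s$-th row of $P_{\pibs}$, so the conditional mean of $X_1(D)$ equals $x(D)P_{\pibs}$. If $m(D)>0$, applying \Cref{lem:pibar-one-step-contraction-W} to the distribution $x(D)/m(D)$ and rescaling by $m(D)$ yields $\norm{x(D)P_{\pibs}-m(D)\statdist}_\wmat\le(1-\tfrac{1}{2\lamw})\hw(x,D)$, which is also trivially true when $m(D)=0$. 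Then I would split $X_1(D)-m(D)\statdist=(x(D)P_{\pibs}-m(D)\statdist)+(X_1(D)-x(D)P_{\pibs})$ and apply the triangle inequality, reducing the problem to bounding the expected $\wmat$-norm of the stochastic part. By Jensen this is at most $(\mathrm{tr}(\wmat\Sigma))^{1/2}$ with $\Sigma$ the conditional covariance of $X_1(D)$; since $X_1(D)$ is a $\tfrac1N$-scaled sum of $|D|$ independent random unit vectors, $\Sigma$ is a $\tfrac1{N^2}$-scaled sum of $|D|$ covariance matrices each of trace at most $1$, so $\mathrm{tr}(\wmat\Sigma)\le\lamw\,\mathrm{tr}(\Sigma)\le\lamw m(D)/N\le\lamw/N$. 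This bounds the fluctuation by $O(\lamw^{1/2}/\sqrt N)$, and combining it with the contraction of the conditional mean gives \eqref{eq:hw-feature-lyapunov:drift}.

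I do not anticipate a deep obstacle: the analytic heart --- a uniform, not merely local, contraction of $P_{\pibs}$ toward $\statdist$ in the $\wmat$-norm --- is already packaged in \Cref{lem:pibar-one-step-contraction-W}, so what remains is bookkeeping. The two points that warrant care are (i) correctly normalizing and rescaling $x(D)$ when invoking the contraction, so the factor $1-\tfrac{1}{2\lamw}$ survives with no residual $m(D)$-dependence, and (ii) controlling the $\wmat$-norm of the stochastic fluctuation through the spectral bound $\wmat\preceq\lamw I$ rather than attempting it directly. It is also worth stressing that \eqref{eq:hw-feature-lyapunov:drift} must hold for \emph{every} system state $x$, including those with $x(D)$ far from $m(D)\statdist$, which is exactly why the argument must pass through the global pseudo-contraction of \Cref{lem:pibar-one-step-contraction-W} rather than a linearization about the fixed point.
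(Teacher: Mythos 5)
Your proposal is correct and follows essentially the same route as the paper's proof: the same $\wmat\succeq I$ and Cauchy--Schwarz argument for the $L_1$-domination, the same triangle-inequality treatment of $x(D'\backslash D)-m(D'\backslash D)\statdist$ for the Lipschitz bound, and for the drift the same decomposition into a conditional-mean part handled by \Cref{lem:pibar-one-step-contraction-W} (applied to $x(D)/m(D)$) plus a fluctuation term bounded in second moment via independence across arms and the spectral bound $\wmat\preceq\lamw I$. Your trace computation $\mathrm{tr}(\wmat\Sigma)\le\lamw\,m(D)/N$ even yields a marginally sharper constant than the paper's coordinatewise bound based on $\norm{\zmNoise(i)}_1\le 2/N$, so the stated inequality follows a fortiori.
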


\subsection{Lemmas verifying Conditions~\ref{def:focus-set:pibs-consistency}, \ref{def:focus-set:monotonic} and \ref{def:focus-set:large-enough}; Proof of Theorem~\ref{thm:set-expansion-policy}}\label{sec:pf-set-exp:lemmas}
Next, we establish Lemmas~\ref{lem:set-expansion-pibs-consistency}, \ref{lem:set-expansion-monotonic} and \ref{lem:set-expansion-large-enough}, which verify that the set-expansion policy in \Cref{alg:set-expansion} satisfies Conditions~\ref{def:focus-set:pibs-consistency}, \ref{def:focus-set:monotonic} and \ref{def:focus-set:large-enough}, respectively. Then we apply \Cref{thm:focus-set-policy} to prove \Cref{thm:set-expansion-policy}. 

\begin{lemma}[Set-expansion policy satisfies \Cref{def:focus-set:pibs-consistency}]\label{lem:set-expansion-pibs-consistency}
    Consider the set-expansion policy (\Cref{alg:set-expansion}). For any  $t\geq0$, there exists a subset $D_t' \subseteq D_t$ such that for any $i\in D_t'$, 
    $A_t(i) = \syshat{A}_t(i)$, and 
    \begin{equation}
        \Ebig{m(D_t \backslash D_t') \givenbig X_t, D_t} \leq \frac{1}{\sqrt{N}} + \frac{1}{N} \quad a.s.
    \end{equation}
\end{lemma}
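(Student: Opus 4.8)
The plan is to exhibit the subset $D_t' \subseteq D_t$ explicitly and then bound its expected relative deficit. The natural choice is to take $D_t'$ to be the set of arms in $D_t$ whose actual action equals the ideal action, i.e. $D_t' = \{i \in D_t : A_t(i) = \syshat{A}_t(i)\}$; by construction $A_t(i) = \syshat{A}_t(i)$ for all $i \in D_t'$, so only the size bound remains. First I would inspect the three branches of the action-rectification step in \Cref{alg:set-expansion}. In the third (``$\text{Else}$'') branch, Lines~\ref{alg:set-expansion-action-rect-third-case-starts}--\ref{alg:set-expansion-action-rect-ends} set $A_t(i) = \syshat{A}_t(i)$ for \emph{all} $i \in D_t$, so $D_t \backslash D_t' = \emptyset$ and there is nothing to prove. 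Hence the entire argument reduces to controlling the two extreme branches, which trigger precisely when $\sum_{i\in D_t}\syshat{A}_t(i) \geq \alpha N$ or $\sum_{i\in D_t}\syshat{A}_t(i) \leq \alpha N - (N-|D_t|)$.

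Second, I would bound the number of ``misrectified'' arms in each extreme branch. In the first branch, the policy keeps exactly $\alpha N$ of the arms in $D_t$ that wanted action $1$, and forces the remaining $\bigl(\sum_{i\in D_t}\syshat{A}_t(i)\bigr) - \alpha N$ of them to take action $0$ instead; every other arm in $D_t$ wanted $0$ and indeed gets $0$. So $|D_t \backslash D_t'| = \bigl(\sum_{i\in D_t}\syshat{A}_t(i) - \alpha N\bigr)^+$. Symmetrically, in the second branch $|D_t \backslash D_t'| = \bigl((1-\alpha)N - \sum_{i\in D_t}(1-\syshat{A}_t(i))\bigr)^+ = \bigl(\alpha N - (N - |D_t|) - \sum_{i\in D_t}\syshat{A}_t(i)\bigr)^+$. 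In both cases $|D_t \backslash D_t'|$ is the positive part of a deviation of $\sum_{i\in D_t}\syshat{A}_t(i)$ from an endpoint of the interval $[\alpha N - (N-|D_t|),\, \alpha N]$ of admissible budget usage within $D_t$.

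Third, I would take the conditional expectation given $(X_t, D_t)$. Conditioned on $(X_t, D_t)$, the ideal actions $\{\syshat{A}_t(i)\}_{i\in D_t}$ are independent Bernoulli with $\syshat{A}_t(i)\sim\pibs(\cdot\givenplain S_t(i))$, so $Z := \sum_{i\in D_t}\syshat{A}_t(i)$ has mean $\mu := \sum_{i\in D_t}\pibs(1\givenplain S_t(i)) = N\sum_{s\in\sspa} X_t(D_t,s)\pibs(1\givenplain s)$. The key point is that the slack condition $\slk(X_t, D_t) \geq 0$ guarantees $\mu$ lies comfortably inside $[\alpha N - (N-|D_t|),\, \alpha N]$: indeed one checks that $\bigl|\mu - \tfrac12\bigl(\alpha N + (\alpha N - (N-|D_t|))\bigr)\bigr| \le \tfrac12\norm{X_t(D_t) - m(D_t)\statdist}_1 \cdot N + $ (a slack-budget term), and $\slk(X_t,D_t)\ge 0$ exactly says that $\tfrac12\norm{X_t(D_t)-m(D_t)\statdist}_1 \le \beta(1-m(D_t))$, so the center is at distance at least $\beta(1-m(D_t))N - \tfrac12\norm{\cdot}_1 N \ge 0$ from either endpoint once we account for the definitions; since the half-width of the interval is $\tfrac12(N - |D_t|) = \tfrac12(1-m(D_t))N$ and $\beta \le \min(\alpha,1-\alpha) \le \tfrac12$, the mean $\mu$ stays within the interval with margin at least $\bigl(\tfrac12(1-m(D_t)) - \beta(1-m(D_t))\bigr)N \ge 0$ — actually the cleaner statement is that the distance from $\mu$ to the nearer endpoint is at least $\bigl(\beta(1-m(D_t)) - \tfrac12\norm{X_t(D_t)-m(D_t)\statdist}_1\bigr)N/(\text{normalization}) = \slk(X_t,D_t)\cdot N \ge 0$ up to the bookkeeping I will nail down. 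Consequently $\Ebig{|D_t\backslash D_t'| \givenbig X_t, D_t} \le \Ebig{|Z - \mu| \givenbig X_t,D_t} + 1$, where the $+1$ absorbs integer-rounding of the margin; bounding $\Ebig{|Z-\mu|}$ by $\sqrt{\Var(Z)} \le \sqrt{|D_t|/4} \le \sqrt{N}/2$ via Cauchy--Schwarz and dividing by $N$ gives $\Ebig{m(D_t\backslash D_t') \givenbig X_t,D_t} \le \tfrac{1}{2\sqrt N} + \tfrac1N \le \tfrac1{\sqrt N} + \tfrac1N$.

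The main obstacle is the bookkeeping in the third step: pinning down exactly how the inequality $\slk(X_t, D_t)\ge 0$ translates into a margin on $\mu$ relative to the endpoints of $[\alpha N - (N-|D_t|),\,\alpha N]$, and verifying the relevant identity $\sum_{s} X_t(D_t,s)\pibs(1\givenplain s) = \sum_s y^*(s,1)\cdot(\text{stuff}) + (\text{error controlled by }\norm{X_t(D_t) - m(D_t)\statdist}_1)$ using $\sum_s \statdist(s)\pibs(1\givenplain s) = \sum_s y^*(s,1) = \alpha$ and $\abs{\pibs(1\givenplain s)} \le 1$. Everything else — the branch analysis, the Bernoulli mean/variance computation, and the Cauchy--Schwarz bound on $\Eplain{|Z-\mu|}$ — is routine. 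I would therefore front-load the verification that the slack definition \eqref{eq:slack-def} is precisely calibrated so that $\mu$ never exits the feasible budget interval, which is the conceptual heart of why the constant is so small.
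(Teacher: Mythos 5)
Your proposal follows essentially the same route as the paper's proof: take $D_t' = \{i \in D_t : A_t(i) = \syshat{A}_t(i)\}$, express $|D_t\setminus D_t'|$ as the positive-part overshoot of $Z=\sum_{i\in D_t}\syshat{A}_t(i)$ beyond the feasible interval $[\alpha N-(N-|D_t|),\,\alpha N]$, show that the slack condition places the conditional mean $\mu=N\sum_s X_t(D_t,s)\pibs(1\givenplain s)$ inside that interval with margin $N\slk(X_t,D_t)\ge 0$, and finish with Cauchy--Schwarz on $|Z-\mu|$. The branch-by-branch reading of the algorithm and the $1/4$ Bernoulli variance bound are cosmetic differences; the ``$+1$'' you add is not needed once $\mu$ is known to lie in the interval.

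The one step you must be careful with is the bound $\absplain{\mu/N - \alpha m(D_t)}\le \beta(1-m(D_t))$. Your sketched justification --- ``error controlled by $\norm{X_t(D_t)-m(D_t)\statdist}_1$ using $\absplain{\pibs(1\givenplain s)}\le 1$'' --- gives only $\absplain{\sum_s(X_t(D_t,s)-m(D_t)\statdist(s))\pibs(1\givenplain s)}\le \norm{X_t(D_t)-m(D_t)\statdist}_1$, and the slack condition only controls \emph{half} of that $L_1$ norm by $\beta(1-m(D_t))$; losing this factor of $2$ puts $\mu$ potentially a distance $\Theta(N)$ outside the interval and the argument collapses. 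The fix (which the paper uses, and which you implicitly assume when you write the $\tfrac12$ coefficient) is that the difference vector $v=X_t(D_t)-m(D_t)\statdist$ sums to zero, so for weights $\pibs(1\givenplain s)\in[0,1]$ one has $\absplain{\sum_s v(s)\pibs(1\givenplain s)}\le \sum_s v(s)^+ = \tfrac12\norm{v}_1$. With that inequality inserted, your plan closes exactly as the paper's does.
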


\begin{lemma}[Set-expansion policy satisfies \Cref{def:focus-set:monotonic}]\label{lem:set-expansion-monotonic}
    Consider the set-expansion policy in \Cref{alg:set-expansion}. For any $t\geq0$, 
    \begin{equation}\label{eq:set-expansion-monotonic}
        \Ebig{(m(D_t) - m(D_{t+1}))^+ \givenbig X_t, D_t} \leq \frac{|\sspa|^{1/2} + 1}{\rhoBudget\sqrt{N}} + \frac{1 + (\rhoBudget + 1)|\sspa|}{\rhoBudget N} \quad a.s.
    \end{equation}
\end{lemma}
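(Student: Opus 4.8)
The plan is to bound $\Ebig{(m(D_t) - m(D_{t+1}))^+ \givenbig X_t, D_t}$ by splitting on the sign of $\slk(X_{t+1}, D_t)$, combining a deterministic estimate of how much the focus set can be forced to shrink in one step with a one-step drift estimate on the slack.

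\textbf{Case split and a combinatorial estimate.}
By the set-update rule of \Cref{alg:set-expansion}, if $\slk(X_{t+1}, D_t) > 0$ then $D_{t+1} \supseteq D_t$, so $(m(D_t) - m(D_{t+1}))^+ = 0$ and there is nothing to prove; hence assume $\slk(X_{t+1}, D_t) \le 0$, so that $D_{t+1}$ is the largest subset of $D_t$ with $\slk(X_{t+1}, D_{t+1}) \ge 0$. The deterministic claim I would establish is: for any system state $x$ and any $D$ with $\slk(x, D) < 0$, the largest $D' \subseteq D$ with $\slk(x, D') \ge 0$ satisfies $m(D) - m(D') \le (-\slk(x,D))/\rhoBudget + O(|\sspa|/N)$. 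To prove it, build a decreasing chain $D = D^{(0)} \supsetneq D^{(1)} \supsetneq \cdots$ by repeatedly deleting one arm from the state $s^\star$ that currently maximizes the residual coordinate $x(\cdot, s) - m(\cdot)\statdist(s)$; since the residual coordinates sum to zero and $\slk < 0$ forces $\tfrac12\norm{x(\cdot) - m(\cdot)\statdist}_1 > \rhoBudget(1 - m(\cdot)) \ge 0$, at least one coordinate is strictly positive, so such an arm exists. Each deletion increases $\rhoBudget(1 - m(\cdot))$ by $\rhoBudget/N$ and, by an elementary case analysis, does not increase $\tfrac12 \norm{x(\cdot) - m(\cdot)\statdist}_1$ except when some residual coordinate changes sign during the deletion; since the total number of such sign changes along the chain is $O(|\sspa|)$ and each costs $O(1/N)$ in the slack, every deletion raises $\slk(x, \cdot)$ by $\rhoBudget/N$ up to an $O(|\sspa|/N)$ aggregate loss, which gives the claim. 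Applying it with $x = X_{t+1}$ and $D = D_t$ yields $(m(D_t) - m(D_{t+1}))^+ \le (-\slk(X_{t+1}, D_t))^+/\rhoBudget + O(|\sspa|/N)$, so it suffices to bound $\Ebig{(-\slk(X_{t+1}, D_t))^+ \givenbig X_t, D_t}$.

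\textbf{A one-step drift estimate on the slack.}
Since $D_t$ was chosen with $\slk(X_t, D_t) \ge 0$, i.e.\ $\rhoBudget(1 - m(D_t)) \ge \tfrac12\norm{X_t(D_t) - m(D_t)\statdist}_1$, we have
\[
(-\slk(X_{t+1}, D_t))^+ \le \tfrac12\Big(\norm{X_{t+1}(D_t) - m(D_t)\statdist}_1 - \norm{X_t(D_t) - m(D_t)\statdist}_1\Big)^+ ,
\]
so it suffices to control the one-step growth of the $L_1$ distance between $X_t(D_t)$ and $m(D_t)\statdist$. I would introduce the counterfactual $X_{t+1}'$ in which every arm in $D_t$ takes its ideal action $\syshat{A}_t(i) \sim \pibs(\cdot\givenplain S_t(i))$, coupled with $X_{t+1}$ so they agree on the set $D_t'$ of arms that \Cref{lem:set-expansion-pibs-consistency} guarantees follow $\pibs$. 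Then $\Ebig{X_{t+1}'(D_t) \givenbig X_t} = X_t(D_t) P_\pibs$, and since $P_\pibs$ is stochastic and $\statdist P_\pibs = \statdist$, we get $\norm{X_t(D_t)P_\pibs - m(D_t)\statdist}_1 = \norm{(X_t(D_t) - m(D_t)\statdist)P_\pibs}_1 \le \norm{X_t(D_t) - m(D_t)\statdist}_1$. Conditionally on $X_t$, $N X_{t+1}'(D_t, s)$ is a sum of independent Bernoulli variables, so $\sum_s \Varbig{X_{t+1}'(D_t, s) \givenbig X_t} \le |D_t|/N^2 \le 1/N$, and by Cauchy--Schwarz the $L_1$ deviation of $X_{t+1}'(D_t)$ from its conditional mean has conditional expectation at most $\sqrt{|\sspa|/N}$. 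Finally $\norm{X_{t+1}(D_t) - X_{t+1}'(D_t)}_1 \le 2\, m(D_t \setminus D_t')$, whose conditional expectation is at most $2/\sqrt{N} + 2/N$ by \Cref{lem:set-expansion-pibs-consistency}. Combining these three bounds with the triangle inequality gives $\Ebig{(\norm{X_{t+1}(D_t) - m(D_t)\statdist}_1 - \norm{X_t(D_t) - m(D_t)\statdist}_1)^+ \givenbig X_t, D_t} \le (\sqrt{|\sspa|}+2)/\sqrt{N} + 2/N$, hence $\Ebig{(-\slk(X_{t+1}, D_t))^+ \givenbig X_t, D_t} \le (\sqrt{|\sspa|}+2)/(2\sqrt{N}) + 1/N$.

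\textbf{Conclusion and main difficulty.}
Substituting the drift estimate into the combinatorial estimate gives $\Ebig{(m(D_t) - m(D_{t+1}))^+ \givenbig X_t, D_t} \le \tfrac1{\rhoBudget}\big((\sqrt{|\sspa|}+2)/(2\sqrt{N}) + 1/N\big) + O(|\sspa|/N)$, which is of the stated form once the numerical constants are tracked and loosened to $(|\sspa|^{1/2}+1)/(\rhoBudget\sqrt{N}) + (1 + (\rhoBudget+1)|\sspa|)/(\rhoBudget N)$. The probabilistic step is routine given \Cref{lem:set-expansion-pibs-consistency} and the $L_1$-contraction of stochastic matrices. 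The main obstacle is the deterministic combinatorial claim, precisely the $O(|\sspa|/N)$ bookkeeping for residual coordinates that cross zero during the greedy deletions: one must verify that the number of such crossings along the whole chain is $O(|\sspa|)$ and that each contributes at most $O(1/N)$ to the slack deficit, so that the total shrinkage stays proportional to $(-\slk(X_{t+1},D_t))^+/\rhoBudget$. An alternative worth trying is to exhibit a good $D'$ directly by deleting the over-represented arms of each state down to a rounded $\statdist$-proportional allocation; this is clean when $m(D_t)$ is bounded away from $1$ but needs an extra argument otherwise, since then such a $D'$ may still violate $\slk(x, D') \ge 0$ and one must additionally let $1 - m(D')$ grow.
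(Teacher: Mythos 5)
Your overall architecture matches the paper's: reduce $(m(D_t)-m(D_{t+1}))^+$ to $(-\slk(X_{t+1},D_t))^+/\rhoBudget + O(|\sspa|/N)$ via a deterministic construction of a feasible subset, then bound the slack drift by coupling with the all-ideal-action counterfactual $X_{t+1}'$, using \Cref{lem:set-expansion-pibs-consistency} for the coupling error and the $L_1$ non-expansiveness of $P_\pibs$ plus a variance/Cauchy--Schwarz bound for the stochastic fluctuation. Your second step is essentially the paper's Step~2 (the paper packages the fluctuation bound as \Cref{lem:ell-1-drift}), and your constants are compatible with the stated bound.

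The genuine gap is exactly where you flagged it: the deterministic combinatorial claim. Your greedy one-arm-at-a-time deletion works cleanly only while the maximal residual coordinate satisfies $X_{t+1}(D,s^\star)-m(D)\statdist(s^\star)\ge 1/N$; in that regime one deletion does not increase $\tfrac12\norm{\cdot}_1$ and gains $\rhoBudget/N$ of slack. But once all residuals drop below $1/N$ (so $\tfrac12\norm{\cdot}_1<|\sspa|/N$, which combined with $\slk<0$ forces $m(D)>1-|\sspa|/(\rhoBudget N)$), each further deletion can increase $\tfrac12\norm{\cdot}_1$ by up to $(1-\statdist(s^\star))/N$ while gaining only $\rhoBudget/N$, and since $\rhoBudget\le 1/2$ the per-step accounting is net negative; moreover a coordinate can cross zero once per deletion from its state, so the total number of sign changes along the chain is not $O(|\sspa|)$ in any obvious way. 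So the greedy chain is not guaranteed to terminate after $O(|\sspa|)$ extra deletions, and your fallback ($\statdist$-proportional rounding) is, as you note, problematic in precisely this $m\approx 1$ regime. The paper sidesteps all of this with a single proportional construction: it sets $\gamma = 1 - \tfrac{1}{m(D_t)}\bigl(\absplain{\slk(X_{t+1},D_t)}/\rhoBudget + (K-|\sspa|)/N\bigr)$ and takes $\overline{D}\subseteq D_t$ with $X_{t+1}(\overline{D},s)=\lfloor \gamma X_{t+1}(D_t,s)N\rfloor/N$ for every $s$. Then $\normplain{X_{t+1}(\overline{D})-m(\overline{D})\statdist}_1\le\gamma\normplain{X_{t+1}(D_t)-m(D_t)\statdist}_1 + O(|\sspa|/N)$ shrinks multiplicatively while $\rhoBudget(1-m(\overline{D}))$ grows additively by $\rhoBudget(1-\gamma)m(D_t)$, which is calibrated to absorb the entire deficit in one step with only $O(|\sspa|/N)$ rounding loss, uniformly in $m(D_t)$. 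You would need to replace your greedy chain with this (or an equivalent simultaneous-scaling) construction to close the argument.
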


\begin{lemma}[Set-expansion policy satisfies \Cref{def:focus-set:large-enough}]\label{lem:set-expansion-large-enough}
    Consider the set-expansion policy in \Cref{alg:set-expansion}. For any $t\geq0$, 
    \begin{equation}
        1 - m(D_t) \leq \frac{|\sspa|^{1/2}}{\rhoBudget} \hw(X_t, D_t) + \frac{2}{\rhoBudget N} \quad a.s.
    \end{equation}
\end{lemma}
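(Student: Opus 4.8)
The plan is to read the bound off the near-maximality of the focus set chosen in the set-update step of \Cref{alg:set-expansion}: $D_t$ is taken to have the largest cardinality, among the supersets (or subsets, depending on the sign of $\slk(X_t, D_{t-1})$) of $D_{t-1}$, for which $\slk(X_t, D_t) \ge 0$. Intuitively one therefore cannot enlarge $D_t$ by a single arm without making the slack negative, and since an extra arm changes the slack by only $O(1/N)$, this obstruction forces $1-m(D_t)$ to be small relative to $\norm{X_t(D_t) - m(D_t)\statdist}_1$, hence relative to $\hw(X_t, D_t)$.

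First I would dispose of the trivial case $D_t = [N]$, where $1 - m(D_t) = 0$. Assuming $D_t \subsetneq [N]$, I would extract a ``witness'' arm by splitting on which branch of the set-update step produced $D_t$. If $\slk(X_t, D_{t-1}) > 0$, then $D_t$ is a largest superset of $D_{t-1}$ with $\slk(X_t, D_t) \ge 0$, so for any $j \notin D_t$ the set $D_t \cup \{j\}$ is a superset of $D_{t-1}$ of strictly larger size and maximality gives $\slk(X_t, D_t \cup \{j\}) < 0$. If $\slk(X_t, D_{t-1}) \le 0$, then $D_t$ is a largest subset of $D_{t-1}$ with $\slk(X_t, D_t) \ge 0$; either $D_t \subsetneq D_{t-1}$, in which case the same argument applied to any $j \in D_{t-1}\setminus D_t$ gives $\slk(X_t, D_t\cup\{j\}) < 0$, or $D_t = D_{t-1}$, which forces $\slk(X_t, D_t) = 0$ (it is $\ge 0$ because $D_t$ is a feasible choice and $\le 0$ by the case assumption). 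Thus, in every case, either there is an arm $j\notin D_t$ with $\slk(X_t, D_t\cup\{j\}) < 0$, or $\slk(X_t, D_t) = 0$.

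The final step turns this into the stated inequality. Adding $j$ changes $m(D_t)$ by $1/N$ and changes $X_t(D_t)$ by $1/N$ times the coordinate vector of state $S_t(j)$; since the $L_1$ distance between any two probability distributions is at most $2$, the triangle inequality gives $\norm{X_t(D_t\cup\{j\}) - m(D_t\cup\{j\})\statdist}_1 \le \norm{X_t(D_t) - m(D_t)\statdist}_1 + 2/N$. Substituting into $\slk(X_t, D_t\cup\{j\}) < 0$ and using $\rhoBudget \le 1$ yields $\rhoBudget(1 - m(D_t)) < \tfrac12\norm{X_t(D_t) - m(D_t)\statdist}_1 + 2/N$, which also holds (without the additive $2/N$) in the degenerate case $\slk(X_t, D_t) = 0$. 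Finally I would invoke the distance-domination bound \eqref{eq:hw-feature-lyapunov:strength}, namely $\norm{X_t(D_t) - m(D_t)\statdist}_1 \le |\sspa|^{1/2}\hw(X_t, D_t)$, and divide through by $\rhoBudget$ to get $1 - m(D_t) \le \tfrac{|\sspa|^{1/2}}{2\rhoBudget}\hw(X_t, D_t) + \tfrac{2}{\rhoBudget N} \le \tfrac{|\sspa|^{1/2}}{\rhoBudget}\hw(X_t, D_t) + \tfrac{2}{\rhoBudget N}$; all steps are pathwise in $(X_t, D_t)$, so the bound holds almost surely. The only genuine difficulty is the bookkeeping in the second step — correctly matching each branch of the set-update rule to the maximality property it yields, and not overlooking the degenerate sub-case $D_t = D_{t-1}$ with zero slack; once a witness arm (or the zero-slack alternative) is available, the remainder is routine norm manipulation.
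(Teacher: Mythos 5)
Your proof is correct and follows essentially the same route as the paper's: use the (near-)maximality of $D_t$ to show that the slack $\slk(X_t,D_t)$ can be at most $O(1/N)$ — since adding one arm changes the slack by at most $2/N$ — and then convert the resulting $L_1$ bound via the distance-domination property \eqref{eq:hw-feature-lyapunov:strength}. Your treatment is in fact slightly more careful than the paper's, which argues by contradiction from "maximality of $D_t$" without explicitly separating the expanding and shrinking branches or the degenerate sub-case $D_t=D_{t-1}$ with nonpositive slack.
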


\begin{proof}{\textit{Proof of \Cref{thm:set-expansion-policy}.}}
    By Lemmas~\ref{lem:set-expansion-pibs-consistency}, \ref{lem:set-expansion-monotonic} and \ref{lem:set-expansion-large-enough}, the set-expansion policy satisfies Conditions~\ref{def:focus-set:pibs-consistency}, \ref{def:focus-set:monotonic} and \ref{def:focus-set:large-enough} with the subset Lyapunov functions $\{\hw(x, D)\}_{D\subseteq[N]}$. Applying \Cref{thm:focus-set-policy} and substituting the constants, we get
    \[
        \rrel - \rsysn \leq \frac{256\rmax \lamw^2 |\sspa|^2}{\rhoBudget^2\sqrt{N}}, 
    \]
    which implies the optimality gap bound in \Cref{thm:set-expansion-policy}. 
    Note that we relax all $1/N$ factors to $1/\sqrt{N}$ when deriving the bound. \Halmos
\end{proof}

\subsection{Proof of Lemma~\ref{lem:set-expansion-pibs-consistency}}\label{sec:pf-set-exp:pf-majority-conformity}

\begin{proof}{\textit{Proof of \Cref{lem:set-expansion-pibs-consistency}.}}
    Recall that under the set-expansion policy, the actions in the focus set $(A_t(i))_{i\in D_t}$ are chosen according to the ideal actions $(\syshat{A}_t(i))_{i\in D_t}$ based on the following rules: 
    \begin{itemize}
        \item When $\sum_{i\in D_t} \syshat{A}_t(i) > \alpha N$, the set-expansion policy chooses $\alpha N$ arms in $D_t$ with $\syshat{A}_t(i) = 1$ and sets $A_t(i) = \syshat{A}_t(i)$. 
        \item When $\sum_{i\in D_t} (1 - \syshat{A}_t(i)) > (1-\alpha) N$, the set-expansion policy chooses $(1-\alpha)N$ arms in $D_t$ with $\syshat{A}_t(i) = 0$ and sets $A_t(i) = \syshat{A}_t(i)$. 
        \item Otherwise, the set-expansion policy sets $A_t(i) = \syshat{A}_t(i)$ for all $i\in D_t$. 
    \end{itemize}
    Let $D_t' = \{i \in D_t \colon A_t(i) = \syshat{A}_t(i)\}$. Then we have
    \begin{equation}
        |D_t\setminus D_t'| = \Big(\sum_{i\in D_t} \syshat{A}_t(i) - \alpha N\Big)^+ + \Big(\sum_{i\in D_t} (1-\syshat{A}_t(i)) - (1-\alpha)N \Big)^+.
    \end{equation}
    Therefore, if we can show that for any $t\geq0$, 
    \begin{equation}\label{eq:setdiff}
        \EBig{\Big(\sum_{i\in D_t} \syshat{A}_t(i) - \alpha N\Big)^+ + \Big(\sum_{i\in D_t} (1-\syshat{A}_t(i)) - (1-\alpha)N \Big)^+ \givenBig X_t, D_t} \leq \sqrt{N} \quad a.s., 
    \end{equation}
    we will have $\E{m(D_t\setminus D_t') \givenplain X_t, D_t} \leq 1/\sqrt{N}$, which will complete the proof. 
    The remainder of the proof is dedicated to proving \eqref{eq:setdiff}.

    Consider the \emph{scaled expected budget requirement} for a given system state $x$ and subset $D\subseteq[N]$, defined as
    \begin{equation}
        C_\pibs(x, D) \triangleq \frac{1}{N}\EBig{\sum_{i\in D}\syshat{A}_t(i) \givenBig X_t=x} = \sum_{s\in\sspa} x(D, s) \pibs(1|s).
    \end{equation}
    Then $\Ebig{\sum_{i\in D_t} \syshat{A}_t(i) \givenbig X_t, D_t} = N C_\pibs(X_t, D_t)$. By the Cauchy-Schwarz inequality and the fact that given $X_t$ and $D_t$, $\syshat{A}_t(i)$'s are independent across $i\in D_t$, we have
    \begin{align}
        \EBig{\absBig{\sum_{i\in D_t} \syshat{A}_t(i) - N C_\pibs(X_t, D_t) } \givenBig X_t, D_t} 
        &\leq \EBig{\Big(\sum_{i\in D_t} \syshat{A}_t(i) - N C_\pibs(X_t, D_t)\Big)^2 \givenBig X_t, D_t}^{\frac{1}{2}} \nonumber\\
        &= \Big(\sum_{i\in D_t} \Varbig{\syshat{A}_t(i) \givenbig X_t, D_t}\Big)^{\frac{1}{2}} \nonumber\\
        &\leq \sqrt{N}. \label{eq:budget-dev}
    \end{align}

    We next prove \eqref{eq:setdiff} utilizing the bound \eqref{eq:budget-dev}.
    Recall that $D_t$ is chosen with $\slk(X_t, D_t) \geq 0$, i.e., $\norm{X_t(D_t) - m(D_t) \statdist}_1 / 2 \leq \rhoBudget(1-m(D_t))$.
    Then $\abs{C_\pibs(X_t, D_t) - \alpha m(D_t)}$ can be bounded as 
    \begin{align}
         \abs{C_\pibs(X_t, D_t) - \alpha m(D_t)}
         \label{eq:pf-set-exp:pibs-consist:c-deviation-1}
         &=  \absBig{\sum_{s\in\sspa} \big(X_t(D_t, s) - m(D_t) \statdist(s)\big) \pibs(1|s) } \\ 
         \label{eq:pf-set-exp:pibs-consist:c-deviation-1-5}
         &\leq  \absBig{\sum_{s\in\sspa}\Big( X_t(D_t, s) - m(D_t)\statdist(s)\Big) \Big(\pibs(1|s) - \frac{1}{2}\Big)}\\
         \label{eq:pf-set-exp:pibs-consist:c-deviation-1-6}
         &\leq \frac{1}{2} \norm{X_t(D_t) - m(D_t) \statdist}_1\\
         \label{eq:pf-set-exp:pibs-consist:c-deviation-2}
         &\leq \rhoBudget(1-m(D_t)),
    \end{align}
    where \eqref{eq:pf-set-exp:pibs-consist:c-deviation-1} is because $\sums \statdist(s) \pibs(1|s) = \alpha$,
    \eqref{eq:pf-set-exp:pibs-consist:c-deviation-1-5} is because $\sum_{s\in\sspa} \big(X_t(D_t, s) - m(D_t)\statdist(s)\big) = 0$, and \eqref{eq:pf-set-exp:pibs-consist:c-deviation-1-6} is because $\abs{\pibs(1|s) - 1/2} \leq 1/2$ for all $s\in\sspa$. 
    Thus, 
    \begin{align*}
        NC_\pibs(X_t, D_t) &\le \alpha |D_t| + \beta\big(N-|D_t|\big)\le \alpha |D_t| + \alpha\big(N-|D_t|\big) = \alpha N, \\
        NC_\pibs(X_t, D_t) &\ge \alpha |D_t| - \beta\big(N-|D_t|\big)\ge \alpha |D_t| - (1-\alpha)\big(N-|D_t|\big)=\alpha N - \big(N-|D_t|\big).
    \end{align*}
    Therefore, we have
    \begin{align}
        \nonumber
        &\EBig{\Big(\sum_{i\in D_t} \syshat{A}_t(i) - \alpha N\Big)^+ + \Big(\sum_{i\in D_t} (1-\syshat{A}_t(i)) - (1-\alpha)N \Big)^+ \givenBig X_t, D_t}\\
        \nonumber
        &\qquad= \EBig{\Big(\sum_{i\in D_t} \syshat{A}_t(i) - \alpha N\Big)^+ + \Big(\alpha N - \big(N - |D_t|\big) -  \sum_{i\in D_t} \syshat{A}_t(i) \Big)^+ \givenBig X_t, D_t}\\
        \label{eq:pf-set-exp:pibs-consist:c-deviation-3}
        &\qquad\leq  \EBig{\absBig{\sum_{i\in D_t} \syshat{A}_t(i) - N C_\pibs(X_t, D_t) } \,\givenBig\, X_t, D_t}  \\
        \nonumber
        &\qquad\leq \sqrt{N}, 
    \end{align}
    where the inequality \eqref{eq:pf-set-exp:pibs-consist:c-deviation-3} follows from the fact that $(x-a)^+ + (b-x)^+ \leq |x-c|$ for any real numbers $x,a,b,c$ such that $b\leq c \leq a$. 
    This proves \eqref{eq:setdiff}, concluding the proof of Lemma~\ref{lem:set-expansion-pibs-consistency}. \Halmos
\end{proof}

\subsection{Proof of Lemma~\ref{lem:set-expansion-monotonic}}\label{sec:pf-set-exp:pf-non-shrinking}

\begin{figure}
    \FIGURE{
    \subcaptionbox{When $D_t$ expands. \label{fig:set-exp-proof-expand}}{\includegraphics[height=4.9cm]{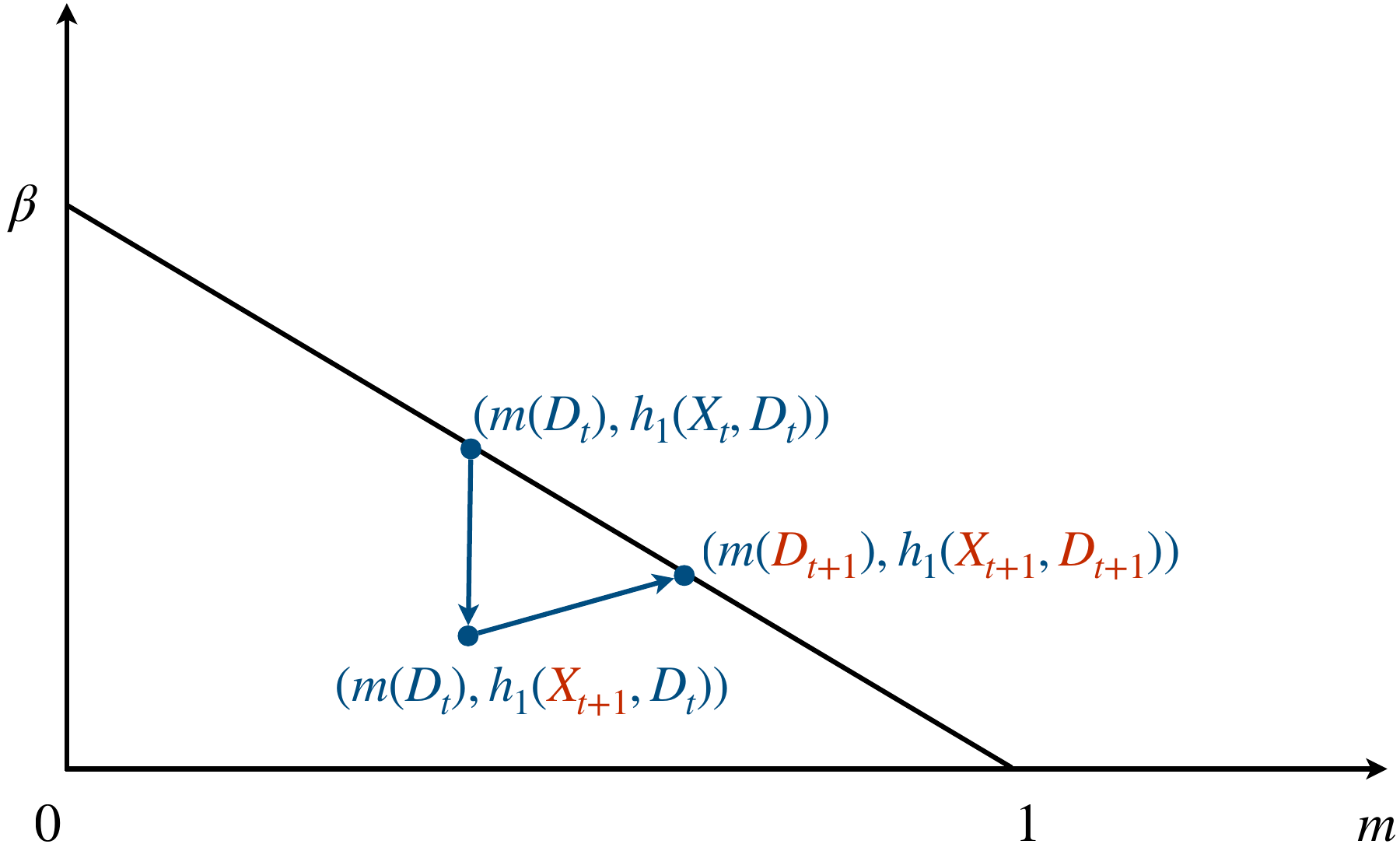}}
    \hfill
    \subcaptionbox{When $D_t$ shrinks. \label{fig:set-exp-proof-shrink}}{\includegraphics[height=4.9cm]{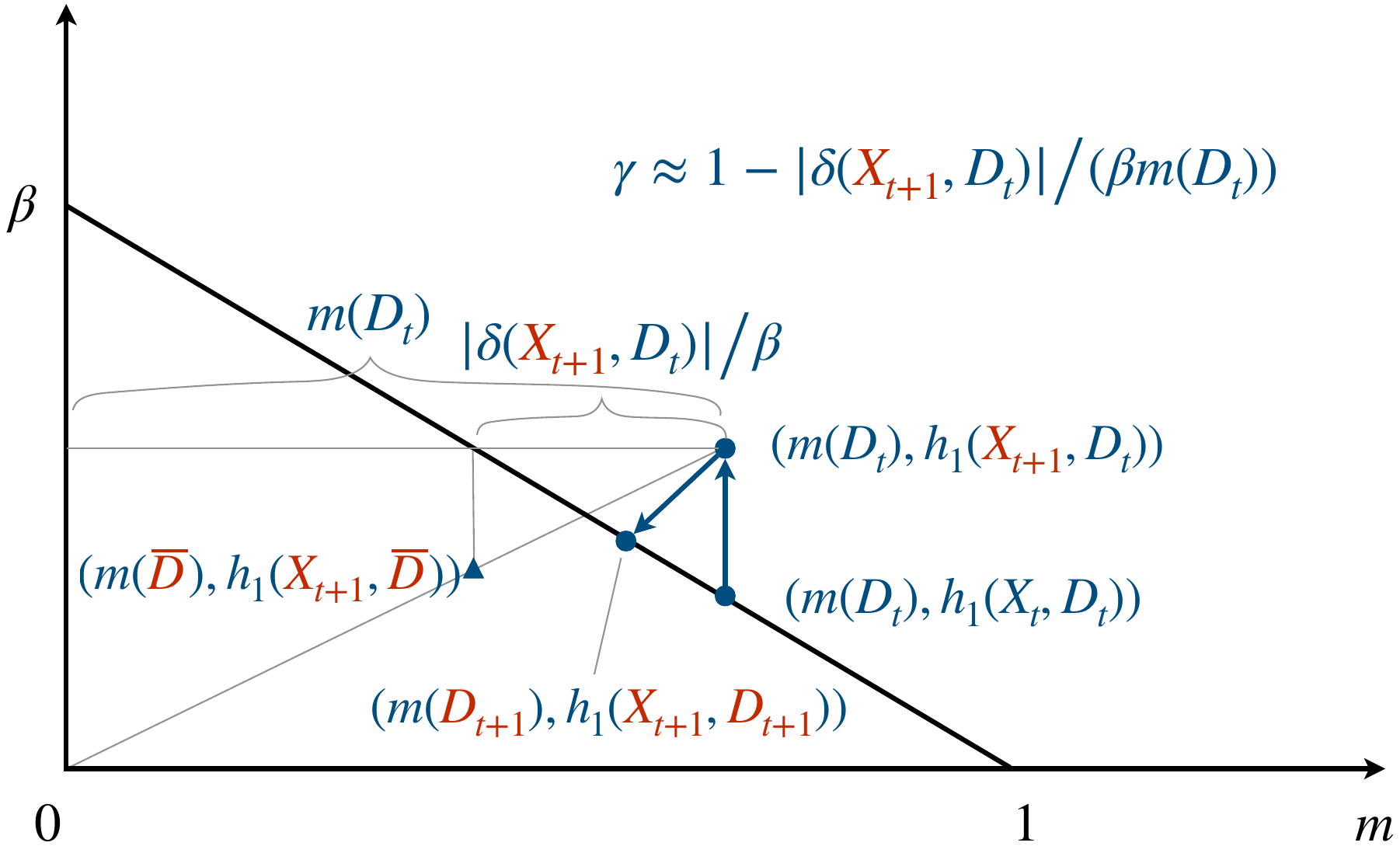}}
    } 
    {Intuition of why the focus set is almost non-shrinking under the set-expansion policy (proved in \Cref{lem:set-expansion-monotonic}).     \label{fig:non-shrink-proof}}
    {Each point denotes $(m(D_t), h_1(X_t, D_t))$ or $(m(D_t), h_1(X_{t+1}, D_t))$ under the set-expansion policy, where $h_1(x, D)$ is a shorthand for $0.5\normplain{x(D) - m(D)\statdist}_1$. 
    Recall that the set-expansion policy chooses the focus set $D_t$ as a maximal set whose corresponding point $(m(D_t), h_1(X_t, D_t))$ is below the line $m\mapsto \beta(1-m)$, ensuring that $\slk(X_t, D_t) \triangleq \beta(1-m(D_t))-h_1(X_t, D_t)\geq 0$. 
    The two subfigures illustrates two possible ways that the focus set can change based on the outcomes of the state transitions from $X_t$ to $X_{t+1}$. 
    In \Cref{fig:set-exp-proof-expand}, the point $(m(D_t), h_1(X_{t+1}, D_t))$ remains below the line, causing the focus set to expand ($D_{t+1}\supseteq D_t$). In \Cref{fig:set-exp-proof-shrink}, the point $(m(D_t), h_1(X_{t+1}, D_t))$ goes above the line, forcing the focus set to shrink ($D_{t+1}\subseteq D_t$). \\
    To show that the shrinkage is small in the latter case, by the maximality of $D_{t+1}$, we only need to find a sufficiently large subset $\overline{D}\subseteq D_t$ whose corresponding point on the figure, $(m(\overline{D}), h_1(X_t, \overline{D}))$, is below the line $m\mapsto \beta(1-m)$. 
    This subset $\overline{D}$ can be constructed by picking $\gamma$ fraction of arms in each state from $D_t$, with $\gamma \approx 1 - \abs{\slk(X_{t+1}, D_t)} / (\beta m(D_t))$; this construction of $\overline{D}$ corresponds to the triangular dot in \Cref{fig:set-exp-proof-shrink}. By bounding $\abs{\slk(X_{t+1}, D_t)}$, we can show that $\gamma$ is close to $1$, which implies that the cardinality of $D_{t+1}$ is not significantly smaller than that of $D_t$. 
    }
\end{figure}

In this subsection, we prove \Cref{lem:set-expansion-monotonic}, which shows that the focus set $D_t$ does not shrink significantly every time step. The intuition of the proof is illustrated in \Cref{fig:non-shrink-proof}.

\begin{proof}{\textit{Proof of \Cref{lem:set-expansion-monotonic}.}}
    Our proof consists of two steps. In Step 1, we focus on proving the following inequality for each time step $t$: 
    \begin{equation}\label{eq:proof-set-expansion-monotonic-intermediate-1}
        (m(D_t) - m(D_{t+1}))^+ \leq \frac{1}{\rhoBudget} (-\slk(X_{t+1}, D_t))^+ + \frac{K}{N},
    \end{equation}
    where $K = (1 + 1/\rhoBudget)|\sspa|$. 
    In Step 2, we utilize \eqref{eq:proof-set-expansion-monotonic-intermediate-1} to bound $\Ebig{(m(D_t) - m(D_{t+1}))^+ \givenbig X_t, D_t}$. 
    For the ease of reference, we first recall the definition of the slack $\slk(x, \dynset)$ for some system state $x$ and subset $D\subseteq [N]$, which is heavily used in this proof:
    \begin{equation}
        \slk(x, \dynset) = \rhoBudget (1-m(\dynset)) -  \frac{1}{2}\norm{x(D) - m(D) \statdist}_1.
    \end{equation}
    
    \paragraph{Step 1: Proving \eqref{eq:proof-set-expansion-monotonic-intermediate-1}.}
    When $\slk(X_{t+1}, D_t) > 0$, the focus set expands and we have $D_{t+1}\supseteq D_t$, so \eqref{eq:proof-set-expansion-monotonic-intermediate-1} trivially holds. 
    We therefore focus on the non-trivial case of $\slk(X_{t+1}, D_t) < 0$, where the focus set shrinks. 
    By definition, $D_{t+1}$ has the largest cardinality among all subsets $\overline{D} \subseteq D_t$ such that $\slk(X_{t+1}, \overline{D}) \geq 0$. We can thus look for a subset $\overline{D}\subseteq D_t$ with the following two properties:
    \begin{align}
        \label{eq:proof-set-expansion-monotonic-intermediate-11}
        \slk(X_{t+1}, \overline{D}) &\geq 0 \\
        \label{eq:proof-set-expansion-monotonic-intermediate-12}
        \big(m(D_t) - m(\overline{D})\big)^+ &\leq \frac{1}{\rhoBudget} \big(-\slk(X_{t+1}, D_t) \big)^+ + \frac{K}{N}.
    \end{align}
    The existence of such a subset $\overline{D}$ implies \eqref{eq:proof-set-expansion-monotonic-intermediate-1} because $(m(D_t) - m(D_{t+1}))^+ \leq (m(D_t) - m(\overline{D}))^+ \leq (-\slk(X_{t+1}, D_t))^+ / \rhoBudget + K/N$. 

    We construct $\overline{D}$ based on two cases: 
    \begin{itemize}
        \item If $\abs{\slk(X_{t+1}, D_t)} /\rhoBudget + K / N \geq m(D_t)$, we let $\overline{D} = \emptyset$;
        \item Otherwise, we let    
        \begin{equation}\label{eq:proof-set-expansion-monotonic-alpha-def}
         \gamma = 1 - \frac{1}{m(D_t)}\left(\frac{\abs{\slk(X_{t+1}, D_t)}}{\rhoBudget} + \frac{K - |\sspa|}{N}\right).
        \end{equation}
        One can verify that $0 < \gamma < 1$. We define $\overline{D}$ as a subset of $D_t$ such that 
        \begin{equation}\label{eq:proof-set-expansion-monotonic-Dbar-def}
            X_{t+1}(\overline{D}, s) = \frac{\lfloor \gamma X_{t+1}(D_t, s) N\rfloor}{N} \quad \forall s\in\sspa.
        \end{equation}
        Intuitively, $\overline{D}$ is obtained by sampling $\gamma$ fraction of arms in each state from $D_t$. A pictorial illustration of $\gamma$ is given in \Cref{fig:set-exp-proof-shrink}.
    \end{itemize}

    We now verify that this construction of $\overline{D}$ satisfies the properties \eqref{eq:proof-set-expansion-monotonic-intermediate-11} and \eqref{eq:proof-set-expansion-monotonic-intermediate-12}. When $\abs{\slk(X_{t+1}, D_t)} /\rhoBudget + K / N \geq m(D_t)$, we have $\overline{D}=\emptyset$, so these two conditions are trivially true.  
    We can thus focus on the case where $\abs{\slk(X_{t+1}, D_t)} /\rhoBudget + K / N < m(D_t)$. 
    
    We first show \eqref{eq:proof-set-expansion-monotonic-intermediate-11}, i.e., $\slk(X_{t+1}, \overline{D}) \geq 0$. By the definition of $\overline{D}$, we have $m(\overline{D}) \leq \gamma m(D_t)$. 
    Substituting the definitions of $\gamma$, $K$, and $\slk(X_{t+1}, D_t)$, we upper bound $m(\overline{D})$ as 
    \begin{equation*}
        m(\overline{D}) 
        \leq m(D_t) + \frac{1}{\rhoBudget} \slk(X_{t+1}, D_t) - \frac{|\sspa|}{\rhoBudget N} 
        = 1 - \frac{1}{2\rhoBudget} \norm{X_{t+1}(D_t) - m(D_t)\statdist}_1 - \frac{|\sspa|}{\rhoBudget N}. 
    \end{equation*}
    Then we can lower bound $\slk(X_{t+1}, \overline{D})$ using the above upper bound of $m(\overline{D})$:
    \begin{align}
        \slk(X_{t+1}, \overline{D}) 
        &= \rhoBudget(1-m(\overline{D})) - \frac{1}{2} \normplain{X_{t+1}(\overline{D}) - m(\overline{D})\statdist}_1 \nonumber \\
        \label{eq:pf-set-exp:mono:temp-1}
        &\geq \frac{1}{2}\normplain{X_{t+1}(D_t) - m(D_t)\statdist}_1 - \frac{1}{2}\normplain{X_{t+1}(\overline{D}) - m(\overline{D})\statdist}_1 + \frac{|\sspa|}{N}.
    \end{align}
    We further lower bound $\normplain{X_{t+1}(D_t) - m(D_t)\statdist}_1 - \normplain{X_{t+1}(\overline{D}) - m(\overline{D})\statdist}_1$ in  \eqref{eq:pf-set-exp:mono:temp-1} as
    \begin{align*}
         & \normplain{X_{t+1}(D_t) - m(D_t)\statdist}_1 - \normplain{X_{t+1}(\overline{D}) - m(\overline{D})\statdist}_1 \\
         &\qquad\geq \gamma \normplain{X_{t+1}(D_t) - m(D_t)\statdist}_1 - \normplain{X_{t+1}(\overline{D}) - m(\overline{D})\statdist}_1 \\
         &\qquad\geq - \normplain{\gamma X_{t+1}(D_t) - \gamma m(D_t)\statdist - X_{t+1}(\overline{D})  +  m(\overline{D})\statdist}_1 \\
         &\qquad\geq - \normplain{\gamma X_{t+1}(D_t) - X_{t+1}(\overline{D})}_1 - \absplain{\gamma m(D_t) - m(\overline{D})}\cdot \normplain{\statdist}_1 \\
         &\qquad\geq -\frac{2|\sspa|}{N},
    \end{align*}
    where the first inequality is due to $\gamma < 1$; the second and third inequalities are due to the triangular inequality; 
    the last inequality is due to the definition of $\overline{D}$ in \eqref{eq:proof-set-expansion-monotonic-Dbar-def}. Therefore, $\slk(X_{t+1}, \overline{D}) \geq 0$. 

    Next, we show \eqref{eq:proof-set-expansion-monotonic-intermediate-12}. 
    By the definition of $\overline{D}$, we have $m(\overline{D}) \geq \gamma m(D) - |\sspa| / N$. Substituting the definition of $\gamma$ in \eqref{eq:proof-set-expansion-monotonic-alpha-def}, we get 
    \begin{equation*}
        m(\overline{D}) 
        \geq m(D_t) + \frac{1}{\rhoBudget} \slk(X_{t+1}, D_t) - \frac{K - |\sspa|}{N} - \frac{|\sspa|}{N} 
        = m(D_t) + \frac{1}{\rhoBudget} \slk(X_{t+1}, D_t) - \frac{K}{N}, 
    \end{equation*}
    which implies \eqref{eq:proof-set-expansion-monotonic-intermediate-12}. 
    Therefore, we have proved the inequality \eqref{eq:proof-set-expansion-monotonic-intermediate-1} claimed at the beginning of this proof. 

    \paragraph{Step 2: Utilizing \eqref{eq:proof-set-expansion-monotonic-intermediate-1} to bound $\Ebig{(m(D_t) - m(D_{t+1}))^+ \givenbig X_t, D_t}$.}
    Taking the expectation on both sides of \eqref{eq:proof-set-expansion-monotonic-intermediate-1}, we get
    \begin{equation}\label{eq:proof-set-expansion-monotonic-intermediate-2}
        \Ebig{(m(D_t) - m(D_{t+1}))^+ \givenbig X_t, D_t} \leq \frac{1}{\rhoBudget} \Ebig{(-\slk(X_{t+1}, D_t))^+ \givenbig X_t, D_t} + \frac{K}{N}.
    \end{equation}
    Now, it remains to upper bound $\Ebig{(-\slk(X_{t+1}, D_t))^+ \givenbig X_t, D_t}$. 
    Let $X_{t+1}'$ be a random element denoting the system state at time $t+1$ if we were able to set $A_t(i) = \syshat{A}_t(i)$ for all $i\in [N]$. 
    Let $D_t'\triangleq \{i\in D_t\colon \syshat{A}_t(i) = A_t(i)\}$ (i.e., the subset given in \Cref{lem:set-expansion-pibs-consistency}), 
    we couple $X_{t+1}'$ and $X_{t+1}$ such that $X_{t+1}'(D_t') = X_{t+1}(D_t')$. We make the following decomposition with the help of $X_{t+1}'$: 
    \begin{align}
        &\Ebig{(-\slk(X_{t+1}, D_t))^+ \givenbig X_t, D_t}\nonumber\\
        \label{eq:proof-set-expansion-monotonic-intermediate-2-5}
        &\qquad\leq  \Ebig{\big(- \slk(X_{t+1}, D_t) + \slk(X_{t+1}', D_t)\big)^+\givenbig X_t, D_t} + \Ebig{\big(- \slk(X_{t+1}', D_t)\big)^+ \givenbig X_t, D_t}.
    \end{align}
    Below, we bound the two terms in \eqref{eq:proof-set-expansion-monotonic-intermediate-2-5} separately.
    
    To bound $\Ebig{\big(- \slk(X_{t+1}, D_t) + \slk(X_{t+1}', D_t)\big)^+\givenbig X_t, D_t}$, note that the coupling implies that
    \begin{align}
        \big(- \slk(X_{t+1}, D_t) + \slk(X_{t+1}', D_t)\big)^+ 
        &= \frac{1}{2} \big(\normbig{X_{t+1}(D_t) - m(D_t)\statdist}_1  - \normbig{X_{t+1}'(D_t) - m(D_t)\statdist}_1\big)^+ \nonumber\\
        &\leq \frac{1}{2}\norm{X_{t+1}(D_t) - X_{t+1}'(D_t)}_1 \nonumber\\
        &\leq \frac{1}{2}\norm{X_{t+1}(D_t\backslash D_t')}_1 + \frac{1}{2}\norm{X_{t+1}'(D_t\backslash D_t')}_1 \nonumber\\
        \label{eq:pf-set-exp:temp-3}
        &\leq m(D_t\backslash D_t'), 
    \end{align}
    where the last inequality uses the fact that $\normplain{X_{t+1}(D_t\backslash D_t')}_1 = \normplain{X_{t+1}'(D_t\backslash D_t')}_1  = m(D_t\backslash D_t')$. Taking the expectation and applying \Cref{lem:set-expansion-pibs-consistency}, we get
    \begin{equation}
        \Ebig{\big(- \slk(X_{t+1}, D_t) + \slk(X_{t+1}', D_t)\big)^+ \givenbig X_t, D_t}
        \le  \Eplain{m(D_t\backslash D_t')\givenbig X_t, D_t} 
        \label{eq:proof-set-expansion-monotonic-intermediate-4}
        \leq \frac{1}{\sqrt{N}} + \frac{1}{N}.
    \end{equation}

    Next, we bound $\Ebig{\big(- \slk(X_{t+1}', D_t)\big)^+ \givenbig X_t, D_t}$. By the definition of $\slk(X_{t+1}', D_t)$, we have
    \begin{align}
        \Ebig{(-\slk(X_{t+1}', D_t))^+  \givenbig  X_t, D_t} 
        &= \EBig{\Big(\frac{1}{2}\normplain{X_{t+1}'(D_t) - m(D_t)\statdist}_1 - \rhoBudget(1-m(D_t)) \Big)^+  \givenBig  X_t, D_t}\\
        &\leq \frac{1}{2}\Ebig{\big(\normplain{X_{t+1}'(D_t) - m(D_t)\statdist}_1 - \normplain{X_t(D_t) - m(D_t)\statdist}_1 \big)^+ \givenbig  X_t, D_t}, \label{eq:proof-set-expansion-monotonic-intermediate-3-1}
    \end{align}
    where \eqref{eq:proof-set-expansion-monotonic-intermediate-3-1} follows from the facts that $\slk(X_{t}, D_t) \ge 0$ and $\slk(X_{t}, D_t) \triangleq \rhoBudget(1-m(D_t)) - \normplain{X_t(D_t) - m(D_t)\statdist}_1$. 
    Intuitively, the expression on the right-hand side of \eqref{eq:proof-set-expansion-monotonic-intermediate-3-1} represents the expected one-step increase in the $L_1$ distance from the stationary distribution if all arms follow $\pibs$. Because the transition matrix $P_\pibs$, is non-expansive, this increase should be small. Indeed, as we formally prove in \Cref{lem:ell-1-drift} (Appendix~\ref{app:proof-ell-1}), this expression is bounded as: 
    \begin{equation}\label{eq:pibar-contraction-L1}
        \Ebig{\big(\normplain{X_{t+1}'(D_t) - m(D_t)\statdist}_1 - \normplain{X_t(D_t) - m(D_t)\statdist}_1 \big)^+ \givenbig X_t, D_t} \leq \frac{2|\sspa|^{1/2}}{\sqrt{N}}.
    \end{equation}
    Therefore,
    \begin{equation}\label{eq:proof-set-expansion-monotonic-intermediate-3}
        \Ebig{(-\slk(X_{t+1}', D_t))^+  \givenbig  X_t, D_t} \le \frac{|\sspa|^{1/2}}{\sqrt{N}}.
    \end{equation}

    Plugging \eqref{eq:proof-set-expansion-monotonic-intermediate-4} and \eqref{eq:proof-set-expansion-monotonic-intermediate-3} into \eqref{eq:proof-set-expansion-monotonic-intermediate-1}, we get
    \[
        \Ebig{(m(D_t) - m(D_{t+1}))^+  \givenbig X_t, D_t} \leq \frac{|\sspa|^{1/2} + 1}{\rhoBudget\sqrt{N}} + \frac{1 + (\rhoBudget + 1)|\sspa|}{\rhoBudget N}. \Halmos
    \]

\end{proof}

\subsection{Proof of Lemma~\ref{lem:set-expansion-large-enough}}\label{sec:pf-set-exp:pf-sufficient-cov}
\begin{proof}{\textit{Proof of \Cref{lem:set-expansion-large-enough}.}}
    Recall that $D_t$ is taken to be a maximal set such that $\slk(X_t, D_t) \geq 0$, where $\slk(x, \dynset) = \rhoBudget (1-m(\dynset)) - \norm{x(D) - m(D) \statdist}_1 / 2$. We first prove that 
    \[
        \slk(X_t, D_t) \leq \frac{2}{N}.
    \]
    Assume, for the sake of contradiction, that $\slk(X_t, D_t) > 2/N$. 
    Then by the definition of $\slk(X_t, D_t)$, we can see that $m(D_t) < 1$ and thus $D_t^c\neq \emptyset$.
    Picking an arbitrary $i\in D_t^c$, we have
    \begin{align*}
        \slk(X_t, D_t\cup\{i\}) - \slk(X_t, D_t)  
        &= -\frac{\rhoBudget}{N} - \frac{1}{2}\norm{X_t(D_t\cup\{i\}) - m(D_t\cup\{i\}) \statdist}_1 + \frac{1}{2}\norm{X_t(D_t) - m(D_t) \statdist}_1 \\
        &\geq -\frac{\rhoBudget}{N} - \frac{1}{2}\norm{X_t(\{i\}) - m(\{i\}) \statdist}_1 \\
        &\geq - \frac{2}{N}.
    \end{align*}
    Therefore, $\slk(X_t, D_t\cup\{i\}) > 0$, which contradicts the maximality of $D_t$. 

    Since $\slk(X_t, D_t) \leq 2/N$, we have
    \begin{equation*}
        1 - m(D_t) 
        \leq  \frac{1}{\rhoBudget}\norm{X_t(D_t) - m(D_t) \statdist}_1 + \frac{2}{\rhoBudget N}
        \leq \frac{|\sspa|^{1/2}}{\rhoBudget} \hw(X_t, D_t) + \frac{2}{\rhoBudget N}, 
    \end{equation*}
    where the second inequality is due to the distance dominance property of $\hw(x, D)$ in \eqref{eq:hw-feature-lyapunov:strength}. \Halmos
\end{proof}

\section{Proof of Theorem~\ref{thm:id-policy} (Optimality gap of ID policy)}
\label{sec:proof-id-policy}

In this section, we prove Theorem~\ref{thm:id-policy} using the framework established in Section~\ref{sec:formalizing}.
This section is organized as follows.
We first define the subset Lyapunov functions for the ID policy in \Cref{sec:proof-id-policy:feature-lyapunov}.
We then define the focus set for the ID policy in \Cref{sec:proof-id-policy:md}.
In \Cref{sec:proof-id-policy:lemmas-and-thm-pf}, we present three lemmas verifying that the ID policy satisfies Conditions~\ref{def:focus-set:pibs-consistency}, \ref{def:focus-set:monotonic} and \ref{def:focus-set:large-enough}, respectively, and prove Theorem~\ref{thm:id-policy} by combining these three lemmas and applying the meta-theorem \Cref{thm:focus-set-policy}. 
In Sections~\ref{sec:proof-id-policy:lemma-pfs}, \ref{sec:proof-id-policy:monotonic}, and \ref{sec:proof-id-policy:large-enough}, we prove the three lemmas.

\subsection{Subset Lyapunov functions}\label{sec:proof-id-policy:feature-lyapunov}

To construct the subset Lyapunov functions for the ID policy, consider the following class of functions, $\{\hid(x, [Nm])\}_{m\in [0,1]_N}$:  For any system state $x$ and $m\in[0,1]_N$, we let $\hid(x, [Nm])$ be a non-decreasing ``envelop'' of $\hw(x, [Nm])$, given by
\begin{equation}
    \label{eq:id-policy-h-def}
    \hid(x, [Nm]) =  \max_{\substack{m'\in [0,1]_N\\m'\leq m}} \hw(x, [Nm']).
\end{equation}
In the rest of the paper, we write $\hw(x, m)$ and $\hid(x, m)$ as shorthands for $\hw(x, [Nm])$ and $\hid(x, [Nm])$. 
Note that $\hid(x, m)$ depends only on the states of the arms in $[Nm]$, as required by the definition of subset Lyapunov functions. 
The lemma below verifies that $\{\hid(x, m)\}_{m\in [0,1]_N}$ satisfies \Cref{def:feature-lyapunov} and is proved in Appendix~\ref{app:proof-feature-lyapunov-lemmas}. 

\begin{restatable}{lemma}{hidfeature}\label{lem:hid-feature-lyaupnov}
    The class of functions $\{\hid(\cdot, m)\}_{m\in[0,1]_N}$ defined in \eqref{eq:id-policy-h-def} satisfies that for any system state $x$ and any $m, m' \in [0,1]_N$, 
    \begin{align}
        \label{eq:hid-feature-lyapunov:drift}
        &\EBig{\Big(\hid(X_1, m) - \big(1-\frac{1}{2\lamw}\big) \hid(x, m)\Big)^+ \givenBig X_0 = x, A_0(i)\sim \pibs(\cdot| S_0(i)) \, \forall i\in [Nm] } \leq \frac{4\lamw^{1/2}}{\sqrt{N}}, \\
        \label{eq:hid-feature-lyapunov:strength}
        &\hid(x, m) \geq \frac{1}{|\sspa|^{1/2}} \norm{x([Nm]) - m\statdist}_1, \\
        \label{eq:hid-feature-lyapunov:lipschitz}
        &\abs{\hid(x, m) - \hid(x, m')} \leq 2\lamw^{1/2} \abs{m'-m}.
    \end{align}
    These inequalities imply the drift condition, distance dominance property, and Lipschitz continuity in \Cref{def:feature-lyapunov}, respectively.
    Consequently, $\{\hid(x, m)\}_{m\in[0,1]_N}$ are subset Lyapunov functions for the optimal single-armed policy $\pibs$. 
\end{restatable}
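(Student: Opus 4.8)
The plan is to derive all three inequalities for $\hid$ from the corresponding properties of $\hw$ established in \Cref{lem:hw-feature-lyaupnov}, exploiting that $\hid(x,m)$ is, by construction, the non-decreasing envelope $\max_{m'\le m}\hw(x,[Nm'])$ of $\hw$ over the nested prefixes $[Nm']$.

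I would first dispatch the two easy inequalities. For distance domination \eqref{eq:hid-feature-lyapunov:strength}, since $m$ is itself an admissible choice of $m'$ in \eqref{eq:id-policy-h-def}, we get $\hid(x,m)\ge \hw(x,[Nm])\ge |\sspa|^{-1/2}\normplain{x([Nm])-m\statdist}_1$ by \eqref{eq:hw-feature-lyapunov:strength}. For Lipschitz continuity \eqref{eq:hid-feature-lyapunov:lipschitz}, note $\hid(x,\cdot)$ is non-decreasing (enlarging $m$ enlarges the feasible set of $m'$); assuming WLOG $m'\le m$ and letting $m^*\le m$ attain the maximum in $\hid(x,m)$, either $m^*\le m'$, in which case $\hid(x,m)=\hw(x,[Nm^*])\le \hid(x,m')$ and the bound is trivial, or $m'<m^*\le m$, in which case $[Nm']\subseteq[Nm^*]$ are nested and \eqref{eq:hw-feature-lyapunov:lipschitz} gives $\hid(x,m)-\hid(x,m')\le \hw(x,[Nm^*])-\hw(x,[Nm'])\le 2\lamw^{1/2}(m^*-m')\le 2\lamw^{1/2}(m-m')$.

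The drift bound \eqref{eq:hid-feature-lyapunov:drift} is where the real work lies. Writing $\hid(X_1,m)=\hw(X_1,[Nm^*])$ for the random maximizer $m^*\le m$, I would split with the triangle inequality in the $\wmat$-norm,
\[
    \hw(X_1,[Nm^*])\le \normplain{X_1([Nm^*])-x([Nm^*])P_\pibs}_\wmat + \normplain{x([Nm^*])P_\pibs-m^*\statdist}_\wmat .
\]
The second term contracts: since $x([Nm^*])/m^*\in\Delta(\sspa)$ (and the term vanishes when $m^*=0$), \Cref{lem:pibar-one-step-contraction-W} together with homogeneity gives $\normplain{x([Nm^*])P_\pibs-m^*\statdist}_\wmat\le (1-\tfrac{1}{2\lamw})\hw(x,[Nm^*])\le (1-\tfrac{1}{2\lamw})\hid(x,m)$, using $m^*\le m$. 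Hence $\big(\hid(X_1,m)-(1-\tfrac{1}{2\lamw})\hid(x,m)\big)^+\le \max_{0\le n\le Nm}\normplain{Z_n}_\wmat$, where $Z_n\triangleq X_1([n])-x([n])P_\pibs$, so it suffices to bound $\Ebig{\max_{0\le n\le Nm}\normplain{Z_n}_\wmat \givenbig X_0=x,\,A_0(i)\sim\pibs(\cdot|S_0(i))\ \forall i\in[Nm]}$. The key observation is that $NZ_n=\sum_{i=1}^n\xi_i$ with $\xi_i\triangleq (\indibrac{S_1(i)=s})_{s\in\sspa}-P_\pibs(S_0(i),\cdot)$, and conditionally on $X_0=x$ (which fixes all $S_0(i)$) the vectors $\xi_i$, $i\in[Nm]$, are independent and mean-zero; thus $\{NZ_n\}$ is a martingale and $\normplain{NZ_n}_\wmat$ a submartingale. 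Doob's $L^2$ maximal inequality gives $\E{\max_n\normplain{NZ_n}_\wmat^2}\le 4\,\E{\normplain{NZ_{Nm}}_\wmat^2}$, and $\E{\normplain{NZ_{Nm}}_\wmat^2}\le \lamw\,\E{\normplain{NZ_{Nm}}_2^2}=\lamw\sum_{i\in[Nm]}\sum_{s\in\sspa}\Varbig{\indibrac{S_1(i)=s}}\le \lamw Nm$. Dividing by $N$ yields $\E{\max_n\normplain{Z_n}_\wmat}=O(\lamw^{1/2}/\sqrt N)$, which gives \eqref{eq:hid-feature-lyapunov:drift} with the stated constant (carrying the factor-$2$ slack from the endpoint bound in the proof of \Cref{lem:hw-feature-lyaupnov} produces exactly $4\lamw^{1/2}$). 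The three inequalities together verify \Cref{def:feature-lyapunov}.

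The main obstacle is precisely the running maximum over $m'$ concealed inside $\hid(X_1,m)$: a union bound over the $\Theta(N)$ candidate values of $m'$ would inflate the noise term by a factor $N$, which is fatal. The resolution is to peel off the contraction part first and recognize the residual envelope as the running maximum of a sum of independent mean-zero vectors, for which Doob's inequality controls the maximum at the same order as the endpoint. The only care needed beyond this is (i) verifying the martingale/independence structure, which relies on the ID policy's focus sets being the nested prefixes $[n]$ so that the partial sums run along a fixed arm ordering, and (ii) transporting the pseudo-contraction of \Cref{lem:pibar-one-step-contraction-W} from distributions in $\Delta(\sspa)$ to the unnormalized count vectors $x([Nm^*])$ by scaling.
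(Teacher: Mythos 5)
Your proposal is correct and follows essentially the same route as the paper's proof: the easy inequalities are reduced to the corresponding properties of $\hw$ from \Cref{lem:hw-feature-lyaupnov}, and the drift bound is obtained by peeling off the pseudo-contraction of $P_\pibs$ and then controlling the residual running maximum $\max_{n\le Nm}\normplain{X_1([n])-x([n])P_\pibs}_\wmat$ via the martingale structure of the prefix sums and Doob's $L^2$ maximal inequality. Your phrasing of the Lipschitz step through the random maximizer and your variance computation differ cosmetically from the paper's (which uses a max-splitting identity and the bound $\normplain{\zmNoise(i)}_1\le 2/N$), but the substance and constants match.
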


We note that the inequality \eqref{eq:hid-feature-lyapunov:drift} is stronger than the drift condition required by the definition of subset Lyapunov functions.  This stronger version is needed for later analysis.

\subsection{Focus set}\label{sec:proof-id-policy:md}
The ID policy, as previously noted, does not explicitly specify focus sets within its algorithm.
Nonetheless, for analysis purposes, we can introduce a set $D_t$ at each time step $t$, effectively serving as the focus set for the ID policy.
Specifically, let $D_t = [N \md(X_t)]$, where $\md(\cdot)$ is a function that maps a system state to a number in $[0,1]_N = \{0,1/n,\dots, 1\}$, formally defined as follows:
\begin{equation}\label{eq:md-def}
    \md(x) = \max \{m\in [0,1]_N \colon \ratiocw \hid(x, m) \leq \rhoBudget(1-m)\},
\end{equation}
where $\rhoBudget \triangleq \min\{\alpha, 1-\alpha\}$ and $\ratiocw$ is a constant.
More concretely, the constant $\ratiocw = \norm{\costvec}_{\wmat^{-1}} = \sqrt{\costvec \wmat^{-1} \costvec^\top}$, where $\costvec$ denotes the row vector $(\pibs(1|s))_{s\in\sspa}$ and $\wmat$ is the matrix given in \Cref{def:w-and-w-norm}. 

The definition of $\md(x)$ has a nice geometric representation, as shown in \Cref{fig:md-def}.
For a system state $x$, note that $\hid(x, [0])=0$ and recall that $\hid(x, [Nm])$ is non-decreasing in $m$.
Then $\md(x)$ is the value of $m$ at which the curve $m \mapsto \ratiocw\hid(x, [Nm])$ intersects with the line $m \mapsto \rhoBudget (1-m)$, ignoring the integer effect.

\begin{figure}
    \FIGURE{
    \subcaptionbox{Subset Lyapunov functions and the focus set. \label{fig:md-def}}{\includegraphics[height=4.9cm]{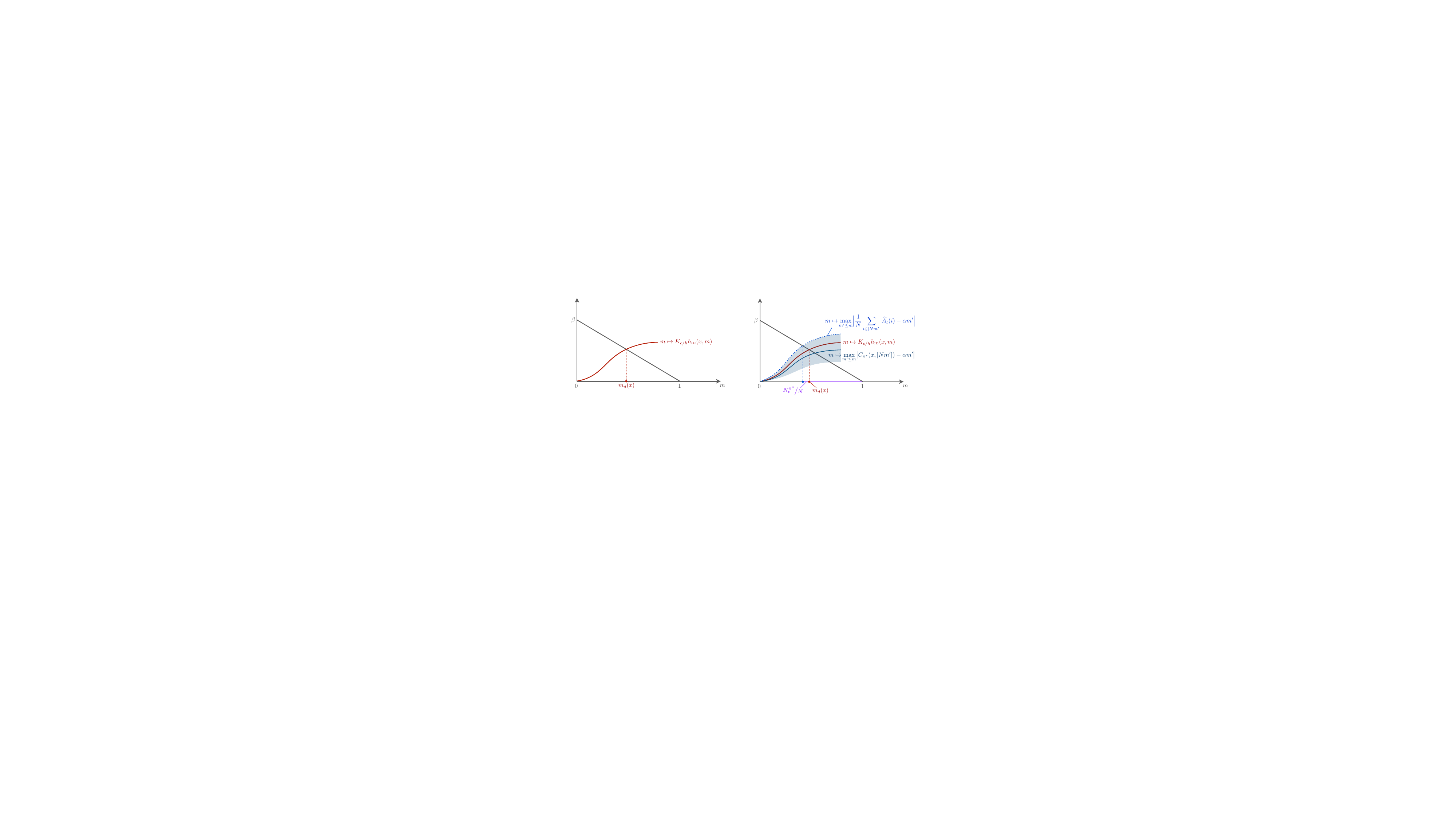}}
    \hfill
    \subcaptionbox{Illustration of the proof of \Cref{lem:id-pibs-consistency}. \label{fig:id-pibs-consistency}}{\includegraphics[height=4.9cm]{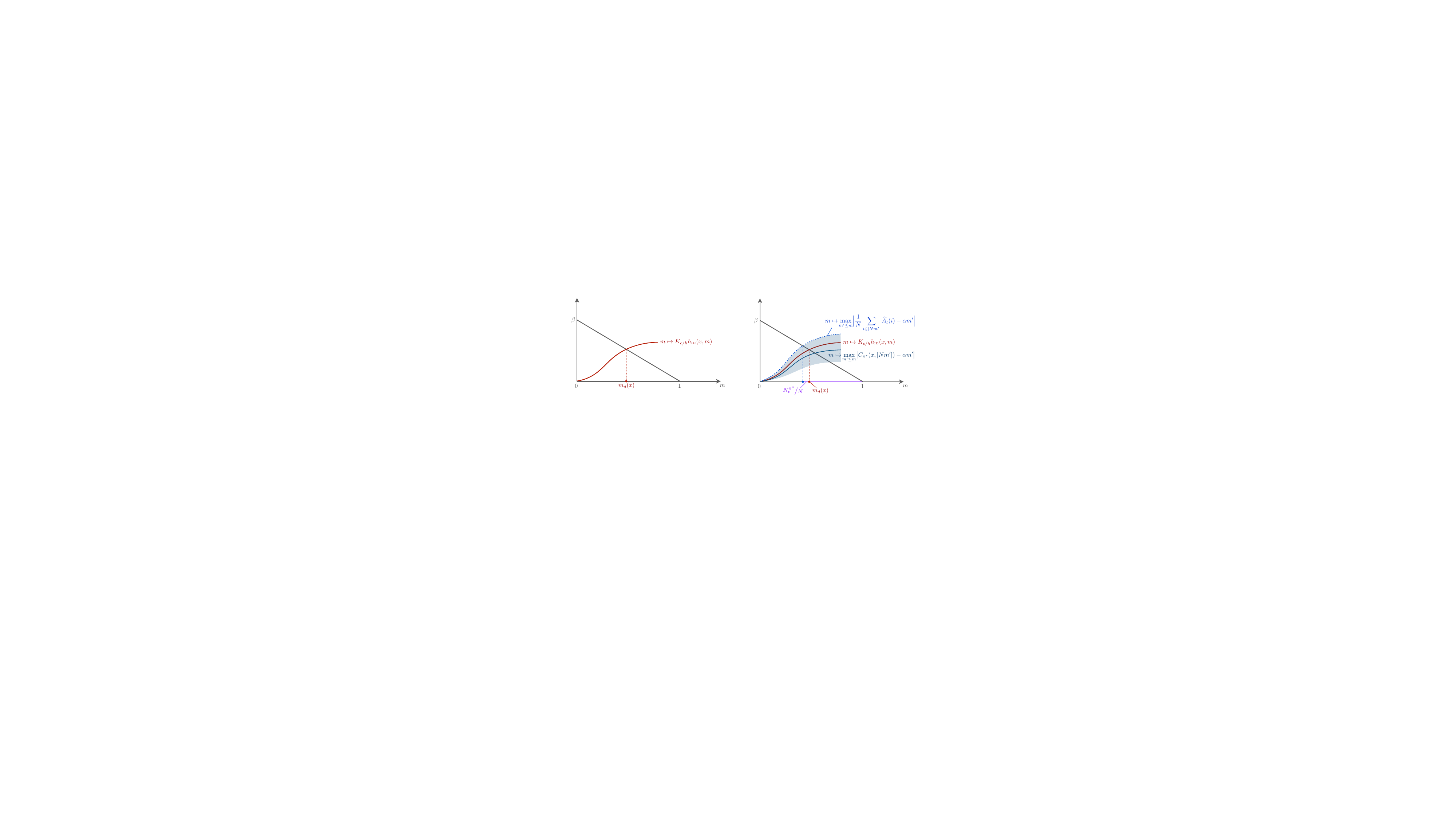}}
    }
    {Illustrations of the focus set and the proof of \Cref{lem:id-pibs-consistency} for the ID policy.}
    {\textbf{(a)} Suppose the current system state is $X_t=x$.  The function $\hid(x,m)$, a shorthand for $\hid(x,[Nm])$, is a subset Lyapunov function on the subset $[Nm]$.  The set $[N \md(x)]$ is the focus set. 
    \textbf{(b)} The three curves illustrated are central to the proof of \Cref{lem:id-pibs-consistency}, e.g., see the inequality~\eqref{eq:curve-decomp}. Take the bottom curve $m\mapsto\max_{m'\le m}\absplain{C_\pibs(x, [Nm']) - \alpha m'}$ as the baseline. We show that the red curve based on the subset Lyapunov function $m \mapsto \ratiocw\hid(x, [Nm])$ is always above the bottom curve, and that the curve $m\mapsto \max_{m'\le m} \absplain{\frac{1}{N}\sum_{i\in[Nm']} \syshat{A}_t(i) - \alpha m'}$ deviates from the bottom curve by $O(1/\sqrt{N})$ in expectation.
    Since $\Ngood/N$ is always to the right of the solid blue dot where $m\mapsto \beta(1-m)$ and $m\mapsto \max_{m'\le m} \absplain{\frac{1}{N}\sum_{i\in[Nm']} \syshat{A}_t(i) - \alpha m'}$ intersect, we have $(N\md(X_t) - \Ngood)^+= O(1/\sqrt{N})$ in expectation.}
\end{figure}

\begin{remark}
    Here we comment on the different choices of subset Lyapunov functions in the analysis of the set-expansion policy and the ID policy. 
    In the analysis of the set-expansion policy, $\{\hw(x, D)\}_{D\subseteq[N]}$ is constructed to satisfy \Cref{def:feature-lyapunov}, allowing the application of the meta-theorem \Cref{thm:focus-set-policy}. 
    In the analysis of the ID policy, $\{\hid(x, m)\}_{m\in[0,1]_N}$ is constructed to satisfy \Cref{def:feature-lyapunov} and to be non-decreasing in $m$. 
    This monotonicity of $\hid(x,m)$ ensures that the focus set $D_t=[N \md(X_t)]$, defined via $\hid(x,m)$, can be proved to satisfy the almost non-shrinking condition (\Cref{lem:id-monotonic}). 
\end{remark}

\subsection{Lemmas for verifying Conditions~\ref{def:focus-set:pibs-consistency}, \ref{def:focus-set:monotonic} and \ref{def:focus-set:large-enough}; Proof of Theorem~\ref{thm:id-policy}}\label{sec:proof-id-policy:lemmas-and-thm-pf}

Having defined the subset Lyapunov functions $\{\hid(x, [Nm])\}_{m\in [0,1]_N}$ and the focus set $D_t = [N \md(X_t)]$, we proceed to establish Lemmas~\ref{lem:id-pibs-consistency}, \ref{lem:id-monotonic} and \ref{lem:id-large-enough}, which verify that the ID policy satisfies Conditions~\ref{def:focus-set:pibs-consistency}, \ref{def:focus-set:monotonic} and \ref{def:focus-set:large-enough}, respectively. Then we apply \Cref{thm:focus-set-policy} to prove \Cref{thm:id-policy}.

\begin{lemma}[ID policy satisfies Condition~\ref{def:focus-set:pibs-consistency}]\label{lem:id-pibs-consistency}
    Consider the ID policy in \Cref{alg:id}. 
    For any $t\ge 0$, let $D_t' = [\min(\Ngood, N\md(X_t))]$, where recall that $\Ngood\in[N]$ is defined in \Cref{alg:id} as the largest number such that for any  $i\in[\Ngood]$, $A_t(i) = \syshat{A}_t(i)$.
    Then
    \begin{equation}
        \Ebig{m(D_t \backslash D_t') \givenbig X_t} = \frac{1}{N} \Ebig{(N\md(X_t) - \Ngood)^+ \givenbig X_t} \leq  \frac{2}{\rhoBudget\sqrt{N}} + \frac{1}{N} \quad a.s.
    \end{equation} 
\end{lemma}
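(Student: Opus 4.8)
The equality in the statement is definitional, so I would establish it first and then focus on the inequality. Since $D_t=[N\md(X_t)]$ and $D_t'=[\min(\Ngood,N\md(X_t))]$ are nested with $D_t'\subseteq[\Ngood]$, the ID policy (\Cref{alg:id}) sets $A_t(i)=\syshat{A}_t(i)$ for every $i\in D_t'$, and $|D_t\setminus D_t'|=(N\md(X_t)-\Ngood)^+$; moreover $D_t$ is a deterministic function of $X_t$, so conditioning on $(X_t,D_t)$ is conditioning on $X_t$. It then remains to show $\Ebig{(N\md(X_t)-\Ngood)^+\givenbig X_t}\le \frac{2\sqrt N}{\rhoBudget}+1$.

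Next I would record the one geometric inequality that brings in the constant $\ratiocw$, namely a Cauchy--Schwarz bound in the $\wmat$-weighted inner product, in the spirit of the proof of \Cref{lem:set-expansion-pibs-consistency} but now applied to the nested prefixes $[Nm']$: with $\costvec=(\pibs(1|s))_{s\in\sspa}$ and $\statdist\costvec^\top=\alpha$, for every system state $x$ and $m'\in[0,1]_N$,
\[
    \absbig{C_\pibs(x,[Nm'])-\alpha m'}=\absbig{(x([Nm'])-m'\statdist)\costvec^\top}\le\norm{x([Nm'])-m'\statdist}_\wmat\,\norm{\costvec}_{\wmat^{-1}}=\ratiocw\,\hw(x,m')\le\ratiocw\,\hid(x,m'),
\]
the last inequality because $\hid\ge\hw$ by \eqref{eq:id-policy-h-def}.

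The core step is a pointwise bound on $(N\md(X_t)-\Ngood)^+$ by the maximal fluctuation of the ideal-action partial sums. Writing $m_\ast=\md(X_t)$ and $Z_n=\sum_{i=1}^n(\syshat{A}_t(i)-\pibs(1|S_t(i)))$, we have $\sum_{i=1}^n\syshat{A}_t(i)=N\,C_\pibs(X_t,[n])+Z_n$ and, conditional on $X_t$, $(Z_n)_{0\le n\le N}$ is a mean-zero martingale with increments in $[-1,1]$. I would put $k=\big\lceil(\max_{0\le n\le N}|Z_n|)/\rhoBudget\big\rceil$ and argue $(Nm_\ast-\Ngood)^+\le k$: this is trivial if $k\ge Nm_\ast$ (as $\Ngood\ge0$), so assume $m:=m_\ast-k/N\in[0,1]_N$. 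The crucial slack comes from the monotonicity of $\hid(x,\cdot)$ and the definition of $\md$: $\ratiocw\hid(X_t,m)\le\ratiocw\hid(X_t,m_\ast)\le\rhoBudget(1-m_\ast)=\rhoBudget(1-m)-\rhoBudget k/N$. Feeding this into the Cauchy--Schwarz bound and the partial-sum identity, and using $\alpha m+\rhoBudget(1-m)\le\alpha$ (valid since $\rhoBudget\le\alpha$) together with $Z_{Nm}\le\max_n|Z_n|\le\rhoBudget k$, gives $\sum_{i=1}^{Nm}\syshat{A}_t(i)\le\alpha N$; symmetrically, using $(1-\alpha)m+\rhoBudget(1-m)\le1-\alpha$ and $-Z_{Nm}\le\rhoBudget k$, gives $\sum_{i=1}^{Nm}(1-\syshat{A}_t(i))\le(1-\alpha)N$. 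Whichever branch of the action-rectification step of \Cref{alg:id} is in force, the relevant one of these two inequalities says precisely that the budget is not yet violated among the first $Nm$ arms, hence $\Ngood\ge Nm$ and $Nm_\ast-\Ngood\le k$.

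Finally I would take expectations: Doob's $L^2$ maximal inequality for $(Z_n)$, with $\Varbig{\syshat{A}_t(i)\givenbig X_t}\le\frac14$, yields $\Ebig{\max_{0\le n\le N}|Z_n|\givenbig X_t}\le 2\sqrt N$, and since $k\le(\max_n|Z_n|)/\rhoBudget+1$ pointwise, $\Ebig{(Nm_\ast-\Ngood)^+\givenbig X_t}\le\frac{2\sqrt N}{\rhoBudget}+1$; dividing by $N$ completes the proof. The hard part is the pointwise bound: one must recognize that, since $\Ngood$ is a data-dependent index, the quantity to control is the \emph{maximal} deviation $\max_n|Z_n|$ (a single fixed-set Cauchy--Schwarz bound, as in \Cref{lem:set-expansion-pibs-consistency}, does not suffice), and that exactly the margin needed to absorb it is produced by the monotonicity of $\hid$ --- stepping back $k/N$ from the crossing point $\md(X_t)$ buys $\rhoBudget k/N$ of room against the line $m\mapsto\rhoBudget(1-m)$. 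With that in hand, the case split on the sign of $\sum_i\syshat{A}_t(i)-\alpha N$ and the boundary case $k\ge Nm_\ast$ are routine.
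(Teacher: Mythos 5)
Your proof is correct and follows essentially the same route as the paper's: the same Cauchy--Schwarz bound $\absplain{C_\pibs(x,[Nm'])-\alpha m'}\le\ratiocw\hid(x,m')$, the same use of the monotonicity of $\hid$ together with the defining inequality of $\md(X_t)$ to create the $\rhoBudget$-slack against the line $m\mapsto\rhoBudget(1-m)$, and the same reduction of the data-dependent index $\Ngood$ to the maximal martingale fluctuation controlled by Doob's $L_2$ inequality. The only difference is packaging: you verify the two budget inequalities directly at the explicit index $N\md(X_t)-k$, whereas the paper phrases the same step through a running-max sufficient condition and a floor manipulation; both yield the identical bound $\frac{2}{\rhoBudget\sqrt N}+\frac1N$.
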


\begin{restatable}[ID policy satisfies \Cref{def:focus-set:monotonic}]{lemma}{idmonotonic}
\label{lem:id-monotonic}
    Consider the ID policy in \Cref{alg:id}. For any  $t\geq0$,
    \vspace*{-\baselineskip}
    \begin{align}
        \Ebig{\big(m(D_t) - m(D_{t+1})\big)^+ \big| X_t} &= \Ebig{(\md(X_t) - \md(X_{t+1}))^+\big|X_t}\nonumber\\
        &\leq \frac{4\ratiocw\lamw^{1/2}(1+\rhoBudget)}{\rhoBudget^2 \sqrt{N}} + \frac{2\ratiocw\lamw^{1/2}+\rhoBudget}{\rhoBudget N} \quad a.s.
    \end{align}
\end{restatable}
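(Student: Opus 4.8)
The plan is to follow the two-step structure of the proof of \Cref{lem:set-expansion-monotonic}, with the slack $\slk(x,D)$ replaced by the ``ID-slack'' $\slk_{\textup{ID}}(x,m)\triangleq \rhoBudget(1-m)-\ratiocw\,\hid(x,m)$; by \eqref{eq:md-def} the focus-set size $\md(x)$ is exactly the largest grid point $m\in\gridset$ with $\slk_{\textup{ID}}(x,m)\ge 0$. Two structural facts come for free: since $D_t=[N\md(X_t)]$ and $D_{t+1}=[N\md(X_{t+1})]$ are both prefixes of $[N]$, one always contains the other, so the set-inclusion part of \Cref{def:focus-set:monotonic} holds automatically; and $(m(D_t)-m(D_{t+1}))^+=(\md(X_t)-\md(X_{t+1}))^+$, which reduces the claim to bounding the latter.

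\emph{Step 1 (a deterministic inequality).} Write $m^\ast=\md(X_t)$. First I would lower bound $\md(X_{t+1})$ by producing a grid point satisfying the defining inequality of $\md$ at time $t+1$. Since $\hid(x,\cdot)$ is non-decreasing by \eqref{eq:id-policy-h-def}, for any $\bar m\le m^\ast$ in $\gridset$ one has $\ratiocw\hid(X_{t+1},\bar m)\le \ratiocw\hid(X_{t+1},m^\ast)$, while $\rhoBudget(1-\bar m)\ge \rhoBudget(1-m^\ast)+\rhoBudget(m^\ast-\bar m)$; hence $\slk_{\textup{ID}}(X_{t+1},\bar m)\ge 0$ whenever $m^\ast-\bar m\ge \tfrac1{\rhoBudget}\bigl(\ratiocw\hid(X_{t+1},m^\ast)-\rhoBudget(1-m^\ast)\bigr)^+$. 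Taking $\bar m$ to be the largest grid point at or below $m^\ast-\tfrac1{\rhoBudget}\bigl(\ratiocw\hid(X_{t+1},m^\ast)-\rhoBudget(1-m^\ast)\bigr)^+$ (and $\bar m=0$ if that number is negative, using $\hid(x,0)=0$ and $\slk_{\textup{ID}}(x,0)=\rhoBudget\ge 0$), and absorbing the $1/N$ rounding error, gives
\begin{equation}\label{eq:id-mono-plan-step1}
    (\md(X_t)-\md(X_{t+1}))^+ \;\le\; \frac1{\rhoBudget}\bigl(\ratiocw\hid(X_{t+1},m^\ast)-\rhoBudget(1-m^\ast)\bigr)^+ + \frac1N \qquad a.s.
\end{equation}

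\emph{Step 2 (taking expectations of the right-hand side).} Because $m^\ast=\md(X_t)$ satisfies $\ratiocw\hid(X_t,m^\ast)\le\rhoBudget(1-m^\ast)$, the positive part in \eqref{eq:id-mono-plan-step1} is at most $\ratiocw\bigl(\hid(X_{t+1},m^\ast)-\hid(X_t,m^\ast)\bigr)^+$, so it suffices to bound $\Ebig{(\hid(X_{t+1},m^\ast)-\hid(X_t,m^\ast))^+\givenbig X_t}$. I would introduce the coupled ``all-follow-$\pibs$'' system state $X_{t+1}'$ (same sampled ideal actions $\syshat A_t(i)$, same transition noise), so that $X_{t+1}$ and $X_{t+1}'$ agree on $D_t'=[\min(\Ngood,N\md(X_t))]$, the set from \Cref{lem:id-pibs-consistency}. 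Using $|\max_{m'\le m^\ast}f-\max_{m'\le m^\ast}g|\le\max_{m'\le m^\ast}|f-g|$ together with the triangle inequality for $\norm{\cdot}_\wmat$, the fact that for $m'\le m^\ast$ the vectors $X_{t+1}([Nm'])$ and $X_{t+1}'([Nm'])$ differ in at most $N\,m(D_t\setminus D_t')$ arms, and $\norm{v}_\wmat\le\lamw^{1/2}\norm{v}_1$, one gets $|\hid(X_{t+1},m^\ast)-\hid(X_{t+1}',m^\ast)|\le 2\lamw^{1/2}\,m(D_t\setminus D_t')$. Then
\begin{align*}
    \Ebig{(\hid(X_{t+1},m^\ast)-\hid(X_t,m^\ast))^+\givenbig X_t}
    &\le \Ebig{(\hid(X_{t+1}',m^\ast)-\hid(X_t,m^\ast))^+\givenbig X_t} + 2\lamw^{1/2}\Ebig{m(D_t\setminus D_t')\givenbig X_t} \\
    &\le \frac{4\lamw^{1/2}}{\sqrt N} + 2\lamw^{1/2}\Bigl(\frac{2}{\rhoBudget\sqrt N}+\frac1N\Bigr),
\end{align*}
where the first term is controlled by the \emph{strengthened} drift inequality \eqref{eq:hid-feature-lyapunov:drift} (valid because in $X_{t+1}'$ all arms, in particular those in $[Nm^\ast]$, follow $\pibs$, and $(1-\tfrac1{2\lamw})\hid(X_t,m^\ast)\le\hid(X_t,m^\ast)$), and the second by \Cref{lem:id-pibs-consistency}. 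Substituting this, multiplied by $\ratiocw/\rhoBudget$, into \eqref{eq:id-mono-plan-step1} and collecting the $1/\sqrt N$ and $1/N$ terms yields exactly the asserted bound $\tfrac{4\ratiocw\lamw^{1/2}(1+\rhoBudget)}{\rhoBudget^2\sqrt N}+\tfrac{2\ratiocw\lamw^{1/2}+\rhoBudget}{\rhoBudget N}$.

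The hard part will be the coupling estimate in Step 2: unlike in the set-expansion proof, $\hid$ is itself an envelope (a maximum of the weighted norms $\hw(x,[Nm'])$ over prefixes $m'\le m^\ast$), so one must show that altering the states of only $O(N\,m(D_t\setminus D_t'))$ arms perturbs the \emph{whole} envelope by $O(m(D_t\setminus D_t'))$ uniformly in $m'$; and one must use the one-sided, positive-part form \eqref{eq:hid-feature-lyapunov:drift} of the drift rather than the plain drift of \Cref{def:feature-lyapunov}, since we only have one-sided control on set shrinkage. Everything else is routine bookkeeping with the monotonicity and Lipschitz constant $2\lamw^{1/2}$ of $\hid$ and the $1/N$ rounding.
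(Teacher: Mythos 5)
Your proposal is correct and follows essentially the same route as the paper's proof: the same deterministic lower bound on $\md(X_{t+1})$ via the monotonicity of $\hid(x,\cdot)$ (the paper's inequality is stated directly in terms of $\big(\hid(X_{t+1},\md(X_t))-\hid(X_t,\md(X_t))\big)^+$, which is what your slack-based bound reduces to), the same coupling with the all-follow-$\pibs$ state $X_{t+1}'$, the same envelope estimate $|\max f-\max g|\le\max|f-g|$ combined with $\norm{v}_\wmat\le\lamw^{1/2}\norm{v}_1$ and \Cref{lem:id-pibs-consistency}, and the same use of the strengthened one-sided drift \eqref{eq:hid-feature-lyapunov:drift}. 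The constants you collect match the statement exactly.
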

\vspace*{-\baselineskip}

\begin{restatable}[ID policy satisfies \Cref{def:focus-set:large-enough}]{lemma}{idcoverage}\label{lem:id-large-enough}
    Consider the ID policy in \Cref{alg:id}. For any $t\geq0$, 
    \begin{equation}\label{eq:id-large-enough}
        1 - m(D_t) \leq \frac{\ratiocw}{\rhoBudget} \hid(X_t, D_t) + \frac{2\ratiocw\lamw^{1/2}+\rhoBudget}{\rhoBudget N} \quad a.s.
    \end{equation}
\end{restatable}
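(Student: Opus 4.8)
The plan is to read the bound directly off the definition \eqref{eq:md-def} of $\md(x)$ as the largest grid point $m\in[0,1]_N$ satisfying $\ratiocw\hid(x,m)\le\rhoBudget(1-m)$, using only the Lipschitz continuity of $\hid$ in its subset argument \eqref{eq:hid-feature-lyapunov:lipschitz} to absorb the $1/N$ lattice effect. First I would check that $\md(x)$ is well-defined for every system state $x$: the defining set is nonempty because $\hid(x,0)=\hw(x,\emptyset)=0<\rhoBudget=\rhoBudget(1-0)$, and it is a subset of the finite set $[0,1]_N$, so the maximum is attained. Since the focus set is $D_t=[N\md(X_t)]$, we have $m(D_t)=\md(X_t)$ and $\hid(X_t,D_t)=\hid(X_t,\md(X_t))$, so it suffices to bound $1-\md(X_t)$.

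If $\md(X_t)=1$ the claimed inequality is trivial since its left-hand side is $0$. Otherwise $\md(X_t)+\tfrac1N\in[0,1]_N$, and the maximality in \eqref{eq:md-def} forces this larger grid point to violate the defining constraint, i.e.
\[
\rhoBudget\Bigl(1-\md(X_t)-\tfrac1N\Bigr) < \ratiocw\,\hid\Bigl(X_t,\,\md(X_t)+\tfrac1N\Bigr).
\]
Applying \eqref{eq:hid-feature-lyapunov:lipschitz} with increment $1/N$ gives $\hid(X_t,\md(X_t)+\tfrac1N)\le\hid(X_t,\md(X_t))+2\lamw^{1/2}/N$; substituting, rearranging, and dividing by $\rhoBudget$ yields
\[
1-m(D_t) < \frac{\ratiocw}{\rhoBudget}\,\hid(X_t,D_t) + \frac{2\ratiocw\lamw^{1/2}+\rhoBudget}{\rhoBudget N},
\]
which is exactly \eqref{eq:id-large-enough}. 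Every inequality above holds pointwise on each realization of $X_t$, so the bound holds almost surely.

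This lemma is elementary and presents no real obstacle: it is simply the ``intersection-point'' characterization of the focus set (illustrated in \Cref{fig:md-def}) made quantitative, with the Lipschitz estimate controlling the discretization error of the grid $[0,1]_N$. The only points needing a word of care are the nonemptiness of the defining set, which guarantees $\md$ is well-defined, and the boundary case $\md(X_t)=1$, both of which are immediate and handled above.
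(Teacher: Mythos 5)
Your proof is correct and follows essentially the same route as the paper's: treat the boundary case $\md(X_t)=1$ trivially, use the maximality in \eqref{eq:md-def} to get $\ratiocw\hid(X_t,\md(X_t)+\tfrac1N) > \rhoBudget(1-\md(X_t)-\tfrac1N)$, and absorb the $1/N$ discretization via the Lipschitz bound \eqref{eq:hid-feature-lyapunov:lipschitz}. The only addition is your (correct, if unnecessary for the paper's purposes) check that the defining set is nonempty.
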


\begin{proof}{\textit{Proof of \Cref{thm:id-policy}}.}
    By Lemma~\ref{lem:id-pibs-consistency}, \ref{lem:id-monotonic} and \ref{lem:id-large-enough}, the ID policy satisfies Conditions~\ref{def:focus-set:pibs-consistency}, \ref{def:focus-set:monotonic} and \ref{def:focus-set:large-enough} with the subset Lyapunov functions $\{\hid(x, [Nm])\}_{m\in [0,1]_N}$. Applying \Cref{thm:focus-set-policy} and substituting the constants, we get 
    \[
        \rrel - \rsysn \leq \frac{684\rmax \lamw^{5/2} |\sspa|^{3/2}}{\rhoBudget^3 \sqrt{N}},
    \]
    which implies the optimality gap bound in the theorem statement. 
    Note that we bound $\ratiocw$ by $|\sspa|^{1/2}$ and relax all $1/N$ factors to $1/\sqrt{N}$ when deriving this bound. 
    \Halmos
\end{proof}

\subsection{Proof of \Cref{lem:id-pibs-consistency}}\label{sec:proof-id-policy:lemma-pfs}

Before delving into the proof, we first offer a high-level understanding of \Cref{lem:id-pibs-consistency}.
Recall that $[\Ngood]$ is defined to be the largest set of arms that follow their ideal actions under the ID policy. 
Then \Cref{lem:id-pibs-consistency} states that the focus set we define, $D_t=[N\md(X_t)]$, is close to $[\Ngood]$, differing by only $O(\sqrt{N})$ elements. 
Note that whether a set of arms $[Nm]$ can follow their ideal actions or not is determined by the amount of budget required by them, i.e., the number of action $1$'s in their ideal actions. 
Our proof of \Cref{lem:id-pibs-consistency} utilizes the relationship between the budget requirement by arms in $[Nm]$ and the distributional distance $\norm{x([Nm]) - m\statdist}_W$.

\begin{proof}{\textit{Proof of \Cref{lem:id-pibs-consistency}}.}
    In this proof, the variables $n$ and $n'$ are by default non-negative integers. 
    We fix a time step $t\ge 0$, and condition on $X_t = x$ for a fixed system state realization $x$. 
    We first prove a lower bound for $\Ngood$ in terms of the \emph{budget requirements} of the arms in $[n]$, $\sum_{i\in[n]} \syshat{A}_t(i)$, for each $n \leq N$. 
    Observe that the arms in $[n]$ can follow their ideal actions if and only if 
    \begin{equation}
        \label{eq:action-0-1-conformity}
        \sum_{i\in[n]} \syshat{A}_t(i) \le \alpha N \quad \text{ and } \quad
        \sum_{i\in[n]} (1-\syshat{A}_t(i)) \le (1-\alpha) N.
    \end{equation}
    Here the first condition in \eqref{eq:action-0-1-conformity} requires that the number of action $1$'s is within budget.  For the second condition in \eqref{eq:action-0-1-conformity}, the easiest way to understand it is that it requires the number of action $0$'s to not exceed $(1-\alpha)N$, where $(1-\alpha)N$ can be interpreted as the ``budget for passive actions''. 
    As a result, a sufficient condition for the arms in $[n]$ to follow their ideal actions is
    \begin{equation}
    \absBig{\sum_{i\in[n]} \syshat{A}_t(i) - \alpha n} \leq \rhoBudget\big(N - n\big),\label{eq:suff-1}
    \end{equation}
    where recall that $\rhoBudget=\min\{\alpha, 1-\alpha\}$.
    In this proof, we use a further sufficient condition for the inequality \eqref{eq:suff-1} above, which is
    \begin{equation}
        \max_{n'\le n}\absBig{\sum_{i\in[n']} \syshat{A}_t(i) - \alpha n'} \leq \rhoBudget\big(N - n\big).\label{eq:suff-2}
    \end{equation}
    Therefore, by the definition of $\Ngood$, we have
    \begin{align}
        \Ngood &\geq \max \bigg\{n \le N \colon \max_{n'\le n}\absBig{\sum_{i\in[n']} \syshat{A}_t(i) - \alpha n'} \leq \rhoBudget\big(N - n\big) \bigg\}\nonumber\\
        &= N \max \bigg\{m\in[0,1]_N \colon \max_{\substack{m'\in[0,1]_N\\m'\le m}} \absBig{\frac{1}{N}\sum_{i\in[Nm']} \syshat{A}_t(i) - \alpha m'} \leq \rhoBudget\big(1 - m\big) \bigg\}.\label{eq:prove-ngood-bound-intermediate-0}
    \end{align}

    Our next step is to bound the quantity $\max_{m'\in[0,1]_N,m'\le m} \absplain{\frac{1}{N}\sum_{i\in[Nm']} \syshat{A}_t(i) - \alpha m'}$. 
    To do this, we relate it to the subset Lyapunov function $\hid(x,m)$ and the scaled expected budget requirement, $C_\pibs(x, D)$, which is defined as: 
    \begin{equation}
        C_\pibs(x, D) \triangleq \frac{1}{N}\EBig{\sum_{i\in D}\syshat{A}_t(i)\givenBig X_t=x} = \sum_{s\in\sspa} x(D, s) \pibs(1|s) = x(D) \costvec^\top,
    \end{equation}
    where $\costvec$ is the row vector $(\pibs(1|s))_{s\in\sspa}$.
    We first decompose the target quantity using the triangle inequality: for any $m\in[0,1]_N$, we have
    \begin{align}
        &\max_{\substack{m'\in[0,1]_N\\m'\le m}} \absBig{\frac{1}{N}\sum_{i\in[Nm']} \syshat{A}_t(i) - \alpha m'}\nonumber\\
        &\qquad\leq \max_{\substack{m'\in[0,1]_N\\m'\le m}} \biggl(\absBig{C_\pibs(x, [Nm']) - \alpha m'} + \absBig{\frac{1}{N}\sum_{i\in[Nm']} \syshat{A}_t(i) - C_\pibs(x, [Nm'])}\biggr)\nonumber\\
        &\qquad\leq \max_{\substack{m'\in[0,1]_N\\m'\le m}}\absBig{C_\pibs(x, [Nm']) - \alpha m'} + \max_{m'\in[0,1]_N} \absBig{\frac{1}{N}\sum_{i\in[Nm']} \syshat{A}_t(i) - C_\pibs(x, [Nm'])},\label{eq:curve-decomp}
    \end{align}
    The second term on the right-hand side is a noise term that we will bound later. We now focus on bounding the first term, which captures the deviation of the instantaneous expected budget requirement from its steady-state expectation. Using the fact that $\mu^* \costvec^\top = \sum_{s\in\sspa} \mu^*(s) \pibs(1|s) = \alpha$, we have:
    \begin{align}
        \abs{C_\pibs(x, \setbelow{\dynset}{m'}) - \alpha m' }
        &= \abs{(x(\setbelow{\dynset}{m'}) - m' \statdist) \costvec^\top}  \nonumber \\
        &= \abs{(x(\setbelow{\dynset}{m'}) - m'\statdist) \wmat^{1/2} {\wmat}^{-1/2} \costvec^\top } \nonumber \\
        &\leq \norm{x(\setbelow{\dynset}{m'}) - m'\statdist}_\wmat \norm{\costvec}_{\wmat^{-1}}\nonumber\\
        &=\ratiocw\hw(x,m') \\
        \label{eq:c-deviate-bdd-by-hid}
        \max_{\substack{m'\in[0,1]_N\\m'\le m}}\absBig{C_\pibs(x, [Nm']) - \alpha m'}
        &\le 
        \ratiocw\hid(x,m).
    \end{align}
    Using the monotonicity of $\hid(x,m)$ in $m$ and the definition of $\md(x)$, we have that for any $m \le \md(x)$, 
    \begin{equation}
        \ratiocw\hid(x,m) \leq \ratiocw\hid(x,\md(x)) \leq \rhoBudget(1-\md(x)).
    \end{equation}
    Substituting these bounds back into the decomposition in \eqref{eq:curve-decomp} implies that, for any $m\in\gridset$ such that $m\leq \md(x)$, we have the following bound: 
    \begin{equation}
        \label{eq:budget-diff-upper}
        \max_{\substack{m'\in[0,1]_N\\m'\le m}} \absBig{\frac{1}{N}\sum_{i\in[Nm']} \syshat{A}_t(i) - \alpha m'}
        \le \rhoBudget(1-\md(x))+\max_{m'\in[0,1]_N} \absBig{\frac{1}{N}\sum_{i\in[Nm']} \syshat{A}_t(i) - C_\pibs(x, [Nm'])}. 
    \end{equation}

    Next, we derive a lower bound for $\Ngood$ using \eqref{eq:prove-ngood-bound-intermediate-0} and \eqref{eq:budget-diff-upper}. Consider any $\overline{m}\in\gridset$ satisfying two conditions: (1) $\overline{m} \leq \md(x)$, and (2) the right-hand side of \eqref{eq:budget-diff-upper} is no greater than $\rhoBudget(1-\overline{m})$. For any such $\overline{m}$, \eqref{eq:budget-diff-upper} implies
    \[
        \overline{m} \in \bigg\{m\in[0,1]_N \colon \max_{\substack{m'\in[0,1]_N\\m'\le m}} \absBig{\frac{1}{N}\sum_{i\in[Nm']} \syshat{A}_t(i) - \alpha m'} \leq \rhoBudget\big(1 - m\big) \bigg\},
    \]
    and then \eqref{eq:prove-ngood-bound-intermediate-0} implies $\Ngood \geq N \, \overline{m}$. Taking the largest such $\overline{m}$ yields
    \begin{align}
        \Ngood &\geq \min\bigg\{N\md(x), \floorBig{N - \frac{N}{\rhoBudget}\Big(\rhoBudget(1-\md(x)) +\max_{m'\in[0,1]_N} \absBig{\frac{1}{N}\sum_{i\in[Nm']} \syshat{A}_t(i) - C_\pibs(x, [Nm'])}\Big)}\bigg\} \nonumber \\
        &\geq  \min\bigg\{N\md(x), N - \frac{N}{\rhoBudget}\Big(\rhoBudget(1-\md(x)) +\max_{m'\in[0,1]_N} \absBig{\frac{1}{N}\sum_{i\in[Nm']} \syshat{A}_t(i) - C_\pibs(x, [Nm'])}\Big)-1\bigg\}  \nonumber \\ 
        &= \min\bigg\{N\md(x), N\md(x) - 1 - \frac{1}{\rhoBudget} \max_{m'\in[0,1]_N} \absBig{\sum_{i\in[Nm']} \syshat{A}_t(i) - N C_\pibs(x, [Nm'])}\bigg\} \nonumber \\ 
        \label{eq:Ngood-lower-bound}
        &= N\md(x) -1 - \frac{1}{\rhoBudget} \max_{n'\leq N} \absBig{\sum_{i\in[n']} \syshat{A}_t(i) - N C_\pibs(x, [n'])}. 
    \end{align}
    
    Rearranging the terms in \eqref{eq:Ngood-lower-bound} and taking the expectation, we get
    \begin{equation}\label{eq:prove-ngood-bound-intermediate-3}
        \EBig{\big(N\md(x) - \Ngood\big)^+ \givenBig X_t = x} \leq 1 + \frac{1}{\rhoBudget} \EBig{\max_{n'\leq N} \absBig{\sum_{i\in[n']} \syshat{A}_t(i) - N C_\pibs(x, [n'])} \givenBig X_t=x}. 
    \end{equation}
    
    With \eqref{eq:prove-ngood-bound-intermediate-3}, it remains to prove
    \begin{equation}\label{eq:budget-req-maximal-deviation}
        \EBig{\max_{n \leq N} \absBig{\sum_{i\in[n]} \syshat{A}_t(i) - N C_\pibs(x, [n])} \givenBig X_t = x} \leq 2\sqrt{N}.
    \end{equation}
    Our proof uses Doob's maximum inequality. First, we simplify the expression by defining the centered random variables $\zmNoise(i) = \syshat{A}_t(i) - \Ebig{\syshat{A}_t(i) \givenbig X_t = x}$. This allows us to rewrite the left-hand side of \eqref{eq:budget-req-maximal-deviation} as:
    \begin{equation}\label{eq:budget-req-deviation:discretize}
        \EBig{\max_{n \leq N} \absBig{\sum_{i\in[n]} \syshat{A}_t(i) - N C_\pibs(x, [n])} \givenBig X_t = x} = \EBig{\max_{n \leq N} \absBig{\sum_{i\in[n]} \zmNoise(i)} \givenBig X_t = x}. 
    \end{equation}
    The sequence of partial sums $(\sum_{i\in[n]} \zmNoise(i))_{n\in[N]}$ forms a martingale when conditioned on $X_t=x$. This holds because the ideal actions $\syshat{A}_t(i)$ are sampled independently, making the $\zmNoise(i)$ terms independent random variables, each with a conditional expectation of zero.
    Applying Cauchy-Schwarz and Doob's $L_2$ maximum inequality \citep[see, e.g.,][Theorem 4.4.4]{Dur_19_prob_book}, we bound this term as:
    \begin{align*}
        \EBig{\max_{n \leq N} \absBig{\sum_{i\in[n]} \zmNoise(i)} \givenBig X_t = x}
        &\leq \EBig{\max_{n \leq N} \Big|\sum_{i\in[n]} \zmNoise(i)\Big|^2 \givenBig X_t = x}^{1/2}\\
        &\le \biggl(4\EBig{\Big|\sum_{i\in[N]} \zmNoise(i)\Big|^2  \givenBig X_t = x}\biggr)^{1/2}\\
        &=\biggl(4\sum_{i\in[N]} \EBig{\zmNoise(i)^2 \givenBig X_t = x}\biggr)^{1/2}\\
        &\le 2\sqrt{N}.
    \end{align*}
    Here, the first inequality follows from Cauchy-Schwarz, the second applies Doob's maximal $L_2$ inequality, and the equality follows from the independence of the $\zmNoise(i)$ terms. The final inequality holds because $\absbig{\zmNoise(i)} = \absbig{ \syshat{A}_t(i) - \Ebig{\syshat{A}_t(i) \givenbig X_t = x}} \leq 1$. This completes the proof. 
    \Halmos
\end{proof}

\subsection{Proof of \Cref{lem:id-monotonic}}\label{sec:proof-id-policy:monotonic}

Next, we prove that the focus set that we choose for the ID policy, $D_t = [N\md(X_t)]$, satisfies the almost non-shrinking condition (\Cref{def:focus-set:monotonic}). 
The key intuition lies in the geometric interpretation shown in \Cref{fig:md-def}, where $\md(X_t)$ is the intersection point of two functions: the non-decreasing curve $m\mapsto \hid(X_t, m)$ and the strictly decreasing line $m \mapsto \rhoBudget (1-m)$. 
Our proof shows that the vertical shift of the curve $m\mapsto \hid(X_t, m)$ from one time step to the next is small (in particular, $\hid(X_{t+1}, \md(X_t)) - \hid(X_t, \md(X_t))$ is small). 
Because the non-decreasing curve's vertical movement is small as it intersects the strictly decreasing line, the intersection point $\md(X_t)$ cannot shift significantly to the left. This implies that the focus set $D_t = [N\md(X_t)]$ is almost non-shrinking.

\begin{proof}{\textit{Proof of \Cref{lem:id-monotonic}}.}
    Fixing a time step $t\ge 0$, 
    we first prove the following inequality, which will be used to establish an upper bound on $\Ebig{(\md(X_t) - \md(X_{t+1}))^+ \givenbig X_t}$:
    \begin{equation}\label{eq:proof-id-monotonic-intermediate-1}
        \md(X_{t+1}) \geq  \md(X_t) - \frac{\ratiocw}{\rhoBudget} \big(\hid(X_{t+1}, \md(X_t)) - \hid(X_t, \md(X_t))\big)^+ - \frac{1}{N}.
    \end{equation}
    By the maximality of $\md(X_{t+1})$, it suffices to show $\ratiocw \hid(X_{t+1}, \overline{m}) \leq \rhoBudget(1-\overline{m})$ for each $\overline{m} \in [0,1]_N$ with $\overline{m} \leq \md(X_t) - \frac{\ratiocw}{\rhoBudget} \big(\hid(X_{t+1}, \md(X_t)) - \hid(X_t, \md(X_t))\big)^+$. For any such $\overline{m}$, 
    \begin{align*}
        \rhoBudget(1-\overline{m}) 
        &\geq \rhoBudget(1-\md(X_t)) + \ratiocw \big(\hid(X_{t+1}, \md(X_t)) - \hid(X_t, \md(X_t))\big)^+ \\
        &\geq \ratiocw \hid(X_t, \md(X_t)) + \ratiocw \big(\hid(X_{t+1}, \md(X_t)) - \hid(X_t, \md(X_t))\big)^+ \\
        &\geq \ratiocw\hid(X_{t+1}, \md(X_t)) \\
        &\geq \ratiocw\hid(X_{t+1}, \overline{m}),
    \end{align*} 
    where the second inequality is because $\ratiocw \hid(X_t, \md(X_t)) \leq \rhoBudget(1-\md(X_t))$, and the last inequality is because $\hid(X_t, m)$ is \emph{non-decreasing in $m$} and $\overline{m} \leq \md(X_t)$. This proves \eqref{eq:proof-id-monotonic-intermediate-1}. 

    The inequality \eqref{eq:proof-id-monotonic-intermediate-1} implies that 
    \begin{equation}
    \label{eq:proof-id-monotonic-intermediate-2}
        \Ebig{(\md(X_t) - \md(X_{t+1}))^+ \givenbig X_t } 
        \leq  \frac{\ratiocw}{\rhoBudget}  \Ebig{\big(\hid(X_{t+1}, \md(X_t)) - \hid(X_t, \md(X_t))\big)^+ \givenbig X_t} + \frac{1}{N}.
    \end{equation}
    We now upper bound $\Ebig{\big(\hid(X_{t+1}, \md(X_t)) - \hid(x, \md(x))\big)^+ \givenbig X_t}$ by coupling $X_{t+1}$ with a random element $X_{t+1}'$ constructed below.
    Let $X_{t+1}'$ be the random element denoting the system state at time step $t+1$ if we were able to set $A_t(i) = \syshat{A}_t(i)$ for all  $i\in [N]$. By the drift condition of the subset Lyapunov function $\hid(\cdot,D)$ established as \eqref{eq:hid-feature-lyapunov:drift} in \Cref{lem:hid-feature-lyaupnov}, 
    \begin{align}
        & \EBig{\big(\hid(X_{t+1}', \md(X_t)) - \hid(X_t, \md(X_t))\big)^+ \givenBig X_t} \nonumber \\
        &\qquad\leq \EBig{\Big(\hid(X_{t+1}', \md(X_t)) - \Big(1-\frac{1}{2\lamw}\Big) \hid(X_t, \md(X_t))\big)^+ \givenBig X_t}  
        \label{eq:proof-id-monotonic-intermediate-3}
        \leq \frac{4\lamw^{1/2}}{\sqrt{N}}.
    \end{align}
    We couple $X_{t+1}'$ and $X_{t+1}$ such that $X_{t+1}'(\{i\}) = X_{t+1}(\{i\})$ for all $i\leq \min(N\md(X_t), \Ngood)$. Then
    \begin{align}
        & \EBig{\big(\hid(X_{t+1}, \md(X_t)) - \hid(X_t, \md(X_t))\big)^+ - \big(\hid(X_{t+1}', \md(X_t)) - \hid(X_t, \md(X_t))\big)^+ \givenBig X_t}  \nonumber \\
        &\qquad\leq \EBig{\big(\hid(X_{t+1}, \md(X_t)) - \hid(X_{t+1}', \md(X_t))\big)^+ \givenBig X_t}  \nonumber \\
        &\qquad=  \EBig{\Big(\max_{m'\in[0,1]_N,\,m'\leq\md(X_t)}\hw(X_{t+1}, m') - \max_{m'\in[0,1]_N,\,m'\leq\md(X_t)} \hw(X_{t+1}', m')\Big)^+ \givenBig X_t}  \nonumber \\
        &\qquad\leq \EBig{\max_{m'\in[0,1]_N,\,m'\leq\md(X_t)} \big(\hw(X_{t+1}, m') - \hw(X_{t+1}', m') \big)^+ \givenBig X_t}  \nonumber \\
        &\qquad\leq \EBig{\max_{m'\in[0,1]_N,\,m'\leq\md(X_t)} \norm{X_{t+1}([Nm']) - X_{t+1}'([Nm'])}_\wmat \givenBig X_t} \nonumber \\
        &\qquad \leq \EBig{\max_{m'\in[0,1]_N,\,m'\leq\md(X_t)}  \Big(\normbig{X_{t+1}\big([Nm')] \backslash [\Ngood]\big)}_\wmat + \normbig{X_{t+1}'\big([Nm'] \backslash [\Ngood]\big)}_\wmat \Big)  \givenBig X_t} \nonumber \\
         \label{eq:proof-id-monotonic-intermediate-4-0}
        &\qquad\leq \frac{2 \lamw^{1/2}}{N}  \EBig{\max_{m'\in[0,1]_N,\,m'\leq\md(X_t)} \big(Nm' - \Ngood\big)^+ \givenBig X_t} \\
        \nonumber
        &\qquad\leq \frac{2 \lamw^{1/2}}{N} \EBig{\big(N\md(X_t) - \Ngood\big)^+ \givenBig X_t}\\
        \label{eq:proof-id-monotonic-intermediate-4}
        &\qquad\leq \frac{4\lamw^{1/2}}{\rhoBudget\sqrt{N}} + \frac{2\lamw^{1/2}}{N},
    \end{align}
    where \eqref{eq:proof-id-monotonic-intermediate-4-0} follows from the facts $\norm{v}_\wmat \leq \lamw^{1/2}\norm{v}_1$ for any vector $v$ and that $\normplain{X_{t+1}(D)}_1 = \normplain{X_{t+1}'(D)}_1 = m(D)$ for any $D\subseteq[N]$, and \eqref{eq:proof-id-monotonic-intermediate-4} applies the bound on $\Ebig{(N\md(X_t) - \Ngood)^+ \givenbig X_t}$ in \Cref{lem:id-pibs-consistency}. 
    
    Combining \eqref{eq:proof-id-monotonic-intermediate-2}, \eqref{eq:proof-id-monotonic-intermediate-3} and \eqref{eq:proof-id-monotonic-intermediate-4}, we get 
    \begin{equation*}
        \Ebig{(\md(X_t) - \md(X_{t+1}))^+ \givenbig X_t} \leq \frac{4\ratiocw\lamw^{1/2}(1+\rhoBudget)}{\rhoBudget^2 \sqrt{N}} + \frac{2\ratiocw\lamw^{1/2}+\rhoBudget}{\rhoBudget N}.\Halmos
    \end{equation*}
\end{proof}

\subsection{Proof of \Cref{lem:id-large-enough}}\label{sec:proof-id-policy:large-enough}

\begin{proof}{\textit{Proof of \Cref{lem:id-large-enough}}.}
    This lemma almost directly follows from the definition of the focus set $D_t \triangleq [N\md(X_t)]$, where
    \begin{equation}
        \md(X_t) \triangleq \max \{m\in [0,1]_N \colon \ratiocw \hid(X_t, m) \leq \rhoBudget(1-m)\}.\label{eq:def-md-Xt}
    \end{equation}
    The main technical step is to account for the discretization effect, i.e., $\md(X_t)$ must be a multiple of $1/N$.

    We focus on the non-trivial case where $\md(X_t) < 1$. 
    By the maximality in the definition \eqref{eq:def-md-Xt}, we have
    \begin{equation}\label{eq:pf-id-large-enough:intermediate-1}
        \ratiocw \hid\Big(X_t, \md(X_t)+\frac{1}{N}\Big) > \rhoBudget\Big(1-\md(X_t)-\frac{1}{N}\Big). 
    \end{equation}
    Next, by the Lipschitz continuity of $\hid(x, m)$ stated in \eqref{eq:hid-feature-lyapunov:lipschitz}, we have
    \begin{equation}\label{eq:pf-id-large-enough:intermediate-2}
        \ratiocw \hid\Big(X_t, \md(X_t)+ \frac{1}{N}\Big) 
        \leq \ratiocw \hid\Big(X_t, \md(X_t)\Big) + \frac{2 \ratiocw \lamw^{1/2}}{N}.
    \end{equation}
    Combining \eqref{eq:pf-id-large-enough:intermediate-1} with \eqref{eq:pf-id-large-enough:intermediate-2}, we get
    \[
        \rhoBudget(1-\md(X_t)) < 
        \ratiocw \hid(x, \md(X_t)) + \frac{2\ratiocw \lamw^{1/2}+\rhoBudget}{N}. \Halmos
    \]
\end{proof}

\section{Experiments}
\label{sec:experiments}
In our theoretical analysis, we have shown that our policies achieve asymptotic optimality assuming only the aperiodic unichain assumption, removing GAP or SA assumed in prior work. 
In this section, we compare the numerical performance of our policies with the policies in the prior work when the number of arms $N$ is finite. 
These numerical results complement our theory, showing that our policies also empirically outperform previous policies on some RB problems that violate GAP or SA (\Cref{sec:experiments:compare-non-ugap} and \ref{sec:experiments:compare-non-sa}) but still satisfy our \Cref{assump:aperiodic-unichain}. 
Moreover, such RB problems are not rare. For GAP, there exists some natural classes of randomly-generated RB instances, a decent fraction of which do not satisfy GAP under all LP-Priority policies (\Cref{sec:experiments:non-ugap-is-common}); 
for SA, we give two counterexamples for SA and discuss ways to construct more counterexamples in Appendix~\ref{app:sa-counterexample}. The code for all the experiments are available on Github \cite{Hon_24_github}.

\subsection{Comparing policies on two non-GAP examples}
\label{sec:experiments:compare-non-ugap}

In this section, we consider two examples where two prevalent versions of LP-Priority policies, the Whittle index policy \cite{Whi_88_rb} 
and the LP index policy \cite{GasGauYan_23_exponential} 
(whose variants are also studied as primal-dual heuristics, Lagrange-based policies, or the Optimal Lagrangian Index Policy \cite{BerNin_00_RB_index,Haw03_Lagrangian,HuFra_17_rb_asymptotic, BroSmi_19_rb})
are not asymptotically optimal, either because of the failure of GAP or because the policy itself is not well-defined. 
We will simulate the Whittle index policy and the LP-index policy on these two examples, where the Whittle index policy is implemented using the algorithm in \cite{GasGauKhu_23}. 
Along with these two LP-Priority policies, we also evaluate the performance of the FTVA policy \cite{HonXieCheWan_23}, the set-expansion policy (\Cref{sec:results_set-expansion-policy}), and the ID policy (\Cref{sec:results_ID-policy}). 
Note that for the set-expansion policy, we will consider two versions of implementations that perform action rectification differently: the vanilla version performs action rectification in the uniformly random way as described in \Cref{alg:set-expansion}; the ``LP index version'' of the set-expansion policy applies the LP index policy to the arms not in the focus set. See Appendix~\ref{app:exp-details:set-expansion} for implementation details of the set-expansion policy.

\begin{figure}
    \FIGURE{
    \subcaptionbox{Performance comparison on the three-state example. \label{fig:non-ugap-perf:three-state}}{\includegraphics[width=7.5cm]{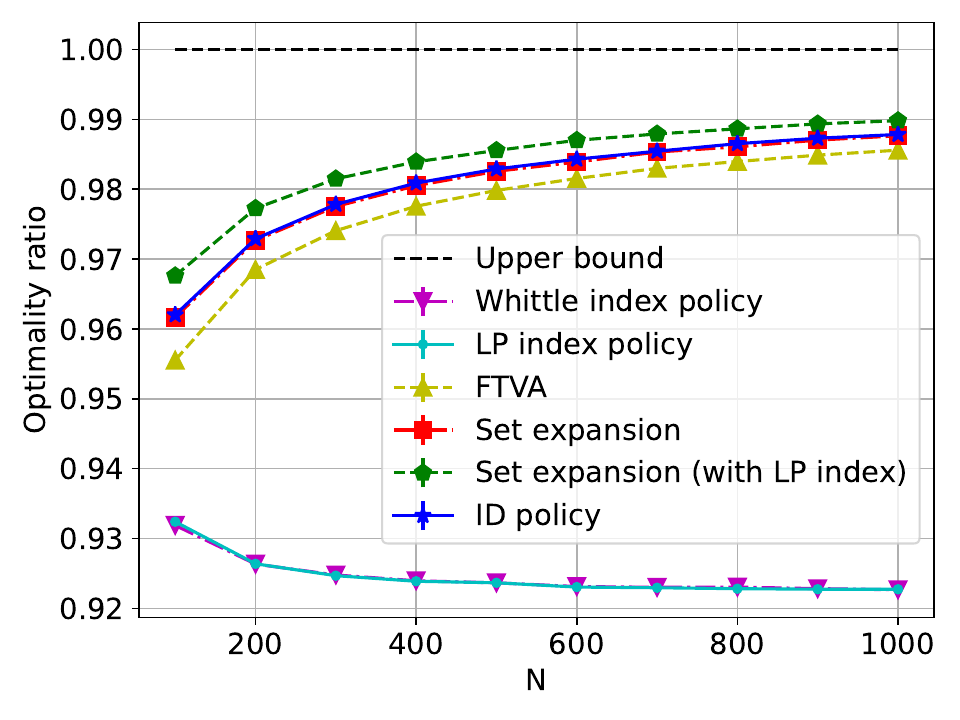}}
    \hfill
    \subcaptionbox{Performance comparison on the eight-state example.  \label{fig:non-ugap-perf:eight-state}}{\includegraphics[width=7.5cm]{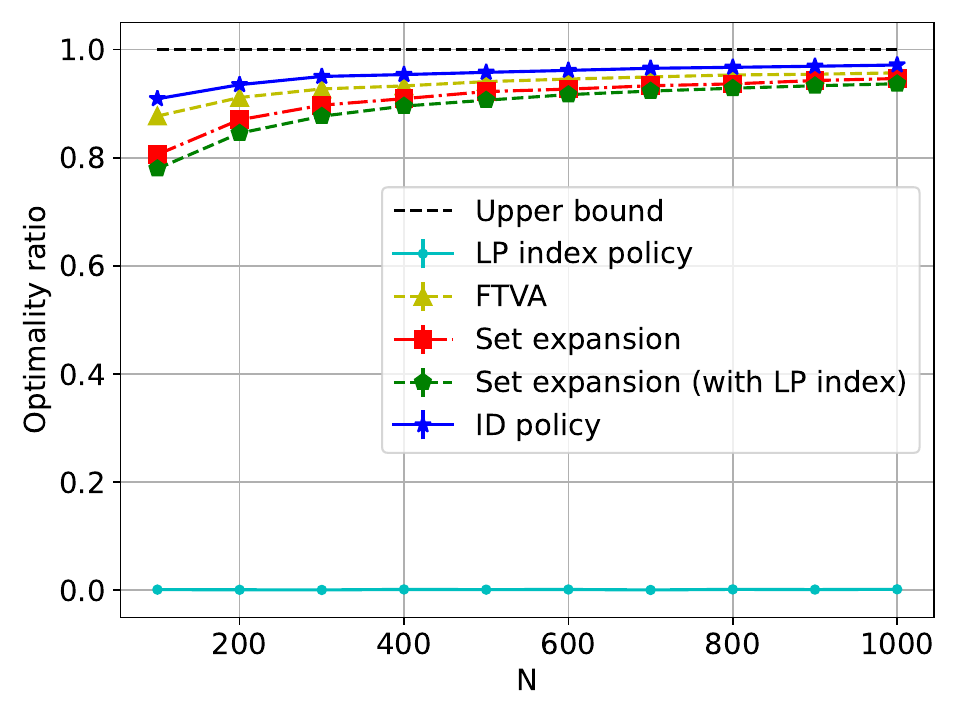}}
    }
    {Performance comparison on two examples where GAP fails to hold.  \label{fig:non-ugap-perf}}
    {}
\end{figure}

For all the simulations, we compare the \emph{optimality ratios} of the policies, which are their average rewards normalized by the optimal value of the  LP relaxation in \eqref{eq:lp-single}. Note that the optimality ratio of an asymptotically optimal policy converges to $1$ as $N\to\infty$. 
Each optimality ratio is estimated from multiple replications. For each replication, the initial state of each arm is independently sampled from the uniform distribution over the state space. Details of the simulation setting can be found in Appendix~\ref{app:exp-details:rb-examples-details}.

The first non-GAP example is an RB problem defined by a single-armed MDP with three states and was obtained in a random search by \cite{GasGauYan_20_whittles}; it was also evaluated in Figure~1 of \cite{HonXieCheWan_23}; see Appendix~\ref{app:exp-details:rb-examples-details} for its detailed definition. Note that in this three-state example, there is only one LP-Priority policy, so the Whittle index policy and the LP index policy are identical. 
Our simulation is shown in \Cref{fig:non-ugap-perf:three-state}: 
The Whittle index policy and the LP index policy are asymptotically suboptimal; 
FTVA outperforms the LP index policy and appears to be asymptotically optimal; 
the set-expansion policy and the ID policy are strictly better than FTVA; the LP-index version of set-expansion policy has the best performance among all these policies.

The second non-GAP example is defined by a single-armed MDP with eight states, and is adapted from Figure~2 of \cite{HonXieCheWan_23}; see Appendix~\ref{app:exp-details:rb-examples-details} for its detailed definition. 
A notable feature of this example is the existence of a local attractor, where the scaled state-count vector of the arms is attracted to a distribution other than the optimal stationary distribution, which is a mode of non-GAP-ness not observed in earlier literature. 
Our simulation result on this example is shown in \Cref{fig:non-ugap-perf:eight-state}: 
The Whittle index policy is not included since this example is non-indexable; 
the LP index policy has nearly zero reward; 
FTVA, the set-expansion policy, and the ID policy are asymptotically optimal, and the ID policy has the best performance among all these policies.

\subsection{Commonness of non-GAP examples}
\label{sec:experiments:non-ugap-is-common}

\begin{figure}
    \FIGURE{
        \subcaptionbox{Dirichlet($1$)\label{fig:eigs:dirichlet-1}}{ \includegraphics[width=5cm]{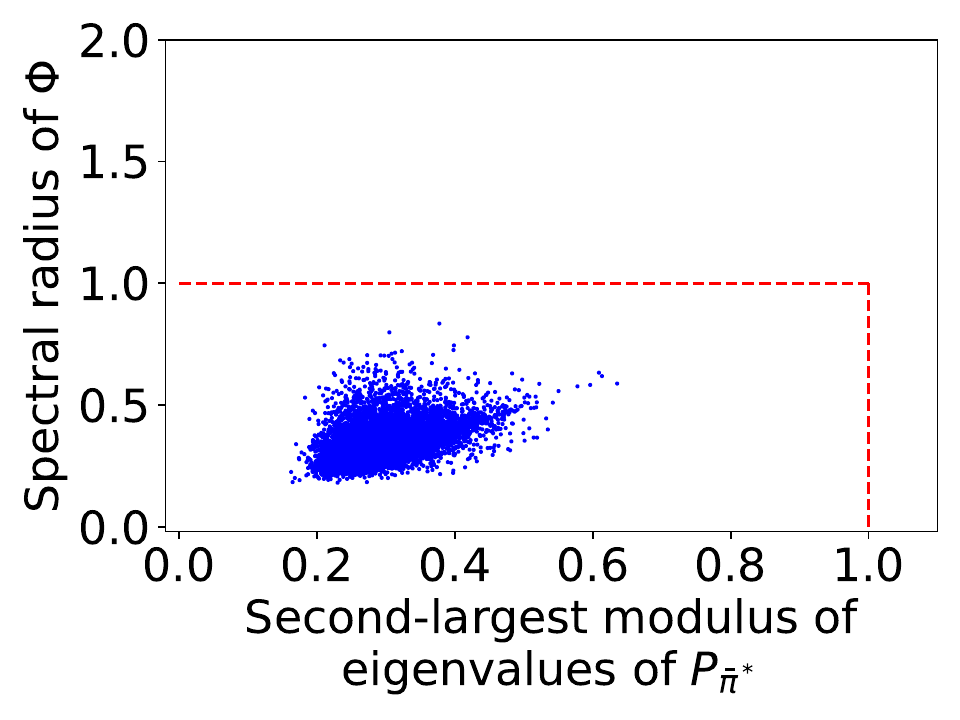}}
        \hfill
        \subcaptionbox{Dirichlet($0.2$)\label{fig:eigs:dirichlet-02}}{\includegraphics[width=5cm]{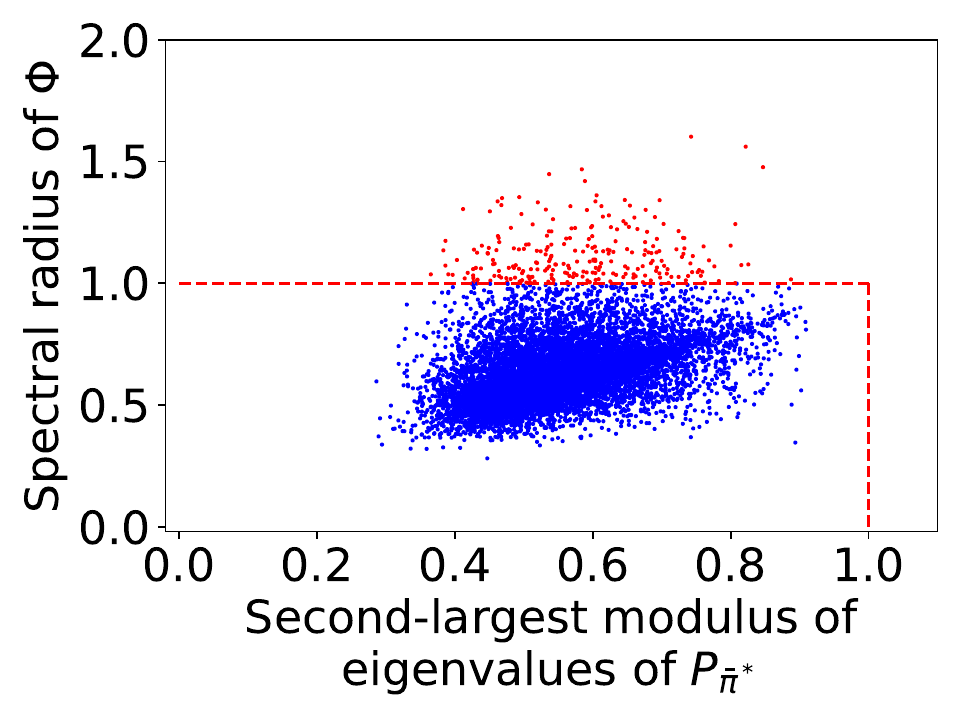}}
        \hfill
        \subcaptionbox{Dirichlet($0.05$)\label{fig:eigs:dirichlet-005}}{ \includegraphics[width=5cm]{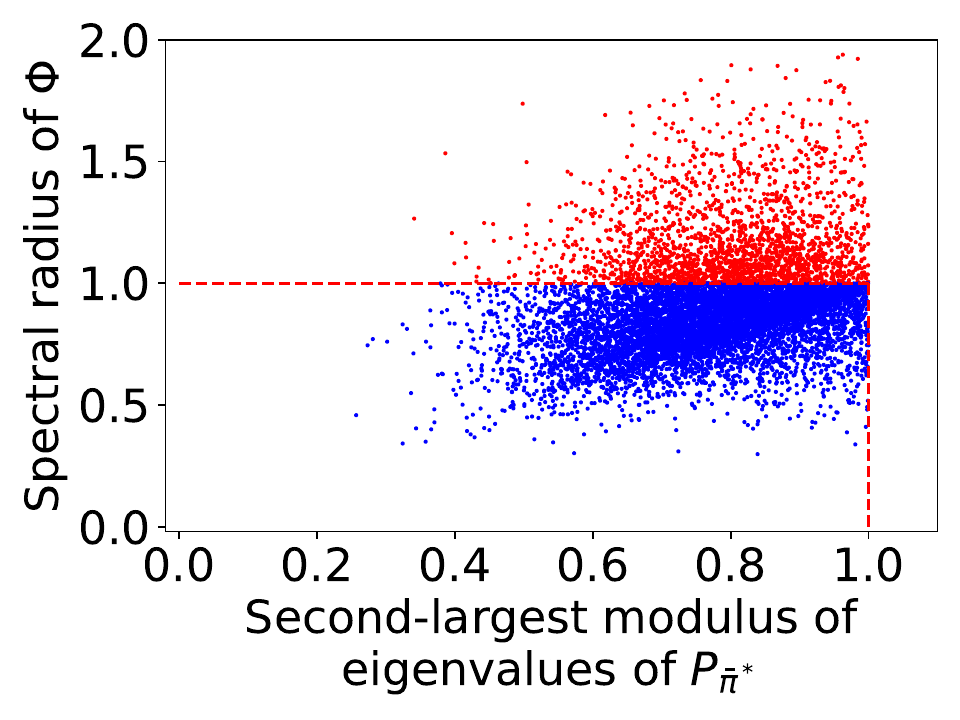}}
    }
    {Scatter plots illustrating the eigenvalues of about $10^4$ random RB problems, whose transition probabilities and reward function follow the symmetric Dirichlet distribution with different parameters in each subplot.\label{fig:eigs}}
    {Each point in a scatter plot represents an RB problem, whose $x$-coordinate (or $y$-coordinate) represents the second largest modulus of  eigenvalues of $P_\pibs$ (or spectral radius of $\Phi$). Each RB problem has $|\sspa| = 10$. The points outside the dashed box (highlighted in red) represent RB problems that violate GAP under all LP-Priority polices. 
    }
\end{figure}

Although counterexamples to GAP are well-known to exists starting from \cite{WebWei_90}, such examples are rare in previously known classes of RB problems. In particular, it has been found in \cite{GasGauYan_23_whittles} that when $|\sspa| = 3$, no more than $0.2\%$ of the uniformly random examples are non-indexable or violate GAP for Whittle index policy; when $|\sspa|$ gets large, this fraction among the uniformly random examples further decreases and becomes less than $10^{-4}\%$ when $|\sspa| = 7$.

In this section, we study some classes of RB problems that are \emph{sparser} than the RB problems following the uniform distribution, i.e., RB problems with fewer non-zero entries in the transition kernels. 
Specifically, we generate random RB instances with $|\sspa| = 10$, where the transition distributions $P(s, a, \cdot)$ for $s\in\sspa, a\in\aspa$ are independently sampled from symmetric Dirichlet distributions, a natural class of distributions for generating points on the probability simplex. 
The reward functions $r(\cdot, a)$ for $a\in \aspa$ are sampled from the same Dirichlet distribution.  
We have also experimented with uniformly distributed reward functions and observed similar results, as discussed in the appendix~\ref{app:exp-details:dense-reward}. Note that all entries less than $10^{-7}$ are rounded to zero.

To count the number of non-GAP examples, we focus on identifying the RB problems that are \emph{locally unstable}, which implies the violation of GAP under all LP-Priority policies. 
The local instability of an RB problem is easy to certify: it happens when the spectral radius of a certain matrix $\Phi$ representing the local mean-field dynamics under the LP-Priority policies is larger than $1$, when the definition of $\Phi$ is unambiguous (See Appendix~\ref{app:exp-details:commonness} for details). 
Based on this fact, we make three scatter plots in \Cref{fig:eigs}, each visualizing $10^4$ independently generated RB problems following the symmetric Dirichlet distribution with different parameters. Each point in a scatter plot represents an RB problem, whose $y$-coordinate is the spectral radius of $\Phi$, and whose $x$-coordinate represents the second-largest absolute value of $P_\pibs$'s eigenvalues.

In Figures~\ref{fig:eigs:dirichlet-1}, \ref{fig:eigs:dirichlet-02}, and \ref{fig:eigs:dirichlet-005}, the parameter of the symmetric Dirichlet distribution decreases as $1$, $0.2$, and $0.05$, indicating the increased sparsity of the single-armed MDPs. 
In particular, the average fractions of non-zero entries of the transition distributions $P(s, a, \cdot)$ are approximately $100\%$, $95.3\%$, and $58.7\%$, respectively. 
In \Cref{fig:eigs:dirichlet-1}, because Dirichlet($1$) is the uniform distribution on probability simplex, the fact that no RB examples are found to be locally unstable is consistent with the findings in \cite{GasGauYan_23_exponential}. 
In contrast, in \Cref{fig:eigs:dirichlet-005}, under Dirichlet($0.05$) distribution, a significant proportion of the problem instances are locally unstable (marked in red); moreover, even if we focus on the examples whose $x$-coordinate is less than $0.95$, that is, the examples whose $P_\pibs$ is aperiodic unichain with a decently large spectral gap, $1844$ out of $9131$ examples (about $20.2\%$) are locally unstable. 
Note that when generating the $10^4$ random examples for each of the three scatter plots, the definition of $\Phi$ is ambiguous for only a small number of examples, which we do not display in the figures; specifically, the actual number of examples displayed in Figures~\ref{fig:eigs:dirichlet-1}, \ref{fig:eigs:dirichlet-02}, and \ref{fig:eigs:dirichlet-005} are $9776$, $9918$, and $9914$, respectively. 

The experiment in \Cref{fig:eigs} shows that for the RB problems whose single-armed MDPs are sparse, a significant fraction of them could be aperiodic unichain, but violate GAP for all LP-Priority policies.

\begin{figure}
    \FIGURE{
        \subcaptionbox{LP index policy. \label{fig:asym-subopt:lpp}}{\includegraphics[width=5cm]{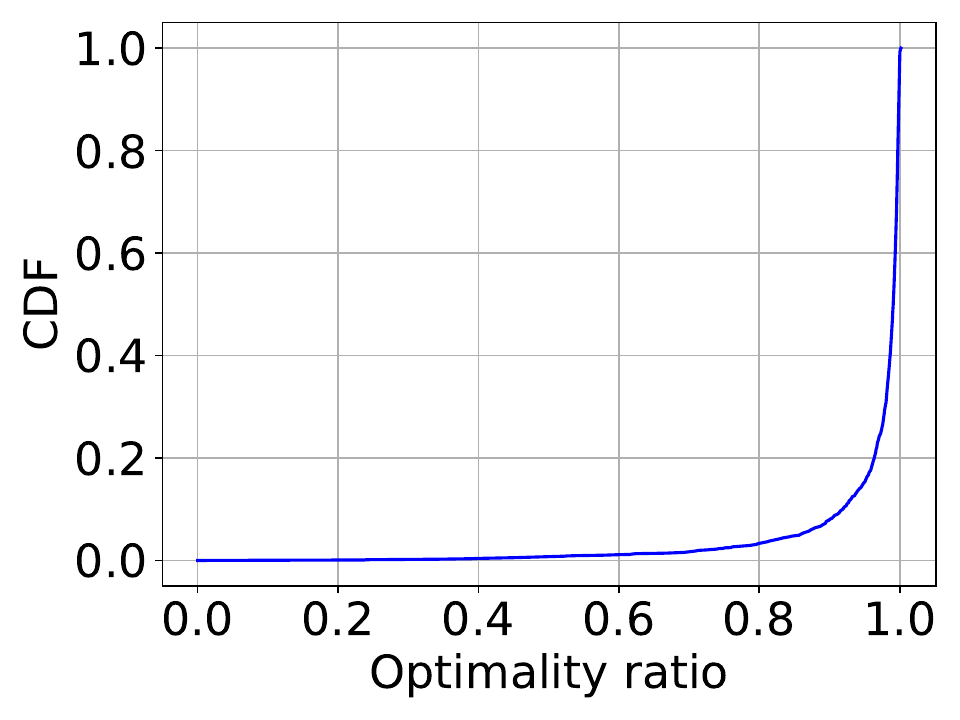}}
        \hfill
        \subcaptionbox{Whittle index policy. \label{fig:asym-subopt:whittle}}{\includegraphics[width=5cm]{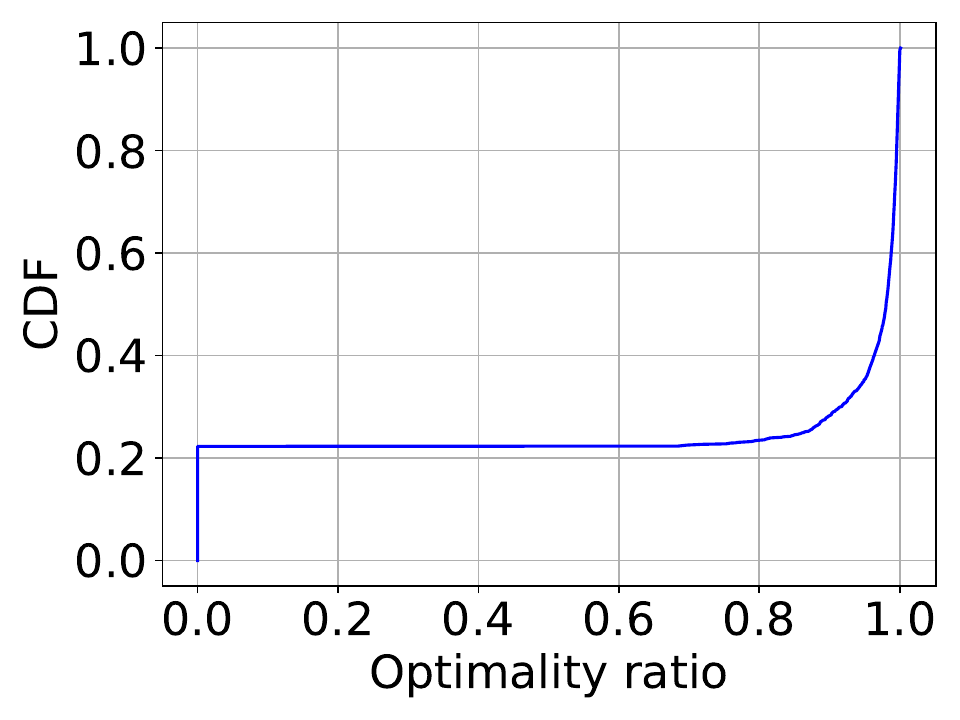}}
        \hfill
        \subcaptionbox{Maximal reward of the two index policies. \label{fig:asym-subopt:max}}{\includegraphics[width=5cm]{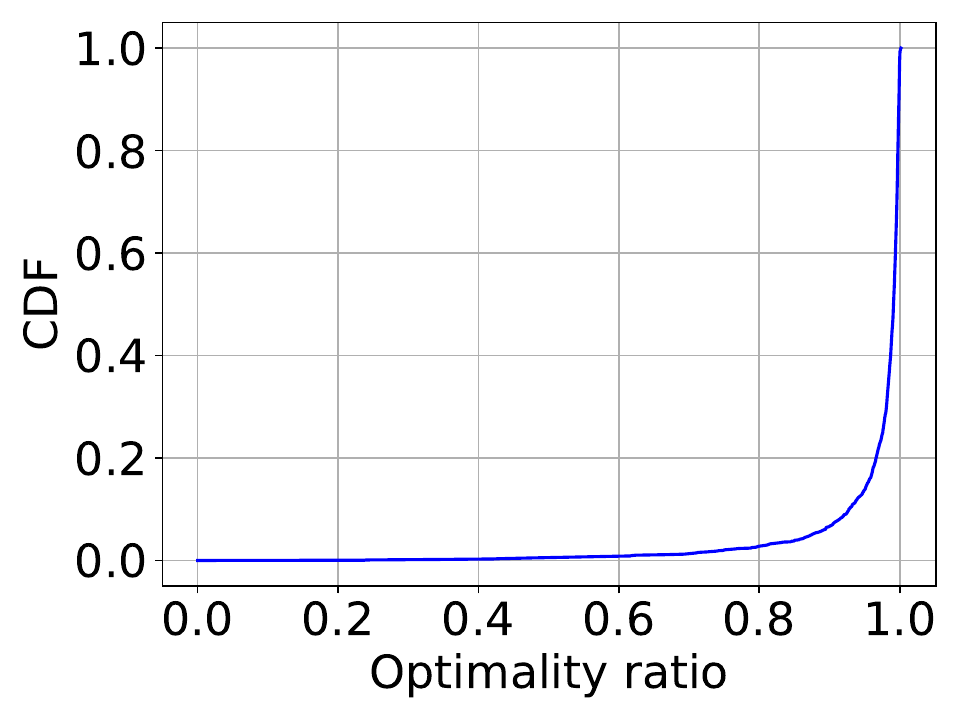}}
    }
    {CDF of the optimality ratios of LP-Priority policies when $N=500$, among $2049$ non-GAP examples generated from Dirichlet($0.05$). \label{fig:asym-subopt}}
    {We regard the average reward of Whittle index policy as $0$ if it is not well-defined.}
\end{figure}

\paragraph{\textbf{LP-Priority policies on random non-GAP examples}}
Violation of GAP invalidates the asymptotic optimality guarantee of an LP-Priority policy, but how suboptimal is an LP-Priority policy when GAP does not hold?
In \Cref{fig:asym-subopt}, we plot some cumulative distribution function (CDF) curves representing the optimality ratios of the LP index policy, the Whittle index policy, and their maximal performance when $N=500$, among $2034$ locally unstable examples generated from the Dirichlet($0.05$) distribution. 
Each policy is simulated for $2\times 10^4$ time steps on every example. 
As we can see from \Cref{fig:asym-subopt:max}, the average rewards under the LP index policy or the Whittle index policy are close to the LP upper bound in most non-GAP examples, which explains the good performance of LP-Priority policies observed in practice; 
on the other hand, there are about $6.7\%$ of the examples where the average rewards of both policies are less than $90\%$ of the LP upper bound. 
This experiment shows that it is not uncommon for LP-Priority policies to be substantially suboptimal when the single-armed MDPs of the RB problems are sparse. In these cases, relaxing the GAP condition could bring a practical benefit. 

In \Cref{fig:non-ugap-perf:dirichlet-582} and \Cref{fig:non-ugap-perf:dirichlet-355}, we pick two non-GAP examples where both the Whittle index and LP index policies have optimality ratios of less than $90\%$, and compare the performance of different policies there. 
In both examples, FTVA, the two versions of set-expansion policy, and the ID policy outperform the LP-Priority policies, with clear and discernible differences.

\begin{figure}
    \FIGURE{
        \subcaptionbox{First Dirichlet($0.05$) example. \label{fig:non-ugap-perf:dirichlet-582}}{\includegraphics[width=7.5cm]{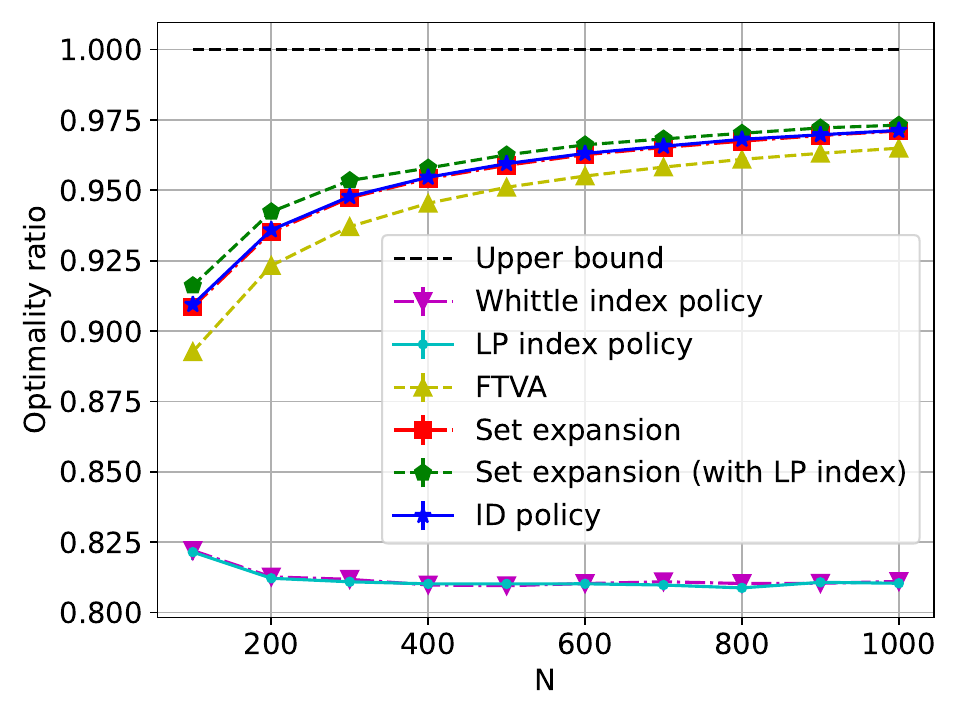}}
        \hfill
        \subcaptionbox{Second Dirichlet($0.05$) example. \label{fig:non-ugap-perf:dirichlet-355}}{\includegraphics[width=7.5cm]{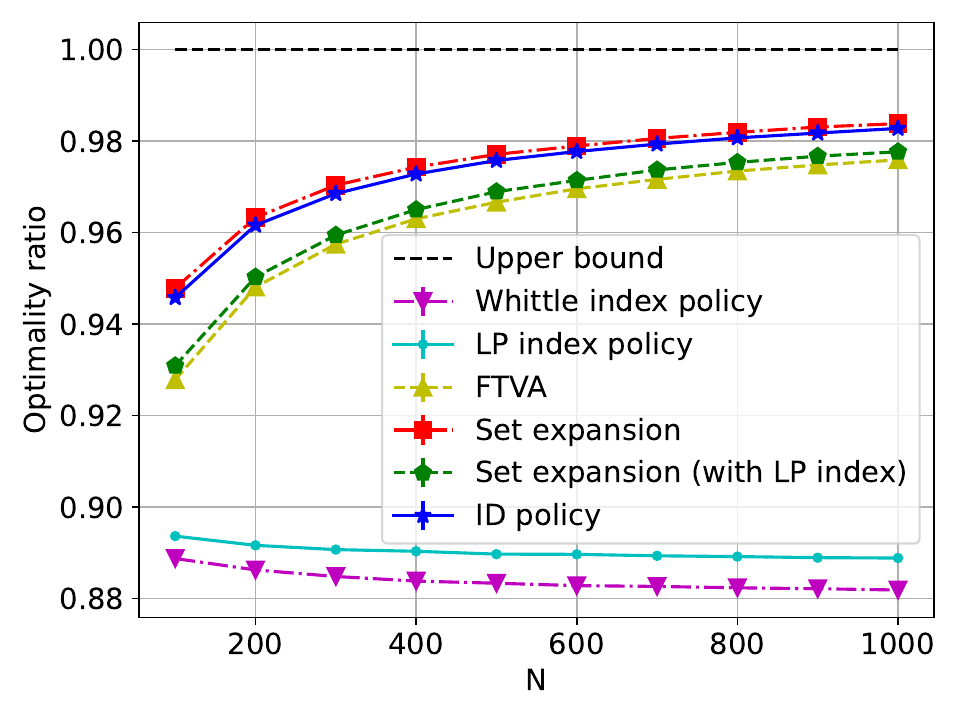}}
    }
    {Performance comparison on two Dirichlet($0.05$) examples where GAP fails to hold. \label{fig:non-ugap-perf-random}}
    {}
\end{figure}

\subsection{Comparing policies on two non-SA examples}
\label{sec:experiments:compare-non-sa}

In this section, we consider the same set of policies as in \Cref{sec:experiments:compare-non-ugap} and compare their performances on two examples that violate SA (Assumption~1 of \cite{HonXieCheWan_23}) but satisfy our aperiodic unichain assumption. 
We give one of the examples at the end of this subsection, and another one in Appendix~\ref{app:sa-counterexample}, where we also discuss ways to construct more counterexamples to SA. On a high level, in each example that we construct, an arm needs to strictly follow a particular policy to reach and remain in the states with high rewards, which is hard to achieve by following some virtual actions that are not generated based on the true state of the arm.

\begin{figure}
    \FIGURE{
        \subcaptionbox{RB problem defined in \Cref{fig:sa-counter}. \label{fig:non-sa:eight-state}}{\includegraphics[width=7.5cm]{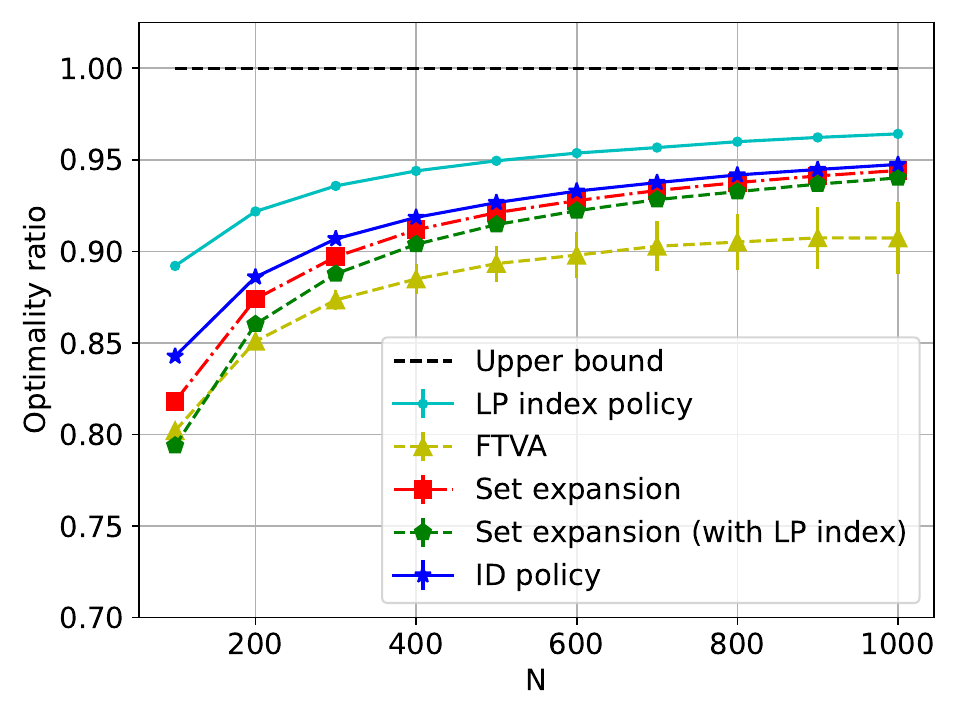}}
        \hfill
        \subcaptionbox{RB problem defined in \Cref{fig:sa-counter-big2}.  \label{fig:non-sa:eleven-state}}{\includegraphics[width=7.5cm]{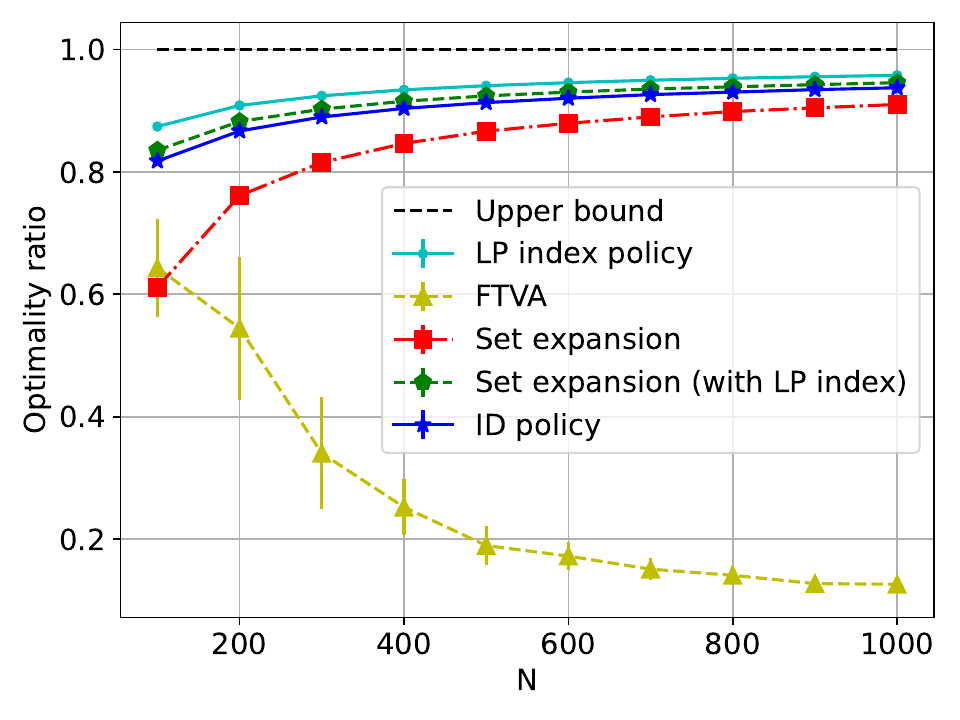}}
    }
    {Performance comparison of the policies on counterexamples to SA defined in Appendix~\ref{app:sa-counterexample}. \label{fig:non-sa}}
    {}
\end{figure}

The simulation results are shown in \Cref{fig:non-sa}. 
Note that in each replication, the initial state of each arm is independently sampled from the uniform distribution over the state space; FTVA is simulated for $1.6\times 10^5$ time steps with five replications, whereas the other policies are simulated for $2\times 10^4$ time steps with five replications. 
Despite the longer simulations, the performances of FTVA still exhibit significant variability with large confidence intervals. 
In both examples, FTVA perform worse than the other policies, especially in the RB problem considered in \Cref{fig:non-sa:eleven-state}, whose single-armed MDP has a larger state space. 
In contrast, the ID policy and the two version of the set-expansion policy demonstrate solid performances, though not quite reaching the performances of the LP index policy. 
The Whittle index policy is not well-defined on these two examples due to the multichain nature of the single-armed MDPs (see \cite{GasGauKhu_23} for details).

\paragraph{Definition of one of the non-SA examples.}

\begin{figure}[t]
    \FIGURE{\includegraphics[height=3cm]{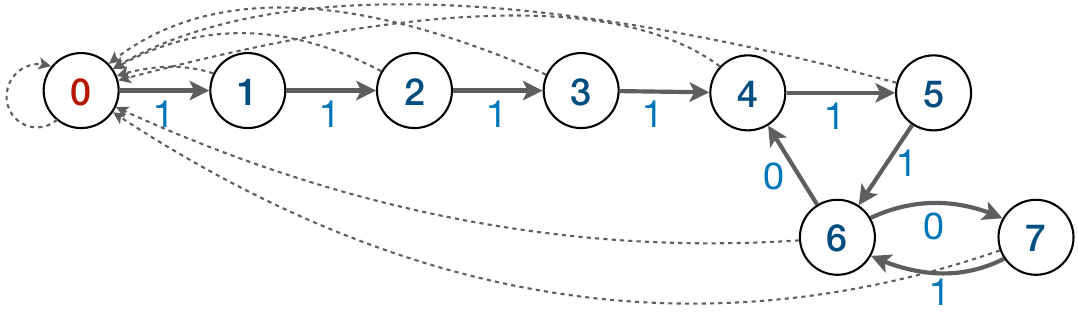}}
    {A counterexample to the Synchronization Assumption in \cite{HonXieCheWan_23}. \label{fig:sa-counter}}
    {Each circle denotes a state, indexed by $0,1,2,\dots, 7$. Each arrow denotes a possible transition. The numbers labeled on the solid-line arrows denote actions. 
    If an arm takes an action that is labeled on one of the outward solid-line arrows at its current state, it picks such an arrow labeled by the action uniformly at random and transitions to a nearby state along the arrow; otherwise, the arm jumps to state $0$. The reward is $1$ if an arm is in states $\{4,5,6,7\}$ and takes the action on an outward solid-line arrow at its current state. Otherwise, the reward is zero. The budget parameter $\alpha$ is set to be $0.6$.}    
\end{figure}

Now we define the example considered in \Cref{fig:non-sa:eight-state}. 
The single-armed MDP of this example is defined using Figure~\ref{fig:sa-counter}. 
This figure consists of a set of circles denoting states and a set of arrows in solid lines and dashed lines denoting possible transitions under different actions. The states are indexed by $0,1,2,\dots, 7$. Each solid arrow is labeled by an action, which is $0$ or $1$. 
In each time step, when the arm takes an action labeled on one of the outward solid-line arrows adjacent to its current state, the arm transitions to a random nearby state through one of such arrows. When the arm takes an action that does not exist on any of the adjacent outward solid-line arrows, it jumps to state $0$. 
For example, if the arm takes action $1$ at state $7$, it goes to state $6$ with probability $1$; if the arm takes action $0$ at state $6$, it goes to state $7$ or $4$ with equal probabilities; if the arm takes action $0$ at state $2$, it jumps to state $0$ with probability $1$. 
For the reward functions, one unit of reward is generated if the arm is in states $\{4,5,6,7\}$ and takes the action on an adjacent outward solid-line arrow; no reward is generated otherwise. 
We let $\alpha = 0.6$, that is, the arm is activated for $0.6$ fraction of the time in the long run. 

One can verify that the optimal single-armed policy $\pibs$ takes each action with probability $0.5$ in the states $\{0,1,2,3\}$, and always takes the actions labeled on the solid line arrows in the states $\{4,5,6,7\}$. 
Thus, $\pibs$ induces an aperiodic unichain with the recurrent class $\{4,5,6,7\}$. The long-run average reward of $\pibs$ is $1$. 

On the other hand, we argue that SA is violated in this example. 
To see this, recall the leader-and-follower system in the SA (see Section 4.1 in \cite{HonXieCheWan_23}), which consists of two arms, the leader arm and the follower arm, whose states are denoted as $\syshat{S}_t$ and $S_t$; the leader arm takes the action $\syshat{A}_t \sim \pibs(\cdot | \syshat{S}_t)$, and the follower arm takes the same action, $A_t = \syshat{A}_t$. 
SA requires that the stopping time $\tau \triangleq \inf\{t\colon S_t = \syshat{S}_t\}$ has a finite expectation for any possible pair of initial states. 
However, in the above example, if we initialize the pair of states as $\syshat{S}_0 = 7$ and $S_0 = 0$, $\syshat{S}_t$ will remain in states $\{4,5,6,7\}$ under $\pibs$, and the action sequences applied by both arms will not contain more than two subsequent $1$'s. 
Consequently, $S_t$ always falls back to the state $0$ before reaching state $3$, so the two arms never reach the same state, implying $\tau = \infty$.

\section{Conclusion and discussions}
\label{sec:discussion}

In this paper, we consider the infinite-horizon, average-reward restless bandit problem. We introduce a new class of policies that are asymptotically optimal with $O(1/\sqrt{N})$ optimality gaps, if the optimal single-armed policy induces an aperiodic unichain. Our paper is the first to show that asymptotic optimality can be achieved without any additional assumptions like GAP and SA. 

Our policy design and analysis highlight the use of multiple, bivariate Lyapunov functions. 
This novel approach holds promise beyond restless bandits, showing potential for a broader class of large stochastic systems consisting of many coupled components. 
In such complex systems, it can be challenging to directly design a policy that steers the whole system towards optimality or to construct a Lyapunov function that certifies such convergence. 

To complement our theory, we simulate our policies on examples where either GAP or SA fails, along with the policies from prior work. 
Our policies consistently demonstrate good performance, whereas the policies from the prior work may perform suboptimally in some examples. 
Additionally, we identify some natural classes of RB instances where GAP is violated with considerable probabilities and discuss a method for constructing more counterexamples of SA.

There are several interesting directions for future work. 
Some directions have been largely addressed by the follow-up papers reviewed in \Cref{sec:additional-related-work}. These directions include multiple actions, heterogeneous arms, and exponential optimality gaps without the global attractor property.
However, many interesting open problems remain. 
For example, an interesting open direction is to improve the dependency of the optimality gap on the problem parameters other than $N$, such as the spectral gap of the single-armed system. 
Another open direction is exploring restless bandits (or weakly-coupled MDPs) with an infinite state space, which could naturally arise in various applications. 
The third open direction is learning to optimize restless bandits with unknown parameters. 
Finally, our bivariate Lyapunov function technique (\Cref{sec:formalizing}) seems to be a flexible tool that allows us to build a complex Lyapunov function from a collection of simple Lyapunov functions. It would be intriguing to investigate its potential applications beyond the restless bandit setting.

\ACKNOWLEDGMENT{Yige Hong and Weina Wang are supported in part by U.S.\ National Science Foundation (NSF) grants ECCS-2145713, CCF-2403194, CCF-2428569, and ECCS-2432545.
Yudong Chen is supported in part by NSF grant CCF-2233152. Qiaomin Xie is supported in part by NSF grants CNS-1955997, ECCS-2339794, and ECCS-2432546.}

\bibliographystyle{informs2014}
\bibliography{refs-yige-v240809}

\newpage
\clearpage

\begin{APPENDICES}

\SingleSpacedXI

\section{Additional counterexamples for the Synchronization Assumption}\label{app:sa-counterexample}

Recall that in \Cref{sec:experiments:compare-non-sa}, we have compared the performances of our policies and the policies from prior work on two examples where the Synchronization Assumption (SA), required by the FTVA policy in \citep{HonXieCheWan_23}, does not hold. One of the examples has been defined in \Cref{fig:sa-counter}. 

In this appendix, we discuss how to generalize the graphical way of defining the example in \Cref{fig:sa-counter} to construct non-SA examples of arbitrary sizes. In particular, we use this method to define the example simulated in \Cref{fig:non-sa:eleven-state}. 

To specify the single-armed MDP for a non-SA example, we can first pick a set of transient states (like the states $\{0,1,2,3\}$ in \Cref{fig:sa-counter}), and a set of recurrent states (like the states $\{4,5,6,7\}$ in \Cref{fig:sa-counter}). Then we set a \emph{required action} for each state, such that the arm goes to a next state if it follows the required action, and jumps back to a fixed transient state (we call it state $0$) otherwise. The reward is positive if the arm follows the required action on the recurrent states, and is zero otherwise. The budget parameter $\alpha$ is chosen to be the long-run average fraction of activations if the arm always follows the required actions. 
To make the SA fail, we can specify the transition structure and the required actions such that from state $0$, the arm can reach a recurrent state only after strictly following \emph{a particular sequence of actions}, which should be different from all possible action sequences taken by the arms in the recurrent states. 
In this way, the leader arm $\syshat{S}_t$ keeps circulating among the recurrent states, so the action sequence of the leader arm cannot bring the follower arm from state $0$ to a recurrent state.

Using the above method, we construct another example, whose single-armed MDP is illustrated in \Cref{fig:sa-counter-big2}, with budget $1/2$. One can verify that if the leader arm and the follower arm are initialized as $\syshat{S}_0 = 5$ and $S_0 = 0$, the two arms never reach the same state. 
On the other hand, note that the optimal single-armed policy $\pibs$ defined by \eqref{eq:single-arm-opt-def} induces an aperiodic unichain on this example, implying the compliance of \Cref{assump:aperiodic-unichain}.

\begin{figure}
    \FIGURE{\includegraphics[height=5.5cm]{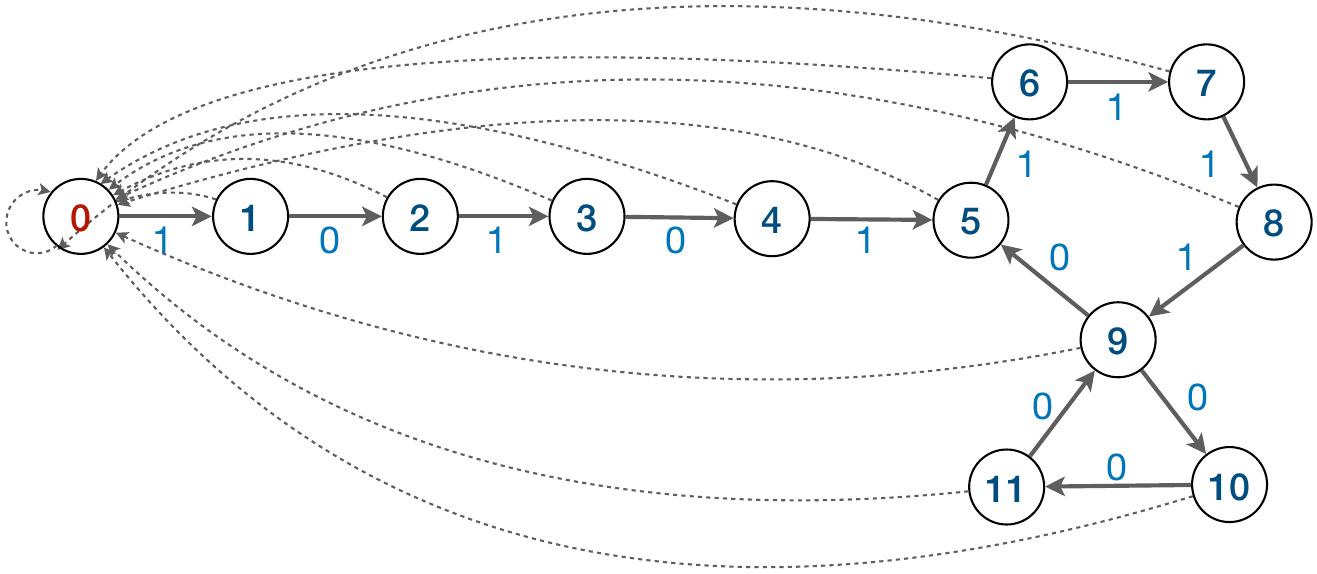}}
    {Another counterexample to the Synchronization Assumption in \citep{HonXieCheWan_23}. \label{fig:sa-counter-big2}}
    {This figure can be interpreted in the way as \Cref{fig:sa-counter}. The reward is $1$ if the arms takes the required action in states $\{5,6,7,8,9,10,11\}$. The budget parameter $\alpha$ is set to be $1/2$.}    
\end{figure}

\section{Discussion of Assumption~\ref{assump:aperiodic-unichain}}
\label{app:assump-discuss}
\subsection{Unichain conditions in prior work}
\label{app:assump-discuss-unichain}
In this appendix, we discuss our version of the unichain condition stated in Assumption~\ref{assump:aperiodic-unichain}, which assumes that the optimal single-armed policy $\pibs$ induces a unichain. 
We compare it with other unichain-like assumptions in the literature. 

The all-policy unichain condition commonly used in the average-reward MDP literature \citep[][Section~8.3]{Put_05} assumes that every stationary policy induces a unichain. Our single-policy unichain condition in \Cref{assump:aperiodic-unichain} is weaker, because we only require a particular policy $\pibs$ to induce a unichain.

Another commonly used condition in the average-reward MDP literature is the weakly-communicating condition, which assumes that the state space can be partitioned into two sets: a closed set of states where every pair of states in the set can be reached from each other under some policy, and a possibly empty set of states that are transient under every policy. 
The weakly-communicating condition, a weaker alternative to the all-policy unichain condition, ensures that an MDP has an initial-state-independent optimal average reward. 

Our single-policy unichain condition in \Cref{assump:aperiodic-unichain} and the weakly-communicating condition are not directly comparable. In particular,
\begin{itemize}
    \item Our unichain condition does not imply the weakly-communicating condition because the transient states under $\pibs$ may not be transient under every policy. 
    \item The weakly-communicating condition does not imply our unichain condition either. A counterexample is given in Example 3.1 of \citepapp{ShiTsi_05_CMDP_example}, as paraphrased below. 
    Consider the following two-state MDP with the state space $\{0,1\}$. The state of the MDP transitions to $0$ (resp., $1$) in the next time step with probability $1$ after taking action $0$ (resp., $1$), regardless of the current state; the reward function is $r(1,1) = r(0,0) = 1$ and $r(1,0) = r(0,1) = 0$. 
    This MDP is clearly communicating. 
    However, if we consider the RB problem defined by this MDP with the budget parameter $\alpha = 1/2$, then the optimal solution to the LP relaxation \eqref{eq:lp-single} is $y^*(1,1) = y^*(0,0) = 1/2$ and $y^*(1,0)=y^*(0,1) = 0$. The optimal single-armed policy is thus given by $\pibs(1|1) = \pibs(0|0) = 1$ and $\pibs(1|0)=\pibs(0|1) = 0$, which induces a Markov chain with no transitions between the two states, violating our unichain condition. 
\end{itemize}

Now we review the unichain-like conditions considered in the RB literature. 
To the best of our knowledge, all prior work on average-reward RBs assumes the all-policy unichain assumption: 
In \citep{WebWei_90,Ver_16_verloop,GasGauYan_23_whittles,GasGauYan_23_exponential}, \emph{the $N$-armed restless bandit system} is assumed to be unichain under every policy; in \citep{HonXieCheWan_23}, the single-armed MDP is assumed to be unichain under every policy. 
All these assumptions are stronger than our unichain condition in \Cref{assump:aperiodic-unichain}. In particular, assuming the $N$-armed restless bandit system to be unichain under every policy implies that the single-armed MDP is unichain under every policy, because if a policy for the single-armed MDP induces more than one recurrent classes, one can construct a policy for the $N$-armed system that also induces more than one recurrent classes. 
Nevertheless, all these unichain-like conditions in prior work are mostly for simplifying the presentation and can often be relaxed: 
For example, \citep{GasGauYan_23_exponential} mentions that their analysis still goes through if they assume the $N$-armed system to be weakly communicating; \citep{HonXieCheWan_23} discusses in their appendices that the unichain condition can be dropped as long as the Synchronization Assumption holds, albeit at the cost of a slightly more complicated formulation of the single-armed problem. 

As mentioned in \Cref{sec:additional-related-work}, two papers \citep{Yan_24_multichain,GolAvr_24_wcmdp_multichain} appearing after the arXiv version of this paper further relax the unichain condition assumed in our paper. 
Specifically, \citep{Yan_24_multichain} considers the discrete-time average-reward RB problem and shows that a policy named ``align-and-steer'' achieves $o(1)$ optimality gap. They assume a certain control problem has certain ``reachability''; this control problem captures the optimal control of the single-armed MDP's state distribution under the expected budget constraint. 
\citep{GolAvr_24_wcmdp_multichain} considers the weakly-coupled MDPs with homogeneous arms. 
Their policy achieves asymptotic optimality when there \emph{exists} a single-armed policy that induces an aperiodic unichain, such that the support of the corresponding stationary distribution contains the support of the optimal stationary distribution. 
The assumptions in both papers are weaker than the aperiodic-unichain assumption in our work. Their assumptions also hold if the single-armed MDP is aperiodic in a mild sense and weakly-communicating.

\subsection{Necessity of aperiodicity}\label{app:aperiodicity-counterexample}
In this subsection, we provide an example showing that without aperiodicity, the gap between the optimal value of the $N$-armed RB problem, $\ropt$, and the optimal value of its single-armed relaxation, $\rrel$, can be non-diminishing as $N\to\infty$. 

Consider a single-armed problem with two states, $A$ and $B$.
At each time step, the arm transitions to the other state with probability $1$, regardless of the action applied.
The reward function is given by $r(A,0)=r(B,1)=1$ and $r(A,1)=r(B,0)=0$. 
Let $\alpha$ be $\frac{1}{2}$ in the relaxed budget constraint, i.e., the arm is pulled half of the time in the long run. 
One can verify that an optimal policy $\pibs$ of this single-armed problem is given by $\pibs(0|A) = \pibs(1|B) = 1$ and $\pibs(1|A) = \pibs(0|B) = 0$, and achieves the optimal value $\rrel = 1$.  
Note that any policy in this single-armed problem induces a \emph{periodic} unichain. 

Now we consider the RB system consisting of $N$ copies of the single-armed MDP defined above, with budget constraint $\alpha N = N/2$. 
Suppose all arms of the RB system are initialized in state $A$. Then at any time $t$, either all arms are in state $A$ or all arms are in state $B$. 
In this case, all policies have the same outcome: when all arms are in state $A$, $N/2$ arms take action $0$ and generate $N/2$ unit of reward; when all arms are in state $B$, $N/2$ arms take action $1$ and generate $N/2$ unit of reward. 
Therefore, under any policy, the long-run average reward per time step and per arm is $1/2$, which has a non-diminishing gap with the upper bound $\rrel = 1$.

\section{Proof of LP relaxation upper bound}\label{app:upper-bound}
In this appendix, we prove a lemma to show that the linear program \eqref{eq:lp-single} is a relaxation of the restless bandit problem \eqref{eq:N-arm-formulation}. 
Although the lemma has been proved and is used in all prior work on average-reward restless bandit \citep[see, e.g.][Lemma 4.3]{Ver_16_verloop}, 
we prove it here for completeness. 

For ease of reference, we first restate \eqref{eq:N-arm-formulation} and \eqref{eq:lp-single}. 

\begin{align}
    \tag{\ref{eq:N-arm-formulation}} 
    \underset{\text{policy } \pi}{\text{maximize}} & \quad \rliminf \triangleq \liminf_{T\to\infty} \frac{1}{T} \sum_{t=0}^{T-1} \frac{1}{N} \sumN \E{r(S_t^\pi(i), A_t^\pi(i))} \\
    \text{subject to}  
    &\quad  \sumN A_t^\pi(i) = \alpha N,\quad \forall t\ge 0, \tag{\ref{eq:hard-budget-constraint}} \\
    \tag{\ref{eq:lp-single}}  \underset{\{y(s, a)\}_{s\in\sspa,a\in\aspa}}{\text{maximize}} \mspace{12mu}&\sum_{s\in\sspa,a\in\aspa} r(s, a) y(s, a) \\
    \text{subject to}\mspace{25mu}
    &\mspace{15mu}\sum_{s\in\sspa} y(s, 1) = \alpha, \tag{\ref{eq:expect-budget-constraint}}\\
    & \sum_{s'\in\sspa, a\in\aspa} y(s', a) P(s', a, s) = \sum_{a\in\aspa} y(s,a), \quad \forall s\in\sspa, \tag{\ref{eq:flow-balance-equation}}\\
    &\mspace{3mu}\sum_{s\in\sspa, a\in\aspa} y(s,a) = 1,  
    \quad
     y(s,a) \geq 0, \;\; \forall s\in\sspa, a\in\aspa.  \tag{\ref{eq:non-negative-constraint}}
\end{align}

Next, we show that the optimal value of \eqref{eq:lp-single} is an upper bound for that of \eqref{eq:N-arm-formulation}. 

\begin{lemma}[LP relaxation]\label{lem:lp-relaxation}
    Let $\rrel$ be the optimal value of the linear program \eqref{eq:lp-single}, and let $\ropt$ be the optimal reward of the $N$-armed restless bandit problem \eqref{eq:N-arm-formulation}. Then we have
    \begin{equation}
        \rrel \geq \ropt. 
    \end{equation}
\end{lemma}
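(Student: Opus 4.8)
The plan is to show that any policy $\pi$ for the $N$-armed problem \eqref{eq:N-arm-formulation} gives rise to a feasible point of \eqref{eq:lp-single} whose objective value equals (or lower bounds, in the $\liminf$ sense) the reward of $\pi$, so that $\rrel$ dominates $\ropt$. As noted in the excerpt, it suffices to check this for a policy attaining $\ropt$, and by \citep[Theorem 9.1.8]{Put_05} there is a stationary Markovian policy $\pi^*$ with $\rsysn[\pi^*] = \ropt$; for such a policy $\veS_t$ is a finite-state Markov chain by \citep[Proposition 8.1.1]{Put_05}, so the long-run averages below are well-defined.

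\textbf{Main construction.} Given the optimal stationary policy $\pi^*$, define the occupation measure
\[
    y^{\pi^*}(s,a) = \lim_{T\to\infty} \frac{1}{T} \sum_{t=0}^{T-1} \EBig{\frac{1}{N} \sumN \indibrac{S_t^{\pi^*}(i) = s, A_t^{\pi^*}(i) = a}}, \quad s\in\sspa,\ a\in\aspa,
\]
which exists as a Cesàro limit for finite-state Markov chains (if one prefers to avoid existence issues entirely, pass to a subsequence $T_k$ along which the Cesàro averages converge, which is enough for the $\liminf$ bound). First I would verify the three groups of LP constraints: (i) the budget constraint \eqref{eq:hard-budget-constraint} gives $\frac{1}{N}\sumN A_t^{\pi^*}(i) = \alpha$ deterministically for every $t$, so averaging yields $\sum_{s} y^{\pi^*}(s,1) = \alpha$, which is \eqref{eq:expect-budget-constraint}; (ii) the flow-balance equations \eqref{eq:flow-balance-equation} follow from the Markov property of each arm — writing $q_t(s) = \frac1N\sumN\EBig{\indibrac{S_t^{\pi^*}(i)=s}}$, the one-step dynamics give $q_{t+1}(s) = \sum_{s',a} P(s',a,s)\cdot\frac1N\sumN\EBig{\indibrac{S_t^{\pi^*}(i)=s',A_t^{\pi^*}(i)=a}}$, and averaging over $t=0,\dots,T-1$ and letting $T\to\infty$ makes the difference $\frac1T(q_T(s)-q_0(s))$ vanish, leaving $\sum_a y^{\pi^*}(s,a) = \sum_{s',a} y^{\pi^*}(s',a)P(s',a,s)$; (iii) normalization and nonnegativity in \eqref{eq:non-negative-constraint} are immediate since the $y^{\pi^*}(s,a)$ are Cesàro limits of averages of probabilities summing to $1$. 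Thus $y^{\pi^*}$ is feasible for \eqref{eq:lp-single}. Finally, $\sumsa r(s,a) y^{\pi^*}(s,a) = \lim_T \frac1T\sumt \frac1N\sumN \EBig{r(S_t^{\pi^*}(i),A_t^{\pi^*}(i))} = \rsysn[\pi^*] = \ropt$, so $\rrel \geq \sumsa r(s,a) y^{\pi^*}(s,a) = \ropt$.

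\textbf{Expected obstacle.} The one genuinely delicate point is the existence of the limits defining $y^{\pi^*}$ and the interchange of limit with the finite sums over $i\in[N]$ and $(s,a)\in\sspa\times\aspa$. The interchanges are harmless because all sums are finite; the existence is guaranteed for stationary Markovian $\pi^*$ by the Cesàro-limit theory for finite Markov chains, but to keep the argument fully self-contained and to cover the step where we only know $\ropt = \sup_\pi \rliminf$, the cleanest route is to work along a subsequence $(T_k)$ on which all the (finitely many) Cesàro averages $\frac1{T_k}\sum_{t<T_k}(\cdot)$ converge simultaneously — possible by Bolzano–Weierstrass since they are bounded — and then note that the flow-balance telescoping error $\frac{1}{T_k}(q_{T_k}(s) - q_0(s))$ is $O(1/T_k)\to 0$. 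This makes the construction valid for \emph{any} policy under which $\rliminf$ is approached, giving $\rrel \ge \rliminf$ for all $\pi$ and hence $\rrel \ge \ropt$. I would present the stationary-policy version as the main line and remark that the subsequence argument handles the general case.
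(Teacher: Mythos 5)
Your proof is correct and takes essentially the same route as the paper's: both reduce to an optimal stationary Markovian policy via \citep[Theorem~9.1.8]{Put_05}, define the Cesàro-limit occupation measure $y^{\pi}(s,a)$, verify the budget, flow-balance, and normalization constraints, and identify the LP objective at $y^{\pi}$ with the long-run average reward. Your additional remark on subsequences is careful but not needed beyond what the paper already does.
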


\begin{proof}{\textit{Proof of \Cref{lem:lp-relaxation}}.}
    By standard results for MDP with finite state and action spaces, there always exists a stationary Markovian policy whose long-run average reward achieves the optimal reward $\ropt$ \citep[][Theorem~9.1.8]{Put_05}. Therefore, to show that $\ropt \leq \rrel$, it suffices to show that for each stationary Markovian policy $\pi$ and initial state vector $\veS_0$, we have $\rsysn \leq \rrel$. 
    
    Fixing an arbitrary stationary Markovian policy $\pi$ and initial state vector $\veS_0$, we define
    \[
        y^{\pi}(s,a) \triangleq \lim_{T\to\infty} \frac{1}{T} \sum_{t=0}^{T-1} \Ebigg{\frac{1}{N} \sumN \indibrac{S_t^\pi(i) = s, A_t^\pi(i) = a}} \quad \forall s\in\sspa, a\in\aspa.
    \]

    We first show that $\rsysn = \sumsa r(s,a)y^\pi(s,a)$.
    \begin{align}
        \sumsa r(s,a)y^\pi(s,a)
        \nonumber
        &= \sumsa r(s,a)\lim_{T\to\infty} \frac{1}{T} \sum_{t=0}^{T-1} \Ebigg{\frac{1}{N} \sumN \indibrac{S_t^\pi(i) = s, A_t^\pi(i) = a}} \\
        \nonumber 
        &= \lim_{T\to\infty} \frac{1}{NT}\sum_{t=0}^{T-1} \sumN \Ebigg{\sumsa r(s,a) \indibrac{S_t^\pi(i) = s, A_t^\pi(i) = a}} \\
        \nonumber
        &= \lim_{T\to\infty} \frac{1}{NT}\sum_{t=0}^{T-1} \sumN \E{r(S_t^\pi(i), A_t^\pi(i)} \\
        \nonumber
        &= \rsysn.
    \end{align}
    
    Then we show that $(y^\pi(s,a))_{s\in\sspa, a\in\aspa}$ satisfies the constraints of \eqref{eq:lp-single}.  
    We first consider the constraint \eqref{eq:expect-budget-constraint}: 
    \begin{align*}
        \sums y^\pi(s,1) 
        &= \sums \lim_{T\to\infty} \frac{1}{T} \sum_{t=0}^{T-1} \Ebigg{\frac{1}{N} \sumN \indibrac{S_t^\pi(i) = s, A_t^\pi(i) = 1}}  \\
        &= \lim_{T\to\infty} \frac{1}{NT}\sum_{t=0}^{T-1}\Ebigg{ \sumN \sums \indibrac{S_t^\pi(i) = s, A_t^\pi(i) = 1}} \\
        &= \lim_{T\to\infty} \frac{1}{NT}\sum_{t=0}^{T-1} \alpha N \\
        &= \alpha.
    \end{align*}
    Next, we look at the constraint \eqref{eq:flow-balance-equation}:
    \begin{align*}
        \sum_{s'\in\sspa, a\in\aspa} y^\pi(s',a) P(s',a,s)
        &= \lim_{T\to\infty} \frac{1}{NT}\sum_{t=0}^{T-1} \sumN \sumsa P(s',a,s) \Prob{S_t^\pi(i) = s', A_t^\pi(i) = a} \\
        &=  \lim_{T\to\infty} \frac{1}{NT}\sum_{t=0}^{T-1} \sumN \Prob{S_{t+1}^\pi(i) = s} \\
        &= \lim_{T\to\infty} \frac{1}{NT}\sum_{t=1}^{T} \sumN \Prob{S_t^\pi(i) = s} \\
        &= \sum_{a\in\aspa} y^\pi(s,a).
    \end{align*} 
    Finally, we consider the constraint \eqref{eq:non-negative-constraint}: 
    \begin{align*}
        \sum_{s'\in\sspa,a'\in\aspa} y^\pi(s',a')
        &= \lim_{T\to\infty} \frac{1}{NT}\sum_{t=0}^{T-1} \sumN \Ebigg{\sum_{s'\in\sspa,a'\in\aspa} \indibrac{S_t^\pi(i) = s', A_t^\pi(i) = a'}} = 1,
    \end{align*}
    and it is obvious that $y^\pi(s,a) \geq 0$ for any $s\in\sspa$ and $a\in\aspa$. 

    Combining the above argument, $(y^\pi(s,a))_{s\in\sspa, a\in\aspa}$ is a feasibile solution to \Cref{eq:lp-single}, so
    $\rsysn = \sumsa r(s,a)y^\pi(s,a) \leq \rrel$. Letting $\pi$ be the optimal stationary Markovian policy finishes the proof.
     \Halmos 
\end{proof}

\section{Proof of Theorem~\ref{thm:focus-set-policy} in general case}
\label{app:proof-meta-thm-general}

Recall that in \Cref{sec:meta-theorem}, we have proved \Cref{thm:focus-set-policy} assuming that the focus-set policy induces a Markov chain that converges to a unique stationary distribution. Here we provide the general proof without this simplifying assumption.

\begin{proof}{\textit{Proof of \Cref{thm:focus-set-policy} in the general case}.}
    Most steps in the general proof go through almost verbatim if we replace any steady-state expectations of the form $\E{f(S_\infty, A_\infty, X_\infty, D_\infty)}$ with the long-run averages of the form:
    \[
        \lim_{T\to\infty} \frac{1}{T} \sum_{t=0}^{T-1} \E{f(S_t, A_t, X_t, D_t)}.
    \]
    Note that the long-run averages of the above form always exist because $(S_t, A_t, X_t, D_t)$ is a finite-state Markov chain and Proposition 8.1.1 in \citep{Put_05} can be applied with a trivial generalization of its proof, although the values of the long-run averages could depend on the initial states. With the steady-state expectations replaced by the long-run averages, we get the following analogs of \eqref{eq:opt-gap-bound-1} and \eqref{eq:opt-gap-bdd-by-v}:
    \begin{align}
        \nonumber
        &\ropt - \rsysn \\
        \label{eq:opt-gap-bound-1-general}
        &\qquad\leq  \rmax \lim_{T\to\infty} \frac{1}{T} \sum_{t=0}^{T-1}\Big(\Ebig{\normbig{\statdist - X_t([N])}_1} + 2\rmax  \Ebig{1 - m(D_t)}\Big) +  \frac{2\rmax\constAction}{\sqrt{N}} \\
        \label{eq:opt-gap-bdd-by-v-general}
        &\qquad\leq \rmax \Big(\frac{1}{\constVnorm} + \frac{2}{\liph}\Big) \lim_{T\to\infty} \frac{1}{T} \sum_{t=0}^{T-1} \Ebig{V(X_t, D_{t})} + \frac{2\rmax\constAction}{\sqrt{N}}. 
    \end{align}

    The only place that needs a different treatment in the general case is in the last few steps, after deriving the drift condition for each finite $t$:
    \begin{align}\tag{\ref{eq:thm4-proof:drift-condition} restated}
        \Ebig{V(X_{t+1}, D_{t+1}) \givenbig X_t, D_t}  \leq \rhoFinal V(X_t, D_t) + \frac{K_1}{\sqrt{N}}.
    \end{align}
    We take expectation on both sides of \eqref{eq:thm4-proof:drift-condition} to get the recursive inequality on $\E{V(X_{t}, D_{t})}$: 
    \begin{equation*}
        \E{V(X_{t+1}, D_{t+1})}  \leq \rhoFinal \E{V(X_t, D_t)} + \frac{K_1}{\sqrt{N}}.
    \end{equation*}
    We expand the recursion to get
    \begin{align*}
        \E{V(X_{t}, D_{t})} 
        &\leq \rhoFinal^t \, \E{V(X_0, D_0)} + \frac{K_1}{(1-\rhoFinal)\sqrt{N}} \\
        \frac{1}{T} \sumt \E{V(X_{t}, D_{t})}
        &\leq \frac{1}{(1-\rhoFinal)T} \E{V(X_0, D_0)} + \frac{K_1}{(1-\rhoFinal)\sqrt{N}}.
    \end{align*}
    Therefore, the long-run average of $\E{V(X_{t}, D_{t})}$ can be bounded as 
    \begin{equation}
        \label{eq:thm4full:proof_meta_1}
        \lim_{T\to\infty} \frac{1}{T} \sumt \E{V(X_t, D_t)} \leq \frac{K_1}{(1-\rhoFinal)\sqrt{N}}.
    \end{equation}
    Combining \eqref{eq:thm4full:proof_meta_1} with \eqref{eq:opt-gap-bdd-by-v-general}, we finish the proof. 
\end{proof}

\begin{remark}   
    The proof of \Cref{thm:focus-set-policy} also yields finite-time performance bounds. Under the same conditions, we can bound the difference between the long-run optimal reward $\ropt$ and two finite-time performance metrics: the expected reward at time $T$, and the average expected reward over the first $T$ time steps. These bounds are given by:
    \begin{align}
        \ropt - \E{R(S_T, A_T)} &\leq \rmax \Big(\frac{1}{\constVnorm} + \frac{2}{\liph}\Big) \E{V(X_0, D_0)}\rhoFinal^T  + \frac{\constfs}{\sqrt{N}}  \\
        \ropt - \frac{1}{T}\sum_{t=0}^T \E{R(S_t, A_t)} &\leq \rmax \Big(\frac{1}{\constVnorm} + \frac{2}{\liph}\Big) \frac{\E{V(X_0, D_0)}}{(1-\rhoFinal)T}  + \frac{\constfs}{\sqrt{N}},
    \end{align}
    where $\constfs = \rmax \left(\Big(\frac{1}{\constVnorm} + \frac{2}{\liph}\Big) \frac{K_1}{1-\rhoFinal} + 2\constAction\right)$. 
    As $T$ goes to infinity, these two upper bounds converge to the upper bound in \Cref{thm:focus-set-policy} exponentially fast in $T$ or at the rate $1/T$, respectively. 
\end{remark}

\section{Supplementary lemmas and proofs}
In this appendix, we provide lemmas and proofs that serve as preliminaries for analyzing our policies. 
In Appendix~\ref{app:proof-norm-lemmas}, we show that the weight matrix $\wmat$ in \Cref{def:w-and-w-norm} is well-defined and prove \Cref{lem:pibar-one-step-contraction-W},  which claims that the state distribution of the Markov chain $P_\pibs$ converges to the steady-state distribution $\statdist$ geometrically fast under the $\wmat$-weighted $L_2$ norm. 
Then in Appendix~\ref{app:proof-feature-lyapunov-lemmas}, we show that two classes of functions, $\{\hw(x, D))\}_{D\subseteq [N]}$ and $\{\hid(\cdot, m)\}_{m\in[0,1]_N}$, are subset Lyapunov functions. 
Finally, in Appendix~\ref{app:proof-ell-1}, we prove two lemmas about the $L_1$ norm that are used when analyzing the set-expansion policy.

\subsection{Lemmas and proofs about matrix $\wmat$ and $\wmat$-weighted $L_2$ norm}\label{app:proof-norm-lemmas}

For ease of reference, we first restate \Cref{def:w-and-w-norm} below.

\wdef*

The next lemma shows that the matrix $\wmat$ is well-defined and positive definite, with eigenvalues in the range $[1, \lamw]$. 

\begin{lemma}\label{lem:W-well-defined}
    The matrix $\wmat$ given in \Cref{def:w-and-w-norm} is well-defined. Moreover, $\wmat$ is positive definite with eigenvalues lower bounded by $1$. 
\end{lemma}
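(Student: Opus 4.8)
The statement to prove is \Cref{lem:W-well-defined}: the matrix $\wmat = \sum_{k=0}^\infty (P_\pibs - \Xi)^k (P_\pibs^\top - \Xi^\top)^k$ is well-defined, positive definite, with eigenvalues lower bounded by $1$.

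\textbf{Plan of proof.}

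The plan is to first establish that $Q \triangleq P_\pibs - \Xi$ is a \emph{transient} matrix in the sense that its spectral radius is strictly less than $1$; this is what makes the infinite series converge. The key algebraic fact is that $\Xi$ is a rank-one projection-like object: since every row of $\Xi$ equals $\statdist$ and $\statdist P_\pibs = \statdist$ (as $\statdist$ is the stationary distribution), one checks directly that $\Xi P_\pibs = \Xi$, $P_\pibs \Xi = \Xi$ (because $P_\pibs \vone = \vone$ and each column-sum structure of $\Xi$ makes $P_\pibs\Xi$ have rows all equal to $\statdist$), and $\Xi^2 = \Xi$. From these one gets the crucial identity $Q^k = (P_\pibs - \Xi)^k = P_\pibs^k - \Xi$ for all $k \ge 1$, proved by induction: $Q^{k+1} = (P_\pibs^k - \Xi)(P_\pibs - \Xi) = P_\pibs^{k+1} - P_\pibs^k \Xi - \Xi P_\pibs + \Xi^2 = P_\pibs^{k+1} - \Xi - \Xi + \Xi = P_\pibs^{k+1}-\Xi$. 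Then, because $P_\pibs$ is an aperiodic unichain (\Cref{assump:aperiodic-unichain}), $P_\pibs^k \to \Xi$ entrywise geometrically fast; hence $Q^k = P_\pibs^k - \Xi \to 0$ geometrically, so $\sum_k Q^k (Q^\top)^k = \sum_k Q^k (Q^k)^\top$ converges absolutely (each term is entrywise bounded by $C\lambda^{2k}$ for some $\lambda<1$), establishing well-definedness.

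Next I would establish positive definiteness and the eigenvalue lower bound simultaneously. For any row vector $u \in \R^{|\sspa|}$, $u\wmat u^\top = \sum_{k=0}^\infty u Q^k (Q^k)^\top u^\top = \sum_{k=0}^\infty \norm{u Q^k}_2^2 \ge 0$, with the $k=0$ term being $\norm{u}_2^2$ (since $Q^0 = I$). Hence $u \wmat u^\top \ge \norm{u}_2^2$ for all $u$, which shows $\wmat$ is positive definite (in particular nonzero for $u\neq 0$) and, since $\wmat$ is symmetric, that every eigenvalue of $\wmat$ is at least $1$ (take $u$ an eigenvector). This also yields symmetry of $\wmat$ as an immediate consequence of its defining series: each summand $Q^k (Q^k)^\top$ is symmetric. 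The largest eigenvalue $\lamw$ is then automatically well-defined and finite, being the spectral radius of a finite positive definite matrix.

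\textbf{Main obstacle.} The only genuinely non-routine point is the identity $Q^k = P_\pibs^k - \Xi$ together with the geometric convergence $\norm{P_\pibs^k - \Xi}\to 0$; the former requires carefully verifying the four product identities among $P_\pibs$ and $\Xi$ (using $\statdist P_\pibs = \statdist$, $P_\pibs \vone = \vone$, and the row structure of $\Xi$), and the latter requires invoking the standard convergence theorem for aperiodic unichains — noting that the presence of transient states is harmless since the chain still has a single recurrent class and converges to the unique stationary distribution $\statdist$ (supported on that class) at a geometric rate governed by the subdominant eigenvalue. Everything else — absolute convergence of the series, symmetry, positive definiteness, the eigenvalue bound — follows mechanically once these two facts are in hand.
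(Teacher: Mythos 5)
Your proof is correct and follows essentially the same route as the paper's: the key identity $(P_\pibs - \Xi)^k = P_\pibs^k - \Xi$ (which you helpfully verify by induction, whereas the paper asserts it), convergence of $P_\pibs^k$ to $\Xi$ from the aperiodic unichain assumption, and positive definiteness via each summand being PSD with the $k=0$ term equal to $I$. The only cosmetic difference is that the paper establishes summability by picking $k_0$ with $\|(P_\pibs-\Xi)^{k_0}\|_2 < 1$ and grouping terms in blocks, while you invoke geometric decay of $P_\pibs^k - \Xi$ directly; both are valid.
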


\begin{proof}{\textit{Proof of \Cref{lem:W-well-defined}}.}
    Consider the sum of the spectral norm of all terms in the definition of $W$:
    \[
        \sum_{k=0}^\infty \norm{(P_\pibs - \Xi)^k (P_\pibs^{\top} - \Xi^{\top})^k}_2.
    \] 
    Note that $(P_\pibs - \Xi)^k = P_\pibs^k - \Xi$. Because $\pibs$ induces an aperiodic unichain, $P_\pibs^k \to \Xi$ as $k\to\infty$. 
    Consequently, there exist $k_0 \in \mathbb{N}^+$ and $\rhobar < 1$ such that $\norm{(P_\pibs-\Xi)^{k_0}}_2 = \rhobar$. Then we have
    \begin{align*}
        &\sum_{k=0}^\infty \norm{(P_\pibs - \Xi)^k (P_\pibs^{\top} - \Xi^{\top})^k}_2 
        = \sum_{j=0}^\infty \sum_{k=j k_0}^{(j+1)k_0-1} \norm{(P_\pibs - \Xi)^k (P_\pibs^{\top} - \Xi^{\top})^k}_2 \\
        &\qquad= \sum_{j=0}^\infty \sum_{k=0}^{k_0-1} \norm{(P_\pibs-\Xi)^{jk_0}(P_\pibs - \Xi)^k (P_\pibs^{\top} - \Xi^{\top})^k(P_\pibs^{\top}-\Xi^{\top})^{jk_0}}_2 \\
        &\qquad\leq \sum_{j=0}^\infty \sum_{k=0}^{k_0-1} \norm{(P_\pibs-\Xi)^{k_0}}_2^j\norm{(P_\pibs - \Xi)^k (P_\pibs^{\top} - \Xi^{\top})^k}_2\norm{(P_\pibs^{\top}-\Xi^{\top})^{k_0}}_2^j \\
        &\qquad= \sum_{j=0}^\infty \rhobar^{2j} \sum_{k=0}^{k_0-1} \norm{(P_\pibs - \Xi)^k (P_\pibs^{\top} - \Xi^{\top})^k}_2 \\
        &\qquad= \frac{C_0}{1-\rhobar^2} < \infty,
    \end{align*}
    where $C_0 = \sum_{k=0}^{k_0-1} \norm{(P_\pibs - \Xi)^k (P_\pibs^{\top} - \Xi^{\top})^k}_2$. 
    Therefore, the infinite sum is absolutely convergent.

    To show that $\wmat$ is positive definite, observe that each term in its definition, $(P_\pibs - \Xi)^k (P_\pibs^{\top} - \Xi^{\top})^k$, is positive semi-definite; and its first term is the identity matrix. Therefore, for any row vector $v\in \R^{\abs{\sspa}}$ such that $v\neq 0$, $v\wmat v^{\top} \geq vv^{\top}$. Therefore, $\wmat$ is positive definite and its eigenvalues are lower bounded by $1$. 
    \Halmos
\end{proof}

Next, we restate and prove \Cref{lem:pibar-one-step-contraction-W}.

\wnorm*

\begin{proof}{\textit{Proof of \Cref{lem:pibar-one-step-contraction-W}}.}
    Recall that by \Cref{lem:W-well-defined}, the eigenvalues of $\wmat$ lie in the range $[1, \lamw]$, where $\lamw$ is the maximal eigenvalue of $\wmat$.  
    It is not hard to see from the definition that $\wmat$ satisfies the equation
    \begin{equation*}
        (P_\pibs - \Xi) \wmat (P_\pibs^{\top} - \Xi^{\top}) - \wmat + I = 0.
    \end{equation*}
    It follows that
    \begin{align}
        \nonumber
        &\norm{(v - \statdist) P_\pibs}_\wmat - \norm{v-\statdist}_\wmat \\
        &\qquad\leq \frac{(v-\statdist) P_\pibs \wmat P_\pibs^{\top} (v-\statdist)^{\top} - (v-\statdist) \wmat (v-\statdist)^{\top}}{2\norm{v-\statdist}_\wmat} \nonumber \\
        &\qquad= \frac{(v-\statdist) (P_\pibs-\Xi) \wmat (P_\pibs-\Xi)^{\top} (v-\statdist)^{\top} - (v-\statdist) \wmat (v-\statdist)^{\top}}{2\norm{v-\statdist}_\wmat} \nonumber \\
        &\qquad= \frac{(v-\statdist) (\wmat-I) (v-\statdist)^{\top} - (v-\statdist) \wmat (v-\statdist)^{\top}}{2\norm{v-\statdist}_\wmat} \nonumber \\
        &\qquad= -\frac{\norm{v-\statdist}_2^2}{2\norm{v-\statdist}_\wmat}, \label{eq:prove-w-contract:intermediate-1}
    \end{align}
    where the inequality is because $x - y \leq (x^2 - y^2) / (2y)$ for any positive numbers $x,y$. 
    To change the norm in the numerator of \eqref{eq:prove-w-contract:intermediate-1} to the $\wmat$-weighted $L_2$ norm, we use the following bound: 
    \[
        \norm{v-\statdist}_\wmat^2 = (v-\statdist)\wmat(v-\statdist)^{\top} \leq \lamw \norm{v-\statdist}_2^2.
    \]
    Therefore, we have
    \begin{equation}
        \norm{(v - \statdist)P_\pibs}_\wmat - \norm{v-\statdist}_\wmat  \leq -\frac{1}{2\lamw} \norm{v-\statdist}_\wmat. \nonumber 
    \end{equation}
    This implies \eqref{eq:pibar-contraction} because $\statdist P_\pibs = \statdist$. 
    \Halmos
\end{proof}

\subsection{Lemmas and proofs about subset Lyapunov functions}\label{app:proof-feature-lyapunov-lemmas}

In this section, we consider two classes of functions, $\{\hw(x, D)\}_{D\subseteq[N]}$ and $\{\hid(x, m)\}_{m\in[0,1]_N}$, defined in \Cref{sec:pf-set-exp:subset-lyapunov} and \Cref{sec:proof-id-policy:feature-lyapunov} respectively. 
Then we prove Lemmas~\ref{lem:hw-feature-lyaupnov} and \ref{lem:hid-feature-lyaupnov}, which verify that these two classes of functions satisfy the definition of subset Lyapunov functions.

For any system state $x$ and subset $D\subseteq[N]$, recall that $\hw(x, D)$ is defined as 
\begin{equation*}
    \hw(x, D) = \norm{x(D) - m(D) \statdist}_W,
\end{equation*}
where $\wmat$ is the matrix in \Cref{def:w-and-w-norm}; $\norm{u}_\wmat = \sqrt{u W u^\top}$ for any row vector $u$.

The lemma restated below shows that $\{\hw(x, D)\}_{D\subseteq[N]}$ are subset Lyapunov functions. 

\hwfeature*

\begin{proof}{\textit{Proof of \Cref{lem:hw-feature-lyaupnov}}.}
    We first prove the drift condition \eqref{eq:hw-feature-lyapunov:drift}. 
    Let $X_1'$ be the system state after one step of transition if $A_0(i)$ is an independent sample from $\pibs(\cdot | S_0(i))$ for each $i\in D$. Then
    \begin{align}
        \nonumber
        &h_\wmat(X_1', D) - \big(1-\frac{1}{2\lamw}\big) h_\wmat(x, D)  \\
        &\qquad= \norm{X_1'(D) - m(D) \statdist}_\wmat - \big(1-\frac{1}{2\lamw}\big) \norm{x(D) - m(D)\statdist}_\wmat  \nonumber \\
        &\qquad\leq \lVert X_1'(D) - m(D) \statdist\rVert_\wmat - \norm{x(D)P_\pibs - m(D) \statdist}_\wmat \nonumber \\
        \label{eq:pf-hw-drift:noise-bdd-by-w-norm}
        &\qquad \leq \lVert X_1'(D) - x(D) P_{\pibs}\rVert_\wmat. 
    \end{align}
    where the first inequality follows from applying \Cref{lem:pibar-one-step-contraction-W} with $v = x(D) / m(D)$; the second inequality is due to the triangle inequality. 
    To bound the expression in \eqref{eq:pf-hw-drift:noise-bdd-by-w-norm}, we define the centered random vector $\zmNoise(i) = X_1'(\{i\})- x(\{i\})P_\pibs$ for each $i \in D$, and denote its $s$-th entry as $\zmNoise(i, s)$ for $s\in\sspa$. This allows us to rewrite $\normplain{X_1'(D) - x(D) P_{\pibs}}_\wmat$ as 
    \begin{equation}\label{eq:pf-hw-drift:discretize-noise-partial-sums-w}
        \normBig{X_1'(D) - x(D) P_{\pibs}}_\wmat = \normBig{\sum_{i\in D} \zmNoise(i)}_\wmat. 
    \end{equation}
    Because $X_1'$ is the system state after each arm in $D$ independently follows $\pibs$, conditioned on $x$, $\zmNoise(i)$'s are independent across $i$ and have zero means on each entry, i.e., $\E{\zmNoise(i, s) \givenplain X_0=x} = 0$ for each $i\in D$ and $s\in\sspa$. 
    We can thus bound the conditional expectation of $\normplain{\sum_{i\in D} \zmNoise(i)}_\wmat^2$ as follows:
    \begin{align}
        \nonumber
        \Ebigg{\normBig{\sum_{i\in D} \zmNoise(i)}_\wmat^2 \givenbigg X_0 = x} 
        &\leq  \lamw\Ebigg{\normBig{\sum_{i\in D} \zmNoise(i)}_2^2 \givenbigg X_0 = x}  \nonumber \\
        &=  \lamw\Ebigg{\sums\Big(\sum_{i\in D} \zmNoise(i,s)^2 + 2\sum_{0\leq i<i'\leq \Md{x}-1} \zmNoise(i,s)\zmNoise(i',s) \Big)\givenbigg X_0 = x} \nonumber \\
        &= \lamw \sums \sum_{i\in D} \E{\zmNoise(i, s)^2 \givenBig X_0 = x} \nonumber \\
        &\leq \lamw \sum_{i\in D} \Ebigg{\Big(\sums \abs{\zmNoise(i, s)}\Big)^2 \givenbigg X_0 = x}  \nonumber \\
        \label{eq:pf-hw-drift:w-norm-iid}
        &\leq \frac{4 \lamw}{N},
    \end{align}
    where the first inequality uses from the fact that $\norm{v}_\wmat \leq \lamw^{1/2} \norm{v}_2$ for any $v\in\R^{|\sspa|}$; the first equality is by the definition of $\norm{\cdot}_2$ on $\R^{\abs{\sspa}}$; the second equality is because $\zmNoise(i, s)$'s are independent across $i\in D$ and have zero means; the last inequality uses the fact that $\sums \abs{\zmNoise(i,s)} = \normplain{\zmNoise(i)}_1 \leq \normplain{X_1'(\{i\})}_1 +  \normplain{x(\{i\})P_\pibs}_1 = 2/N$. By the Cauchy-Schwarz inequality, it follows from \eqref{eq:pf-hw-drift:w-norm-iid} that 
    \begin{equation}\label{eq:pf-hw-drift:w-norm-iid-no-square}
        \Ebigg{\normBig{\sum_{i\in D} \zmNoise(i)}_\wmat \givenbigg X_0 = x}  \leq \Ebigg{\normBig{\sum_{i\in D} \zmNoise(i)}_\wmat^2 \givenbigg X_0 = x}^{1/2} \leq \frac{2\lamw^{1/2}}{\sqrt{N}}.
    \end{equation}

    Therefore, by combining the above calculations, we get
    \begin{align}
          \nonumber
          &\EBig{ h_\wmat(X_1', D) - \big(1-\frac{1}{2\lamw}\big) h_\wmat(x, D) \givenBig X_0 = x}  \\
          &\qquad\leq \EBig{\lVert X_1'(D) - x(D) P_{\pibs}\rVert_\wmat \givenBig X_0=x} \nonumber \\
          &\qquad=  \Ebigg{\normBig{\sum_{i\in D} \zmNoise(i)}_\wmat \givenbigg X_0 = x} \nonumber \\
          &\qquad\leq \frac{2\lamw^{1/2}}{\sqrt{N}}, \nonumber
    \end{align}
    which implies the drift condition \eqref{eq:hw-feature-lyapunov:drift}.

    Next, we prove the distance dominance property \eqref{eq:hw-feature-lyapunov:strength}. Because the eigenvalues of $\wmat$ are at least $1$, we have
    \[
        \hw(x, D) = \norm{x(D) - m(D) \statdist}_\wmat \geq \norm{x(D) - m(D) \statdist}_2 \geq \frac{1}{|\sspa|^{1/2}} \norm{x(D) - m(D) \statdist}_1. 
    \]
    
    Finally, we show the Lipschitz continuity \eqref{eq:hw-feature-lyapunov:lipschitz}.  Note that
    \begin{align*}
        &\absbig{\hw(x, D) - \hw(x, D')} \\
        &\qquad= \absbig{\norm{x(D) - m(D)\statdist}_\wmat - \norm{x(D') - m(D)'\statdist}_\wmat } \\
        &\qquad\leq \norm{x(D) - m(D)\statdist - x(D') + m(D')\statdist}_\wmat \\
        &\qquad= \norm{x(D'\backslash D) - m(D'\backslash D)\statdist}_\wmat \\
        &\qquad\leq \norm{x(D'\backslash D)}_\wmat + m(D'\backslash D) \norm{\statdist}_\wmat.
    \end{align*}
    Note that for any $v \in \R^{|\sspa|}$, $\norm{v}_\wmat \leq \lamw^{1/2} \norm{v}_2 \leq \lamw^{1/2}  \norm{v}_1$. Because $\norm{x(D'\backslash D)}_1 = m(D') - m(D)$, and $\norm{\statdist}_1 = 1$, we have
    \[
        \norm{x(D'\backslash D)}_\wmat + m(D'\backslash D) \norm{\statdist}_\wmat \leq 2\lamw^{1/2} (m(D') - m(D)).   
    \]
    This proves the Lipschitz continuity \eqref{eq:hw-feature-lyapunov:lipschitz}.  \Halmos
\end{proof}

Recall that for any system state $x$ and $m\in[0,1]_N$, $\hid(x, m)$ is given by
\begin{equation*}
    \hid(x, m) =  \max_{\substack{m'\in [0,1]_N\\m'\leq m}} \hw(x, m'), 
\end{equation*}
where $\hw(x, m') = \norm{x([Nm']) - m'\statdist}_W$. 
Next, we restate and prove \Cref{lem:hid-feature-lyaupnov}, which 
implies that $\{\hid(x, m)\}_{m\in [0,1]_N}$ are also subset Lyapunov functions satisfying \Cref{def:feature-lyapunov}.

\hidfeature*

\begin{proof}{\textit{Proof}.}
    In this proof, the variable $n$ is a non-negative integer unless otherwise stated. 
    We first prove the drift condition \eqref{eq:hid-feature-lyapunov:drift}. 
    Let $X_1'$ be the system state after one step of transition if $A_0(i)$ is an independent sample from $\pibs(\cdot | S_0(i))$ for each $i\in [Nm]$. Then
    \begin{align}
        \nonumber
        &\hid(X_1', m) - \big(1-\frac{1}{2\lamw}\big) \hid(x, m) \\
        &\qquad= \max_{\substack{m'\in [0,1]_N, m'\leq m}} \hw(X_1', m') - \big(1-\frac{1}{2\lamw}\big) \max_{\substack{m'\in [0,1]_N, m'\leq m}} \hw(x, m')   \nonumber \\
        &\qquad\leq \max_{\substack{m'\in [0,1]_N, m'\leq m}} \Big(\hw(X_1', m') -\big(1-\frac{1}{2\lamw}\big)  \hw(x, m')\Big)  \nonumber \\
        \label{eq:pf-hid-drift:intermediate-0}
        &\qquad\leq \max_{\substack{m'\in [0,1]_N, m'\leq m}}  \lVert X_1'([Nm']) - x([Nm']) P_{\pibs}\rVert_\wmat,
    \end{align}
    where the last inequality can be justified using the same argument as \eqref{eq:pf-hw-drift:noise-bdd-by-w-norm}. 
    Therefore,
    \begin{equation}\label{eq:pf-hid-drift:intermediate-1}
        \Big(\hid(X_1', m) - \big(1-\frac{1}{2\lamw}\big) \hid(x, m)\Big)^+ \leq \max_{\substack{m'\in [0,1]_N, m'\leq m}}  \lVert X_1'([Nm']) - x([Nm']) P_{\pibs}\rVert_\wmat.
    \end{equation}
    For any $i \in [Nm]$, we define the centered random vector $\zmNoise(i) \in \R^{\abs{\sspa}}$ as 
    \[
        \zmNoise(i) = X_1'(\{i\})- x(\{i\})P_\pibs,
    \]
    and denote its $s$-th entry as $\zmNoise(i, s)$ for $s\in\sspa$. 
    This allows us to rewrite the right-hand side of \eqref{eq:pf-hid-drift:intermediate-1} as
    \begin{equation}\label{eq:pf-hid-drift:discretize-noise-partial-sums-id}
        \max_{\substack{m'\in [0,1]_N, m'\leq m}} \normBig{X_1'(\setnm{m'}) - x(\setnm{m'}) P_{\pibs}}_\wmat = \max_{n\leq Nm} \normBig{\sum_{i\in[n]} \zmNoise(i)}_\wmat.
    \end{equation}
    Therefore, to prove the bound in \eqref{eq:hid-feature-lyapunov:drift}, it suffices to show that 
    \begin{equation}
        \Ebigg{ \max_{n\leq Nm} \normBig{\sum_{i\in[n]} \zmNoise(i)}_\wmat \givenbigg X_0 = x}  
         \leq  \frac{4\lamw^{1/2}}{\sqrt{N}}.
    \end{equation}
    
    Conditioned on $X_0 = x$, we argue that $\norm{\sum_{i\in[n]} \zmNoise(i)}_\wmat$ is a submartingale in $n$ so that we can invoke Doob's $L_2$ maximal inequality to bound the right-hand side of \eqref{eq:pf-hid-drift:discretize-noise-partial-sums-id} (see, e.g., Theorem 4.4.4 of \citep{Dur_19_prob_book}). Observe that 
    \begin{itemize}
        \item $\zmNoise(i)$'s are independent across $i\in [Nm]$;
        \item For each $i\in [Nm]$ and $s\in\sspa$, $\E{\zmNoise(i, s) \givenplain X_0=x} = 0$.
    \end{itemize}
    Therefore, $\sum_{i\in[n]} \zmNoise(i)$ is a martingale in $n$. Because $\norm{\cdot}_\wmat$ is a convex function, $\norm{\sum_{i\in[n]} \zmNoise(i)}_\wmat$ is a submartingale in $n$.  We apply Doob's $L_2$ maximal inequality to $\norm{\sum_{i\in[n]} \zmNoise(i)}_\wmat$ to get
    \begin{equation}\label{eq:pf-hid-drift:apply-doob-maximal}
        \Ebigg{\Big(\max_{n\leq Nm} \normBig{\sum_{i\in[n]} \zmNoise(i)}_\wmat\Big)^2 \givenbigg X_0 = x} \leq 4 \Ebigg{\normBig{\sum_{i\in[Nm]} \zmNoise(i)}_\wmat^2 \givenbigg X_0 = x}. 
    \end{equation}
    Applying Cauchy-Schwarz inequality to the LHS of \eqref{eq:pf-hid-drift:apply-doob-maximal}, we get 
    \begin{align}
        \Ebigg{\max_{n\leq Nm} \normBig{\sum_{i\in[n]} \zmNoise(i)}_\wmat \givenbigg X_0 = x} 
        \label{eq:pf-hid-drift:cauchy-schwartz}
        &\leq \Ebigg{\Big(\max_{n\leq Nm} \normBig{\sum_{i\in[n]} \zmNoise(i)}_\wmat\Big)^2 \givenbigg X_0 = x}^{1/2} 
    \end{align}
    Using the same argument as \eqref{eq:pf-hw-drift:w-norm-iid} with $D = [Nm]$, we bound the right-hand side of \eqref{eq:pf-hid-drift:apply-doob-maximal} as
    \begin{equation}\label{eq:pf-hid-drift:w-norm-iid}
        4\Ebigg{\normBig{\sum_{i\in[Nm]} \zmNoise(i)}_\wmat^2 \givenbigg X_0 = x} \leq \frac{16\lamw}{N}.
    \end{equation}
    Plugging \eqref{eq:pf-hid-drift:cauchy-schwartz} and \eqref{eq:pf-hid-drift:w-norm-iid} into two sides of \eqref{eq:pf-hid-drift:apply-doob-maximal}, we get 
    \begin{equation}
    \label{eq:pf-hid-drift:intermediate-2}
        \Ebigg{\max_{n\leq Nm} \normBig{\sum_{i\in[n]} \zmNoise(i)}_\wmat \givenbigg X_0 = x} \leq \frac{4\lamw^{1/2}}{\sqrt{N}}.
    \end{equation}
    This proves the drift condition \eqref{eq:hid-feature-lyapunov:drift}.

    Next, we show the distance dominance property \eqref{eq:hid-feature-lyapunov:strength}. By the definition of $\hid(x, m)$ and the fact that the eigenvalues of $\wmat$ are at least $1$, we have
    \[
        \hid(x, m) \geq \norm{x([Nm]) - m \statdist}_\wmat \geq \norm{x([Nm]) - m \statdist}_2 \geq \frac{1}{|\sspa|^{1/2}} \norm{x([Nm]) - m \statdist}_1.
    \]

    Finally, we show the Lipschitz continuity \eqref{eq:hid-feature-lyapunov:lipschitz}. For simplicity, we omit $m\in[0,1]_N$ in the subscripts. 
    Consider any $m, m' \in [0,1]_N$. Without loss of generality, we assume that $m\leq m'$. 
    We expand the definitions of $\hid(x, m)$ and $\hid(x, m')$ to rewrite them as follows: 
    \begin{align*}
        \hid(x, m) &= \max\big\{\hid(x, m), \hw(x, m)\big\} \\
        \hid(x, m') &= \max\Big\{\hid(x, m), \max_{m''\in [m,m']_N}\hw(x,m'')\Big\},
    \end{align*}
    where $[m,m']_N$ is a shorthand for $[m,m']\cap [0,1]_N$. 
    Observe that for any $a,b,c\in\R$, we have $\absplain{\max\{a,b\} - \max\{a,c\}} \leq \absplain{b-c}$. Letting $a = \hid(x, m)$, $b = \hw(x, m)$, and $c = \max_{m''\in [m,m']_N}\hw(x,m'')$, we get 
    \begin{equation}\label{eq:pf-hid-lipschitz:intermediate-1}
        \absBig{\hid(x, m) - \hid(x,m')} \leq \absBig{\max_{m''\in [m,m']_N}\hw(x,m'')- \hw(x, m)}.
    \end{equation}
    We further bound the right-hand side of \eqref{eq:pf-hid-lipschitz:intermediate-1} as 
    \begin{align}
        \nonumber
        &\absBig{\max_{m''\in [m,m']_N}\hw(x,m'')- \hw(x, m)} \\
        &\qquad\leq \max_{m''\in[m,m']_N} \absplain{\hw(x,m'') - \hw(x,m)} \nonumber \\
        &\qquad\leq  \max_{m''\in[m,m']_N} 2\lamw^{1/2} \absplain{m''-m} \nonumber\\
        \label{eq:pf-hid-lipschitz:intermediate-2}
        &\qquad=  2\lamw^{1/2} \absplain{m'-m}, 
    \end{align}
    where in the second inequality we used \eqref{eq:hw-feature-lyapunov:lipschitz}, the Lipschitz continuity of $\hw(x, D)$ in $D$ that we have proved in \Cref{lem:hw-feature-lyaupnov}. 
    Combining \eqref{eq:pf-hid-lipschitz:intermediate-1} and \eqref{eq:pf-hid-lipschitz:intermediate-2}, we prove the Lipschitz continuity \eqref{eq:hid-feature-lyapunov:lipschitz}. 
    \Halmos
\end{proof}

\subsection{Lemmas and proofs about $L_1$ norm}\label{app:proof-ell-1}
In this subsection, we prove two lemmas about the $L_1$ norm that are useful for the analysis of the set-expansion policy, considering that the set-expansion policy chooses focus sets based on the slack $\slk(x, D)$ whose definition involves $L_1$ norm. 

We first show that if the optimal single-armed policy $\pibs$ induces an aperiodic unichain, right-multiplying $P_\pibs$ is non-expansive under the $L_1$ norm.

\begin{lemma}[Non-expansiveness of $P_\pibs$ under $L_1$ norm]\label{lem:ell-1-non-expansive}
    For any distribution $v\in \Delta(\sspa)$, 
    \begin{equation}
        \norm{(v  - \statdist)P_\pibs}_1 \leq \norm{v - \statdist}_1. 
    \end{equation}
\end{lemma}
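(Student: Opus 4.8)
The plan is to prove a statement that is essentially classical: for a stochastic matrix $P_\pibs$ with stationary distribution $\statdist$, right-multiplication by $P_\pibs$ is an $L_1$-contraction (non-expansion) on the affine subspace of signed measures summing to zero. Note that this particular statement does not even require aperiodicity, only that $\statdist$ is a stationary distribution of $P_\pibs$ (so the hypothesis in the lemma is stronger than needed). First I would reduce the claim to a statement about the signed vector $u = v - \statdist$, which satisfies $\sum_{s} u(s) = 0$ because both $v$ and $\statdist$ are probability distributions. Since $\statdist P_\pibs = \statdist$, we have $(v-\statdist)P_\pibs = vP_\pibs - \statdist$, so the goal is exactly $\norm{u P_\pibs}_1 \le \norm{u}_1$ for any row vector $u$ with zero coordinate sum.

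The cleanest route is a direct computation. Write $u = u^+ - u^-$ where $u^+$ and $u^-$ are the positive and negative parts (entrywise), both nonnegative vectors; the zero-sum condition gives $\sum_s u^+(s) = \sum_s u^-(s) =: c$, and $\norm{u}_1 = 2c$. Then $u P_\pibs = u^+ P_\pibs - u^- P_\pibs$, and by the triangle inequality $\norm{u P_\pibs}_1 \le \norm{u^+ P_\pibs}_1$ is not quite the right move — instead I would argue $\norm{u P_\pibs}_1 = \sum_{s'} |u^+P_\pibs(s') - u^- P_\pibs(s')| \le$ ... actually the sharpest elementary bound uses that $P_\pibs$ is row-stochastic: $\norm{u^+ P_\pibs}_1 = \sum_{s'}\sum_s u^+(s) P_\pibs(s,s') = \sum_s u^+(s) = c$, and similarly $\norm{u^- P_\pibs}_1 = c$. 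Hence $\norm{u P_\pibs}_1 \le \norm{u^+P_\pibs}_1 + \norm{u^-P_\pibs}_1 = 2c = \norm{u}_1$. This is the whole argument.

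Concretely, I would write: let $u = v - \statdist$; since $v,\statdist \in \Delta(\sspa)$ we have $\sum_{s\in\sspa} u(s) = 0$. Decompose $u = u^+ - u^-$ with $u^{\pm} \ge 0$ entrywise and disjoint supports, so $\norm{u}_1 = \norm{u^+}_1 + \norm{u^-}_1$ and $\norm{u^+}_1 = \norm{u^-}_1$. Using $\statdist P_\pibs = \statdist$, we get $(v-\statdist)P_\pibs = u P_\pibs$. Then by the triangle inequality and nonnegativity of $u^{\pm}$ and $P_\pibs$,
\[
\norm{u P_\pibs}_1 \le \norm{u^+ P_\pibs}_1 + \norm{u^- P_\pibs}_1 = \norm{u^+}_1 + \norm{u^-}_1 = \norm{u}_1,
\]
where the middle equality is because $P_\pibs$ is row-stochastic: for any nonnegative row vector $w$, $\norm{w P_\pibs}_1 = \sum_{s'} \sum_s w(s)P_\pibs(s,s') = \sum_s w(s)\sum_{s'}P_\pibs(s,s') = \sum_s w(s) = \norm{w}_1$.

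There is no real obstacle here — the statement is elementary and the only thing to be slightly careful about is making the positive/negative-part decomposition and the row-stochasticity identity explicit rather than hand-waved. If the authors instead want to route through a coupling/total-variation argument (total variation distance is non-increasing under a Markov kernel), that also works but is heavier machinery than needed; I would stick with the two-line $L_1$ computation above. The aperiodic-unichain hypothesis in the lemma statement is presumably included only for consistency with the companion lemma \Cref{lem:pibar-one-step-contraction-W} (which genuinely needs it for a strict contraction), so I would simply note that here only stationarity of $\statdist$ is used.
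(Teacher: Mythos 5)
Your proof is correct and rests on exactly the same two ingredients as the paper's: the triangle inequality and row-stochasticity of $P_\pibs$ (the paper applies the triangle inequality entrywise to $\absplain{\sum_s (v(s)-\statdist(s))P_\pibs(s,s')}$ and then swaps the sums, rather than passing through the positive/negative-part decomposition, but this is only a cosmetic repackaging). Your side observation that neither aperiodicity nor even stationarity of $\statdist$ is actually used is also accurate — the paper's proof likewise uses only that $P_\pibs$ is a stochastic matrix.
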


\begin{proof}{\textit{Proof}.}
    For any $v \in \Delta(\sspa)$, 
    \begin{align*}
        \norm{(v-\statdist) P_\pibs}_1  &= \sum_{s'\in\sspa} \abs{\sum_{s\in\sspa} (v(s)-\statdist(s)) P_\pibs(s,s')} \\
        &\leq \sum_{s'\in\sspa} \sum_{s\in\sspa} \abs{v(s)-\statdist(s)} P_\pibs(s,s') \\
        &=  \sum_{s\in\sspa} \abs{v(s)-\statdist(s)} \sum_{s'\in\sspa}  P_\pibs(s,s') \\
        &=  \sum_{s\in\sspa} \abs{v(s)-\statdist(s)} \\
        &= \norm{v-\statdist}_1.    \Halmos
    \end{align*}
\end{proof}

Next, we show that if all arms in a subset $D$ follow $\pibs$, the $L_1$ distance between the scaled state-count vector $X_t(D)$ and the scaled optimal steady-state distribution $m(D)\statdist$ does not significantly increase.

\begin{lemma}\label{lem:ell-1-drift}
    For any system state $x$ and any subset $D\subseteq[N]$, 
    \begin{equation}
        \label{eq:ell-1:drift}
        \Ebig{ \big(\normplain{X_1(D) - m(D) \statdist}_1 - \normplain{x(D) - m(D) \statdist}_1\big)^+ \;\givenbig\; X_0 = x, A_0(i)\sim \pibs(\cdot| S_0(i)) \forall i\in D } 
        \leq  \frac{2|\sspa|^{1/2}}{\sqrt{N}}. 
    \end{equation}
\end{lemma}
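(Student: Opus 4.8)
The plan is to bound the positive part by $\normplain{X_1(D)-x(D)P_\pibs}_1$ and then control the conditional expectation of this quantity by a variance computation for a sum of independent zero-mean random vectors --- this is the $L_1$ analogue of the noise estimate already carried out inside the proof of \Cref{lem:hw-feature-lyaupnov}, with the non-expansiveness of $P_\pibs$ (\Cref{lem:ell-1-non-expansive}) playing the role that the pseudo-contraction under the $\wmat$-weighted norm plays there.

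First I would dispose of the trivial case $D=\emptyset$, in which both $L_1$ norms vanish and the claim is immediate. For $D\neq\emptyset$ we have $m(D)>0$, and $v\triangleq x(D)/m(D)$ is a genuine element of $\Delta(\sspa)$ because $x(D,\cdot)$ is the state-count vector on $D$ scaled by $1/N$ while $m(D)=|D|/N$. Let $X_1$ denote the system state after one transition step under the stated action rule, so that $\Ebig{X_1(D)\givenbig X_0=x}=x(D)P_\pibs$. By the triangle inequality, $\normplain{X_1(D)-m(D)\statdist}_1\le\normplain{X_1(D)-x(D)P_\pibs}_1+\normplain{x(D)P_\pibs-m(D)\statdist}_1$, and applying \Cref{lem:ell-1-non-expansive} to $v$ gives $\normplain{x(D)P_\pibs-m(D)\statdist}_1=m(D)\normplain{(v-\statdist)P_\pibs}_1\le m(D)\normplain{v-\statdist}_1=\normplain{x(D)-m(D)\statdist}_1$. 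Subtracting and taking positive parts, $\big(\normplain{X_1(D)-m(D)\statdist}_1-\normplain{x(D)-m(D)\statdist}_1\big)^+\le\normplain{X_1(D)-x(D)P_\pibs}_1$, so it suffices to bound $\Ebig{\normplain{X_1(D)-x(D)P_\pibs}_1\givenbig X_0=x}$.

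For this, I would write $X_1(D)-x(D)P_\pibs=\sum_{i\in D}\zmNoise(i)$ with $\zmNoise(i)\triangleq X_1(\{i\})-x(\{i\})P_\pibs$; conditioned on $X_0=x$ the arms transition independently under the prescribed rule, so the $\zmNoise(i)$ are independent across $i\in D$ with $\Ebig{\zmNoise(i)\givenbig X_0=x}=0$. Using $\normplain{u}_1\le|\sspa|^{1/2}\normplain{u}_2$ and Jensen's inequality, $\Ebig{\normplain{\sum_{i\in D}\zmNoise(i)}_1\givenbig X_0=x}\le|\sspa|^{1/2}\big(\Ebig{\normplain{\sum_{i\in D}\zmNoise(i)}_2^2\givenbig X_0=x}\big)^{1/2}$. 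Expanding the squared $L_2$ norm coordinatewise and discarding cross terms by independence and the zero-mean property, $\Ebig{\normplain{\sum_{i\in D}\zmNoise(i)}_2^2\givenbig X_0=x}=\sum_{i\in D}\Ebig{\normplain{\zmNoise(i)}_2^2\givenbig X_0=x}\le\sum_{i\in D}\Ebig{\normplain{\zmNoise(i)}_1^2\givenbig X_0=x}$. Since $\normplain{\zmNoise(i)}_1\le\normplain{X_1(\{i\})}_1+\normplain{x(\{i\})P_\pibs}_1=2/N$, this last sum is at most $|D|\cdot4/N^2\le4/N$, and combining the inequalities above yields $\Ebig{\normplain{X_1(D)-x(D)P_\pibs}_1\givenbig X_0=x}\le2|\sspa|^{1/2}/\sqrt{N}$, which is exactly the claimed bound.

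I do not anticipate any real difficulty: each step is elementary once \Cref{lem:ell-1-non-expansive} is available, and the argument mirrors the noise estimate in the proof of \Cref{lem:hw-feature-lyaupnov}. The only two points requiring a little care are checking that $v=x(D)/m(D)$ lies in $\Delta(\sspa)$ so that \Cref{lem:ell-1-non-expansive} applies, and keeping the conditioning on $\{X_0=x\}$ explicit throughout so that the $\zmNoise(i)$ are genuinely independent with conditional mean zero; both follow directly from the definition of the scaled state-count vector and the independence of the arms' transitions under the stated action rule.
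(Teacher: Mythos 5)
Your proposal is correct and follows essentially the same route as the paper's proof: bound the positive part by $\normplain{X_1(D)-x(D)P_\pibs}_1$ via the triangle inequality and the $L_1$ non-expansiveness of $P_\pibs$, then control the conditional expectation of that noise term by a second-moment computation for the independent zero-mean increments $\zmNoise(i)$, using $\normplain{u}_1\le|\sspa|^{1/2}\normplain{u}_2$ and $\normplain{\zmNoise(i)}_1\le 2/N$. The minor reorderings (applying the norm comparison before versus after squaring, and your explicit treatment of $D=\emptyset$) do not change the argument.
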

    
\begin{proof}{\textit{Proof}.}
     Let $X_1'$ be the system state after one step of transition if $A_0(i) \sim \pibs(\cdot | S_0(i))$ for any $i\in D$. Then
    \begin{align}
        &\norm{X_1'(D) - m(D) \statdist}_1 - \norm{x(D) - m(D)\statdist}_1  \nonumber \\
        &\qquad \leq \lVert X_1'(D) - m(D) \statdist\rVert_1 - \norm{x(D)P_\pibs - m(D) \statdist}_1 \nonumber \\
        &\qquad \leq \lVert X_1'(D) - x(D) P_{\pibs}\rVert_1,
    \end{align}
    where the first inequality follows from applying \Cref{lem:ell-1-non-expansive} with $v = x(D) / m(D)$; the second inequality is due to the triangular inequality. Therefore, 
    \begin{equation}
    \label{eq:pf-ell-1-drift:noise-bdd-by-w-norm}
        \big(\norm{X_1'(D) - m(D) \statdist}_1 - \norm{x(D) - m(D)\statdist}_1 \big)^+ \leq \lVert X_1'(D) - x(D) P_{\pibs}\rVert_1. 
    \end{equation}

    For any $i \in [\Md{x}]$, we define the centered random vector $\zmNoise(i) \in \R^{\abs{\sspa}}$ as 
    \[
        \zmNoise(i) = X_1'(\{i\})- x(\{i\})P_\pibs,
    \]
    and denote its $s$-th entry as $\zmNoise(i, s)$ for $s\in\sspa$. This allows us to rewrite $\normplain{X_1'(D) - x(D) P_{\pibs}}_1$ as 
    \begin{equation}\label{eq:pf-ell-1-drift:discretize-noise-partial-sums-ell-1}
        \normBig{X_1'(D) - x(D) P_{\pibs}}_1 = \normBig{\sum_{i\in D} \zmNoise(i)}_1. 
    \end{equation}
    Observe that conditioned on $X_0 = x$, we have the following facts about these centered random vectors:
    \begin{itemize}
        \item $\zmNoise(i)$'s are independent across $i\in D$;
        \item For each $i\in D$ and $s\in\sspa$, $\E{\zmNoise(i, s)\givenplain X_0=x} = 0$.
    \end{itemize}
    Therefore, we can bound the conditional expectation of $\normplain{\sum_{i\in D} \zmNoise(i)}_1^2$ as follows:
    \begin{align}
        \nonumber
        \Ebigg{\normBig{\sum_{i\in D} \zmNoise(i)}_1^2 \givenbigg X_0 = x} 
        &\leq  |\sspa| \, \Ebigg{\normBig{\sum_{i\in D} \zmNoise(i)}_2^2 \givenbigg X_0 = x}  \nonumber \\
        &=   |\sspa| \, \Ebigg{\sums\Big(\sum_{i\in D} \zmNoise(i,s)^2 + 2\sum_{0\leq i<i'\leq \Md{x}-1} \zmNoise(i,s)\zmNoise(i',s) \Big)\givenbigg X_0 = x} \nonumber \\
        &=  |\sspa| \, \sums \sum_{i\in D} \E{\zmNoise(i, s)^2 \givenBig X_0 = x} \nonumber \\
        &\leq  |\sspa| \, \sum_{i\in D} \Ebigg{\Big(\sums \abs{\zmNoise(i, s)}\Big)^2 \givenbigg X_0 = x}  \nonumber \\
        \label{eq:pf-ell-1-drift:w-norm-iid}
        &\leq \frac{4 |\sspa|}{N},
    \end{align}
    where the first inequality uses from the fact that $\norm{v}_1 \leq  |\sspa|^{1/2} \norm{v}_2$ for any $v\in\R^{|\sspa|}$; the first equality is by the definition of $\norm{\cdot}_2$ on $\R^{\abs{\sspa}}$; the second equality is because $\zmNoise(i, s)$'s are independent across $i\in D$ and have zero means; the last inequality uses the fact that $\sums \abs{\zmNoise(i,s)} = \normplain{\zmNoise(i)}_1 \leq \normplain{X_1'(\{i\})}_1 +  \normplain{x(\{i\})P_\pibs}_1 = 2/N$. By the Cauchy-Schwarz inequality, it follows from \eqref{eq:pf-ell-1-drift:w-norm-iid} that 
    \begin{equation}\label{eq:pf-ell-1-drift:ell-1-norm-iid-no-square}
        \Ebigg{\normBig{\sum_{i\in D} \zmNoise(i)}_1 \givenbigg X_0 = x}  \leq \EBig{\normBig{\sum_{i\in D} \zmNoise(i)}_1^2 \givenbigg X_0 = x}^{1/2} \leq \frac{2|\sspa|^{1/2}}{\sqrt{N}}.
    \end{equation}

    Combining the above calculations, we get 
    \begin{align*}
        &\Ebig{ \big(\norm{X_1'(D) - m(D) \statdist}_1 - \norm{x(D) - m(D)\statdist}_1 \big)^+ \givenbig X_0 = x}  \\
        &\qquad\leq \Ebig{\lVert X_1'(D) - x(D) P_{\pibs}\rVert_1 \givenbig X_0 = x } \\
        &\qquad= \Ebigg{\normBig{\sum_{i\in D} \zmNoise(i)}_1 \givenbigg X_0 = x}  \\
        &\qquad\leq \frac{2|\sspa|^{1/2}}{\sqrt{N}},
    \end{align*}
    which implies \eqref{eq:ell-1:drift}. 
    \Halmos
\end{proof}

\section{Set-optimization policy}
\label{app:set-optimization-policy}
This appendix introduces the \emph{set-optimization policy}, another focus-set policy that also achieves an $O(1/\sqrt{N})$ optimality gap.  
The key distinguishing feature of the set-optimization policy is that it is Markovian with respect to the scaled state-count vector $X_t([N])$, meaning that $X_t([N])$ forms a Markov chain under this policy --- a property not shared by the set-expansion policy (\Cref{alg:set-expansion}) or the ID policy (\Cref{alg:id}). Although the advantages of the set-optimization policy over the other two focus-set policies are not evident as it does not perform better in simulations, we include it here for its potential theoretical interests.

\subsection{Definition and asymptotic optimality of set-optimization policy}

The set-optimization policy is similar to the set-expansion policy in that they both choose a focus set $D_t$ in each time step and give priority to arms in $D_t$ to follow their ideal actions. 
However, they differ in how $D_t$ is chosen. In the set-optimization policy, $D_t$ is updated by solving an optimization problem \eqref{eq:set-opt-obj}--\eqref{eq:set-opt-constraint}, where $\hw(x, D) = \norm{X_t(D) - m(D) \statdist}_\wmat$ is the subset Lyapunov function that we have used in \Cref{sec:proof-set-expansion-policy}, $\lipw = 2\lamw^{1/2}$, and the slack $\slk(x, D)=\beta(1-m(D)) - 0.5\norm{x(D)-m(D)\statdist}_1$ is the same notion as in \Cref{sec:results_set-expansion-policy}. 
Importantly, $D_t$ is chosen to be a \emph{maximal} optimal solution in the sense that there is no other optimal solution $D'$ that contains $D_t$. When there are multiple maximal optimal solutions, $D_t$ is picked uniformly at random.
The formal definition of the set-optimization policy is given in \Cref{alg:set-optimization}.

We note that under the set-optimization policy, the number of arms corresponding to each state-action pair at time step $t$ is determined by the state-count vector $X_t([N])$ rather than the full system state $X_t$, implying that $X_t([N])$ is a Markov chain. Although the subproblem in \eqref{eq:set-opt-obj}--\eqref{eq:set-opt-constraint} requires evaluating $X_t(D)$ for a specific subset $D$, which seems to require full knowledge of the system state $X_t$, we observe that $\hw(X_t, D)$, $m(D)$, and $\slk(X_t, D)$ are determined solely by $X_t(D)$.
Thus, solving the subproblem boils down to deciding $X_t(D_t)$, which only requires the knowledge of $X_t([N])$ rather than $X_t$.

\begin{algorithm}[t] 
\caption{Set-optimization policy}
\label{alg:set-optimization}
\begin{flushleft}
\hspace*{\algorithmicindent} \textbf{Input}: number of arms $N$, budget $\alpha N$, the optimal single-armed policy $\pibs$, \\
\hspace*{\algorithmicindent} \hspace{0.37in} initial system state $X_0$, initial state vector $\veS_0$ 
\end{flushleft}
\begin{algorithmic}[1]
    \For{$t=0,1,2,\dots$}
        \State  Let $\dynsetr_t$ be a maximal optimal solution to the problem below:   \Comment{\ul{\emph{Set update}}}
        \begin{minipage}{15.5cm}
        \begin{subequations}
        \begin{align}
            \label{eq:set-opt-obj}
            \dynsetr_t \gets \argmin_{\dynset\subseteq [N]}& \quad \hw(X_t, D) + \lipw\big(1-m(\dynset)\big)  \\
            \label{eq:set-opt-constraint}
            \text{subject to} &\quad  \slk(X_t, D) \geq 0 
        \end{align}
        \end{subequations}
        \end{minipage}
        \State Run the same action sampling and action rectification steps as in Lines~\ref{alg:set-expansion-sampling}--\ref{alg:set-expansion-action-rect-ends} of \Cref{alg:set-expansion}
        \State  Apply $A_t(i)$ and observe $S_{t+1}(i)$ for each arm $i\in[N]$
    \EndFor
\end{algorithmic}
\end{algorithm}

Our next theorem, \Cref{thm:set-optimization-policy}, shows that the set-optimization policy achieves an $O(1/\sqrt{N})$ optimality gap, which matches the order of optimality gaps of the set-expansion policy and the ID policy.

\begin{theorem}[Optimality gap of set-optimization policy]\label{thm:set-optimization-policy}
    Consider an $N$-armed restless bandit problem with the single-armed MDP $(\sspa, \aspa, P, r)$ and budget $\alpha N$ for $0< \alpha < 1$.  
    Assume that the optimal single-armed policy induces an aperiodic unichain (\Cref{assump:aperiodic-unichain}). Let $\pi$ be the set-optimization policy (\Cref{alg:set-optimization}). Then for all $N$ and initial state vector $\veS_0$, the optimality gap of $\pi$ is bounded as
    \begin{equation}
        \label{eq:set-optimization-policy-bound}
        \ropt - \rsysn \leq \frac{\constso}{\sqrt{N}},
    \end{equation}
    where $\constso$ is a constant depending on $\rmax$, $|\sspa|$, $\rhoBudget \triangleq \min\{\alpha, 1-\alpha\}$, and $P_{\pibs}$; the explicit expression of $\constso$ is given in the proof. 
\end{theorem}

\subsection{Proof of Theorem~\ref{thm:set-optimization-policy} (Optimality gap of set-optimization policy)}
\label{app:proof-set-optimization-policy}

We will spend the remainder of this appendix proving \Cref{thm:set-optimization-policy}. 
Unlike the ID policy and the set-expansion policy, the set-optimization policy does not satisfy \Cref{def:focus-set:monotonic}, so \Cref{thm:set-optimization-policy} can not be proved as a direct corollary of \Cref{thm:focus-set-policy}. However, the proof of \Cref{thm:set-optimization-policy}  follows a similar structure as the framework established in \Cref{sec:formalizing}. 
Specifically, in Appendix~\ref{app:pf-set-opt:lemmas-and-pfs}, we state and prove three lemmas; each lemma either verifies a condition or states a fact that modifies one of the three conditions in \Cref{sec:formalizing}; 
in Appendix~\ref{app:pf-set-opt:main-thm-pf}, we prove \Cref{thm:set-optimization-policy} using similar ideas as \Cref{thm:focus-set-policy}, with the subset Lyapunov functions being $\{\hw(x, D)\}_{D\subseteq[N]}$.

\subsubsection{Lemmas and proofs}\label{app:pf-set-opt:lemmas-and-pfs}
We first show that the set-optimization policy satisfies \Cref{def:focus-set:pibs-consistency}. 

\begin{lemma}[Set-optimization policy satisfies \Cref{def:focus-set:pibs-consistency}]\label{lem:set-optimization-pibs-consistency}
    Consider the set-optimization policy defined in \Cref{alg:set-optimization}. For any time step $t\geq0$, there exists a subset $D_t' \subseteq D_t$ such that for all $i\in D_t'$, the policy chooses $A_t(i) = \syshat{A}_t(i)$, and 
    \begin{equation}
        \Ebig{m(D_t \backslash D_t') \givenbig X_t, D_t} \leq \frac{1}{\sqrt{N}} + \frac{1}{N} \quad a.s.
    \end{equation}
\end{lemma}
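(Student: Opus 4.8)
The plan is to reuse the argument structure of \Cref{lem:set-expansion-pibs-consistency}, since the action sampling and rectification steps of the set-optimization policy are \emph{identical} to those of the set-expansion policy (Lines~\ref{alg:set-expansion-sampling}--\ref{alg:set-expansion-action-rect-ends} of \Cref{alg:set-expansion}). The only thing that changes is how $D_t$ is chosen, so I need to check that the property of $D_t$ actually used in the proof of \Cref{lem:set-expansion-pibs-consistency}---namely $\slk(X_t, D_t) \geq 0$---still holds here. But that is immediate: the optimization problem \eqref{eq:set-opt-obj}--\eqref{eq:set-opt-constraint} has the explicit constraint $\slk(X_t, D) \geq 0$, so any optimal solution $D_t$ satisfies $\slk(X_t, D_t) \geq 0$.

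First I would set $D_t' = \{i \in D_t \colon A_t(i) = \syshat{A}_t(i)\}$ and observe, exactly as in \Cref{lem:set-expansion-pibs-consistency}, that because the rectification step matches as many ideal actions as possible subject to feasibility of the global budget constraint, we have
\[
    |D_t\setminus D_t'| = \Bigl(\textstyle\sum_{i\in D_t} \syshat{A}_t(i) - \alpha N\Bigr)^+ + \Bigl(\textstyle\sum_{i\in D_t} (1-\syshat{A}_t(i)) - (1-\alpha)N \Bigr)^+.
\]
Then I would reduce the claim to showing
\[
    \EBig{\Bigl(\textstyle\sum_{i\in D_t} \syshat{A}_t(i) - \alpha N\Bigr)^+ + \Bigl(\textstyle\sum_{i\in D_t} (1-\syshat{A}_t(i)) - (1-\alpha)N \Bigr)^+ \givenBig X_t, D_t} \leq \sqrt{N}.
\]
To prove this, I would introduce the scaled expected budget requirement $C_\pibs(x,D) = x(D)\costvec^\top$ with $\costvec = (\pibs(1|s))_{s\in\sspa}$, and bound $\E{|\sum_{i\in D_t}\syshat{A}_t(i) - NC_\pibs(X_t,D_t)| \mid X_t, D_t} \leq \sqrt{N}$ via Cauchy--Schwarz using conditional independence of the $\syshat{A}_t(i)$'s and $\Var{\syshat{A}_t(i)} \leq 1/4$. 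Finally, using $\slk(X_t, D_t)\geq 0$, i.e.\ $\tfrac12\norm{X_t(D_t) - m(D_t)\statdist}_1 \leq \rhoBudget(1-m(D_t))$, I would show $|C_\pibs(X_t,D_t) - \alpha m(D_t)| \leq \rhoBudget(1-m(D_t))$ (via $\sums \statdist(s)\pibs(1|s)=\alpha$ and the standard positive-part identity), hence $\alpha N - (N-|D_t|) \leq NC_\pibs(X_t,D_t) \leq \alpha N$, which lets me replace the two positive parts by a single deviation term, giving the bound.

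I do not anticipate a genuine obstacle here---this is essentially a verbatim transcription of the proof of \Cref{lem:set-expansion-pibs-consistency}, with the one-line observation that feasibility for \eqref{eq:set-opt-constraint} supplies the needed slack nonnegativity. The only minor point requiring a sentence of care is noting that the feasible region of \eqref{eq:set-opt-obj}--\eqref{eq:set-opt-constraint} is nonempty (e.g.\ $D = \emptyset$ always satisfies $\slk(x, \emptyset) = \rhoBudget > 0$), so a maximal optimal solution $D_t$ exists and is well-defined, and that $D_t$ being an optimizer is irrelevant to this lemma---only $\slk(X_t, D_t) \geq 0$ is used.
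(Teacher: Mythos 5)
Your proposal is correct and matches the paper's proof exactly: the paper likewise observes that the argument of \Cref{lem:set-expansion-pibs-consistency} carries over verbatim because the only properties used are that $\slk(X_t, D_t) \geq 0$ (guaranteed here by the constraint \eqref{eq:set-opt-constraint}) and that the action rectification maximizes the number of arms in $D_t$ with $A_t(i) = \syshat{A}_t(i)$. Nothing further is needed.
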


\begin{proof}{\textit{Proof of \Cref{lem:set-optimization-pibs-consistency}}.}
    The whole proof is verbatim to the proof of \Cref{lem:set-expansion-pibs-consistency}, based on the fact that for both the set-optimization policy and the set-expansion policy, $D_t$ satisfies $\slk(X_t, D_t) \geq 0$, and $\veA_t$ is chosen such that the number of arms $i\in D_t$ with $A_t(i) = \syshat{A}_t(i)$ is maximized. \Halmos
\end{proof}

Although the set-optimization policy does not satisfy the almost non-shrinking condition (\Cref{def:focus-set:monotonic}), we show that for each time step $t$, there is another subset $D_{t+1}^\SE$ such that the pair of subsets $(D_t, D_{t+1}^\SE)$ satisfies \Cref{def:focus-set:monotonic}, and $D_{t+1}^\SE$ is feasible to the optimization subproblem \eqref{eq:set-opt-obj}--\eqref{eq:set-opt-constraint} in the $(t+1)$-th time step. 

\begin{lemma}\label{lem:set-optimization-superset-monotonic}
    Consider the set-optimization policy defined in \Cref{alg:set-optimization}. For any $t\geq0$, there exists a subset $D_{t+1}^\SE\subseteq[N]$ that depends on $X_{t+1}$ such that 
    \begin{enumerate}
        \item $\slk(X_{t+1}, D_{t+1}^\SE) \geq 0$; 
        \item either $D_{t+1}^\SE\supseteq D_t$ or $D_{t+1}^\SE \subseteq D_t$;
        \item 
        \begin{minipage}{15.5cm}
        \begin{fleqn}
        \begin{equation}\label{eq:set-optimization-monotonic}
            \Ebig{(m(D_t) - m(D_{t+1}^\SE))^+ \givenbig X_t, D_t} \leq \frac{|\sspa|^{1/2} + 1}{\rhoBudget\sqrt{N}} + \frac{1 + (\rhoBudget + 1)|\sspa|}{\rhoBudget N} \quad a.s.
        \end{equation}
        \end{fleqn}
        \end{minipage}
    \end{enumerate}
\end{lemma}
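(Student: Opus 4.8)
The plan is to define $D_{t+1}^{\SE}$ to be exactly what the \emph{set-expansion} update rule (Lines~\ref{alg:set-expansion:set-update-step-begins}--\ref{alg:set-expansion:set-update-step-ends} of \Cref{alg:set-expansion}) would output when fed the focus set $D_t$ actually chosen by the set-optimization policy at time $t$ together with the realized next system state $X_{t+1}$: namely, if $\slk(X_{t+1}, D_t) > 0$ let $D_{t+1}^{\SE}$ be any set of largest $m(\cdot)$ with $D_{t+1}^{\SE}\supseteq D_t$ and $\slk(X_{t+1}, D_{t+1}^{\SE})\geq 0$; otherwise let $D_{t+1}^{\SE}$ be any set of largest $m(\cdot)$ with $D_{t+1}^{\SE}\subseteq D_t$ and $\slk(X_{t+1}, D_{t+1}^{\SE})\geq 0$ (breaking ties uniformly at random). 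With this construction Properties~1 and 2 hold immediately, so the entire content of the lemma is Property~3.

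The key point that makes the argument work is that the set-optimization policy performs \emph{identical} action sampling and action rectification to the set-expansion policy once a focus set is fixed (Line~3 of \Cref{alg:set-optimization}), so the conditional law of $X_{t+1}$ given $(X_t, D_t)$ is precisely the one analyzed for the set-expansion policy; moreover $D_t$ satisfies $\slk(X_t, D_t)\geq 0$ since it is feasible for the subproblem \eqref{eq:set-opt-obj}--\eqref{eq:set-opt-constraint}. Consequently the two-step proof of \Cref{lem:set-expansion-monotonic} goes through with $D_{t+1}$ replaced by $D_{t+1}^{\SE}$. In Step~1 one establishes the deterministic bound
\[
(m(D_t) - m(D_{t+1}^{\SE}))^+ \leq \tfrac{1}{\rhoBudget}\,(-\slk(X_{t+1}, D_t))^+ + \tfrac{K}{N}, \qquad K = (1 + 1/\rhoBudget)\,|\sspa|,
\]
by exhibiting, in the shrinking case, an explicit $\overline{D}\subseteq D_t$ obtained by keeping a $\gamma$-fraction of the arms in each state (modulo integer rounding), with $\gamma$ tuned so that $\slk(X_{t+1}, \overline{D})\geq 0$; this step only invokes the set-expansion update rule and triangle-inequality manipulations, and never references how $D_t$ was selected.

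In Step~2 one bounds $\Ebig{(-\slk(X_{t+1}, D_t))^+\givenbig X_t, D_t}$ by coupling $X_{t+1}$ with the random element $X_{t+1}'$ representing the system state had all arms followed their ideal actions, matching the two on the subset $D_t'\subseteq D_t$ supplied by \Cref{lem:set-optimization-pibs-consistency} (which is itself verbatim to \Cref{lem:set-expansion-pibs-consistency}). Splitting $(-\slk(X_{t+1}, D_t))^+ \le (-\slk(X_{t+1},D_t)+\slk(X_{t+1}',D_t))^+ + (-\slk(X_{t+1}',D_t))^+$, the first term is controlled by $m(D_t\setminus D_t')$ via \Cref{lem:set-optimization-pibs-consistency}, and the second by combining $\slk(X_t, D_t)\geq 0$ with the $L_1$ non-expansiveness drift bound \eqref{eq:pibar-contraction-L1} from \Cref{lem:ell-1-drift}. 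Taking expectations and adding the two contributions yields exactly \eqref{eq:set-optimization-monotonic}. I do not expect a genuine obstacle here: the only subtlety is confirming that the coupling on $D_t'$ is still available, which it is; in essence the lemma should be proved by the single sentence ``the proof of \Cref{lem:set-expansion-monotonic} applies verbatim with $D_{t+1}$ replaced by the set-expansion update $D_{t+1}^{\SE}$,'' and the write-up will spell out the handful of places where $D_t$'s provenance could have mattered and note that it does not.
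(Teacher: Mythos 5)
Your proposal is correct and matches the paper's own proof: the paper likewise defines $D_{t+1}^{\SE}$ as the output of the set-expansion update rule applied to the focus set $D_t$ chosen by the set-optimization policy, notes that Properties~1 and 2 are then immediate, and observes that the proof of \Cref{lem:set-expansion-monotonic} carries over verbatim because $\slk(X_t,D_t)\geq 0$ and the majority-conformity bound of \Cref{lem:set-optimization-pibs-consistency} are the only facts about $D_t$'s provenance that the argument uses.
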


\begin{proof}{\textit{Proof of \Cref{lem:set-optimization-superset-monotonic}}.}
    We let the set $D_{t+1}^\SE$ be the next-time-step focus set if we apply the update rule of the set-expansion policy to $(X_t, D_t)$. By the definition of the set-expansion policy, we automatically get $\slk(X_{t+1}, D_{t+1}^\SE) \geq 0$, and we also have $D_{t+1}^\SE\supseteq D_t$ or $D_{t+1}^\SE \subseteq D_t$. 

    The proof of \eqref{eq:set-optimization-monotonic} closely follows that of \eqref{eq:set-expansion-monotonic} in \Cref{lem:set-expansion-monotonic}, with the only change being that $D_{t+1}$ is replaced by $D_{t+1}^\SE$. The logic of both proofs relies on two key facts that hold in this context:
    \begin{itemize}
        \item The definition of the set-expansion policy implies that when $D_{t+1}^\SE\subseteq D_t$, $D_{t+1}^\SE$ is chosen to be the subset with the largest number of arms among all sets $\overline{D}$ such that $\slk(X_{t+1}, \overline{D}) \geq 0$. 
        \item By \Cref{lem:set-optimization-pibs-consistency}, there exists a subset $D_t' \subseteq D_t$ such that for all $i\in D_t'$, the policy chooses $A_t(i) = \syshat{A}_t(i)$, and $\Ebig{m(D_t \backslash D_t') \givenbig X_t, D_t} = O(1/\sqrt{N})$. 
    \end{itemize}
    With these two properties established, the rest of the argument proceeds identically to the proof of \Cref{lem:set-expansion-monotonic} with $D_{t+1}$ replaced by $D_{t+1}^\SE$.
    \Halmos
\end{proof}

Finally, we show that the set-optimization policy satisfies \Cref{def:focus-set:large-enough}.

\begin{lemma}[Set-optimization policy satisfies \Cref{def:focus-set:large-enough}]\label{lem:set-optimization-large-enough}
    Consider the set-optimization policy defined in \Cref{alg:set-optimization}. For any $t\geq0$, 
    \begin{equation}
        1 - m(D_t) \leq \frac{|\sspa|^{1/2}}{\rhoBudget} \hw(X_t, D_t) + \frac{2}{\rhoBudget N} \quad a.s.,
    \end{equation}
\end{lemma}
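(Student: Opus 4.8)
The plan is to prove Lemma~\ref{lem:set-optimization-large-enough} by a maximality argument on the optimization subproblem \eqref{eq:set-opt-obj}--\eqref{eq:set-opt-constraint}, essentially mirroring the proof of \Cref{lem:set-expansion-large-enough}. The key observation is that $D_t$, being a maximal optimal solution to the subproblem, must have slack $\slk(X_t, D_t)$ that is $O(1/N)$; once this is established, the distance dominance property of $\hw$ (inequality~\eqref{eq:hw-feature-lyapunov:strength}) converts the slack bound into the desired coverage bound.

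First I would show that $\slk(X_t, D_t) \le 2/N$. Suppose for contradiction that $\slk(X_t, D_t) > 2/N$. Then $m(D_t) < 1$, so $D_t^c \neq \emptyset$; pick any $i \in D_t^c$ and consider $D_t \cup \{i\}$. As in the proof of \Cref{lem:set-expansion-large-enough}, the triangle inequality gives
\[
    \slk(X_t, D_t \cup \{i\}) - \slk(X_t, D_t) \ge -\frac{\rhoBudget}{N} - \frac{1}{2}\norm{X_t(\{i\}) - m(\{i\})\statdist}_1 \ge -\frac{2}{N},
\]
so $\slk(X_t, D_t \cup \{i\}) > 0$, meaning $D_t \cup \{i\}$ is feasible for the subproblem. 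Now I must also argue that adding $i$ does not increase the objective $\hw(X_t, D) + \lipw(1-m(D))$: by the Lipschitz continuity of $\hw$ (inequality~\eqref{eq:hw-feature-lyapunov:lipschitz}), $\hw(X_t, D_t\cup\{i\}) \le \hw(X_t, D_t) + \lipw/N$, while $\lipw(1-m(D_t\cup\{i\})) = \lipw(1-m(D_t)) - \lipw/N$; summing, the objective value at $D_t \cup \{i\}$ is at most that at $D_t$. Hence $D_t \cup \{i\}$ is an optimal solution strictly containing $D_t$, contradicting the maximality of $D_t$. Therefore $\slk(X_t, D_t) \le 2/N$.

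Given $\slk(X_t, D_t) \le 2/N$, i.e.\ $\rhoBudget(1-m(D_t)) - \frac12\norm{X_t(D_t) - m(D_t)\statdist}_1 \le 2/N$, rearranging and applying \eqref{eq:hw-feature-lyapunov:strength} yields
\[
    1 - m(D_t) \le \frac{1}{\rhoBudget}\norm{X_t(D_t) - m(D_t)\statdist}_1 + \frac{2}{\rhoBudget N} \le \frac{|\sspa|^{1/2}}{\rhoBudget}\hw(X_t, D_t) + \frac{2}{\rhoBudget N},
\]
which is the claimed bound. The main obstacle, and the only genuine difference from the set-expansion case, is the objective-monotonicity check in the contradiction step: one must verify that the Lipschitz constant $\lipw = 2\lamw^{1/2}$ appearing in the objective matches the Lipschitz constant in \eqref{eq:hw-feature-lyapunov:lipschitz}, so that enlarging $D_t$ by one element cannot strictly worsen the objective. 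Since $\lipw$ in \Cref{alg:set-optimization} is defined to equal exactly this constant, the argument goes through cleanly; the rest is a routine adaptation of \Cref{lem:set-expansion-large-enough}.
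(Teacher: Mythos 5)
Your proposal is correct and follows essentially the same route as the paper: the paper likewise uses the $\lipw$-Lipschitz continuity of $\hw$ to argue that the objective $\hw(X_t,D)+\lipw(1-m(D))$ is non-increasing under expansion (so maximality of $D_t$ rules out any strictly larger feasible set), then derives $\slk(X_t,D_t)\le 2/N$ via the same single-element perturbation and triangle-inequality bound, and finishes with the distance domination property \eqref{eq:hw-feature-lyapunov:strength}. The only difference is cosmetic ordering of the contradiction argument.
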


\begin{proof}{\textit{Proof of \Cref{lem:set-optimization-large-enough}}.}
    Recall that $D_t$ is chosen to be maximal among the optimal solutions of
    \begin{align}
        \tag{\ref{eq:set-opt-obj}}
        \min_{\dynset\subseteq [N]}&\quad \hw(X_t, D) + \lipw\big(1-m(\dynset)\big)  \\
        \tag{\ref{eq:set-opt-constraint}}
        \text{subject to}&\quad  \slk(X_t, D) \geq 0.
    \end{align}
    Because $\hw(X_t, D)$ is $\lipw$-Lipschitz continuous in $D$ according to \Cref{lem:hw-feature-lyaupnov}, the objective $\hw(X_t, D) + \lipw (1-m(D))$ is non-increasing as $D$ expands. Consequently, there is no subset $D'$ strictly containing $D_t$ that satisfies $\slk(X_t, D') \geq 0$, because otherwise $D'$ would be an optimal solution to \eqref{eq:set-opt-obj}-\eqref{eq:set-opt-constraint} that strictly contains $D_t$, violating the maximality of $D_t$. 
    Then we must have
    \[
        \rhoBudget(1-m(D_t)) - \frac{1}{2}\norm{X_t(D_t) - m(D_t) \statdist}_1 \leq \frac{2}{N},
    \]
    because otherwise, $m(D_t) < 1$, and we can pick any $i\notin D_t$ and show that $\slk(X_t, D_t\cup\{i\}) \geq 0$. 
    Therefore, 
    \begin{align}
        1 - m(D_t) 
        &\leq \frac{1}{\rhoBudget} \norm{X_t(D_t) - m(D_t)\statdist}_1 + \frac{2}{\rhoBudget N}. \nonumber \\
         &\leq \frac{|\sspa|^{1/2}}{\rhoBudget} \hw(X_t, D_t) + \frac{2}{\rhoBudget N} \label{eq:pf-set-opt:cov:dist-dom}
    \end{align}
    where \eqref{eq:pf-set-opt:cov:dist-dom} is by the distance domination property of $\hw(x, D)$ established in \Cref{lem:hw-feature-lyaupnov}. 
    \Halmos
\end{proof}

\subsubsection{Proof of Theorem~\ref{thm:set-optimization-policy}}\label{app:pf-set-opt:main-thm-pf} 

Next, we prove \Cref{thm:set-optimization-policy}. For simplicity, we focus on the case where the policy's induced Markov chain converges to a unique stationary distribution. 
The extension to the general case requires the same minor modifications as in the proof of \Cref{thm:focus-set-policy} and is omitted to avoid repetition.

\begin{proof}{\textit{Proof of \Cref{thm:set-optimization-policy}}.}
    Following the same steps that derive \eqref{eq:opt-gap-bdd-by-v} in \Cref{thm:focus-set-policy}, we get
    \begin{equation}\label{eq:set-opt:opt-gap-bdd-by-v}
        \ropt - \rsysn \le \rmax \left(\frac{1}{\constVnorm} + \frac{2}{\lipw}\right)\Ebig{V(X_\infty, D_{\infty})} + \frac{2\rmax\constAction}{\sqrt{N}}, 
    \end{equation}
    where 
    \[
        V(x, D) = \hw(x, D) + \lipw(1-m(D)).
    \]
    Therefore, it suffices to bound $\Ebig{V(X_\infty, D_{\infty})}$. 
    
    We fix any $t\geq0$. 
    Recall that $D_{t+1}$ is chosen to be the minimizer of $\Vhat(X_{t+1}, D)$ among sets $D$ with $\slk(X_{t+1}, D)\geq 0$. Because $D_{t+1}^\SE$ defined in \Cref{lem:set-optimization-superset-monotonic} satisfies $\slk(X_{t+1}, D_{t+1}^\SE)\geq0$, we must have 
    \begin{equation}\label{eq:proof-set-optimization-use-argmin}
        \Vhat(X_{t+1}, D_{t+1}) \leq \Vhat(X_{t+1}, D_{t+1}^\SE).
    \end{equation}
    Therefore, 
    \begin{align}
        \Vhat(X_{t+1}, D_{t+1})  
        &\leq \Vhat(X_{t+1}, D_{t+1}^\SE)   \nonumber \\
        & = \hw(X_{t+1}, D_{t+1}^\SE)  + \lipwtilde(1-m(D_{t+1}^\SE)) \nonumber \\
        & \leq \hw(X_{t+1}, D_t)  +  \lipwtilde\big\lvert m(D_{t+1}^\SE) - m(D_t)\big\rvert \\
        &\mspace{23mu} +  \lipwtilde(1-m(D_t)) + \lipwtilde(m(D_t) - m(D_{t+1}^\SE))  \nonumber \\
        \label{eq:proof-set-opt-drift-intermediate-1}
        & = \hw(X_{t+1}, D_t)  +  \lipwtilde(1-m(D_t)) + 2 \lipwtilde \big(m(D_t) -  m(D_{t+1}^\SE)\big)^+, 
    \end{align}
    where the second inequality utilizes the fact that $D_{t+1}^\SE \supseteq D_t$ or $D_{t+1}^\SE\subseteq D_t$ established in \Cref{lem:set-optimization-superset-monotonic}, and the Lipschitz continuity of $\hw(x, D)$ with respect to $D$ established in \Cref{lem:hw-feature-lyaupnov}. 

    Therefore, subtracting $\Vhat(X_t, D_t)$ and taking the expectation in \eqref{eq:proof-set-opt-drift-intermediate-1} conditioned on $X_t$, we have
    \begin{align}
        \Ebig{\Vhat(X_{t+1}, D_{t+1}) - \Vhat(X_t, D_t) \givenbig X_t} 
        \label{eq:proof-set-opt-drift-intermediate-2}
        &\leq \Ebig{\hw(X_{t+1}, \dynset_t) - \hw(X_t, \dynset_t) \givenbig X_t}  \\
        \label{eq:proof-set-opt-drift-intermediate-3}
        &\mspace{20mu} + 2 \lipwtilde \Ebig{\big(m(\dynset_t) - m(D_{t+1}^\SE)\big)^+ \givenbig X_t}.
    \end{align}
    We will bound each of the terms in \eqref{eq:proof-set-opt-drift-intermediate-2} and \eqref{eq:proof-set-opt-drift-intermediate-3} separately. 

    To bound the term in \eqref{eq:proof-set-opt-drift-intermediate-2}, notice that by \Cref{lem:set-optimization-pibs-consistency}, there exists $D_t'\subseteq D_t$ such that for any $i\in D_t'$, the policy chooses $A_t(i) = \syshat{A}_t(i)$, and $\Eplain{m(D_t\backslash D_t') \givenplain X_t, D_t} = O(1/\sqrt{N})$. Let $X_{t+1}'$ be the random element denoting the system state at time $t+1$ if $A_t(i) = \syshat{A}_t(i)$ for all $i\in [N]$. We couple $X_{t+1}$ with $X_{t+1}'$ such that $X_{t+1}(D_t') = X'_{t+1}(D_t')$, then we have $\hw(X_{t+1}, D_t') = \hw(X_{t+1}', D_t')$. Consequently, 
    \begin{align}
        \nonumber
        &\Ebig{\hw(X_{t+1}, D_t) \givenbig X_t} \\
        \nonumber
        &\qquad= \Ebig{\hw(X_{t+1}', D_t) \givenbig X_t} +  \Ebig{\hw(X_{t+1}, D_t) - \hw(X_{t+1}', D_t) \givenbig X_t}  \\
        \label{eq:pf-set-opt:main:temp-1}
        &\qquad\leq \rhoContr \Ebig{\hw(X_t, D_t)\givenbig X_t} + \frac{\constHnoise}{\sqrt{N}} + \Ebig{\hw(X_{t+1}, D_t) - \hw(X_{t+1}', D_t) \givenbig X_t}\\
        \label{eq:pf-set-opt:main:temp-2}
        &\qquad\leq \rhoContr \Ebig{\hw(X_t, D_t)\givenbig X_t} + \frac{\constHnoise}{\sqrt{N}}  + 2\lipwtilde \Ebig{ m(D_t\backslash D_t') \givenbig X_t} \\
        \label{eq:pf-set-opt:main:temp-3} 
        &\qquad\leq \rhoContr \Ebig{\hw(X_t, D_t)\givenbig X_t} + \frac{\constHnoise + 2\lipwtilde \constAction}{\sqrt{N}}, 
    \end{align}
    where $\rhoContr = 1 - 1/(2\lamw)$, $\constHnoise = 2\lamw^{1/2}$, $\constAction = 2$; the inequality in \eqref{eq:pf-set-opt:main:temp-1} follows from the drift condition of $\hw(x, D)$ established in \Cref{lem:hw-feature-lyaupnov}; to get the inequality in \eqref{eq:pf-set-opt:main:temp-2}, we use the argument that
    \begin{align*}
        \hw(X_{t+1}, D_t) - \hw(X_{t+1}', D_t) &= \hw(X_{t+1}, D_t) - \hw(X_{t+1}, D_t') +\hw(X_{t+1}', D_t') - \hw(X_{t+1}', D_t)  \\
        &\leq 2\lipwtilde m(D_t\backslash D_t');
    \end{align*}
    the inequality in \eqref{eq:pf-set-opt:main:temp-3} follows from the majority conformity of the set-optimization policy proved in \Cref{lem:set-optimization-pibs-consistency}. Therefore, 
    \[
        \Ebig{\hw(X_{t+1}, D_t) \givenbig X_t} - \hw(X_t, D) \leq - (1-\rhoContr) \Ebig{\hw(x, D_t) \givenbig X_t} + \frac{\constHnoise + 2\lipwtilde \constAction}{\sqrt{N}}. 
    \] 
    To bound the term $2 \lipwtilde \Ebig{\big(m(\dynset_t) - m(D_{t+1}^\SE)\big)^+ \givenbig X_t=x}$ in \eqref{eq:proof-set-opt-drift-intermediate-3}, we apply \Cref{lem:set-optimization-superset-monotonic} to get 
    \[
        2 \lipwtilde \Ebig{\big(m(\dynset_t) - m(D_{t+1}^\SE)\big)^+\givenbig X_t} \leq \frac{2\lipw \constMono}{\sqrt{N}},
    \] 
    where $\constMono = \big(2 + (\rhoBudget + 2)|\sspa|\big) / \rhoBudget$. 
    Plugging the above bounds into \eqref{eq:proof-set-opt-drift-intermediate-2} and \eqref{eq:proof-set-opt-drift-intermediate-3}, we get
    \begin{align}
        \nonumber
        &\Ebig{\Vhat(X_{t+1}, \dynset_{t+1}) - \Vhat(X_t, D_t) \givenbig X_t} \\
        \label{eq:thm2-proof:drift-intermediate-3}
        &\qquad\leq  - (1-\rhoContr) \Ebig{\hw(X_t, D_t) \givenbig X_t} + \frac{\constHnoise +  2\lipwtilde(\constAction + \constMono)}{\sqrt{N}}. 
    \end{align}
    
    Note that by \Cref{lem:set-optimization-large-enough},     
    \begin{equation}
        \label{eq:thm2-proof:drift-intermediate-3.5}
       \Vhat(X_t, D_t) \leq \big(1+ \lipwtilde\lipexpand \big) \hw(X_t, D_t)  + \frac{\lipwtilde \constEpnoise}{\sqrt{N}},
    \end{equation}
    where $\lipexpand = |\sspa|^{1/2} / \rhoBudget$, $\constEpnoise = 3 / \rhoBudget$. 
    Combining \eqref{eq:thm2-proof:drift-intermediate-3.5} with \eqref{eq:thm2-proof:drift-intermediate-3}, we have proved that for any $t\geq0$, 
    \begin{equation}\label{eq:thm2-proof:drift-condition-expression}
        \Ebig{\Vhat(X_{t+1}, D_{t+1}) \givenbig X_t}  \leq \rhoFinal \Ebig{\Vhat(X_t, D_t) \givenbig X_t} + \frac{K_1}{\sqrt{N}},
    \end{equation}
    where $\rhoFinal = 1 - (1-\rhoContr)/(1+ \lipwtilde \lipexpand)$ and $K_1 = \constHnoise + 2\lipwtilde (\constAction + \constMono) + (1-\rhoContr)\lipwtilde\constEpnoise / (1+ \lipwtilde \lipexpand)$.

    Now, with \eqref{eq:thm2-proof:drift-condition-expression}, $\E{V(X_\infty, D_\infty)}$ can be bounded as follows:  
    We take the expectation on both sides of \eqref{eq:thm2-proof:drift-condition-expression} conditioned on $X_t$, and let $t\to\infty$ to get
    \[
        \E{V(X_\infty, D_\infty)} \leq \rhoFinal \E{V(X_\infty, D_\infty)} + \frac{K_1}{\sqrt{N}},
    \]
    which implies that 
    \begin{equation}\label{eq:set-opt:v-bd}
        \E{V(X_\infty, D_\infty)} \leq \frac{K_1}{(1-\rhoFinal)\sqrt{N}}.
    \end{equation}
    We combine \eqref{eq:set-opt:v-bd} with the bound of $\ropt - \rsysn$ in terms of $V(X_\infty, D_\infty)$ in \eqref{eq:set-opt:opt-gap-bdd-by-v}, and substitute $\rhoBudget$, $\lipwtilde$, $\constHnoise$, $\constAction$, $\constMono$, $\lipexpand$, and $\constEpnoise$ with their values. This leads to the final bound
    \begin{equation}\label{eq:set-opt-bound-in-proof}
         \ropt - \rsysn \leq  \frac{256\rmax \lamw^2 |\sspa|^2}{\rhoBudget^2\sqrt{N}}. 
    \end{equation}
    We omit the detailed calculations to obtain \eqref{eq:set-opt-bound-in-proof}. 
    \Halmos
\end{proof}

\section{Extending results to restless bandits with inequality budget constraint}
\label{app:inequality-constraint}
In this appendix, we consider the restless bandit problem with an inequality budget constraint, i.e., 
\begin{subequations}
\begin{align}
    \tag{\ref{eq:objective-inequality}}
    \underset{\text{policy } \pi}{\text{maximize}} & \quad \rliminf  \\
    \tag{\ref{eq:hard-budget-constraint-inequality}}
    \text{subject to}  
    &\quad  \sumN A_t^\pi(i) \leq \alpha N,\quad \forall t\ge 0. 
\end{align} 
\end{subequations}
As stated in \Cref{sec:inequality}, our policies can be adapted to this setting and have the same $O(1/\sqrt{N})$ optimality gap bounds. In the following, we describe the necessary modifications to the policies and the proofs.

\subsection{Policy modifications}
We modify the set-expansion policy and the ID policy for the inequality-constraint setting. As in the equality-constraint setting, we first solve the LP relaxation of problem \eqref{eq:N-arm-formulation-inequality}, as given in \eqref{eq:lp-single-inequality} below. 
This LP relaxation is almost identical to the LP relaxation in the equality-constraint setting \eqref{eq:lp-single}, except that the budget constraint \eqref{eq:expect-budget-constraint-inequality} is now an inequality. 
\begin{subequations}
\label{eq:lp-single-inequality}
\begin{align}
    \underset{\{y(s, a)\}_{s\in\sspa,a\in\aspa}}{\text{maximize}} \mspace{12mu}&\sum_{s\in\sspa,a\in\aspa} r(s, a) y(s, a) \\
    \text{subject to}\mspace{21mu}
    &\mspace{15mu}\sum_{s\in\sspa} \quad y(s, 1) \leq \alpha, \label{eq:expect-budget-constraint-inequality}\\
    & \sum_{s'\in\sspa, a\in\aspa} y(s', a) P(s', a, s) = \sum_{a\in\aspa} y(s,a), \quad \forall s\in\sspa, \\
    &\mspace{3mu}\sum_{s'\in\sspa, a'\in\aspa} y(s',a') = 1,   
    \quad
     y(s,a) \geq 0, \;\; \forall s\in\sspa, a\in\aspa. 
\end{align}
\end{subequations}
Given the optimal solution $(y^*(s,a))_{s\in\sspa,a\in\aspa}$ of \eqref{eq:lp-single-inequality}, the optimal single-armed policy $\pibs$ is defined using \eqref{eq:single-arm-opt-def}. We assume that $\pibs$ induces an aperiodic unichain in the single-armed MDP.

The set-expansion policy for the inequality-constraint setting is defined in \Cref{alg:set-expansion-inequality}, where recall that $\slk(x, D)$ is a function of system state $x$ and subset $D \subseteq [N]$, given by  
\begin{equation*}
    \slk(x, \dynset) = \rhoBudget (1-m(\dynset)) -  \frac{1}{2}\norm{x(D) - m(D) \statdist}_1,
\end{equation*}
Compared with the equality-constraint setting, the only change of the set-expansion policy here is in the action rectification step: As indicated in Lines~\ref{alg:set-expansion-action-rect-starts-inequality}--\ref{alg:set-expansion-action-rect-ends-inequality} of \Cref{alg:set-expansion-inequality}, here we can simply take $A_t(i) = \syshat{A}_t(i)$ for all $i\in D_t$ as long as $\sum_{i\in D_t} \syshat{A}_t(i) \leq \alpha N$.

\begin{algorithm}
\caption{Set-expansion policy (inequality constraint)}
\label{alg:set-expansion-inequality}
\begin{flushleft}
\hspace*{\algorithmicindent} \textbf{Input}: number of arms $N$, budget $\alpha N$, the optimal single-armed policy $\pibs$, \\
\hspace*{\algorithmicindent} \hspace{0.45in} initial system state $X_0$, initial state vector $\veS_0$, initial focus set $D_{-1}=\emptyset$ 
\end{flushleft}
\begin{algorithmic}[1]
    \For{$t=0,1,2,\dots$} 
        \If{$\slk(X_t, \dynsetr_{t-1}) > 0$}
        \Comment{\ul{\emph{Set update}}}
            \State Let $D_t$ be any set with the largest $m(D_t)$ such that $D_t \supseteq D_{t-1}$ and  $\slk(X_t, D_t) \geq 0$
        \Else
            \State Let $D_t$ be any set with the largest $m(D_t)$ such that $D_t \subseteq D_{t-1}$ and $\slk(X_t, D_t) \geq 0$
        \EndIf
        
        \noindent\algrule
        \State Independently sample $\syshat{A}_t(i) \sim \pibs(\cdot|S_t(i))$ for $i\in[N]$ 
        \Comment{\ul{\emph{Action sampling}}}
        \If{$\sum_{i\in \dynsetr_t} \syshat{A}_t(i) \geq \alpha N$} \Comment{\ul{\emph{Action rectification}}}
        \label{alg:set-expansion-action-rect-starts-inequality}
            \State Select $\alpha N$ arms in $\dynsetr_t$ with $\syshat{A}_t(i)=1$ uniformly at random, and set $A_t(i) = 1$
            \State For the rest of $i\in [N]$, set $A_t(i)=0$
        \Else
            \State Set $A_t(i) = \syshat{A}_t(i)$ for $i\in D_t$
            \State Set $A_t(i) = \syshat{A}_t(i)$ for as many $i\notin D_t$ as possible; break ties uniformly at random
            \label{alg:set-expansion-action-rect-ends-inequality}
        \EndIf
        \State Apply $A_t(i)$ and observe $S_{t+1}(i)$ for each arm $i\in[N]$ 
    \EndFor
\end{algorithmic}
\end{algorithm}

The ID policy for the inequality-constraint setting is defined in \Cref{alg:id-inequality}, where the only change is again in the action rectification step. As indicated in Lines~\ref{alg:id-inequality:difference-start}--\ref{alg:id-inequality:difference-end} of \Cref{alg:id-inequality}, here we let $\Ngood = N$ and $A_t(i) = \syshat{A}_t(i)$ for all $i\in[N]$ if $\sumN \syshat{A}_t(i) \leq \alpha N$.

\begin{algorithm}
\caption{ID policy (inequality constraint)}
\label{alg:id-inequality}
\begin{flushleft}
\hspace*{\algorithmicindent} \textbf{Input}: number of arms $N$, budget $\alpha N$, the optimal single-armed policy $\pibs$,\\
\hspace*{\algorithmicindent} \hspace{0.37in}
initial system state $X_0$, initial state vector $\veS_0$
\end{flushleft}
\begin{algorithmic}[1]
    \For{$t=0,1,2,\dots$}
        \State Independently sample $\syshat{A}_t(i)\sim \pibs(\cdot | S_t(i))$ for $i\in[N]$
        \Comment{\ul{\emph{Action sampling}}}
        \If{$\sumN \syshat{A}_t(i) \geq \alpha N$}
        \Comment{\ul{\emph{Action rectification}}}
            \State $\Ngood \gets \max \{n \leq N \colon \sum_{i=1}^{n} \syshat{A}_t(i) \leq \alpha N\}$
            \State $A_t(i) \gets \syshat{A}_t(i)$ for $i \leq \Ngood$, $A_t(i) \gets 0$ for $i > \Ngood$
        \Else \label{alg:id-inequality:difference-start}
            \State $\Ngood \gets N$
            \State $A_t(i) \gets \syshat{A}_t(i)$ for $i \in[N]$ \label{alg:id-inequality:difference-end}
        \EndIf
        \State  Apply $A_t(i)$ and observe $S_{t+1}(i)$ for each arm $i\in[N]$
    \EndFor
\end{algorithmic}
\end{algorithm}

\subsection{Proof modifications}
Next, we restate and prove \Cref{thm:set-expansion-inequality} and \Cref{thm:id-inequality}, which show that the optimality gaps for the set-expansion policy and the ID policy are $O(1/\sqrt{N})$. 
Since most components of the proofs are identical to those for the equality-constraint setting, below we outline the proof steps and focus on the necessary modifications, omitting the parts of the proofs that remain unchanged.

\seineq*

\idineq*

Now we describe the proofs of \Cref{thm:set-expansion-inequality} and \Cref{thm:id-inequality}. First, it is not hard to check that the meta-theorem \Cref{thm:focus-set-policy} and its proof still hold in the inequality-constraint setting. 
Intuitively, the proof of \Cref{thm:focus-set-policy} does not directly utilize the specific form of the budget constraint. Instead, it is based on the properties of the subset Lyapunov functions and Conditions~\ref{def:focus-set:pibs-consistency}--\ref{def:focus-set:large-enough} that characterize the focus set.

Now we apply \Cref{thm:focus-set-policy} to prove \Cref{thm:set-expansion-inequality} for the set-expansion policy. We consider the same subset Lyapunov function as in the equality-constraint setting, recalled below: 
\begin{equation*}
    \hw(x, D) = \norm{x(D) - m(D)\statdist}_W. 
\end{equation*}
The fact that $\{\hw(x, D)\}_{D\subseteq[N]}$ satisfies the definition of subset Lyapunov functions does not rely on the budget constraint of the RB problem, as one can see by inspecting the proof of \Cref{lem:hw-feature-lyaupnov} in Appendix~\ref{app:proof-feature-lyapunov-lemmas}.

Next, it remains to verify the Conditions~\ref{def:focus-set:pibs-consistency}--\ref{def:focus-set:large-enough} for the set-expansion policy. We restate these conditions below for ease of reference.

\begin{customcond}{1}[Majority conformity]
    Let $\constAction>0$ be a constant.
    For any $t\geq0$, there exists $D_t'\subseteq D_t$ such that for any $i\in D_t'$, we have $A_t(i) = \syshat{A}_t(i)$ and 
    \begin{equation}
        \Ebig{m(D_t \backslash D_t') \givenbig X_t, D_t} \leq \frac{\constAction}{\sqrt{N}} \quad a.s.
    \end{equation}
\end{customcond}

\begin{customcond}{2}[Almost non-shrinking]
    For any $t\geq0$, either $D_{t+1} \supseteq D_t$ or $D_{t+1} \subseteq D_t$. 
    Moreover, there exists a constant $\constMono > 0$ such that for any $t\geq0$, 
    \begin{equation}
        \Ebig{\big(m(D_t) - m(D_{t+1})\big)^+ \givenbig X_t, D_t} \leq  \frac{\constMono}{\sqrt{N}} \quad a.s.
    \end{equation}
\end{customcond}

\begin{customcond}{3}[Sufficient coverage] 
There exists a class of subset Lyapunov functions  $\{h(\cdot,\dynset) \colon \dynset \in \mathcal{D} \}$ 
and constants $\lipexpand > 0, \constEpnoise>0$ such that for any $t\geq0$, 
        \begin{equation}
            1 - m(D_t) \leq \lipexpand h(X_t, D_t) + \frac{\constEpnoise}{\sqrt{N}} \quad a.s.
        \end{equation}
\end{customcond}

Inspecting the proofs in \Cref{sec:proof-set-expansion-policy}, we can see that only the proof of \Cref{def:focus-set:pibs-consistency} could be affected by the form of the budget constraint. 
Intuitively, only \Cref{def:focus-set:pibs-consistency} is directly associated with the feasibility of actions, whereas the other two conditions are more closely related to the dynamics of the focus set. 

In \Cref{lem:set-expansion-pibs-consistency-ineq} stated below, we show \Cref{def:focus-set:pibs-consistency} for the set-expansion policy with the inequality constraint. The proof of this lemma is modified from the proof of \Cref{lem:set-expansion-pibs-consistency}.

\begin{lemma}[Set-expansion policy with inequality constraint satisfies \Cref{def:focus-set:pibs-consistency}]\label{lem:set-expansion-pibs-consistency-ineq}
    Consider the set-expansion policy for restless bandits with the inequality constraint (\Cref{alg:set-expansion-inequality}). For any $t\geq0$, there exists a subset $D_t' \subseteq D_t$ such that for any $i\in D_t'$, 
    $A_t(i) = \syshat{A}_t(i)$, and 
    \begin{equation}
        \Ebig{m(D_t \backslash D_t') \givenbig X_t, D_t} \leq \frac{1}{\sqrt{N}} + \frac{1}{N} \quad a.s.
    \end{equation}
\end{lemma}

\begin{proof}{\textit{Proof of \Cref{lem:set-expansion-pibs-consistency}.}}
    Recall that under the set-expansion policy, the actions in the focus set $(A_t(i))_{i\in D_t}$ are chosen according to the ideal actions $(\syshat{A}_t(i))_{i\in D_t}$ based on the following rules: 
    \begin{itemize}
        \item When $\sum_{i\in D_t} \syshat{A}_t(i) > \alpha N$, the set-expansion policy chooses $\alpha N$ arms in $D_t$ with $\syshat{A}_t(i) = 1$ and sets $A_t(i) = \syshat{A}_t(i)$. 
        \item Otherwise, the set-expansion policy sets $A_t(i) = \syshat{A}_t(i)$ for all $i\in D_t$. 
    \end{itemize}
    Let $D_t' = \{i \in D_t \colon A_t(i) = \syshat{A}_t(i)\}$. Then we have
    \begin{equation}
        |D_t\setminus D_t'| = \Big(\sum_{i\in D_t} \syshat{A}_t(i) - \alpha N\Big)^+. 
    \end{equation}
    Therefore, if we can show that for each $t\geq0$, 
    \begin{equation}\label{eq:setdiff-ineq}
        \EBig{\Big(\sum_{i\in D_t} \syshat{A}_t(i) - \alpha N\Big)^+ \givenBig X_t, D_t} \leq \sqrt{N}, 
    \end{equation}
    we will have $\E{m(D_t\setminus D_t') \givenplain X_t, D_t} \leq 1/\sqrt{N}$, which will complete the proof. 
    The remainder of the proof is dedicated to proving \eqref{eq:setdiff-ineq}.

    Observe that given $X_t$ and $D_t$, the $\syshat{A}_t(i)$'s are independent for $i\in D_t$. Consider the \emph{scaled expected budget requirement} for arms in a set $D$, defined as
    \begin{equation}
        C_\pibs(x, D) \triangleq \frac{1}{N}\EBig{\sum_{i\in D}\syshat{A}_t(i) \givenBig X_t=x} = \sum_{s\in\sspa} x(D, s) \pibs(1|s),
    \end{equation}
    Then $\Ebig{\sum_{i\in D_t} \syshat{A}_t(i) \givenbig X_t, D_t} = N C_\pibs(X_t, D_t)$. By the Cauchy-Schwarz inequality, 
    \begin{align}
        \EBig{\absBig{\sum_{i\in D_t} \syshat{A}_t(i) - N C_\pibs(X_t, D_t) } \givenBig X_t, D_t} 
        &\leq \EBig{\Big(\sum_{i\in D_t} \syshat{A}_t(i) - N C_\pibs(X_t, D_t)\Big)^2 \givenBig X_t, D_t}^{\frac{1}{2}} \nonumber\\
        &= \Big(\sum_{i\in D_t} \Varbig{\syshat{A}_t(i) \givenbig X_t, D_t}\Big)^{\frac{1}{2}} \nonumber\\
        &\leq \sqrt{N}. \label{eq:budget-dev-ineq}
    \end{align}

    We next prove \eqref{eq:setdiff-ineq} utilizing \eqref{eq:budget-dev-ineq}.
    Recall that $D_t$ satisfies $\slk(X_t, D_t) \geq 0$, i.e., $\norm{X_t(D_t) - m(D_t) \statdist}_1 / 2 \leq \rhoBudget(1-m(D_t))$.
    Also, note that 
    \begin{equation}
        \sum_{s\in\sspa} \statdist(s) \pibs(1|s) = \sum_{s\in\sspa} y^*(s,1) \leq \alpha.
    \end{equation}
    Then $\big(C_\pibs(X_t, D_t) - \alpha m(D_t)\big)^+$ can be bounded as 
    \begin{align}
         \Big(C_\pibs(X_t, D_t) - \alpha m(D_t)\Big)^+
         \label{eq:pf-set-exp-ineq:pibs-consist:c-deviation-1}
         &\leq  \Big( \sum_{s\in\sspa} \big(X_t(D_t, s) - m(D_t) \statdist(s)\big) \pibs(1|s) \Big)^+ \\ 
         \label{eq:pf-set-exp-ineq:pibs-consist:c-deviation-1-5}
         &\leq  \Big( \sum_{s\in\sspa}\Big( X_t(D_t, s) - m(D_t)\statdist(s)\Big) \Big(\pibs(1|s) - \frac{1}{2}\Big) \Big)^+\\
         \label{eq:pf-set-exp-ineq:pibs-consist:c-deviation-1-6}
         &\leq \frac{1}{2} \norm{X_t(D_t) - m(D_t) \statdist}_1 \\
         \label{eq:pf-set-exp-ineq:pibs-consist:c-deviation-2}         
         &\leq \rhoBudget(1-m(D_t)),
    \end{align}
    where \eqref{eq:pf-set-exp-ineq:pibs-consist:c-deviation-1-5} is because $\sum_{s\in\sspa} \big(X_t(D_t, s) - m(D_t)\statdist(s)\big) = 0$, and \eqref{eq:pf-set-exp-ineq:pibs-consist:c-deviation-1-6} is because $\abs{\pibs(1|s) - 1/2} \leq 1/2$ for all $s\in\sspa$.
    Thus,
    \begin{equation*}
        C_\pibs(X_t, D_t) \le \alpha m(D_t) + \rhoBudget\big(1-m(D_t)\big)\le \alpha m(D_t) + \alpha\big(1-m(D_t)\big) = \alpha. 
    \end{equation*}
    Therefore, we have
    \begin{align}
        \nonumber
        &\EBig{\Big(\sum_{i\in D_t} \syshat{A}_t(i) - \alpha N\Big)^+ \givenBig X_t, D_t}\\
        \nonumber
        &\qquad= \EBig{\Big(\sum_{i\in D_t} \syshat{A}_t(i) - \alpha N\Big)^+  \givenBig X_t, D_t}\\
        \label{eq:pf-set-exp-ineq:pibs-consist:c-deviation-3}
        &\qquad\leq  \EBig{\absBig{\sum_{i\in D_t} \syshat{A}_t(i) - N C_\pibs(X_t, D_t) } \,\givenBig\, X_t, D_t}  \\
        \nonumber
        &\qquad\leq \sqrt{N}. 
    \end{align}
    This proves \eqref{eq:setdiff-ineq}, concluding the proof of Lemma~\ref{lem:set-expansion-pibs-consistency-ineq}. \Halmos
\end{proof}

Now we bound the optimality gap for the ID policy in the inequality-constraint setting. As in the case of set-expansion policy, we only need to establish \Cref{def:focus-set:pibs-consistency}.
The proof of  \Cref{def:focus-set:pibs-consistency} follows almost identical argument as \Cref{lem:id-pibs-consistency}.

\begin{lemma}[ID policy with the inequality constraint satisfies Condition~\ref{def:focus-set:pibs-consistency}]\label{lem:id-pibs-consistency-ineq}
    Consider the ID policy for restless bandits with inequality constraint (\Cref{alg:id-inequality}). 
    For any $t\ge 0$. Let $D_t' = [\min(\Ngood, N\md(X_t))]$. 
    Then
    \begin{equation}
        \Ebig{m(D_t \backslash D_t') \givenbig X_t, D_t} = \frac{1}{N} \Ebig{(N\md(X_t) - \Ngood)^+ \givenbig X_t} \leq  \frac{2}{\rhoBudget\sqrt{N}} + \frac{1}{N} \quad a.s.
    \end{equation} 
\end{lemma}

\begin{proof}{\textit{Proof of \Cref{lem:id-pibs-consistency-ineq}}.}
    In this proof, the variables $n$ and $n'$ are by default non-negative integers. 
    We fix a time step $t\ge 0$, and condition on $X_t = x$ for a fixed system state realization $x$. 
    We first prove a lower bound for $\Ngood$ in terms of the \emph{budget requirements} of the arms in $[n]$, $\sum_{i\in[n]} \syshat{A}_t(i)$, for each $n \leq N$. 
    Observe that the arms in $[n]$ can follow their ideal actions if and only if 
    \begin{equation}
        \label{eq:action-1-conformity-ineq}
        \sum_{i\in[n]} \syshat{A}_t(i) \le \alpha N. 
    \end{equation}
    Consequently, a sufficient condition for the arms in $[n]$ to follow their ideal actions is
    \begin{equation}
    \Big(\sum_{i\in[n]} \syshat{A}_t(i) - \alpha n\Big)^+ \leq \rhoBudget\big(N - n\big),\label{eq:suff-1-ineq}
    \end{equation}
    where recall that $\rhoBudget=\min\{\alpha, 1-\alpha\}$.
    A further sufficient condition for \eqref{eq:suff-1-ineq} is
    \begin{equation}
        \max_{n'\le n}\Big(\sum_{i\in[n']} \syshat{A}_t(i) - \alpha n'\Big)^+ \leq \rhoBudget\big(N - n\big).\label{eq:suff-2-ineq}
    \end{equation}
    Therefore, by the definition of $\Ngood$,
    \begin{align}
        \Ngood &\geq \max \bigg\{n \le N \colon \max_{n'\le n}\Big(\sum_{i\in[n']} \syshat{A}_t(i) - \alpha n'\Big)^+ \leq \rhoBudget\big(N - n\big) \bigg\}\nonumber\\
        &= N \max \bigg\{m\in[0,1]_N \colon \max_{\substack{m'\in[0,1]_N\\m'\le m}} \Big(\frac{1}{N}\sum_{i\in[Nm']} \syshat{A}_t(i) - \alpha m'\Big)^+ \leq \rhoBudget\big(1 - m\big) \bigg\}.\label{eq:prove-ngood-bound-intermediate-0-ineq}
    \end{align}

    Our next step is to bound the quantity $\max_{m'\in[0,1]_N,m'\le m} \big(\frac{1}{N}\sum_{i\in[Nm']} \syshat{A}_t(i) - \alpha m'\big)^+$. 
    To do this, we relate it to the subset Lyapunov function $\hid(x,m)$ and the scaled expected budget requirement, $C_\pibs(x, D)$, which is defined as: 
    \begin{equation}
        C_\pibs(x, D) \triangleq \frac{1}{N}\EBig{\sum_{i\in D}\syshat{A}_t(i)\Big|X_t=x}  = x(D) \costvec^\top,
    \end{equation}
    where $\costvec$ is the row vector  $(\pibs(1|s))_{s\in\sspa}$.
    We first decompose the target quantity using the triangle inequality: for any $m\in[0,1]_N$, we have
    \begin{align}
        &\max_{\substack{m'\in[0,1]_N\\m'\le m}} \Big(\frac{1}{N}\sum_{i\in[Nm']} \syshat{A}_t(i) - \alpha m' \Big)^+ \nonumber\\
        &\qquad\leq \max_{\substack{m'\in[0,1]_N\\m'\le m}} \biggl(\Big(C_\pibs(x, [Nm']) - \alpha m' \Big)^+ + \absBig{\frac{1}{N}\sum_{i\in[Nm']} \syshat{A}_t(i) - C_\pibs(x, [Nm'])} \biggr)\nonumber\\
        &\qquad\leq \max_{\substack{m'\in[0,1]_N\\m'\le m}}\Big(C_\pibs(x, [Nm']) - \alpha m'\Big)^+ + \max_{m'\in[0,1]_N} \absBig{\frac{1}{N}\sum_{i\in[Nm']} \syshat{A}_t(i) - C_\pibs(x, [Nm'])},\label{eq:curve-decomp-ineq}
    \end{align}
    The second term on the right-hand side is a noise term that we will bound later. We now focus on bounding the first term, which captures the deviation of the instantaneous expected budget requirement from its steady-state expectation. Using the fact that $\alpha \geq  \statdist \costvec^\top$, we have:
    \begin{align}
        \big(C_\pibs(x, \setbelow{\dynset}{m'}) - \alpha m' \big)^+
        &\leq \big( (x(\setbelow{\dynset}{m'}) - m' \statdist) \costvec^\top \big)^+ \nonumber \\
        &\leq \abs{(x(\setbelow{\dynset}{m'}) - m' \statdist) \costvec^\top} \nonumber \\
        &= \abs{ (x(\setbelow{\dynset}{m'}) - m'\statdist) \wmat^{1/2} {\wmat}^{-1/2} \costvec^\top } \nonumber \\
        &\leq \norm{x(\setbelow{\dynset}{m'}) - m'\statdist}_\wmat \norm{\costvec}_{\wmat^{-1}}\nonumber\\
        &=\ratiocw\hw(x,m').
    \end{align}
    Consequently, we have
    \begin{equation}
        \max_{\substack{m'\in[0,1]_N\\m'\le m}}\Big(C_\pibs(x, [Nm]) - \alpha m\Big)^+
        \le \ratiocw\max_{\substack{m'\in[0,1]_N\\m'\le m}}\hw(x,m')
        =\ratiocw\hid(x,m).\nonumber
    \end{equation}
    Using the monotonicity of $\hid(x,m)$ in $m$ and the definition of $\md(x)$, we have that for any $m \le \md(x)$, 
    \begin{equation}
        \ratiocw\hid(x,m) \leq \ratiocw\hid(x,\md(x)) \leq \rhoBudget(1-\md(x)).
    \end{equation}
    Substituting these bounds back into the decomposition in \eqref{eq:curve-decomp-ineq} implies the following bound for any $m\in\gridset$ such that $m\leq \md(x)$: 
    \begin{equation}
        \label{eq:budget-diff-upper-ineq}
        \max_{\substack{m'\in[0,1]_N\\m'\le m}} \Big(\frac{1}{N}\sum_{i\in[Nm']} \syshat{A}_t(i) - \alpha m'\Big)^+
        \le \rhoBudget(1-\md(x))+\max_{m'\in[0,1]_N} \absBig{\frac{1}{N}\sum_{i\in[Nm']} \syshat{A}_t(i) - C_\pibs(x, [Nm'])}. 
    \end{equation}

    To lower bound $\Ngood$ utilizing \eqref{eq:prove-ngood-bound-intermediate-0-ineq} and \eqref{eq:budget-diff-upper-ineq}, we want to find $\overline{m}\in\gridset$ such that $\overline{m}\leq \md(x)$ and the right-hand side of \eqref{eq:budget-diff-upper-ineq} is at most $\rhoBudget(1-\overline{m})$. For any such $\overline{m}$, \eqref{eq:prove-ngood-bound-intermediate-0-ineq} implies that 
    \[
        \overline{m} \in \bigg\{m\in[0,1]_N \colon \max_{\substack{m'\in[0,1]_N\\m'\le m}} \absBig{\frac{1}{N}\sum_{i\in[Nm']} \syshat{A}_t(i) - \alpha m'} \leq \rhoBudget\big(1 - m\big) \bigg\},
    \]
    and then \eqref{eq:budget-diff-upper-ineq} implies that $\Ngood \geq N\, \overline{m}$. 
    Taking the largest such $\overline{m}$ yields
    \begin{align}
        \Ngood &\geq \min\bigg\{N\md(x), \floorBig{N - \frac{N}{\rhoBudget}\Big(\rhoBudget(1-\md(x)) +\max_{m'\in[0,1]_N} \absBig{\frac{1}{N}\sum_{i\in[Nm']} \syshat{A}_t(i) - C_\pibs(x, [Nm'])}\Big)}\bigg\} \nonumber \\
        &\geq  \min\bigg\{N\md(x), N - \frac{N}{\rhoBudget}\Big(\rhoBudget(1-\md(x)) +\max_{m'\in[0,1]_N} \absBig{\frac{1}{N}\sum_{i\in[Nm']} \syshat{A}_t(i) - C_\pibs(x, [Nm'])}\Big)-1\bigg\}  \nonumber \\ 
        &=  \min\bigg\{N\md(x), N\md(x) - 1 - \frac{1}{\rhoBudget} \max_{m'\in[0,1]_N} \absBig{\sum_{i\in[Nm']} \syshat{A}_t(i) - N C_\pibs(x, [Nm'])}\bigg\} \nonumber \\ 
        &= N\md(x) -1 - \frac{1}{\rhoBudget} \max_{n'\leq N} \absBig{\sum_{i\in[n']} \syshat{A}_t(i) - N C_\pibs(x, [n'])}. \nonumber 
    \end{align}
    Rearranging the terms and taking the expectation, we get 
    \begin{equation}\label{eq:prove-ngood-bound-intermediate-3-inequality}
        \EBig{\big(N\md(x) - \Ngood\big)^+ \givenBig X_t = x} \leq 1 + \frac{1}{\rhoBudget} \EBig{\max_{n'\leq N} \absBig{\sum_{i\in[n']} \syshat{A}_t(i) - N C_\pibs(x, [n'])} \givenBig X_t=x}. 
    \end{equation}

    With \eqref{eq:prove-ngood-bound-intermediate-3-inequality}, it remains to prove
    \begin{equation}\label{eq:budget-req-maximal-deviation-inequality}
        \EBig{\max_{n \leq N} \absBig{\sum_{i\in[n]} \syshat{A}_t(i) - N C_\pibs(x, [n])} \givenBig X_t = x} \leq 2\sqrt{N}.
    \end{equation}
    Our proof uses Doob's maximum inequality. First, we simplify the expression by defining the centered random variables $\zmNoise(i) = \syshat{A}_t(i) - \Ebig{\syshat{A}_t(i) \givenbig X_t = x}$. This allows us to rewrite the left-hand side of \eqref{eq:budget-req-maximal-deviation-inequality} as:
    \begin{equation}\label{eq:budget-req-deviation:discretize-inequality}
        \EBig{\max_{n \leq N} \absBig{\sum_{i\in[n]} \syshat{A}_t(i) - N C_\pibs(x, [n])} \givenBig X_t = x} = \EBig{\max_{n \leq N} \absBig{\sum_{i\in[n]} \zmNoise(i)} \givenBig X_t = x}. 
    \end{equation}
    The sequence of partial sums $(\sum_{i\in[n]} \zmNoise(i))_{n\in[N]}$ forms a martingale when conditioned on $X_t=x$. This holds because the ideal actions $\syshat{A}_t(i)$ are sampled independently, making the $\zmNoise(i)$ terms independent random variables, each with a conditional expectation of zero. 
    Applying Cauchy-Schwarz and Doob's $L_2$ maximum inequality \citep[see, e.g.,][Theorem 4.4.4]{Dur_19_prob_book}, we bound this term as:
    \begin{align*}
        \EBig{\max_{n \leq N} \absBig{\sum_{i\in[n]} \zmNoise(i)} \givenBig X_t = x}
        &\leq \EBig{\max_{n \leq N} \Big|\sum_{i\in[n]} \zmNoise(i)\Big|^2 \givenBig X_t = x}^{1/2}\\
        &\le \biggl(4\EBig{\Big|\sum_{i\in[N]} \zmNoise(i)\Big|^2  \givenBig X_t = x}\biggr)^{1/2}\\
        &=\biggl(4\sum_{i\in[N]} \EBig{\zmNoise(i)^2 \givenBig X_t = x}\biggr)^{1/2}\\
        &\le 2\sqrt{N}.
    \end{align*}
    Here, the first inequality follows from Cauchy-Schwarz, the second applies Doob's maximal $L_2$ inequality, and the equality follows from the independence of the $\zmNoise(i)$ terms. The final inequality holds because $\absbig{\zmNoise(i)} = \absbig{ \syshat{A}_t(i) - \Ebig{\syshat{A}_t(i) \givenbig X_t = x}} \leq 1$. This completes the proof.  \Halmos
\end{proof}

\section{Experimental details}
\label{app:exp-details}
In this appendix, we provide details for the experiments in \Cref{sec:experiments}. 
In Appendix~\ref{app:exp-details:set-expansion}, we discuss the implementation details of the set-expansion policy. Then in Appendix~\ref{app:exp-details:rb-examples-details}, we provide additional details of some performance comparison experiments in \Cref{sec:experiments}, including the simulation settings and the definitions of the MDPs. Next, in Appendix~\ref{app:exp-details:commonness}, we comment on the details of the experiments in \Cref{sec:experiments:non-ugap-is-common} which investigate the probability that a random RB instance violates GAP. 
In Appendix~\ref{app:exp-details:dense-reward}, we investigate the effect of dense reward functions on the commonness of non-GAP instances and the performance of the index policies on the non-GAP instances. 
Finally, in Appendix~\ref{app:experiments:se-vs-id}, we conduct an additional experiment to investigate the differences between the set-expansion policy and the ID policy.

\subsection{Implementation details of set-expansion policy}
\label{app:exp-details:set-expansion}
In this subsection, we discuss the implementation details of the set-expansion policy (\Cref{alg:set-expansion}) that we use in our experiments. We first discuss the set-update step on Lines~\ref{alg:set-expansion:set-update-step-begins}--\ref{alg:set-expansion:set-update-step-ends}, where the focus-set $D_t$ is updated. 
Then we discuss the action rectification step on Lines~\ref{alg:set-expansion-action-rect-starts}--\ref{alg:set-expansion-action-rect-ends} which determines the actions based on the focus set and the ideal actions.

\paragraph{Set update step.}
Recall that $\slk(x, D)$ is given by $\slk(x, D) = \beta(1-m(D)) - 0.5\norm{x(D) - m(D)\statdist}_1$, and the set-expansion policy chooses $D_t$ with the maximal cardinality such that 
\begin{itemize}
    \item $\slk(X_t, \dynsetr_t) \geq 0$, and 
    \item $D_t \supseteq D_{t-1}$ if $\slk(X_t, D_{t-1}) > 0$, or $D_t \subseteq D_{t-1}$ if $\slk(X_t, D_{t-1}) \leq 0$. 
\end{itemize} 
Due to the complexity of directly optimizing over the discrete variable $D_t$, we will first decide $X_t(D_t)$, and then find $D_t$ based on $X_t(D_t)$. 
Specifically, when $\slk(X_t, \dynsetr_{t-1}) \geq 0$, consider the following optimization problem, whose optimal solution $(\ve{z}^*, m^*)$ gives $(X_t(D_t), m(D_t))$ if we omit the integer effect: 
\begin{subequations}
\begin{align}
    \underset{\ve{z}\in \R^{|\sspa|}}{\text{maximize}}  \mspace{12mu}& \sums z(s) \\
    \text{subject to}\mspace{25mu}
    \label{eq:se:expansion-constraint-1} 
    & X_t(D_{t-1}, s) \leq z(s) \leq X_t([N], s) \quad \forall s\in\sspa  \\
    \label{eq:se:z-and-m}
    & \sums z(s) = m \\
    \label{eq:se:slack-constraint} 
    & \frac{1}{2}\sums\abs{z(s) - \statdist(s)m} \leq  \beta(1-m).
    \vspace*{-\baselineskip}
\end{align}
\end{subequations}
Here, the constraints \eqref{eq:se:expansion-constraint-1} and \eqref{eq:se:z-and-m} ensure that each feasible solution $(\ve{z}, m)$ corresponds to a $(X_t(D), m(D))$ for some $D\supseteq D_{t-1}$, modulo the integer effect; the constraint \eqref{eq:se:slack-constraint} ensures that $\slk(X_t, D) \geq 0$. 
To solve the above optimization problem, we can equivalently convert it to the following LP: 
\begin{subequations}
\begin{align}
    \underset{\ve{z}, \, \ve{d}\in \R^{|\sspa|}, \, m\in \R}{\text{maximize}}  \mspace{12mu}& m \\
    \text{subject to}\mspace{25mu}
    & X_t(D_{t-1}, s) \leq z(s) \leq X_t([N], s) \quad \forall s\in\sspa \\
    & \sums z(s) = m \\
    & z(s) - \statdist(s) m \leq d(s) \quad \forall s\in\sspa \\
    & -z(s) + \statdist(s) m \leq d(s) \quad \forall s\in\sspa \\
    & \frac{1}{2}\sums d(s) \leq \beta(1-m).
    \vspace*{-\baselineskip}
\end{align}
\end{subequations}
Similarly, when $\slk(X_t, \dynsetr_{t-1}) \leq 0$, we consider the optimization problem 
\begin{subequations}
\begin{align}
    \underset{\ve{z}\in \R^{|\sspa|}}{\text{maximize}}  \mspace{12mu}& \sums z(s) \\
    \text{subject to}\mspace{25mu}
    \label{eq:se:shrink-constraint} 
    & 0 \leq z(s) \leq  X_t(D_{t-1}, s) \quad \forall s\in\sspa \\
    & \sums z(s) = m \\
    & \frac{1}{2} \sums \abs{z(s) - \statdist(s)m} \leq  \beta(1-m),
\end{align}
\end{subequations}
where the only change is \eqref{eq:se:shrink-constraint}, which ensures that each feasible solution $(\ve{z}, m)$ corresponds to a $(X_t(D), m(D))$ for some $D\subseteq D_{t-1}$. This optimization problem is equivalent to the LP given by
\begin{subequations}
\begin{align}
\vspace*{-\baselineskip}
    \underset{\ve{z},\, \ve{d} \in \R^{|\sspa|}, \, m\in \R}{\text{maximize}}  \mspace{12mu}& m \\
    \text{subject to}\mspace{25mu}
    &  0 \leq z(s) \leq X_t(D_{t-1}, s)  \quad \forall s\in\sspa \\
    & \sums z(s) = m \\
    & z(s) - \statdist(s) m \leq d(s) \quad \forall s\in\sspa \\
    & -z(s) + \statdist(s) m \leq d(s) \quad \forall s\in\sspa \\
    & \frac{1}{2}\sums d(s) \leq \beta(1-m).
\end{align}
\end{subequations}
Note that each of the two LPs has $2|\sspa|+1$ variables and $4|\sspa| + 2$ constraints, so the complexities of solving them are polynomials in $|\sspa|$ and independent of $N$. 

After obtaining the optimal solution $(\ve{z}^*, m^*)$ of either of the LPs above, we pick $\lfloor N\ve{z}^*(s) \rfloor$ arms in state $s$ for each $s\in\sspa$ to form the subset $D_t$. 

Note that because we first solve the LP and then perform the rounding, the resulting $D_t$ may not be the exact optimal solution that maximizes the cardinality as required by Lines~\ref{alg:set-expansion:set-update-step-begins}--\ref{alg:set-expansion:set-update-step-ends} of \Cref{alg:set-expansion}. 
A more rigorous alternative is to include the integer constraints $N\ve{z}^*(s) \in \mathbb{Z}$ for $s\in\sspa$ in the LPs. 
Nevertheless, our current implementation is more efficient and has demonstrated good performance in the simulations. 
One can also verify that an $O(1/\sqrt{N})$ optimality gap can still be achieved using this approximate implementation, since intuitively, the maximality of $m(D_t)$ is still approximately preserved up to a negligible $O(1/N)$ error.

\paragraph{Action rectification step.}
For the vanilla version of the set-expansion policy, the action rectification step has been completely specified in \Cref{alg:set-expansion}, where we prioritize the arms in $D_t$ over those in $D_t^c$ to follow the ideal actions, breaking ties uniformly at random. 

For the version of the set-expansion policy that utilizes the LP index policy for tie-breaking, its pseudocode is given in  \Cref{alg:set-expansion-lp-index}. 
To summarize the differences, if not all arm in $D_t$ can follow the ideal actions, this version of the set-expansion policy breaks ties using LP indices instead of uniformly at random; if all arms in $D_t$ can follow the ideal actions, this version of the set-expansion policy invokes the LP index policy to allocate the remaining budget to the arms not in $D_t$.

\begin{algorithm}[ht] 
\caption{Set-expansion policy (with LP index)}
\label{alg:set-expansion-lp-index}
\begin{flushleft}
\hspace*{\algorithmicindent} \textbf{Input}: number of arms $N$, budget $\alpha N$, the optimal single-armed policy $\pibs$, \\
\hspace*{\algorithmicindent} \hspace{0.45in} initial system state $X_0$, initial state vector $\veS_0$, initial focus set $D_{-1}=\emptyset$ 
\end{flushleft}
\begin{algorithmic}[1]
    \For{$t=0,1,2,\dots$} 
        \If{$\slk(X_t, \dynsetr_{t-1}) > 0$}
        \Comment{\ul{\emph{Set update}}}
            \State Let $D_t$ be any set with the largest $m(D_t)$ such that $D_t \supseteq D_{t-1}$ and  $\slk(X_t, D_t) \geq 0$
        \Else
            \State Let $D_t$ be any set with the largest $m(D_t)$ such that $D_t \subseteq D_{t-1}$ and $\slk(X_t, D_t) \geq 0$
        \EndIf
        
        \noindent\algrule
        \Comment{\emph{Lines below implement Lines~\ref{alg:focus-set-action-sampling}--\ref{alg:focus-set-apply-action} of \Cref{alg:focus-set}}}
        \State Independently sample $\syshat{A}_t(i) \sim \pibs(\cdot|S_t(i))$ for $i\in[N]$ 
        \Comment{\ul{\emph{Action sampling}}}
        \If{$\sum_{i\in \dynsetr_t} \syshat{A}_t(i) \geq \alpha N$} \Comment{\ul{\emph{Action rectification}}}
        \label{alg:set-expansion-lp-index:act-rect-starts}
            \State Select $\alpha N$ arms in $\dynsetr_t$ with $\syshat{A}_t(i)=1$ to set $A_t(i) = 1$; break ties favoring larger LP indices
            \State For the rest of $i\in [N]$, set $A_t(i)=0$
        \ElsIf{$\sum_{i\in \dynsetr_t} \syshat{A}_t(i) \leq \alpha N - (N - |D_t|)$}
            \State Select $(1-\alpha)N$ arms in $\dynsetr_t$ with $\syshat{A}_t(i)=0$ to set $A_t(i)=0$; break ties favoring smaller LP indices 
            \State For the rest of $i\in [N]$, set $A_t(i) = 1$
        \Else
            \State Set $A_t(i) = \syshat{A}_t(i)$ for $i\in D_t$ 
            \State Apply the LP index policy to the arms in $D_t^c$ with $\big(\alpha N - \sum_{i\in \dynsetr_t} \syshat{A}_t(i)\big)$ units of budget 
            \label{alg:set-expansion-lp-index:act-rect-ends}
        \EndIf
        \State Apply $A_t(i)$ and observe $S_{t+1}(i)$ for each arm $i\in[N]$
    \EndFor
\end{algorithmic}
\end{algorithm}

\subsection{Additional details of the performance comparison experiments in \Cref{sec:experiments}}
\label{app:exp-details:rb-examples-details}

Next, we provide some additional details of the performance comparison experiments in \Cref{sec:experiments}. We first talk about the simulation settings and the computation of the confidence interval. Then we include the definitions of the two RB instances in \Cref{sec:experiments:compare-non-ugap}. 

\paragraph{Simulation setting and output analysis.}
When simulating most of the RB problems and the policies, we run $5$ independent replications for each $N$. The initial state of each arms in each replication is independently sampled from the uniform distribution over the state space. Each replication runs for $2\times 10^4$ time steps. 
The exceptions are the simulations of the FTVA policy on the two non-SA examples in \Cref{fig:non-sa}, where we run the simulations for $1.6 \times 10^5$ time steps in each replication. 

We compute the confidence interval of the average reward using the batch means method, a common method for computing the confidence intervals in steady-state simulations. Specifically, for each RB problem, policy, and number of arms $N$, we divide the sample path of each replication into $4$ intervals of equal lengths and compute the sample mean of the rewards within each interval. As a result, we get $20$ sample means from the $5$ replications. Then we further average the $20$ sample means and use it as the estimation of the long-run average reward. 
The confidence interval for the estimate is calculated using the variance of the $20$ sample means, under the assumption that each interval is long enough for the system to mix to the steady state, resulting in nearly independent sample means. We refer the reader to the textbook \citepapp[][]{AsmGly_07} for more backgrounds on the batch means method.

\paragraph{Definition of the example in \Cref{fig:non-ugap-perf:three-state}.}
The RB instance in \Cref{fig:non-ugap-perf:three-state} has been given in Example 2 in Appendix E.2 of \citep{GasGauYan_20_whittles} and Appendix G.1 of \citep{HonXieCheWan_23}. Nevertheless, we include it here for completeness. The single-armed MDP of this RB instance has three states, whose transition kernel is given by  
\begin{align*}
P( \cdot ,0,\cdot ) \ &=\ \begin{bmatrix}
0.02232142 & 0.10229283 & 0.87538575\\
0.03426605 & 0.17175704 & 0.79397691\\
0.52324756 & 0.45523298 & 0.02151947
\end{bmatrix}, \\
P( \cdot ,1,\cdot ) \ &=\ \begin{bmatrix}
0.14874601 & 0.30435809 & 0.54689589\\
0.56845754 & 0.41117331 & 0.02036915\\
0.25265570 & 0.27310439 & 0.4742399
\end{bmatrix},
\end{align*}
where the $s$-th row and $s'$-th column in each matrix above represents $P(s, 0, s')$ or $P(s, 1, s')$ for each pair of $s,s'\in\sspa$. The reward function of the single-armed MDP is given by 
\begin{align*}
    r( \cdot , 0) \ &=
    \ \begin{bmatrix}
        0 & 0 & 0
        \end{bmatrix}, \\
    r( \cdot , 1) \ &=
    \ \begin{bmatrix}
        0.37401552 & 0.11740814 & 0.07866135
        \end{bmatrix}.
\end{align*}
The budget parameter $\alpha$ is $0.4$ for this RB instance, that is, $0.4 N$ arms are activated in each time step.

\paragraph{Definition of the example in \Cref{fig:non-ugap-perf:eight-state}.}
The RB instance in \Cref{fig:non-ugap-perf:eight-state} is a modification of the example provided in Section~3.3 of \citep{HonXieCheWan_23}. 
Specifically, let the single-armed MDP have the state space $\sspa=\{0,1,\ldots,7\}$. 
Each state is associated with a \textit{preferred action}, which is action $1$ for states in $\{0, 1, 2, 3\}$, and action $0$ for the other states. 
If an arm is in state~$s$ and takes the preferred action, it moves to state $(s+1) \bmod 8$ with probability $p_{s,\rightsub}$, and stays in state $s$ otherwise; if it does not take the preferred action, it moves to state $(s-1)^+$ with probability $p_{s,\leftsub}$. 
Here, the subscript $\leftsub$ (resp., $\rightsub$) represents ``left'' (resp., ``right''), and we are imagining the states of the single-armed MDP being lined up in a row from state $0$ to state $7$; by taking the preferred actions, the arm moves to the right and loops back to the state $0$ after passing the state $7$. 
The probabilities $p_{\cdot, \rightsub}$ and $p_{\cdot, \leftsub}$ are given by
\begin{align*}
    p_{\cdot , \rightsub} &= \begin{bmatrix}
        0.1 & 0.1 & 0.1 & 0.1 & 0.1 & 0.1 & 0.1 & 0.1
        \end{bmatrix}, \\
    p_{\cdot , \leftsub} &= \begin{bmatrix}
            1.0 & 1.0 & 0.48 & 0.47 & 0.46 & 0.45 & 0.44 & 0.43
        \end{bmatrix}.
\end{align*}
The reward function is given by $r(7, 0) = 0.1$, $r(0,1)=1/300$, and $r(s,a)=0$ for all other $s\in\sspa,a\in\aspa$.  
The budget parameter $\alpha= 1/2$, so $N/2$ arms are activated in each time step. 

Compared to the RB instance in Section~3.3. of \citep{HonXieCheWan_23}, the only modification in this RB instance is changing $r(0,1)$ from $0$ to $1/300$. This small modification makes this instance non-indexable for the Whittle index policy, while keeping the performances of the other policies almost unchanged.

\subsection{Generating and identifying non-GAP examples in \Cref{sec:experiments:non-ugap-is-common}}
\label{app:exp-details:commonness}
In this subsection, we first provide some details on generating random RB instances following the symmetric Dirichlet distribution in \Cref{sec:experiments:non-ugap-is-common}. Then we comment on how we identify the non-GAP instances through the notion of local instability.

\paragraph{Generating Dirichlet random examples.}
Each of the random RB examples in \Cref{fig:eigs} is generated as follows.
First, for each state-action pair $(s,a)$, we sample the vector $(P(s,a,s'))_{s'\in\sspa}$ independently from a symmetric Dirichlet distribution. 
Similarly, for each action $a$, we sample the vector $(r(s, a))_{s\in\sspa}$ independently from the same Dirichlet distribution. 
To highlight the sparsity, we then set to zero any entries of $(P(s,a,s'))_{s'\in\sspa}$ and $(r(s, a))_{s\in\sspa}$ that are less than $10^{-7}$. 
Finally, we sample the budget parameter $\alpha$ from the uniform distribution over the interval $[0.1, 0.9]$ and round it down to the nearest multiple of $0.01$.

\paragraph{Identifying local instability.}
To identify a non-GAP example, we use the notation of \emph{local instability}. 
To avoid ambiguity, we only consider local instability for the RB problems that satisfy the three conditions:
\begin{enumerate}
    \item The optimal solution to the LP relaxation \eqref{eq:lp-single}, $y^*$, is unique; 
    \item There are no transient states for $y^*$, that is, $y^*(s,1)+y^*(s,0) > 0$ for all $s\in\sspa$;
    \item The RB problem is non-degenerate, that is, there exists $\sneu\in\sspa$ with $y^*(\sneu,1) > 0$ and $y^*(\sneu,0) > 0$. 
\end{enumerate}
Note that the state $\sneu$ in the third condition must be unique, due to properties of this LP's basic feasible solutions \citep[][Proposition~2]{GasGauYan_23_exponential}. 
Under these assumptions, the mean-field dynamics around $\statdist$ is locally linear and is the same for all LP-Priority policies. Specifically, we have
\begin{equation}
    \E{X_{t+1}([N]) - \statdist \givenplain X_t([N])} = (X_t([N]) - \statdist)\Phi,
\end{equation}
given that $X_t([N])$ is sufficiently close to $\statdist$, and an LP-Priority policy is used; the matrix $\Phi$ is defined as
\begin{equation}\label{eq:phi-def}
    \Phi \triangleq P_\pibs - \vone^\top \statdist - (\costvec-\alpha\vone)^\top  (P_1(\sneu) - P_0(\sneu)), 
\end{equation}
where $\costvec \triangleq (\pibs(1|s))_{s\in\sspa}$ and $P_a(\sneu) \triangleq (P(\sneu,a, s))_{s\in\sspa}$ are both row vectors; $\vone$ is the all-one row vector. We refer the readers to Appendix B of \citep{GasGauYan_23_whittles} for a detailed derivation of the dynamics under LP-Priority policies. 

In the context of our experiments, we say an RB instance is \emph{locally unstable} if the matrix $\Phi$ defined in \eqref{eq:phi-def} is unstable, i.e., its spectral radius is strictly larger than $1$. 
This local instability of an RB instance implies that under any LP-Priority policy, the system's mean-field dynamics will drift away from the optimal state distribution $\statdist$ whenever the system gets close to $\statdist$. This causes the GAP assumption to be violated under any LP-Priority policy.

\subsection{Commonness of non-GAP examples when the reward functions are not sparse}
\label{app:exp-details:dense-reward}
In \Cref{sec:experiments:non-ugap-is-common}, we have shown that a certain type of non-GAP RB instances, i.e., the locally unstable instances, are common when the transition kernel and the reward functions are generated from some sparse distribution. We have also investigated the performance of the LP-index policy and the Whittle index policy on the locally unstable instances. 
Here, we include further experimental results to show that the sparsity of the reward function may not be essential to the phenomenon observed in \Cref{sec:experiments:non-ugap-is-common}.

Specifically, we consider random RB instances with $\abs{\sspa} = 10$, where the rows of the transition kernel $P(s,a,\cdot)$ for $s\in\sspa, a\in\aspa$ are independently sampled from the symmetric Dirichlet distribution. Different from \Cref{sec:experiments:non-ugap-is-common}, we independently sample each entry of the reward functions from the uniform distribution over $[0,1]$, so that the non-zero entries in the reward functions are more dense. 
We make two scatter plots in \Cref{fig:eigs-dense}, each of which visualizes $10^4$ such random examples with a different parameter for the Dirichlet distribution. 
Each point in the scatter plots represents an RB instance, whose $y$-coordinate is the spectral radius of the matrix $\Phi$ (defined in \eqref{eq:phi-def}) and whose $x$-coordinate is the second-largest absolute value of $P_\pibs$'s eigenvalues. 

\begin{figure}
    \FIGURE{
        \subcaptionbox{Dirichlet($0.2$) \label{fig:eigs:dirichlet-02-dense}}{\includegraphics[width=5cm]{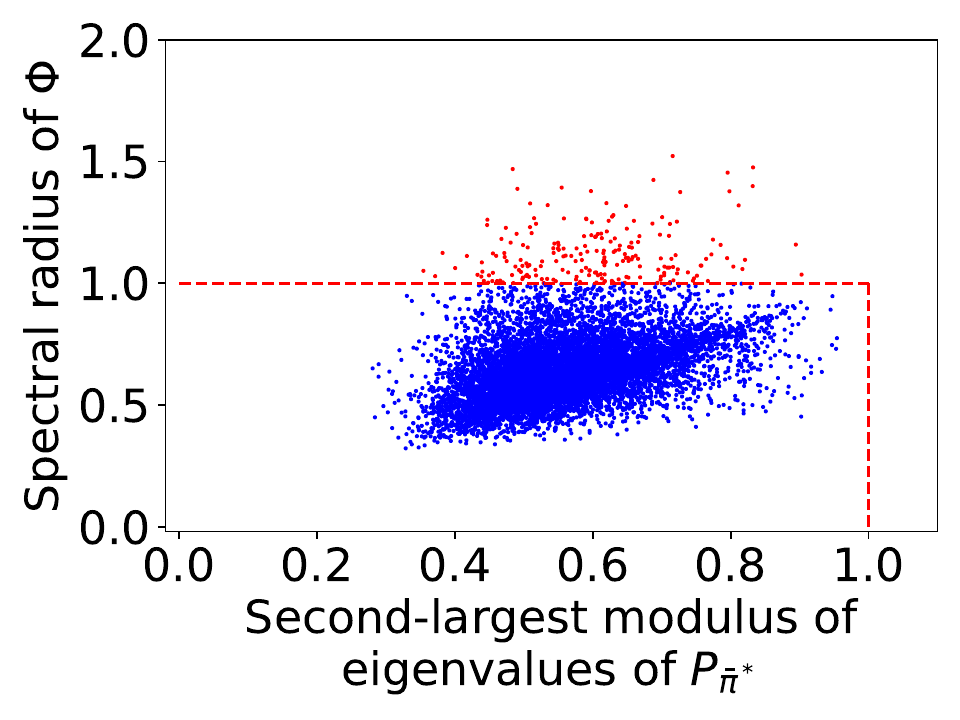}}
        \subcaptionbox{Dirichlet($0.05$)\label{fig:eigs:dirichlet-005-dense}}{ \includegraphics[width=5cm]{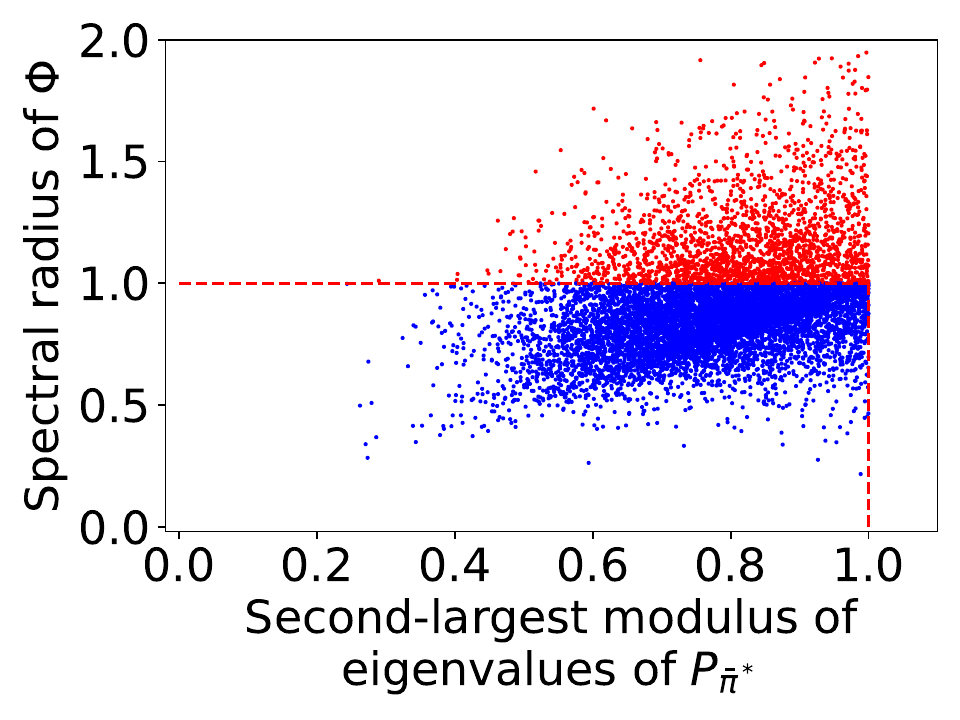}}
    }
    {Scatter plots illustrating the eigenvalues of about $10^4$ random RB problems. The transition probabilities of the instances follow the symmetric Dirichlet distribution with different parameters in each subplot. The entries of the reward functions are sampled from \emph{the uniform distribution over $[0,1]$}. \label{fig:eigs-dense} }
    {Each point in a scatter plot represents an RB problem, whose $x$-coordinate (or $y$-coordinate) represents the second largest modulus of  eigenvalues of $P_\pibs$ (or spectral radius of $\Phi$). Each RB problem has $|\sspa| = 10$. The points outside the dashed box (highlighted in red) represent RB problems that violate GAP under all LP-Priority polices.}
\end{figure}

As we can see from \Cref{fig:eigs-dense}, the scatter plots look almost the same as those in \Cref{fig:eigs}.
Moreover, in \Cref{fig:eigs:dirichlet-005-dense}, among $8965$ instances that are aperiodic unichain whose $x$-coordinates are less than $0.95$ (i.e., have a decently large spectral gap), $1828$ instances are also locally unstable.
So the ratio of locally unstable instances is approximately $20.4\%$, very close to the ratio $20.2\%$ in \Cref{fig:eigs:dirichlet-005}. 
These phenomena suggest that changing the distribution for generating the reward function may not have a significant effect on the fraction of locally unstable instances. 

\begin{figure}
    \FIGURE{
        \subcaptionbox{LP index policy. \label{fig:asym-subopt:lpp-dense}}{\includegraphics[width=5cm]{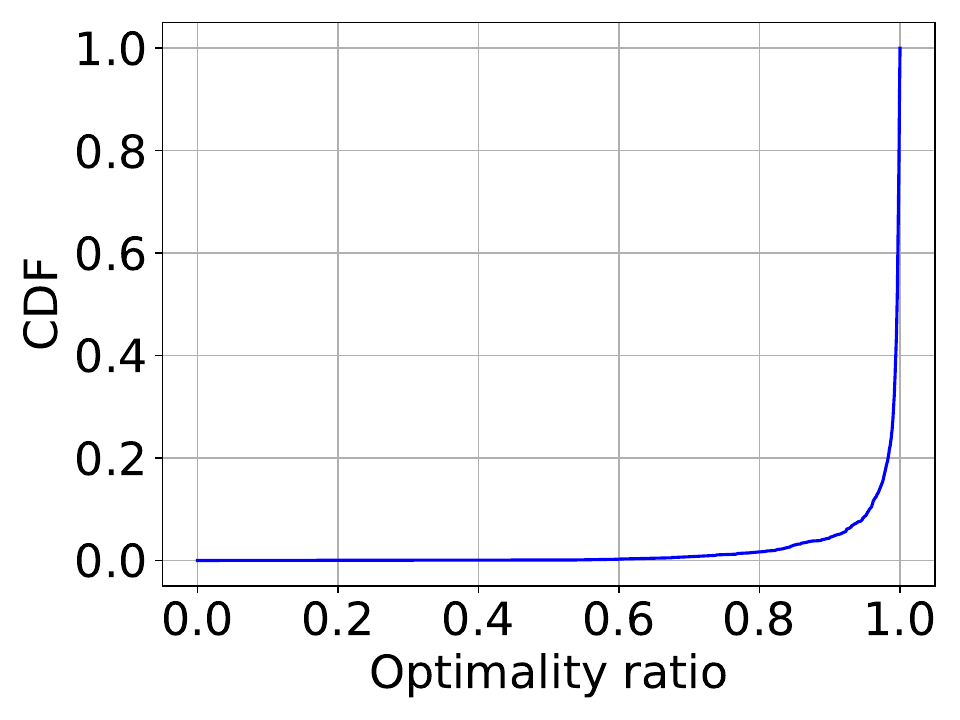}}
        \hfill
        \subcaptionbox{Whittle index policy. \label{fig:asym-subopt:whittle-dense}}{\includegraphics[width=5cm]{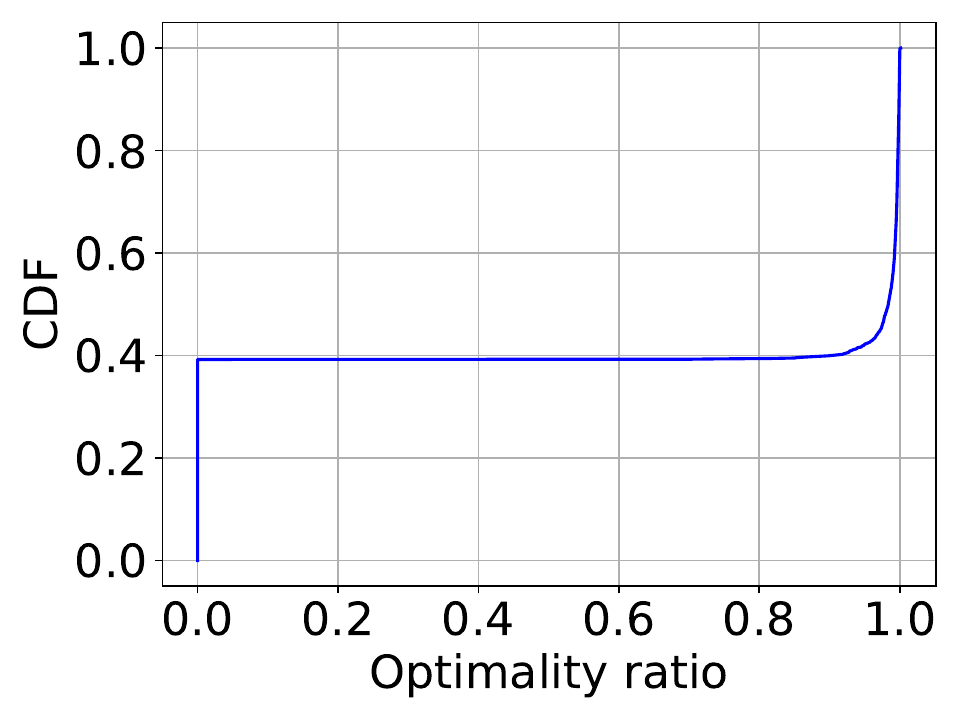}}
        \hfill
        \subcaptionbox{Maximal reward of the two index policies. \label{fig:asym-subopt:max-dense}}{\includegraphics[width=5cm]{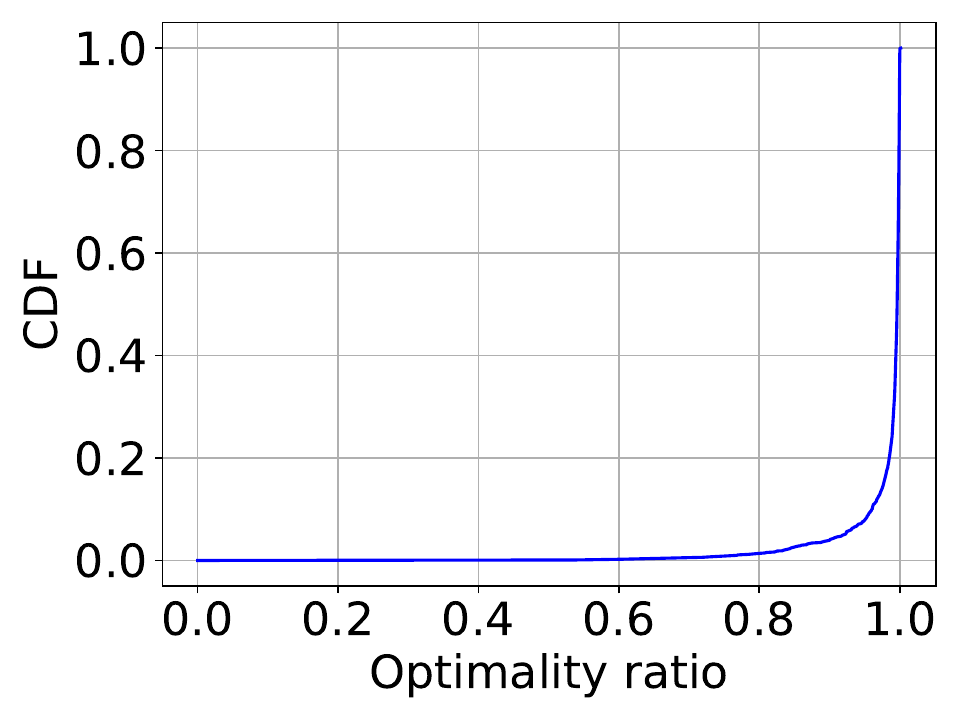}}
    }
    {CDF of the optimality ratios of LP-Priority policies when $N=500$, among $2113$ non-GAP instances with transition kernels following Dirichlet($0.05$) and reward functinos following the uniform distribution.  \label{fig:asym-subopt-dense}}
    {We regard the average reward of Whittle index policy as $0$ if it is not well-defined. } 
\end{figure}

In \Cref{fig:asym-subopt-dense}, we plot some CDF curves representing the optimality ratios of the LP index policy, the Whittle index policy, and their maximal performances when $N=500$, among $2113$ locally unstable instances with transition kernels sampled from Dirichlet($0.05$) and reward functions sampled from the uniform distribution. 
Each policy is simulated for $2\times 10^4$ time steps for each RB instance. 

We compare the CDF curves in \Cref{fig:asym-subopt-dense} with those in \Cref{fig:asym-subopt} to see the effect of the dense reward functions on the performances of the index policies. 
The shapes of the CDF curves look similar, but the optimality gap ratios are slightly better with dense reward functions. 
In particular, there are about $3.9\%$ locally unstable instances where the average rewards of both policies are less than $90\%$ of the LP upper bound, which is less than the $6.7\%$ in \Cref{fig:asym-subopt}. 
Intuitively, the consequences of suboptimal decisions are less severe when the reward function is dense than when the reward function is sparse.

\subsection{Comparing set-expansion policy with ID policy}
\label{app:experiments:se-vs-id}
In this subsection, we make an additional observation about different behaviors of the set-expansion policy and the ID policy: under the ID policy, a larger subset of arms can persistently follow the optimal single-armed policy $\pibs$ for a long period of time than under the set-expansion policy. 
Although this phenomenon does not necessarily imply that the ID policy performs better than the set-expansion policy, we include it here for its potential theoretical interests. 

The observation comes from the following experiment. 
We run the ID policy and the set-expansion policy on each of the six RB problems simulated in \Cref{sec:experiments} with $N=500$. For each of the two policies, we first let them run $5000$ time steps to mix to the steady state. Then for each of the next $1000$ time step $t$, we plot the fraction of arms that whose actions agree with ideal actions in the time interval $[t, t+199]$; these are the arms that will persistently follow $\pibs$ for $200$ time steps starting from time $t$. 
We choose the look-ahead window to be $200$ because it is large enough for an arm following $\pibs$ to converge to the stationary distribution: notice that by \Cref{lem:pibar-one-step-contraction-W}, the $W$-weighted $L_2$ distance between any distribution on $\sspa$ and the optimal stationary distribution $\statdist$ reduces by a ratio of at most $1-1/(2\lamw)$ every time step under $\pibs$, and $2\lamw$ ranges from $2.82$ to $84.29$ in each of the examples simulated in \Cref{sec:experiments}. 

The simulation results are shown in Figures~\ref{fig:persistency-1}, \ref{fig:persistency-2}, and \ref{fig:persistency-3}. 
As we can see, the numbers of arms that can persistently follow $\pibs$ for $200$ time steps under the ID policy are clearly larger than those under the set-expansion policy in all the examples.

\begin{figure}[t]
    \FIGURE{
    \subcaptionbox{The example in \Cref{fig:non-ugap-perf:three-state}.}{\includegraphics[width=7.5cm]{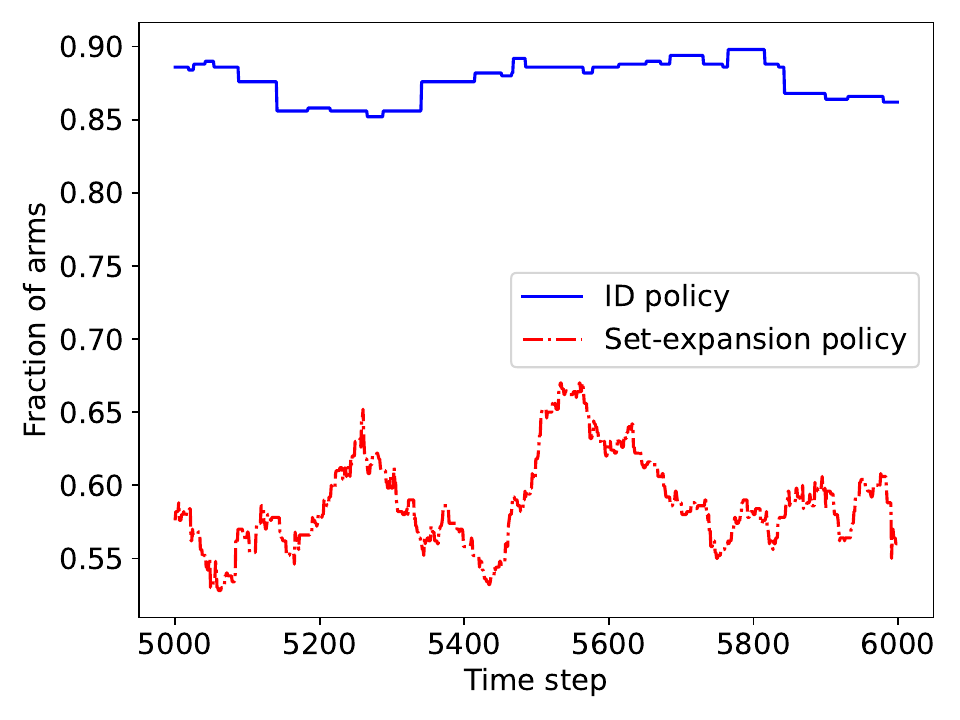}}
    \hfill
    \subcaptionbox{The example in \Cref{fig:non-ugap-perf:eight-state}.}{\includegraphics[width=7.5cm]{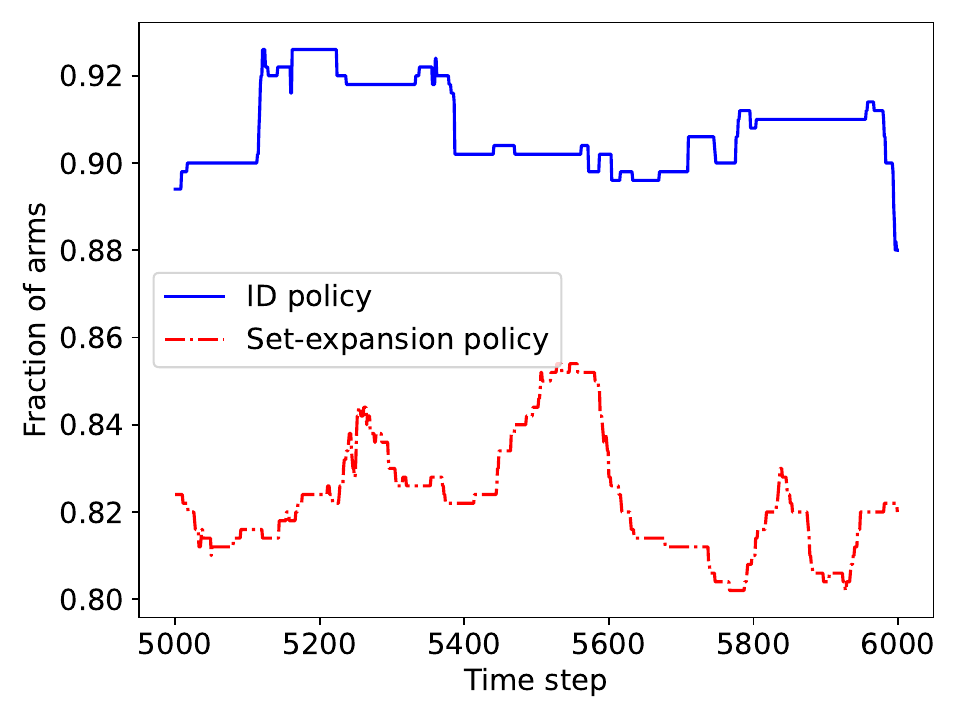}}
    }
    {Illustration of the fractions of arms that persistently follow $\pibs$ for the two RB problems simulated in \Cref{sec:experiments:compare-non-ugap}. \label{fig:persistency-1}}
    {}
\end{figure}

\begin{figure}[ht]
    \FIGURE{
    \subcaptionbox{The example in \Cref{fig:non-ugap-perf:dirichlet-582}.}{\includegraphics[width=7.5cm]{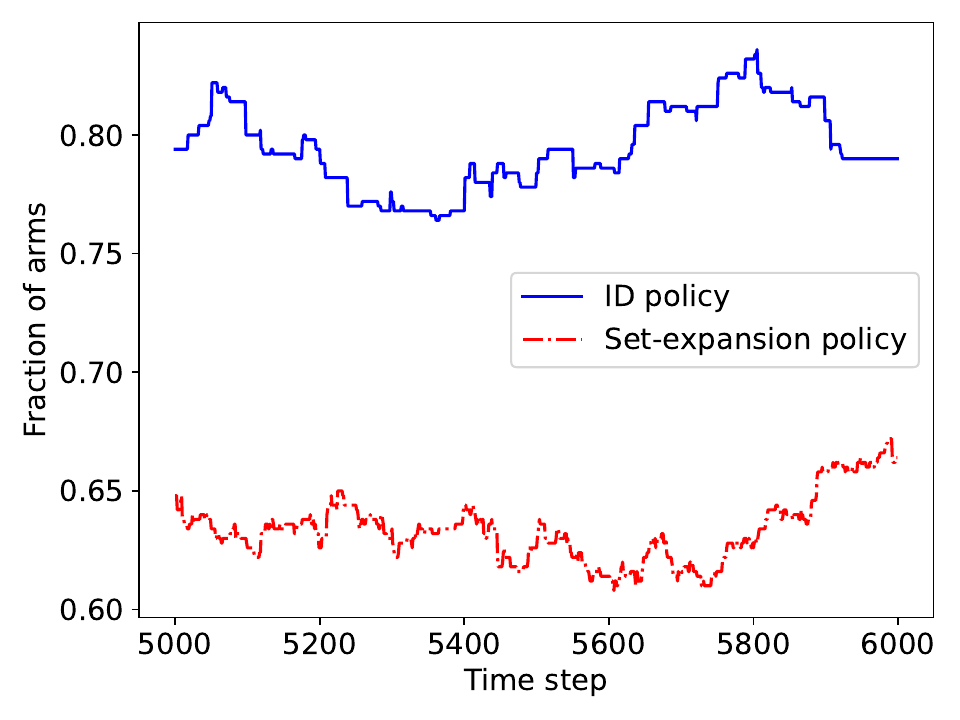}}
    \hfill
    \subcaptionbox{The example in \Cref{fig:non-ugap-perf:dirichlet-355}.}{\includegraphics[width=7.5cm]{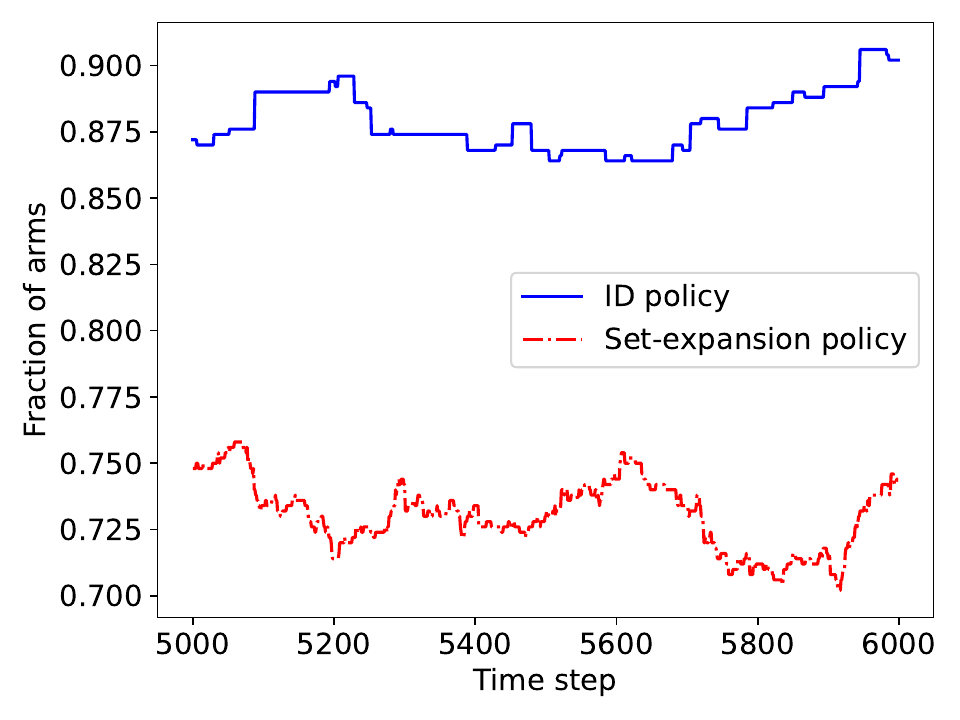}}
    }
    {Illustration of the fractions of arms that persistently follow $\pibs$ for the two RB problems simulated in \Cref{sec:experiments:non-ugap-is-common}. \label{fig:persistency-2}}
    {}
\end{figure}

\begin{figure}[ht]
    \FIGURE{
    \subcaptionbox{The example in \Cref{fig:non-sa:eight-state}.}{\includegraphics[width=7.5cm]{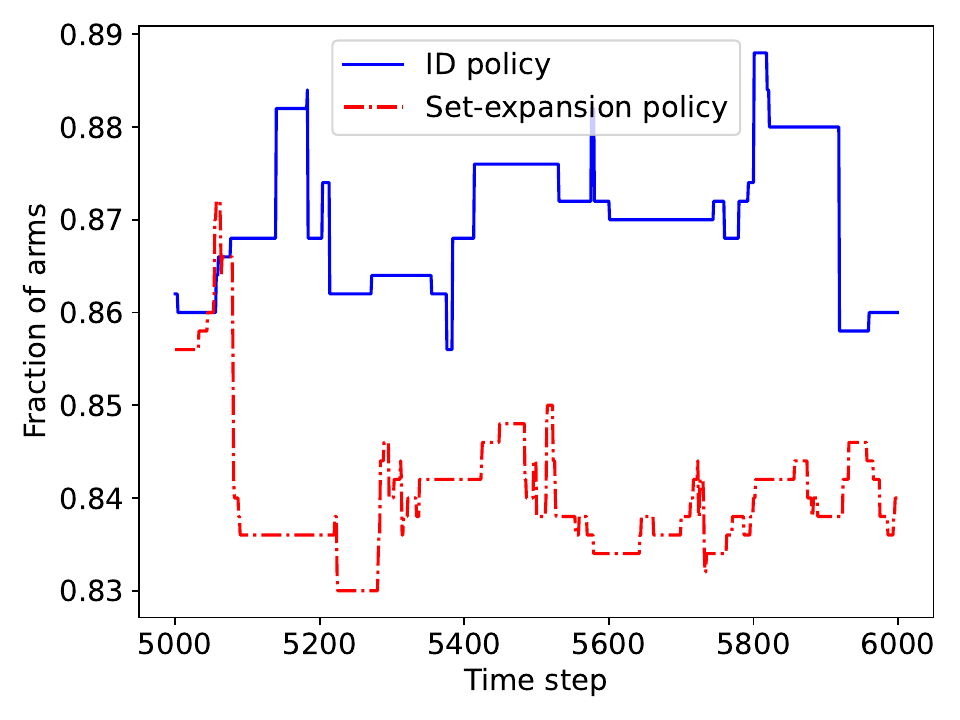}}
    \hfill
    \subcaptionbox{The example in \Cref{fig:non-sa:eleven-state}.}{\includegraphics[width=7.5cm]{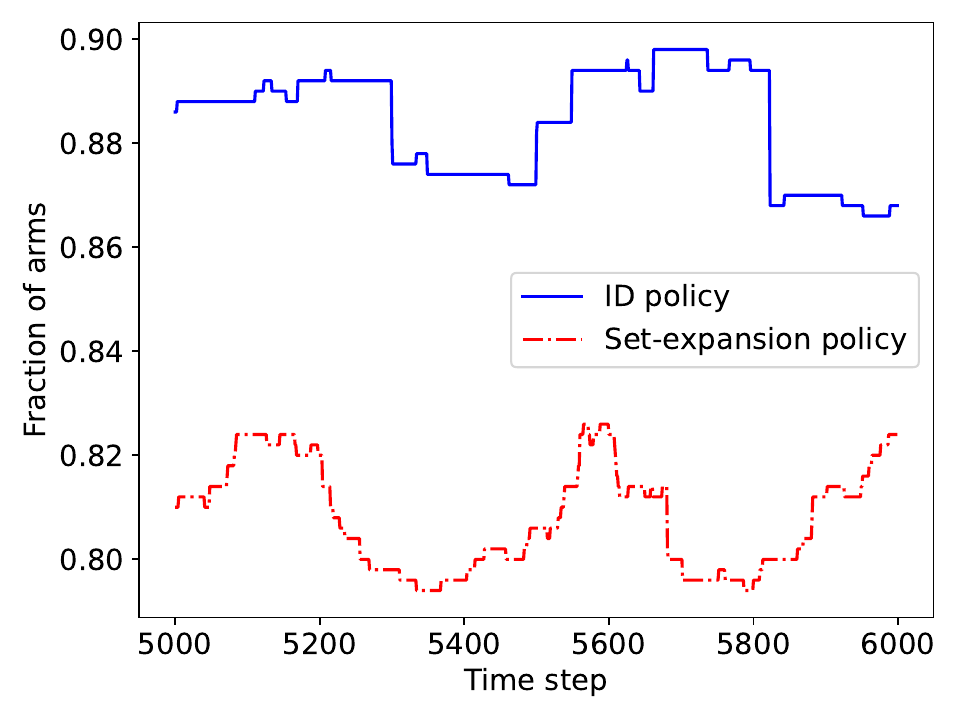}}
    }
    {Illustration of the fractions of arms that persistently follow $\pibs$ for the two RB problems simulated in \Cref{sec:experiments:compare-non-sa}. \label{fig:persistency-3}}
    {}
\end{figure}

Here is a plausible explanation for this phenomenon. Because the set-expansion policy randomly selects arms outside the focus set to follow $\pibs$, only those arms in the focus set could persistently follow $\pibs$. Moreover, the focus set of the set-expansion policy is determined according to the $L_1$ norm constraint, which is somewhat conservatively designed to facilitate the analysis. 
In contrast, the ID policy relies on a fixed set of IDs rather than the explicitly computed focus sets to decide the subset of arms that follow $\pibs$. 
As a result, some arms outside the focus set $[N\md(X_t)]$ that are considered in the analysis may also follow $\pibs$ for long periods of times under the ID policy. Consequently, more arms could follow $\pibs$ persistently under the ID policy than under the set-expansion policy.

\bibliographystyleapp{informs2014}
\bibliographyapp{refs-yige-v240809}

\end{APPENDICES}

\end{document}
\endinput